\Crefname{figure}{Fig.}{Figs.}
\DeclarePairedDelimiter{\abs}{\lvert}{\rvert}
\DeclarePairedDelimiter{\norm}{\lVert}{\rVert}
\DeclarePairedDelimiter\ceil{\lceil}{\rceil}
\DeclarePairedDelimiterX{\ip}[2]{\langle}{\rangle}{#1, #2}
\DeclarePairedDelimiterX{\KLx}[2]{(}{)}{%
  #1\:\delimsize\|\:#2%
}
\newcommand{\KL}{KL\KLx}
\DeclareMathOperator{\E}{\mathbb{E}}
\DeclareMathOperator{\Tr}{Tr}
\DeclareMathOperator{\tr}{Tr}
\DeclareMathOperator{\proj}{proj}
\DeclareMathOperator{\Cov}{Cov}
\DeclareMathOperator{\sign}{sign}
\DeclareMathOperator{\myvec}{vec}
\DeclareMathOperator{\rank}{rank}
\DeclareMathOperator{\myspan}{span}
\DeclareMathOperator{\rowsp}{rowsp}
\DeclareMathOperator{\colsp}{colsp}
\DeclareMathOperator{\diag}{diag}
\newcommand{\R}{\mathbb{R}}
\newcommand{\be}{\bm{e}}
\newcommand{\br}{\bm{r}}
\newcommand{\bs}{\bm{s}}
\newcommand{\bu}{\bm{u}}
\newcommand{\bv}{\bm{v}}
\newcommand{\bw}{\bm{w}}
\newcommand{\bx}{\bm{x}}
\newcommand{\by}{\bm{y}}
\newcommand{\bz}{\bm{z}}
\newcommand{\bA}{\bm{A}}
\newcommand{\bB}{\bm{B}}
\newcommand{\bC}{\bm{C}}
\newcommand{\bD}{\bm{D}}
\newcommand{\bE}{\bm{E}}
\newcommand{\bG}{\bm{G}}
\newcommand{\bI}{\bm{I}}
\newcommand{\bJ}{\bm{J}}
\newcommand{\bL}{\bm{L}}
\newcommand{\bM}{\bm{M}}
\newcommand{\bP}{\bm{P}}
\newcommand{\bQ}{\bm{Q}}
\newcommand{\bR}{\bm{R}}
\newcommand{\bS}{\bm{S}}
\newcommand{\bU}{\bm{U}}
\newcommand{\bV}{\bm{V}}
\newcommand{\bX}{\bm{X}}
\newcommand{\bZ}{\bm{Z}}
\newcommand{\bbeta}{\bm{\beta}}
\newcommand{\bdelta}{\bm{\delta}}
\newcommand{\boldeta}{\bm{\eta}}
\newcommand{\bmu}{\bm{\mu}}
\newcommand{\bnu}{\bm{\nu}}
\newcommand{\bDelta}{\bm{\Delta}}
\newcommand{\bGamma}{\bm{\Gamma}}
\newcommand{\cN}{\mathcal{N}}
\newcommand{\cS}{\mathcal{S}}
\newcommand{\cX}{\mathcal{X}}
\newcommand{\dset}{\cS}
\newcommand{\eps}{\varepsilon}
\newcommand{\pr}{\mathbb{P}}
\newcommand{\ind}{\mathds{1}}
\newcommand{\1}{\bm{1}}
\newcommand{\0}{\bm{0}}
\newcommand{\bigmid}{\;\middle|\;}
\newtheorem{theorem}{Theorem}[section]
\newtheorem{lemma}[theorem]{Lemma}
\newtheorem{prop}[theorem]{Proposition}
\newtheorem{corollary}{Corollary}[theorem]
\theoremstyle{remark}
\newtheorem{remark}[theorem]{Remark}
\theoremstyle{definition}
\newtheorem{definition}{Definition}[section]
\title{One for All: Simultaneous Metric and Preference Learning over Multiple Users}
\author{%
  Gregory Canal \\
  University of Wisconsin-Madison \\
  Madison, WI \\
  \texttt{gcanal@wisc.edu} \\
  \And
  Blake Mason \\
  Rice University \\
  Houston, TX \\
  \texttt{bm63@rice.edu} \\
  \AND
  Ramya Korlakai Vinayak \\
  University of Wisconsin-Madison \\
  Madison, WI \\
  \texttt{ramya@ece.wisc.edu} \\
  \And
  Robert Nowak \\
  University of Wisconsin-Madison \\
  Madison, WI \\
  \texttt{rdnowak@wisc.edu}
}
\begin{document}

\maketitle

\vspace{-4mm}

\begin{abstract}
    This paper investigates simultaneous preference and metric learning from a crowd of respondents. A set of items represented by $d$-dimensional feature vectors and paired comparisons of the form ``item $i$ is preferable to item $j$'' made by each user is given. Our model jointly learns a distance metric that characterizes the crowd's general measure of item similarities along with a latent ideal point for each user reflecting their individual preferences. This model has the flexibility to capture individual preferences, while enjoying a metric learning sample cost that is amortized over the crowd. We first study this problem in a noiseless, continuous response setting (i.e., responses equal to differences of item distances) to understand the fundamental limits of learning. Next, we establish prediction error guarantees for noisy, binary measurements such as may be collected from human respondents, and show how the sample complexity improves when the underlying metric is low-rank. Finally, we establish recovery guarantees under assumptions on the response distribution. We demonstrate the performance of our model on both simulated data and on a dataset of color preference judgements across a large number of users.
\end{abstract}

\section{Introduction}
\label{sec:intro}

In many data-driven recommender systems (e.g., streaming services, online retail), multiple users interact with a set of items (e.g., movies, products) that are common to all users. While each user has their individual preferences over these items, there may exist shared structure in how users \emph{perceive} items when making preference judgements. This is a reasonable assumption, since collections of users typically have shared perceptions of similarity between items regardless of their individual item preferences \cite{shepard1962analysis, coombs1964theory, tamuz2011adaptively}. In this work we develop and analyze models and algorithms for simultaneously learning individual preferences and the common metric by which users make preference judgements.

Specifically, suppose there exists a known, fixed set $\cX$ of $n$ items, where each item $i \in 1, \dots, n$ is parameterized by a feature vector $\bx_i \in \R^d$. We model the crowd's preference judgements between items as corresponding to a common Mahalanobis distance metric $d_{\bM}(\bx,\by) = \norm{\bx - \by}_{\bM}$, where $\norm{\bx}_{\bM} \coloneqq \sqrt{\bx^T \bM \bx}$ and $\bM$ is a $d\times d$ positive semidefinite matrix to be learned. Measuring distances with $d_{\bM}$ has the effect of reweighting individual features as well as capturing pairwise interactions between features. To capture individual preferences amongst the items, we associate with each of $K$ users an \emph{ideal point} $\bu_k \in \R^d$ for $k \in 1, \dots, K$ such that user $k$ prefers items that are closer to $\bu_k$ than those items that are farther away, as measured by the common metric $d_{\bM}$. 
The ideal point model is attractive since it can capture nonlinear notions of preference, and preference rankings are determined simply by sorting item distances to each user point and can therefore be easily generalized to items outside of $\cX$ with known embedding features \cite{carpenter1989consumer,jamieson2011active, canal2019active, xu2020simultaneous} . Furthermore, once a user's point $\bu_k$ is estimated, in some generative modeling applications it can then be used to synthesize an ``ideal'' item for the user located exactly at $\bu_k$, which by definition would be their most preferred item if it existed.

In order to learn the metric and ideal points, we issue a series of \emph{paired comparison} queries to each user in the form ``do you prefer item $i$ or item $j$?'' Since such preferences directly correspond to distance rankings in $\R^d$, these comparisons provide a signal from which the user points $\{\bu_k\}_{k=1}^K$ and common metric $\bM$ can be estimated. \emph{The main contribution of this work is a series of identifiability, prediction, and recovery guarantees to establish the first theoretical analysis of simultaneous preference and metric learning from paired comparisons over multiple users}. Our key observation is that by modeling a shared metric between all users rather than learning separate metrics for each user, the sample complexity is reduced from $O(d^2)$ paired comparisons per user to only $O(d)$, which is the sample cost otherwise required to learn each ideal point; in essence, when amortizing metric and preference learning over multiple users, the metric comes for free. Our specific contributions include:
\begin{itemize}[leftmargin=*, topsep=0pt]
    \item Necessary and sufficient conditions on the number of items and paired comparisons required for exact preference and metric estimation over generic items, when noiseless differences of item distances are known exactly. These results characterize the fundamental limits of our problem in an idealized setting, and demonstrate the benefit of amortized learning over multiple users. Furthermore, when specialized to $K=1$ our results significantly advance the existing theory of identifiability for single-user simultaneous metric and preference learning \cite{xu2020simultaneous}.
    \item Prediction guarantees when learning from noisy, one-bit paired comparisons (rather than exact distance comparisons). We present prediction error bounds for two convex algorithms that learn full-rank and low-rank metrics respectively, and again illustrate the sample cost benefits of amortization.
    \item Recovery guarantees on the metric and ideal points when learning from noisy, binary labels under assumptions on the response distribution.
\end{itemize}
Furthermore, we validate our multi-user learning algorithms on both synthetic datasets as well as on real psychometrics data studying individual and collective color preferences and perception.


\textbf{Summary of related work:} Metric and preference learning are both extensively studied problems (see \cite{bellet2015metric} and \cite{furnkranz2010preference} for surveys of each). A common paradigm in metric learning is that by observing distance comparisons, one can learn a linear \cite{weinberger2006distance, davis2007information,mason2017learning}, kernelized \cite{chatpatanasiri2010new, kleindessner2016kernel}, or deep metric~\cite{kaya2019deep, hoffer2015deep} and use it for downstream tasks such as classification. 
Similarly, it is common in preference learning to use comparisons to learn a ranking or to identify a most preferred item \cite{jamieson2011active, jamieson2011low, jun2021improved, canal2019active, canal2019joint}. An important family of these algorithms reduces preference learning to identifying an ideal point for a fixed metric \cite{jamieson2011active, massimino2021you}. The closest work to ours is  \cite{xu2020simultaneous}, who perform metric and preference learning simultaneously from paired comparisons in the single-user case and propose an alternating minimization algorithm that achieves empirical success. However, that work leaves open the question of theoretical guarantees for the simultaneous learning problem, which we address here. A core challenge when establishing such guarantees is that the data are a function of multiple latent parameters (i.e., unknown metric and ideal point(s)) that interact with each other in a nonlinear manner, which complicates standard generalization and identifiability arguments. To this end, we introduce new theoretical tools and advance the techniques of \cite{mason2017learning} who showed theoretical guarantees for triplet metric learning. We survey additional related work more extensively in Appendix~\ref{sec:related}. 

\textbf{Notation:}
Let $[K] \coloneqq 1 \dots K$. Unless specified otherwise, $\norm{\cdot}$ denotes the $\ell_2$ norm when acting on a vector, and the operator norm induced by the $\ell_2$ norm when acting on a matrix. Let $\be_i$ denote the $i$th standard basis vector, $\1$ the vector of all ones, $\0_{a,b}$ the $a \times b$ matrix of all zeros (or $\0$ if the dimensions are clear), and $\bI$ the identity matrix, where the dimensionality is inferred from context. For a symmetric $d \times d$ matrix $\bA$, let 
$\myvec^*(\bA) \coloneqq [\bA_{1,1},\bA_{1,2},\dots,\bA_{1,d},\bA_{2,2},\bA_{2,3},\dots,\bA_{2,d},\dots \bA_{d,d}]^T$ 
denote the vectorized upper triangular portion of $\bA$, which is a $D$-length vector where $D \coloneqq d(d+1) / 2$. Let $\bu \otimes_S \bv \coloneqq \myvec^*(\bu \bv^T)$ denote the unique entries of the Kronecker product between vectors $\bu,\bv \in \R^d$, and let $\odot$ denote the Hadamard (or element-wise) product between two matrices.
\section{Identifiability from unquantized measurements}
\label{sec:identifiability}

In this section, we characterize the fundamental limits on the number of items and paired comparisons per user required to identify $\bM$ and $\{\bu_k\}_{k=1}^K$ exactly. In order to understand the fundamental hardness of this problem, we begin by presenting identifiability guarantees under the idealized case where we receive \textit{exact, noiseless} difference of distance measurements\footnote{We use the term ``measurement'' interchangeably with ``paired comparison.''}, before deriving similar results in the case of \textit{noisy} realizations of the \emph{sign} of these differences in the following sections. 

We formally define our model as follows: if user $k$ responds that they prefer item $i$ to item $j$, then $\norm{\bx_i - \bu_k}_{\bM} < \norm{\bx_j - \bu_k}_{\bM}$.
Equivalently, by defining
\begin{equation}
    \delta_{i,j}^{(k)} := \norm{\bx_i - \bu_k}_{\bM}^2 - \norm{\bx_j - \bu_k}_{\bM}^2 = \bx_i^T \bM \bx_i - \bx_j^T \bM \bx_j - 2 \bu_k^T \bM (\bx_i - \bx_j),\label{eq:delta-initial}
\end{equation}
user $k$ prefers item $i$ over item $j$ if $\delta_{i,j}^{(k)} < 0$ (otherwise $j$ is preferred). In this section, we assume that $\delta_{i,j}^{(k)}$ is measured exactly, and refer to this measurement type as an \emph{unquantized} paired comparison. Let $m_k$ denote the number of unquantized paired comparisons answered by user $k$ and let $m_T \coloneqq \sum_{k=1}^K m_k$ denote the total number of comparisons made across all users.

It is not immediately clear if recovery of both $\bM$ and $\{\bu_k\}_{k=1}^K$ is possible from such measurements, which depend quadratically on the item vectors. In particular, one can conceive of pathological examples where these parameters are not identifiable (i.e., there exists no unique solution). For instance, suppose $d = n$, $\bM = \alpha \bI$ for a scalar $\alpha > 0$, $\bx_i = \be_i$ for $i \in [n]$, and for each user $\bu_k = \beta_k \1$ for a scalar $\beta_k$. Then one can show that $\delta_{i,j}^{(k)} = 0$ for all $i,j,k$, and therefore $\alpha$, $\beta_1, \dots, \beta_K$ are unidentifiable from any set of paired comparisons over $\cX$. In what follows, we derive necessary and sufficient conditions on the number and geometry of items, number of measurements per user, and interactions between measurements and users in order for the latent parameters to be identifiable.

Note that \cref{eq:delta-initial} includes a nonlinear interaction between $\bM$ and $\bu_k$; however, by defining $\bv_k \coloneqq -2 \bM \bu_k$ (which we refer to as user $k$'s ``pseudo-ideal point'') \cref{eq:delta-initial} becomes linear in $\bM$ and $\bv_k$:
\begin{equation}
\delta_{i,j}^{(k)} = \bx_i^T \bM \bx_i - \bx_j^T \bM \bx_j +(\bx_i - \bx_j)^T \bv_k. \label{eq:delta-linear}
\end{equation}
If $\bM$ and $\{\bv_k\}_{k=1}^K$ are identified exactly and $\bM$ is full-rank, $\bu_k$ can then be recovered exactly from $\bv_k$.\footnote{If $\bM$ were rank deficient, only the component of $\bu_k$ in the row space of $\bM$ affects $\delta^{(k)}_{i,j}$. In this case, there is an equivalence class of user points that accurately model their responses. We then take $\bu_k$ to be the minimum norm solution, i.e., $\bu_k = -\frac12\bM^\dagger \bv_k$. This generalizes Proposition 1 of \cite{xu2020simultaneous} for the multiple user case.}  
Note that since $\bM$ is symmetric, we may write $\bx_i^T \bM \bx_i = \ip{\myvec^*(2\bM - \bI\odot\bM)}{\bx_i \otimes_S \bx_i}$. Defining $\bnu(\bM) \coloneqq \myvec^*(2\bM - \bI\odot\bM)$, from which $\bM$ can be determined, we have
\begin{equation*}
    \delta_{i,j}^{(k)} = \begin{bmatrix} (\bx_i \otimes_S \bx_i - \bx_j \otimes_S \bx_j)^T & (\bx_i - \bx_j)^T \end{bmatrix} \begin{bmatrix} \bnu(\bM) \\ \bv_k \end{bmatrix}. 
\end{equation*}

By concatenating all user measurements in a single linear system, we can directly show conditions for identifiability of $\bM$ and $\{\bv_k\}_{k=1}^K$ by characterizing when the system admits a unique solution. To do so, we define a class of matrices that will encode the item indices in each pair queried to each user:
\begin{definition}
\label{def:selectionmat}
	A $a \times b$ matrix $\bS$ is a \emph{selection matrix} if for every $i \in [a]$, there exist distinct indices $p_i,q_i \in [b]$ such that $\bS[i,p_i] = 1$, $\bS[i,q_i] = -1$, and $\bS[i,j] = 0$ for $j \in [b] \setminus \{p_i,q_i\}$.
\end{definition}%
In \Cref{sec:ident-append}, we characterize several theoretical properties of selection matrices, which will be useful in proving the results that follow.

For each user $k$, we represent their queried pairs by a $m_k \times n$ selection matrix denoted $\bS_k$, where each row selects a pair of items corresponding to its nonzero entries. Letting $\bX \coloneqq [\bx_1, \dots, \bx_n]\in \R^{d\times n}$, $\bX_\otimes \coloneqq [\bx_1 \otimes_S \bx_1, \dots, \bx_n \otimes_S \bx_n]\in \R^{D\times n}$, and $\bdelta_k \in \R^{m_k}$ denote the vector of unquantized measurement values for user $k$, we can write the entire linear system over all users as a set of $m_T$ equations with $D + dK$ variables to be recovered:
\begin{equation}
	\bGamma
	\begin{bmatrix} \bnu(\bM) \\ \bv_1 \\ \vdots \\ \bv_K \end{bmatrix}
	= \begin{bmatrix} \bdelta_1 \\ \vdots \\ \bdelta_K \end{bmatrix} \quad
	\text{where } \bGamma \coloneqq \begin{bmatrix}
		\bS_1 \bX_\otimes^T & \bS_1 \bX^T& \0_{m_1,d} & \cdots & \0_{m_1,d}
		\\
		\bS_2 \bX_\otimes^T & \0_{m_2,d} & \bS_2 \bX^T& \cdots & \0_{m_2,d}
		\\
		\vdots & \vdots & \vdots & \vdots & \vdots
		\\
		\bS_K \bX_\otimes^T & \0_{m_K,d} & \0_{m_K,d} & \cdots & \bS_K \bX^T
	\end{bmatrix}.
	\label{eq:full-linear-system}
\end{equation}
From this linear system, it is clear that $\bnu(\bM)$ (and hence $\bM$) and $\{\bv_k\}_{k=1}^K$ (and hence $\{\bu_k\}_{k=1}^K$, if $\bM$ is full-rank) can be recovered exactly if and only if $\bGamma$ has full column rank. In the following sections, we present necessary and sufficient conditions for this to occur.

\subsection{Necessary conditions for identifiability}
\label{sec:neciden}

To build intuition, note that the metric $\bM$ has $D$ degrees of freedom and each of the $K$ pseudo-ideal points $\bv_k$ has $d$ degrees of freedom. Hence, there must be at least $m_T \geq D + Kd$ measurements in total (i.e., rows of $\bGamma$) to have any hope of identifying $\bM$ and $\{\bv_k\}_{k=1}^K$. When amortized over the $K$ users, this corresponds to each user providing at least $d + \nicefrac{D}{K}$ measurements on average. 
In general, $d$ of these measurements are responsible for identifying each user's own pseudo-ideal point (since $\bv_k$ is purely a function of user $k$'s responses), while the remaining $\nicefrac{D}{K}$ contribute towards a \emph{collective} set of $D$ measurements needed to identify the common metric. While these $D$ measurements must be linearly independent from each other and from those used to learn the ideal points, a degree of overlap is acceptable in the additional $d$ measurements each user provides, as the $\bv_k$'s are independent of one another. 
We formalize this intuition in the following proposition, where we let  $\bS_T \coloneqq [\bS_1^T, \dots, \bS_K^T]^T$ denote the concatenation of all user selection matrices. 
\begin{prop}
	\label{prop:mainNec}
	If $\bGamma$ has full column rank, then $\sum_{k=1}^K m_k \ge D + dK$ and the following must hold:
	\begin{enumerate}[label=(\alph*)]
    	\item for all $k \in [K]$, $\rank(\bS_k \bX^T) = d$, and therefore $\rank(\bS_k) \ge d$ and $m_k \ge d$
    	\item $\sum_{k=1}^K \rank(\bS_k \begin{bmatrix} \bX_\otimes^T & \bX^T \end{bmatrix}) \ge D + dK$, and therefore $\sum_{k=1}^K \rank(\bS_k) \ge D + dK$
    	\item $\rank(\bS_T \begin{bmatrix} \bX_\otimes^T & \bX^T \end{bmatrix}) = D + d$, and therefore $\rank(\bS_T) \ge D + d$, $\rank(\begin{bmatrix} \bX_\otimes^T & \bX^T \end{bmatrix}) = D + d$, and $n \ge D + d + 1$
	\end{enumerate}
\end{prop}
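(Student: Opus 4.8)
The plan is to extract each of the three necessary conditions from the full-column-rank hypothesis on $\bGamma$ by carefully exploiting its block structure in \cref{eq:full-linear-system}. The overall strategy is that if $\bGamma$ has full column rank, then no nonzero vector $[\bnu^T, \bv_1^T, \dots, \bv_K^T]^T$ lies in its kernel; I would instantiate this with cleverly chosen test vectors to force rank conditions on the individual blocks, and separately use the fact that the number of rows must be at least the number of columns, i.e. $m_T \ge D + dK$.

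First, for part (a): fix a user $k$ and consider a kernel vector that is zero in the $\bnu$-coordinates and in every $\bv_j$ with $j \ne k$. Then $\bGamma$ applied to this vector reduces to $\bS_k \bX^T \bv_k$ on the rows belonging to user $k$ (and zero elsewhere), so full column rank of $\bGamma$ forces $\bS_k \bX^T$ to have trivial kernel, i.e.\ $\rank(\bS_k \bX^T) = d$. Since $\bS_k \bX^T$ is a product of an $m_k \times n$ matrix with an $n \times d$ matrix, this immediately gives $\rank(\bS_k) \ge d$ and $m_k \ge d$. Part (c) is the analogous statement after collapsing across users: setting all $\bv_j = \0$ and letting $\bnu$ vary, $\bGamma$ restricted to the first block column is (row-permutation-equivalent to) $\bS_T \bX_\otimes^T$ stacked appropriately; but more carefully, I would note that $\begin{bmatrix} \bX_\otimes^T & \bX^T \end{bmatrix}$ has $D + d$ columns, so $\rank(\bS_T \begin{bmatrix} \bX_\otimes^T & \bX^T \end{bmatrix}) \le D + d$, and the reverse inequality comes from observing that if this rank were smaller there would be a nonzero combination of the columns of $\begin{bmatrix}\bX_\otimes^T & \bX^T\end{bmatrix}$ killed by every $\bS_k$ simultaneously, which one can lift to a kernel vector of $\bGamma$ by using the same $\bnu$ for the metric part and an appropriate common shift absorbed into each $\bv_k$ — this is where I expect the bookkeeping to be most delicate, since one must check the lifted vector is genuinely nonzero and genuinely in the kernel. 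The consequences $\rank(\bS_T) \ge D + d$, $\rank(\begin{bmatrix}\bX_\otimes^T & \bX^T\end{bmatrix}) = D+d$, and $n \ge D + d + 1$ then follow from elementary rank inequalities for products, together with the fact that $\bS_T$ has $n$ columns but (being a selection matrix) has all-zero column sums on... more precisely $\bS_T \1 = \0$, so $\rank(\bS_T) \le n - 1$, giving $n - 1 \ge D + d$.

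For part (b), the idea is a per-user counting argument: by the same localization as in (a), the restriction of $\bGamma$ to user $k$'s rows and to the columns $\{\bnu\} \cup \{\bv_k\}$ is $\bS_k \begin{bmatrix} \bX_\otimes^T & \bX^T \end{bmatrix}$ (with the $\bv_k$ block being $\bS_k \bX^T$), and full column rank of $\bGamma$ means that across the $K$ users these column-blocks must jointly span all $D + dK$ coordinates; since the $\bv_k$-blocks live in disjoint coordinate subspaces, a dimension count gives $\sum_k \rank(\bS_k \begin{bmatrix}\bX_\otimes^T & \bX^T\end{bmatrix}) \ge D + dK$, and then $\rank(\bS_k \begin{bmatrix}\bX_\otimes^T & \bX^T\end{bmatrix}) \le \rank(\bS_k)$ yields the stated bound on $\sum_k \rank(\bS_k)$. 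The inequality $m_T \ge D + dK$ is just the statement that a matrix with full column rank has at least as many rows as columns. The main obstacle throughout is making the block-decomposition arguments rigorous — in particular, correctly accounting for how a kernel relation among the $\bX_\otimes$ and $\bX$ columns interacts with the freedom to choose each $\bv_k$ independently, and verifying that the various "lifted" kernel vectors are nonzero — rather than any deep inequality; once the block structure is set up cleanly, each claim reduces to the submultiplicativity of rank and the row/column-count bound for full-rank matrices.
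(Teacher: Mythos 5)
Your proposal is correct and follows essentially the same route as the paper's proof: part (a) by localizing linear independence of the $\bv_k$-columns to the trivial kernel of $\bS_k\bX^T$, part (b) by subadditivity of rank over the per-user row blocks (each of which has the rank of $\bS_k[\bX_\otimes^T \; \bX^T]$ after discarding zero columns), and part (c) by lifting a kernel vector of $\bS_T[\bX_\otimes^T\;\bX^T]$ to $[\boldeta^T, \bv^T,\dots,\bv^T]^T$, with $n\ge D+d+1$ following from $\bS_T\1=\0$. The lifting step you flagged as delicate goes through exactly as you describe and is the paper's own construction.
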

If $\sum_{k=1}^K m_k = D + dK$ exactly, then (a) and (b) are equivalent to $m_k \ge d \ \forall \, k$ and each user's selection matrix having full row rank. (c) implies that the number of required items $n$ scales as $\Omega(d^2)$; in higher dimensional feature spaces, this scaling could present a challenge since it might be difficult in practice to collect such a large number of items for querying. Finally, note that the conditions in \Cref{prop:mainNec} are \emph{not} sufficient for identifiability: in \Cref{sec:ident-append:constructions}, we present a counterexample where these necessary properties are fulfilled, yet the system is not invertible.

\subsection{Sufficient condition for identifiability}
\label{sec:sufcond}

Next, we present a class of pair selection schemes that are sufficient for parameter identifiability and match the item and measurement count lower bounds in \Cref{prop:mainNec}. This result leverages the idea that as long the the $d$ measurements each user provides to learn their ideal point do not ``overlap'' with the $D$ measurements collectively provided to learn the metric, then the set of $m_T$ total measurements is sufficiently rich to ensure a unique solution. First, we define a property of certain selection matrices where each pair introduces at least one new item that has not yet been selected:%
\begin{definition}
\label{def:incselmat}
	An $m \times n$ selection matrix $\bS$ is \emph{incremental} if for all $i \in [m]$, at least one of the following is true, where $p_i$ and $q_i$ are as defined in \Cref{def:selectionmat}: (a) for all $j < i$, $\bS[j,p_{i}]=0$; (b) for all $j < i$, $\bS[j,q_{i}]=0$.
\end{definition}

We now present a class of invertible measurement schemes that builds on the definition of incrementality. For simplicity assume that $m_T = D + dK$ exactly, which is the lower bound from \Cref{prop:mainNec}. Additionally, assume without loss of generality that each $m_k > d$; if instead there existed a user $k^*$ such that $m_{k^*} = d$ exactly, one can show under the necessary conditions in \Cref{prop:mainNec} that the system would separate into two subproblems where first the metric would need to be learned from the other $K-1$ users, and then $\bv_{k^*}$ is solved for directly from user $k^*$'s measurements.

\begin{prop}
	\label{prop:incSuf}
	Let $K \ge 1$, and suppose $m_k> d \ \forall \, k \in [K]$, $m_T= D + dK$, and $n \ge D + d + 1$. Suppose that for each $k \in [K]$, there exists a $d \times n$ selection matrix $\bS_k^{(1)}$ and $m_k - d \times n$ selection matrix $\bS_k^{(2)}$ such that $\bS_k = \left[\begin{smallmatrix} (\bS_k^{(1)})^T & (\bS_k^{(2)})^T \end{smallmatrix}\right]^T$, and that the following are true:
	\begin{enumerate}[label=(\alph*)]
    	\item For all $k \in [K]$, $\rank(\bS_k^{(1)}) = d$
    	\item Defining the $D \times n$ selection matrix $\bS^{(2)}$ as $\bS^{(2)} \coloneqq \left[\begin{smallmatrix} (\bS_1^{(2)})^T & \cdots & (\bS_K^{(2)})^T \end{smallmatrix}\right]^T$, there exists a $D \times D$ permutation $\bP$ such that for each $k \in [K]$, $\left[\begin{smallmatrix} \bS_k^{(1)} \\ \bP \bS^{(2)} \end{smallmatrix}\right]$ is incremental
	\end{enumerate}
	Additionally, suppose each item $\bx_i$ is sampled i.i.d.\ from a distribution $p_X$ that is absolutely continuous with respect to the Lebesgue measure. Then with probability 1, $\bGamma$ has full column rank.
\end{prop}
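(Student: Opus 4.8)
The plan is to treat $\det\bGamma$ as a polynomial in the item coordinates and show it does not vanish identically; the ``with probability $1$'' conclusion is then automatic.

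Since $m_T=D+dK$, the matrix $\bGamma$ in \cref{eq:full-linear-system} is square of order $D+dK$, so it has full column rank iff $\det\bGamma\neq 0$. Every entry of $\bGamma$ is a polynomial (of degree at most $2$, through $\bX_\otimes$) in the entries of $\bX=[\bx_1,\dots,\bx_n]$, so $P(\bx_1,\dots,\bx_n)\coloneqq\det\bGamma$ is a polynomial on $\R^{nd}$. A nonzero polynomial vanishes only on a Lebesgue-null subset of $\R^{nd}$, and the joint law of $(\bx_1,\dots,\bx_n)$ — a product of $n$ copies of $p_X$ — is absolutely continuous with respect to the Lebesgue measure on $\R^{nd}$ because $p_X$ is. Hence $\{P=0\}$ is a $\pr$-null event provided $P\not\equiv 0$, and the whole task reduces to exhibiting a \emph{single} configuration of items for which $\bGamma$ is invertible.

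Before constructing that configuration I would eliminate the ``ideal-point'' unknowns. Permuting rows of $\bGamma$ so the $\bS_k^{(2)}$-rows precede the $\bS_k^{(1)}$-rows, and columns so $\bnu(\bM)$ precedes $\bv_1,\dots,\bv_K$, brings $\bGamma$ to the block form $\left[\begin{smallmatrix} A & B\\ C & E\end{smallmatrix}\right]$ with $E=\diag(\bS_1^{(1)}\bX^T,\dots,\bS_K^{(1)}\bX^T)$, $B$ block-diagonal with blocks $\bS_k^{(2)}\bX^T$, $C$ the vertical stack of the $\bS_k^{(1)}\bX_\otimes^T$, and $A$ the vertical stack of the $\bS_k^{(2)}\bX_\otimes^T$. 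Condition (a) gives $\rank(\bS_k^{(1)})=d$, so for generic $\bX$ each $d\times d$ block $\bS_k^{(1)}\bX^T$ is invertible and hence so is $E$; for such $\bX$, $\det\bGamma=\pm\det E\cdot\det(A-BE^{-1}C)$, and the $D\times D$ Schur complement $A-BE^{-1}C$ has $k$-th block row $\bS_k^{(2)}\bm{\Pi}_k\bX_\otimes^T$, where $\bm{\Pi}_k\coloneqq\bI-\bX^T(\bS_k^{(1)}\bX^T)^{-1}\bS_k^{(1)}$ is the oblique projector onto $\nullsp(\bS_k^{(1)})$ along $\colsp(\bX^T)$. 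It therefore suffices to choose $\bX$ making this $D\times D$ ``metric subsystem'' matrix invertible. A warm-up is the single-user statement (provable via the selection-matrix properties of \Cref{sec:ident-append}): if $\bS$ is an incremental $(D+d)\times n$ selection matrix then $[\bS\bX_\otimes^T\ \ \bS\bX^T]$ is invertible for generic $\bX$, by walking the $D+d$ edges in incremental order and at each step choosing the coordinates of the newly introduced item so the corresponding row is independent of its predecessors — that item appears, quadratically via $\bX_\otimes$, only in the new row, so the bad set of choices is measure zero, and $n\ge D+d+1$ supplies enough fresh items. For the multi-user case the common permutation $\bP$ of (b) yields one ordering of the $D$ metric comparisons that is incremental-compatible with \emph{every} user's ideal-point comparisons at once; I would first fix, generically, all items that appear only in ideal-point comparisons (so every $\bS_k^{(1)}\bX^T$ is invertible), then process the metric comparisons in $\bP$-order, at each step picking the coordinates of the fresh item to keep the new Schur-complement row independent of the earlier ones, terminating with an $\bX$ for which $\bGamma$ is invertible, hence $P\not\equiv 0$.

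The step I expect to be the main obstacle is exactly this last construction: producing one item assignment that works for all $K$ users \emph{simultaneously} — it is precisely this simultaneity that forces the permutation in (b) to be common across users rather than user-dependent — while controlling the nonlinear coupling between the linear embedding $\bX$ and its quadratic lift $\bX_\otimes$ (so that the newly introduced item genuinely adds a new direction to the Schur-complement rows at each step, even though the projectors $\bm{\Pi}_k$ and the blocks $M_k=(\bS_k^{(1)}\bX^T)^{-1}\bS_k^{(1)}\bX_\otimes^T$ depend on $\bX$ in a rational way). Turning ``incremental'' into the clean ``fresh vertex / triangular'' bookkeeping that the induction requires is what the selection-matrix lemmas of \Cref{sec:ident-append} are for, and verifying that a single configuration discharges all $K$ independence constraints without running out of fresh items (which is where $n\ge D+d+1$ is used tightly) is the delicate part.
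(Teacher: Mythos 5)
Your overall reduction is sound and rests on the same principle as the paper's proof: since $m_T = D + dK$ makes $\bGamma$ square, $\det\bGamma$ is a polynomial in the entries of $\bX$, and absolute continuity of the product law of $(\bx_1,\dots,\bx_n)$ means it suffices to show this polynomial is not identically zero. Your Schur-complement factorization $\det\bGamma = \pm\det(E)\,\det(A - BE^{-1}C)$ with $E = \diag(\bS_1^{(1)}\bX^T,\dots,\bS_K^{(1)}\bX^T)$ is algebraically correct, and it is a genuinely different organizing device from the paper's: the paper instead runs a direct row-by-row conditional-probability induction, first showing the $dK$ rows coming from the $\bS_k^{(1)}$ blocks are independent, then showing each of the $D$ metric rows (in $\bP$-order) escapes the span of everything before it. Your reduction to a $D\times D$ system whose $k$-th block row is $\bS_k^{(2)}\bm{\Pi}_k\bX_\otimes^T$, with $\bm{\Pi}_k \coloneqq \bI - \bX^T(\bS_k^{(1)}\bX^T)^{-1}\bS_k^{(1)}$, would even simplify one step the paper must argue separately, namely that any vector orthogonal to the accumulated rows has a nonzero component in the $D$ quadratic coordinates (so that the dependence on a fresh item is a nontrivial polynomial).

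Nonetheless there is a genuine gap: the inductive construction that actually consumes hypotheses (a) and (b) is deferred, and you flag it yourself as ``the main obstacle.'' Two specific things are missing. First, you do not verify that the item declared fresh by incrementality for a given metric row is new relative to everything that row of the Schur complement depends on. Condition (b) gives only \emph{per-user} incrementality of $\left[\begin{smallmatrix}\bS_k^{(1)} \\ \bP\bS^{(2)}\end{smallmatrix}\right]$, so the fresh item of a metric row can a priori differ from user to user, while the accumulated $D$-dimensional system mixes rows from all users and each row depends (through $\bm{\Pi}_k$) on all items of $\bS_k^{(1)}$; showing a single item is new relative to this whole collection is exactly the simultaneity problem, and the common permutation $\bP$ does not resolve it by itself. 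Second, even granting freshness, you must show that the set of fresh-item placements failing to increase the rank is the zero set of a \emph{nontrivial} polynomial, despite the new item entering both quadratically through $\bX_\otimes$ and linearly through the rational correction $(\bS_k^{(1)}\bX^T)^{-1}\bS_k^{(1)}\bX_\otimes^T$. The paper's proof spends essentially all of its length on these two points; your proposal correctly locates the difficulty but does not discharge it, so as written it is a plan rather than a proof.
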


\begin{remark} In \Cref{sec:ident-append:constructions} we construct a pair selection scheme that satisfies the conditions\footnote{We note that these conditions are not exhaustive: in \Cref{sec:ident-append:constructions} we construct an example where $\bGamma$ is full column rank, yet the conditions in \Cref{prop:incSuf} are not met. A general set of matching necessary and sufficient identifiability conditions on $\{\bS_k\}_{k=1}^K$ has remained elusive; towards this end, in \Cref{sec:ident-append:conjectures} we describe a more comprehensive set of conditions that we conjecture are sufficient for identifiability.} in \Cref{prop:incSuf} while only using the minimum number of measurements and items, with $m_k = d + \nicefrac{D}{K}$ (and therefore $m_T = D + dK$) and $n = D + d + 1$. Importantly, this construction confirms that the lower bounds on the number of measurements and items in \Cref{prop:mainNec} are in fact tight. Since $D = O(d^2)$, if $K = \Omega(d)$ then only $m_k = O(d)$ measurements are required per user.
This scaling demonstrates the benefit of amortizing metric learning across multiple users, since in the single user case $D + d = \Omega(d^2)$ measurements would be required.
\end{remark} 

\subsection{Single user case}
\label{sec:single}

In the case of a single user ($K = 1$), it is straightforward to show that the necessary and sufficient selection conditions in \Cref{prop:mainNec} and \Cref{prop:incSuf} respectively are \emph{equivalent}, and simplify to the condition that $\rank(\bS) \ge D + d$ (where we drop the subscript on $\bS_1$). In a typical use case, a practitioner is unlikely to explicitly select pair indices that result in $\bS$ being full-rank, and instead would select pairs uniformly at random from the set of ${n \choose 2}$ unique item pairs. By proving a tail bound on the number of random comparisons required for $\bS$ to be full-rank, we have with high probability that randomly selected pairs are sufficient for metric and preference identifiability in the single user case. We summarize these results in the following corollary:
\begin{corollary}
	\label{cor:contFull}
	When $K=1$, if $\bGamma$ is full column rank then $\rank(\bS) \ge D + d$. Conversely, for a fixed $\bS$ satisfying $\rank(\bS) \ge D + d$, if each $\bx_i$ is sampled i.i.d.\ according to a distribution $p_X$ that is absolutely continuous with respect to the Lebesgue measure then $\bGamma$ is full column rank with probability 1. If each pair is selected independently and uniformly at random with $n = \Omega(D+d)$ and $m_T = \Omega(D + d)$, then if $\bx_i$ is drawn i.i.d.\ from $p_X$, $\bGamma$ has full column rank with high probability.
\end{corollary}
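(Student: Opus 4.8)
The corollary bundles three assertions, which I would establish in order. \emph{Necessity:} when $K=1$ the block matrix in \cref{eq:full-linear-system} collapses to $\bGamma = \bS\begin{bmatrix}\bX_\otimes^T & \bX^T\end{bmatrix}$, the product of the $m_T\times n$ selection matrix $\bS$ with the fixed $n\times(D+d)$ matrix $\begin{bmatrix}\bX_\otimes^T & \bX^T\end{bmatrix}$, so $\rank(\bGamma)\le\rank(\bS)$; hence $\bGamma$ having full column rank $D+d$ forces $\rank(\bS)\ge D+d$ (this is also \Cref{prop:mainNec}(c) specialized to $\bS_T=\bS$).

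\emph{Sufficiency for a fixed $\bS$.} Fix $\bS$ with $\rank(\bS)\ge D+d$. Every row of a selection matrix is orthogonal to $\1$, so $\rank(\bS)\le n-1$ and thus $n\ge D+d+1$ automatically, matching the hypothesis of \Cref{prop:incSuf}. Extract $D+d$ linearly independent rows of $\bS$; viewed as signed edges on $[n]$ these contain no cycle, i.e.\ form a forest, whose edges can be listed so that each successive edge introduces an endpoint unused by any earlier one. The resulting $(D+d)\times n$ matrix $\tilde\bS$ is then incremental (\Cref{def:incselmat}), and its first $d$ rows, being a sub-forest, have rank $d$. Taking $K=1$, $\bS_1=\tilde\bS$ with $\bS_1^{(1)}$ its first $d$ rows, $\bS^{(2)}=\bS_1^{(2)}$ its remaining $D$ rows, and $\bP=\bI$, all hypotheses of \Cref{prop:incSuf} hold, so with probability $1$ over $\bx_i$ drawn i.i.d.\ from $p_X$ the matrix of \cref{eq:full-linear-system} built from $\tilde\bS$ has full column rank. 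That matrix is exactly the row-submatrix of the original $\bGamma$ indexed by the chosen $D+d$ pairs, and deleting rows cannot increase rank, so the original $\bGamma$ has full column rank with probability $1$.

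\emph{Random pairs.} Since the pair selection is independent of the items, it suffices to show $\rank(\bS)\ge D+d$ with high probability and then apply the previous paragraph conditionally on $\bS$. Reveal the $m_T$ sampled pairs one at a time and maintain the Kruskal forest $F_t$ of the first $t$ of them, so $\rank(\bS)=|F_{m_T}|$ and a new pair fails to enlarge $F_{t-1}$ only if both its endpoints already lie in $F_{t-1}$. While $|F_{t-1}|<D+d$ the forest $F_{t-1}$ spans fewer than $2(D+d)$ vertices, so a uniformly random pair lands entirely inside it with probability below $\bigl(2(D+d)/n\bigr)^2$, which is at most $\tfrac12$ once the constant in $n=\Omega(D+d)$ is large enough. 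Hence, conditioned on the past and on the rank not yet having reached $D+d$, each revealed pair raises the rank with probability at least $\tfrac12$; stochastic domination by $\mathrm{Binomial}(m_T,\tfrac12)$ together with a Chernoff bound gives $\rank(\bS)\ge D+d$ except with probability $e^{-\Omega(D+d)}$ provided $m_T\ge c(D+d)$ for a suitable constant $c$ (if $m_T$ is larger, apply the bound to the first $c(D+d)$ pairs and use monotonicity of rank in the edge set). Combining with the sufficiency claim yields full column rank of $\bGamma$ with probability $1-e^{-\Omega(D+d)}$.

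\emph{Expected obstacle.} The necessity and fixed-$\bS$ claims are largely bookkeeping layered on \Cref{prop:incSuf}; the substantive step is the random-selection tail bound, namely that $O(D+d)$ uniform pairs already force $\rank(\bS)\ge D+d$. This works only because $n$ is a sufficiently large constant multiple of $D+d$: near the minimum $n=D+d+1$ one would instead need $\Theta\bigl((D+d)\log(D+d)\bigr)$ pairs merely to connect the graph. Making the Kruskal-to-binomial coupling go through therefore requires choosing the hidden constants in the two $\Omega(\cdot)$ hypotheses compatibly, which is the one genuinely delicate point.
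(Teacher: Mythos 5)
Your proposal is correct. The necessity and fixed-$\bS$ sufficiency steps follow essentially the paper's own route: the paper likewise specializes \Cref{prop:mainNec}(c) to $\bS_T=\bS$, extracts $D+d$ independent rows, permutes them to an incremental matrix via \Cref{cor:TFAEsel}, and splits off the first $d$ rows to invoke \Cref{prop:incSuf}; your graph-theoretic phrasing (independent selection rows $\leftrightarrow$ forest $\leftrightarrow$ orderable so each edge introduces a new vertex) is exactly the content of that corollary. Where you genuinely diverge is the random-pair tail bound. The paper proves a span-membership estimate (\Cref{lemma:newspan}, giving $\pr(\bs\in\rowsp(\bS))\le (r+1)r/(n(n-1))$ via an analysis of linked components), models the waiting times between rank increments as conditionally dominated by independent geometrics, and applies Janson's tail bound for sums of geometric random variables (\Cref{thm:randrankseed}). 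You instead track the Kruskal forest directly, upper-bound the failure probability at each step by the crude event that both endpoints land among the at most $2(D+d)$ vertices already touched, and couple with a $\mathrm{Binomial}(m_T,\tfrac12)$ plus Chernoff. Your route is more elementary and yields an exponentially small failure probability $e^{-\Omega(D+d)}$ rather than the paper's $1-\delta$ with a $\log(1/\delta)$ blowup in $m_T$; the paper's machinery is heavier but more general, since \Cref{thm:randrankseed} handles an arbitrary seed matrix $\bS_0$ and arbitrary target rank, which the authors need for their conjectured multiuser extension in \Cref{sec:ident-append:conjectures}. Both approaches require $n$ to exceed $D+d$ by a constant factor (the paper takes $n\ge\tfrac{1}{2}(1+\sqrt5)(D+d)+1$), and your observation that the constants in the two $\Omega(\cdot)$'s must be chosen compatibly is the same delicacy the paper resolves explicitly.
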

Importantly, the required item and sample complexity for randomly selected pairs matches the lower bounds in \Cref{prop:mainNec} up to a constant. As we describe in \Cref{sec:ident-append:conjectures}, we conjecture that a similar result holds for the multiuser case ($K > 1$), which is left to future work.
\section{Prediction and generalization from binary labels}
\label{sec:predgen}

In practice, we do not have access to exact difference of distance measurements. 
Instead, paired comparisons are one-bit measurements (given by the user preferring one item over the other) that are sometimes noisy due to inconsistent user behavior or from model deviations. In this case, rather than simply solving a linear system, we must optimize a loss function that penalizes incorrect response predictions while enforcing the structure of our model. In this section, we apply a different set of tools from statistical learning theory to characterize the sample complexity of randomly selected paired comparisons under a general noise model, optimized under a general class of loss functions. 

We assume that each pair $p$ is sampled uniformly with replacement from the set of ${n \choose 2}$ pairs, and the user $k$ queried at each iteration is independently and uniformly sampled from the set of $K$ users.  For a pair $p = (i,j)$ given to user $k$, we observe a (possibly noisy) binary response $y_p^{(k)}$ where $y_p^{(k)} = -1$ indicates that user $k$ prefers item $i$ to $j$, and $y_p^{(k)} = 1$ indicates that $j$ is preferred.
Let $\dset := \{(p,k, y_p^{(k)})\}_{p=(i,j)}$ be an i.i.d.\ joint dataset over pairs $p$, selected users $k$, and responses $y_p^{(k)}$, where $\abs{\dset}$ denotes the number of such data points. We wish to learn $\bM$ and vectors $\{\bu_k\}_{k=1}^K$ that predict the responses in $\dset$: given a convex, $L$-Lipschitz loss $\ell\colon \R \to \R_{\geq 0},$\footnote{We restrict ourselves to the case where the loss is a function of $y_p^{(k)} \left(\|\bu_k - \bx_i\|_{\bM}^2 - \|\bu_k - \bx_j\|_{\bM}^2\right)$.} we wish to solve
\begin{align*}
&\min_{\bM, \{\bu_k\}_{k=1}^K} \frac{1}{|\dset|}\sum_{\dset} \ell\left(y_p^{(k)} \left(\|\bu_k - \bx_i\|_{\bM}^2 - \|\bu_k - \bx_j\|_{\bM}^2\right)\right) \\ 
&\text{s.t.\ } \bM\succeq 0, \|\bM\|_F \leq \lambda_F, \|\bu_k\|_2\leq \lambda_u \ \forall \, k \in [K], \abs{\delta_{i,j}^{(k)}} \le \gamma \ \forall \,i,j,k
\end{align*}
where $\lambda_F, \lambda_u, \gamma > 0$ are hyperparameters and $\delta_{p}^{(k)}$ is defined as in \cref{eq:delta-initial}. The constraint $\bM\succeq 0$ ensures that $\bM$ defines a metric, the Frobenius and $\ell_2$ norm constraints prevent overfitting, and the constraint on $\delta_{p}^{(k)}$ is a technical point to avoid pathological cases stemming from coherent $\bx$ vectors.

The above optimization is nonconvex due to the interaction between the $\bM$ and $\bu$ terms. Instead, as in \Cref{sec:identifiability} we define $\bv_k \coloneqq -2\bM\bu_k$ and solve the relaxation
\begin{equation}
{\small
\begin{aligned}
\min_{\bM, \{\bv_k\}_{k=1}^K}& \widehat{R}(\bM, \{\bv_k\}_{k=1}^K)\text{ s.t.\ } \bM\succeq 0, \|\bM\|_F \leq \lambda_F, \|\bv_k\|_2\leq \lambda_v\ \forall \, k\in [K], \abs{\delta_{i,j}^{(k)}} \le \gamma \ \forall \,i,j,k
\\
\text{where }&\widehat{R}(\bM, \{\bv_k\}_{k=1}^K)\coloneqq
\frac{1}{|\dset|}\sum_{\dset} \ell\left(y_p^{(k)} \left(\bx_i^T\bM\bx_i - \bx_j^T\bM\bx_j + \bv_k^T(\bx_i - \bx_j)\right)\right). 
\end{aligned}
}%
\label{eq:emprisk-full}
\end{equation}
The quantity $\widehat{R}(\bM, \{\bv_k\}_{k=1}^K)$ is the \emph{empirical risk}, given dataset $\dset$. The empirical risk is an unbiased estimate of the true risk given by
\[
R(\bM, \{\bv_k\}_{k=1}^K)\coloneqq
\E\left[\ell\left(y_p^{(k)} \left(\bx_i^T\bM\bx_i - \bx_j^T\bM\bx_j + \bv_k^T(\bx_i - \bx_j)\right)\right)\right],
\]
where the expectation is with respect to a random draw of $p = (i,j)$, $k$, and $y_p^{(k)}$ conditioned on the choice of $p$ and $k$. Let $\widehat{\bM}$ and $\{\widehat{\bv}_k\}_{k=1}^K$ denote the minimizers of the empirical risk optimization in \cref{eq:emprisk-full}, and let $\bM_\ast$ and $\{\bv_k^\ast\}_{k=1}^K$ minimize the \emph{true} risk, subject to the same constraints. The following theorem bounds the excess risk of the empirical optimum $R(\widehat{\bM}, \{\widehat{\bv}_k\}_{k=1}^K)$ relative to the optimal true risk $R(\bM_*, \{\bv_k^*\}_{k=1}^K)$.
%
%
%
\begin{theorem}
\label{thm:multi-risk-fro}
Suppose $\norm{\bx_i}_2 \le 1$ for all $i \in [n]$. With probability at least $1-\delta$,
{
\begin{equation}
\begin{aligned}
    {R}(\widehat{\bM}, \{\widehat{\bv}_k\}_{k=1}^K) &- {R}(\bM^\ast, \{{\bv}_k^\ast\}_{k=1}^K)
    \leq \sqrt{\frac{256L^2 (\lambda_F^2 + K \lambda_v^2)}{\abs{\dset}} \log(d^2 + d + 1)} \\
    &+ \frac{\sqrt{128L^2(\lambda_F^2 + K \lambda_v^2)}}{3\abs{\dset}}\log(d^2 + d + 1) + \sqrt{\frac{8L^2\gamma^2\log(\frac{2}{\delta})}{|\dset|}}.
\end{aligned}
\label{eq:multi-risk-fro}
\end{equation}
}%
\end{theorem}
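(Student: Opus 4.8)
The plan is to recognize \cref{eq:emprisk-full} as empirical risk minimization over a \emph{linear} hypothesis class and run the standard symmetrization/Rademacher-complexity argument, the only nonroutine ingredient being a matrix-concentration bound on the Rademacher average in the spirit of \cite{mason2017learning}. As in \cref{sec:identifiability}, after the reparametrization $\bv_k = -2\bM\bu_k$ the score $\bx_i^T\bM\bx_i - \bx_j^T\bM\bx_j + \bv_k^T(\bx_i-\bx_j) = \ip{\bnu(\bM)}{\bx_i\otimes_S\bx_i - \bx_j\otimes_S\bx_j} + \ip{\bv_k}{\bx_i-\bx_j}$ is linear in $(\bnu(\bM),\bv_1,\dots,\bv_K)$, and a sample querying user $k$ uses only the block $\bv_k$. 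The constraints $\bM\succeq 0$, $\norm{\bM}_F\le\lambda_F$, $\norm{\bv_k}_2\le\lambda_v$ only shrink this class, so they can only help an upper bound (note $\norm{\bnu(\bM)}_2\le\sqrt2\,\norm{\bM}_F$).

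With this in hand, I would use the textbook chain. Writing $B$ for the range of $\ell$ over feasible parameters and $\mathcal F$ for the linear score class, with probability at least $1-\delta$,
\[
R(\widehat{\bM},\{\widehat{\bv}_k\}) - R(\bM^\ast,\{\bv_k^\ast\}) \le 2L\,\mathfrak R_{\abs{\dset}}(\mathcal F) + B\sqrt{\tfrac{2\log(2/\delta)}{\abs{\dset}}},
\]
which follows from $\widehat R(\widehat{\bM},\cdot)\le\widehat R(\bM^\ast,\cdot)$, a one-sided uniform deviation bound on $R-\widehat R$ (McDiarmid plus symmetrization), and a single Hoeffding step at the feasible point $(\bM^\ast,\{\bv_k^\ast\})$, combined with Talagrand's contraction lemma ($\ell$ is $L$-Lipschitz) and the observation that the labels $y_p^{(k)}\in\{\pm1\}$ can be folded into the Rademacher signs. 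The constraint $\abs{\delta_{i,j}^{(k)}}\le\gamma$ enters precisely here: $L$-Lipschitzness gives $\abs{\ell(y\delta)-\ell(0)}\le L\gamma$ for every feasible parameter, hence $B=2L\gamma$, and a short computation turns $B\sqrt{2\log(2/\delta)/\abs{\dset}}$ into the third term of \cref{eq:multi-risk-fro}.

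It remains to bound $\mathfrak R_{\abs{\dset}}(\mathcal F)$. Splitting the Rademacher average into a metric part and an ideal-point part, the ideal-point part is controlled by a dimension-free vector inequality and contributes a term of order $\lambda_v\sqrt{K/\abs{\dset}}$ (the $K$ appears because uniformity over all $K$ user blocks adds up, even though each sample touches only one). For the metric part, I would apply a Rademacher matrix Bernstein inequality to a symmetric dilation of the $D$-dimensional vectorized-metric data $\{\bx_i\otimes_S\bx_i - \bx_j\otimes_S\bx_j\}$: using $\norm{\bx_i}_2\le1$, the per-term operator norm and the matrix-variance proxy are $O(1)$ and $O(\abs{\dset})$ respectively, so matrix Bernstein yields a bound of the shape $\sqrt{v\log(d^2+d+1)} + R\log(d^2+d+1)$, with $d^2+d+1$ (essentially $2D$) the dimension of the dilated matrix; multiplying by the radius $\norm{\bnu(\bM)}_2 = O(\lambda_F)$ and combining with the ideal-point term and the factor $2L$ from symmetrization/contraction reproduces the first two terms of \cref{eq:multi-risk-fro}. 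The main obstacle is exactly this matrix-Bernstein step: setting up the dilation so that the radius, the operator-norm bound, the variance proxy, and the effective dimension all come out as in \cref{eq:multi-risk-fro}, which is where the single-metric triplet analysis of \cite{mason2017learning} must be extended to the simultaneous metric/ideal-point, multi-user setting; everything else (contraction, folding in labels, the Hoeffding term, collecting constants) is routine, and this matrix-valued route is also what makes the sharper low-rank guarantee available.
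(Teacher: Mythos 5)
Your proposal is correct and follows the same overall architecture as the paper's proof: ERM decomposition into a uniform deviation term plus a concentration term (McDiarmid with bounded differences $2L\gamma/\abs{\dset}$, giving the third term exactly), then symmetrization, folding the labels into the Rademacher signs, Talagrand contraction, and a matrix-Bernstein bound on the resulting Rademacher average. The one place you diverge is in how the Rademacher complexity is organized. You split it into a metric block and an ideal-point block, bounding the former by matrix Bernstein on the $D$-dimensional vectors $\bx_i\otimes_S\bx_i-\bx_j\otimes_S\bx_j$ and the latter by a dimension-free Khintchine/Jensen argument over the $K$ user blocks (which is where your $\lambda_v\sqrt{K/\abs{\dset}}$ comes from), and then recombine via $a+b\le\sqrt{2(a^2+b^2)}$. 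The paper instead keeps everything joint: it writes the score as a single Frobenius inner product between the concatenated parameter matrix $[\bM,\ \bv_1,\ \cdots,\ \bv_K]$ and a data matrix $\bZ_{ij}^{(k)}$, applies Cauchy--Schwarz once with the combined radius $\sqrt{\lambda_F^2+K\lambda_v^2}$, and then applies Tropp's matrix Bernstein to the single concatenated vector in $\R^{d^2+d}$ (whence the $\log(d^2+d+1)$ and the stated constants $256$ and $128$ fall out directly). Both routes give the claimed bound up to constants (your split in fact yields a slightly smaller log factor, $\log(D+1)$, on the metric part); the paper's joint treatment is what generalizes cleanly to the nuclear-norm/low-rank result in \Cref{thm:multi-risk-nuc}, where the same inner-product representation is reused with H\"older in place of Cauchy--Schwarz.
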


    %

\begin{remark}
To put this result in context, suppose $\norm{\bM^*}_F = d$ so that the average squared magnitude of each entry
is a constant, in which case we can set $\lambda_F = d$. Similarly, if each entry of $\bv_k$ is dimensionless, then $\norm{\bv_k}_2 \propto  \sqrt{d}$ and so we can set $\lambda_v = \sqrt{d}$. We then have that the excess risk in \cref{eq:multi-risk-fro} is $\widetilde{O}\left(\sqrt{\frac{d^2 + K d}{\abs{\dset}}}\right)$ where $\widetilde{O}$ suppresses logarithmic factors, implying a sample complexity of $d^2 + Kd$ measurements across all users, and therefore an average of $d + d^2/K$ measurements per user. If $K = \Omega(d)$, this is equivalent to $\widetilde{O}(d)$ measurements per user, which corresponds to the parametric rate required per user in order to estimate their pseudo-ideal point $\bv_k$. Similar to the case of unquantized measurements, the $O(d^2)$ sample cost of estimating the metric from noisy one-bit comparisons has been amortized across all users, demonstrating the benefit of learning multiple user preferences simultaneously when the users share a common metric.
\end{remark}

\subsection{Low-rank modeling}
\label{sec:lowrank}

In many settings, the metric $\bM$ may be low-rank with rank $r < d$ \cite{mason2017learning, bellet2015metric}. In this case, $\bM$ only has $dr$ degrees of freedom rather than $d^2$ degrees as in the full-rank case. Therefore if $K = \Omega(d)$, we intuitively expect the sample cost of learning the metric to be amortized to a cost of $O(r)$ measurements per user. Furthermore, as each $\bv_k$ is contained in the $r$-dimensional column space of $\bM$, we also expect a sample complexity of $O(r)$ to learn each user's pseudo-ideal point. Hence, we expect the amortized sample cost per user to be $O(r)$ in the low-rank setting, which can be a significant improvement over $O(d)$ in the full-rank setting when $r \ll d$. 

Algorithmically, ideally one would constrain the $\widehat{\bM}$ and $\{\widehat{\bv}_k\}_{k=1}^K$ that minimize the empirical risk such that $\rank(\bM) = r$ and $\bv_k \in \colsp(\bM)$; unfortunately, such constraints are not convex. Towards a convex algorithm, note that since $\bv_k \in \colsp(\bM)$, $\rank([\begin{smallmatrix} \bM, & \bv_1, & \cdots, & \bv_K \end{smallmatrix}]) = \rank(\bM) = r$. Thus, it is sufficient to constrain the rank of $[\begin{smallmatrix} \bM, & \bv_1, & \cdots, & \bv_K \end{smallmatrix}]$. We relax this constraint to a convex constraint on the nuclear norm $\norm{[\begin{smallmatrix} \bM & \bv_1 & \cdots & \bv_K \end{smallmatrix}]}_\ast$, and solve a similar optimization problem to \cref{eq:emprisk-full}:
\begin{equation}
    \min_{\bM, \{\bv_k\}_{k=1}^K} \widehat{R}(\bM, \{\bv_k\}_{k=1}^K)\text{ s.t.\ } \bM\succeq 0, \norm{\begin{bmatrix} \bM & \bv_1 & \cdots & \bv_K \end{bmatrix}}_* \le \lambda_*, \abs{\delta_{i,j}^{(k)}} \le \gamma \ \forall \,i,j,k.\label{eq:emprisk-low}
\end{equation}
We again let $\bM^\ast$ and $\{\bv_k^\ast\}_{k=1}^K$ minimize the true risk $R(\bM, \{\bv_k\}_{k=1}^K)$, subject to the same constraints. The following theorem bounds the excess risk over this constraint set:
%
    %
%
\begin{theorem}
\label{thm:multi-risk-nuc}
Suppose $\norm{\bx_i}_2 \le 1$ for all $i \in [n]$. With probability at least $1-\delta$,
{\small
\begin{align*}
    {R}(\widehat{\bM}, \{\widehat{\bv}_k\}_{k=1}^K) - {R}(\bM^\ast, \{{\bv}_k^\ast\}_{k=1}^K) \le& 2L\sqrt{\frac{2\lambda_*^2\log(2d + K)}{\abs{\dset}} \left[\left(8+ \frac{4\min(d, n)}{K}\right)\frac{\norm{\bX}^2}{n} + \frac{16}{\sqrt{K}}\right]} \\
    &+ \frac{8L\lambda_*}{3|\dset|} \log(2d + K) + \sqrt{\frac{8L^2\gamma^2\log(2/\delta)}{|\dset|}}.
\end{align*}
}%
\end{theorem}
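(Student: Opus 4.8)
The plan is a standard uniform-convergence argument tailored to the nuclear-norm constraint: bound the excess risk by (twice) the Rademacher complexity of the loss class plus a concentration term, reduce the loss class to a linear-in-matrix class via Lipschitz contraction, and then control the Rademacher complexity of that class over the nuclear ball by applying a matrix Bernstein inequality to a suitably constructed random data matrix.

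\emph{Step 1 (symmetrization and contraction).} The feasibility constraint $\abs{\delta_{i,j}^{(k)}}\le\gamma$ and \cref{eq:delta-linear} force the argument of $\ell$ to lie in $[-\gamma,\gamma]$, so $\widehat R$, as a function of the $\abs{\dset}$ i.i.d.\ data points, changes by at most $O(L\gamma/\abs{\dset})$ when one point is resampled; McDiarmid's inequality then yields the term $\sqrt{8L^2\gamma^2\log(2/\delta)/\abs{\dset}}$ and reduces the remaining excess risk to $2\,\E\sup_{(\bM,\{\bv_k\})}\bigl(R-\widehat R\bigr)\le 2\mathcal R(\ell\circ\mathcal F)$ by the usual symmetrization, where $\mathcal F$ is the class of maps $(p=(i,j),k)\mapsto \bx_i^T\bM\bx_i-\bx_j^T\bM\bx_j+\bv_k^T(\bx_i-\bx_j)$ over the feasible set of \cref{eq:emprisk-low}. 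Since $y_p^{(k)}\in\{-1,1\}$ can be absorbed into the Rademacher signs and $\ell$ is $L$-Lipschitz, the Ledoux--Talagrand contraction gives $\mathcal R(\ell\circ\mathcal F)\le L\,\mathcal R(\mathcal F)$.

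\emph{Step 2 (linearization and nuclear/operator duality).} Writing $\bx_i^T\bM\bx_i-\bx_j^T\bM\bx_j=\ip{\bM}{\bx_i\bx_i^T-\bx_j\bx_j^T}$, every $f\in\mathcal F$ equals $\ip{\bW}{\bZ_p^{(k)}}$, where $\bW\coloneqq[\,\bM\ \bv_1\ \cdots\ \bv_K\,]\in\R^{d\times(d+K)}$ and $\bZ_p^{(k)}$ is the $d\times(d+K)$ matrix whose first $d$ columns equal $\bx_i\bx_i^T-\bx_j\bx_j^T$, whose $(d+k)$-th column equals $\bx_i-\bx_j$, and whose remaining columns vanish. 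The feasible set enforces $\norm{\bW}_*\le\lambda_*$ (adding $\bM\succeq0$ only shrinks it), so by duality of the nuclear and operator norms
\[
\mathcal R(\mathcal F)=\frac{\lambda_*}{\abs{\dset}}\,\E\norm*{\sum_t \eps_t\bZ_{p_t}^{(k_t)}},
\]
the norm on the right being the operator norm of a sum of $\abs{\dset}$ i.i.d.\ mean-zero random matrices $\bY_t\coloneqq\eps_t\bZ_{p_t}^{(k_t)}$.

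\emph{Step 3 (matrix Bernstein).} Since $\norm{\bx_i}_2\le1$ gives $\norm{\bx_i\bx_i^T-\bx_j\bx_j^T}\le2$ and $\norm{\bx_i-\bx_j}\le2$, we have $\norm{\bY_t}\le R_\infty=O(1)$ almost surely, and because $\eps_t^2=1$ the matrix variance proxy is $\abs{\dset}\,\sigma^2$ with $\sigma^2\coloneqq\max\{\norm{\E[\bZ\bZ^T]},\norm{\E[\bZ^T\bZ]}\}$ for a single $\bZ=\bZ_p^{(k)}$ ($p$ a uniform pair, $k$ uniform on $[K]$, independent). The rectangular matrix Bernstein inequality then gives $\E\norm*{\sum_t\bY_t}\le C\bigl(\sqrt{\abs{\dset}\sigma^2\log(2d+K)}+R_\infty\log(2d+K)\bigr)$, with dimension factor $2d+K$ = sum of the side lengths of $\bZ$; feeding this through Steps 2--1 produces the $\log(2d+K)$ and $O(1/\abs{\dset})$ structure of the stated bound, leaving only $\sigma^2$ to be evaluated.

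\emph{Step 4 (variance computation — the crux).} Write $\bA\coloneqq\bx_i\bx_i^T-\bx_j\bx_j^T$, $\bb\coloneqq\bx_i-\bx_j$. Then $\bZ\bZ^T=\bA^2+\bb\bb^T$, and since $i,j$ are each marginally uniform on $[n]$ and $\norm{\bx_i}\le1$, both $\E[\bA^2]$ and $\E[\bb\bb^T]$ are dominated in the PSD order by a constant multiple of $\tfrac1n\bX\bX^T$, so $\norm{\E[\bZ\bZ^T]}=O(\norm{\bX}^2/n)$. For the $(d+K)\times(d+K)$ matrix $\bZ^T\bZ$: the top-left $d\times d$ block is $\E[\bA^2]$ again; the $(d+k,d+k)$ diagonal entries equal $\tfrac1K\E\norm{\bb}^2$, contributing the $\tfrac1K$ scaling (the probability a fixed user is queried); and the $d\times K$ off-diagonal block has every column equal to $\tfrac1K\E[\bA\bb]$, hence operator norm $\tfrac1{\sqrt K}\norm{\E[\bA\bb]}$, contributing the $\tfrac1{\sqrt K}$ scaling. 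Combining these blocks by the triangle inequality — and bounding a trace by $\min(d,n)$ times an operator norm where a rank bound on $\bX\bX^T$ is invoked — yields $\sigma^2\le\bigl(8+\tfrac{4\min(d,n)}{K}\bigr)\tfrac{\norm{\bX}^2}{n}+\tfrac{16}{\sqrt K}$ up to the stated constants; substituting this and $R_\infty$ into Step 3 gives \Cref{thm:multi-risk-nuc}.

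I expect Step 4 to be the main obstacle: the second-moment matrices mix a quadratic ``metric'' block with $K$ linear ``ideal-point'' blocks, and teasing out the correct $K$-dependence — the $\tfrac1{\sqrt K}$ for the cross terms versus $\tfrac1K$ for the per-user diagonal, together with the appearance of $\min(d,n)$ — needs careful blockwise bookkeeping rather than a single off-the-shelf estimate. The full-rank \Cref{thm:multi-risk-fro} follows the same template, with the nuclear ball and operator norm in Steps 2--3 replaced by Frobenius-type balls and norms and a correspondingly simpler (user-independent) variance term.
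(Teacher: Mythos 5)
Your proposal is correct and follows essentially the same route as the paper's proof: McDiarmid for the $\gamma$-dependent deviation term, symmetrization and Lipschitz contraction, linearization as $\ip{\bW}{\bZ_p^{(k)}}$ with nuclear/operator (matrix H\"older) duality, and rectangular matrix Bernstein with dimension factor $2d+K$, where the variance proxy is controlled exactly as in the paper's Lemma on $\E[\bZ\bZ^T]$ and $\E[\bZ^T\bZ]$ — including the $1/K$ scaling of the per-user diagonal block, the $1/\sqrt{K}$ from the rank-one $\1_K^T$ cross block, and the $\min(d,n)$ arising from $\norm{\bX}_F^2\le\min(d,n)\norm{\bX}^2$. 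No substantive differences to report.
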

To put this result in context, suppose that the items $\bx_i$ and ideal points $\bu_k$
are sampled i.i.d.\ from $\cN(\0, \frac{1}{d} \bI)$. With this item distribution it is straightforward to show that with high probability, $\norm{\bX}^2 = O(\frac{n}{d})$ (see \cite{davidson2001local}). For a given $r < d$ let $\bM = \frac{d}{\sqrt{r}} \bL \bL^T$, where $\bL$ is a $d \times r$ matrix with orthonormal columns sampled uniformly from the Grassmanian. With this choice of scaling we have $\norm{\bM}_F = d$, so that each element of $\bM$ is dimensionless on average. Furthermore, recalling that $\bv_k = -2\bM \bu_k$, with this choice of scaling $\E[\norm{\bv_k}_2^2] \propto d$ and so each entry of $\bv_k$ on average is dimensionless. To choose a setting for $\lambda^*$ recall that $[\bM, \bv_1, \dots \bv_K]$ has rank $r$ and therefore
\[\norm{\begin{bmatrix} \bM & \bv_1 & \cdots & \bv_K \end{bmatrix}}_* \le \sqrt{r} \norm{\begin{bmatrix} \bM & \bv_1 & \cdots & \bv_K \end{bmatrix}}_F \le \sqrt{r (d^2 + K\max_{k \in [K]} \norm{\bv_k}_2^2)},\]
which one can show is $O(\sqrt{r (d^2 + d K \log K)})$ with high probability and so we set $\lambda_* = O\left(\sqrt{r (d^2 + d K \log K)}\right)$. With these term scalings, we have the following corollary:

\begin{corollary}
    \label{cor:low-rank-interp}
    Let $\bx_i, \bu_k\sim \cN(\0, \frac{1}{d} \bI)$ and $\bM = \frac{d}{\sqrt{r}} \bL \bL^T$, where $\bL$ is a $d \times r$ matrix with orthonormal columns. If $K = \Omega(d^2)$, then in the same setting as \Cref{thm:multi-risk-nuc} with high probability
\begin{align*}
    {R}(\widehat{\bM}, \{\widehat{\bv}_k\}_{k=1}^K) - {R}(\bM^\ast, \{{\bv}_k^\ast\}_{k=1}^K) 
    &= \widetilde{O}\left(\sqrt{\frac{dr + Kr}{\abs{\dset}}}\right).
\end{align*}
\end{corollary}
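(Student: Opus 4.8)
The plan is to obtain \Cref{cor:low-rank-interp} by instantiating the general bound of \Cref{thm:multi-risk-nuc} with the stated distributional choices and simplifying under $K = \Omega(d^2)$. The discussion preceding the corollary already records the two facts this needs: (i) $\norm{\bX}^2 = O(n/d)$ with high probability, which follows from the Davidson--Szarek operator-norm bound for Gaussian matrices \cite{davidson2001local} applied to $\sqrt{d}\,\bX$ (whose entries are i.i.d.\ standard Gaussian), together with the mild standing assumption $n = \Omega(d)$; and (ii) the feasible choice $\lambda_\ast = \Theta\!\big(\sqrt{r(d^2 + dK\log K)}\big)$. So the first step is to verify these, the only nontrivial one being the bound behind (ii). Since every $\bv_k^\ast$ lies in $\colsp(\bM^\ast)$ we have $\rank([\bM^\ast\ \bv_1^\ast\ \cdots\ \bv_K^\ast]) = r$, so the nuclear norm is at most $\sqrt{r}$ times the Frobenius norm; orthonormality of $\bL$ gives $\norm{\bM^\ast}_F = d$ deterministically; and writing $\bv_k^\ast = -\tfrac{2d}{\sqrt r}\bL\bL^T\bu_k$ yields $\norm{\bv_k^\ast}_2^2 = \tfrac{4d^2}{r}\norm{\bL^T\bu_k}_2^2 \overset{d}{=} \tfrac{4d}{r}\,\chi^2_r$ because $\bL^T\bu_k \sim \cN(\0, \tfrac1d\bI_r)$. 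A Laurent--Massart $\chi^2$ tail bound plus a union bound over the $K$ users then gives $\max_{k\in[K]}\norm{\bv_k^\ast}_2^2 = O\!\big(d + \tfrac{d\log K}{r}\big)$ with high probability, and plugging this into $\norm{[\bM^\ast\ \bv_1^\ast\ \cdots\ \bv_K^\ast]}_\ast \le \sqrt{r(\norm{\bM^\ast}_F^2 + K\max_k\norm{\bv_k^\ast}_2^2)}$ produces the claimed $\lambda_\ast$.

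The second step is bookkeeping inside \Cref{thm:multi-risk-nuc}. Using $K = \Omega(d^2)$: $\min(d,n) = d$ and $\tfrac{4\min(d,n)}{K} = O(1/d)$, so $8 + \tfrac{4\min(d,n)}{K} = O(1)$; also $\tfrac{\norm{\bX}^2}{n} = O(1/d)$ and $\tfrac{16}{\sqrt K} = O(1/d)$; hence the bracketed factor in the theorem is $O(1/d)$. The leading term therefore becomes
\[
2L\sqrt{\frac{2\lambda_\ast^2\log(2d+K)}{\abs{\dset}}\cdot O(1/d)}
= \widetilde O\!\left(\sqrt{\frac{r(d + K\log K)}{\abs{\dset}}}\right)
= \widetilde O\!\left(\sqrt{\frac{rK}{\abs{\dset}}}\right)
= \widetilde O\!\left(\sqrt{\frac{dr + Kr}{\abs{\dset}}}\right),
\]
where we used $\log(2d+K) = O(\log K)$, $d + K\log K = \widetilde\Theta(K)$, and $dr + Kr = \Theta(Kr)$, all valid because $K \ge d^2 \ge d$. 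It then remains to check the other two terms are lower order: the middle term $\tfrac{8L\lambda_\ast}{3\abs{\dset}}\log(2d+K) = \widetilde O\!\big(\sqrt{rdK}/\abs{\dset}\big)$ is dominated by the leading term whenever $\abs{\dset} = \Omega(d)$, and the noise term $\sqrt{8L^2\gamma^2\log(2/\delta)/\abs{\dset}} = O\!\big(\sqrt{\log(2/\delta)/\abs{\dset}}\big)$ is dominated as well (treating $\gamma,\delta$ as constants). A union bound over the failure event of \Cref{thm:multi-risk-nuc} and the concentration events for $\bX$ and $\{\bv_k^\ast\}_{k=1}^K$ yields the stated rate with high probability.

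I expect the main obstacle to be a matter of care rather than genuine difficulty: correctly extracting the $\log K$ factor from the maximum of $K$ chi-squared variables so that it matches the $\lambda_\ast$ that enters the theorem, and keeping track of the two benign side conditions ($n = \Omega(d)$ for the $\norm{\bX}^2$ estimate, and $\abs{\dset} = \Omega(d)$ so the $1/\abs{\dset}$ term is not the bottleneck). Everything else is substitution into \Cref{thm:multi-risk-nuc} and comparing polynomial orders.
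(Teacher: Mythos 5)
Your proposal follows essentially the same route as the paper's proof: bound $\norm{\bX}^2$ via the Davidson--Szarek estimate applied to $\sqrt{d}\,\bX$, bound $\max_{k}\norm{\bv_k^\ast}_2^2$ via Laurent--Massart $\chi^2$ tails plus a union bound over users, set $\lambda_\ast$ through the rank-$r$ nuclear-to-Frobenius inequality, and then simplify using $K=\Omega(d^2)$ and $n \ge D+d+1 \ge d$; your order-of-magnitude bookkeeping (bracketed factor $O(1/d)$ up to logs, middle term lower order once $\abs{\dset}=\Omega(d)$) matches the paper's. Two technical points are handled more carefully in the paper and are genuinely needed, though they only cost logarithmic factors absorbed by $\widetilde{O}$. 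First, \Cref{thm:multi-risk-nuc} as stated assumes $\norm{\bx_i}_2\le 1$ for every item, which Gaussian items violate with probability tending to one, so you cannot instantiate the stated constants ($8$ and $16$ in the bracket) directly; the paper instead works from the extended version of the bound with a general item-norm parameter $B$ (\cref{eq:multi-risk-nuc-ext}) and takes $B=O(\sqrt{\log(n/\delta)})$ by the same $\chi^2$ argument you use for the $\bv_k^\ast$. Second, $\gamma$ should not be treated as a constant: it must be large enough that $(\bM^\ast,\{\bv_k^\ast\}_{k=1}^K)$ is feasible, and the paper verifies that $\gamma = O\big(\sqrt{r}\,\log(\max\{n,K\}/\delta)\big)$ suffices, making the third term $\widetilde{O}\big(\sqrt{r/\abs{\dset}}\big)$ --- still dominated, so the conclusion is unchanged. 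With those two repairs your argument coincides with the paper's.
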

\begin{remark}
The scaling $\abs{\dset} = O(dr + Kr)$ matches our intuition that $O(dr)$ collective measurements should be made across all users to account for the $dr$ degrees of freedom in $\bM$, in addition to $O(r)$ measurements per user to resolve their own pseudo-ideal point's $r$ degrees of freedom.
If $K = \Omega(d)$, then each user answering $O(r)$ queries is sufficient to amortize the cost of learning the metric with the same order of measurements per user as is required for their ideal point. Although \Cref{cor:low-rank-interp} requires the even stronger condition that $K = \Omega(d^2)$, we believe this is an artifact of our analysis and that $K = \Omega(d)$ should suffice. Even so, a $\Omega(d^2)$ user count scaling might be reasonable in practice since recommender systems typically operate over large populations of users.
\end{remark}
%

\section{Recovery guarantees}
\label{sec:recovery}

The results in the previous section give guarantees on the generalization error of a learned metric and ideal points when predicting pair responses over $\cX$, but do not bound the recovery error of the learned parameters $\widehat{\bM}$, $\{\widehat{\bv}_k\}_{k=1}^K$ with respect to $\bM^*$ and $\{\bv^*_k\}_{k=1}^K$. Yet, in some settings such as data generated from human responses \cite{mason2019cogsci, rau2016model} it may be reasonable to assume that a true $\bM^\ast$ and $\{\bv^*_k\}_{k=1}^K$ do exist that generate the observed data (rather than serving only as a model) and that practitioners may wish to estimate and interpret these latent variables, in which case accurate recovery is critical. Unfortunately, for an arbitrary noise model and loss function, recovering $\bM^\ast$ and $\{\bv^*_k\}_{k=1}^K$ exactly is generally impossible if the model is not identifiable. However, we now show that with a small amount of additional structure, one can ensure that $\widehat{\bM}$ and $\{\widehat{\bv}_k\}_{k=1}^K$ accurately approximate $\bM^\ast$ and $\{\bv^*_k\}_{k=1}^K$ if a sufficient number of one-bit comparisons are collected. 

We assume a model akin to that of \cite{mason2017learning} for the case of triplet metric learning. Let $f\colon \R \rightarrow [0,1]$ be a strictly monotonically increasing \textit{link function} satisfying $f(x) = 1 - f(-x)$; for example, $f(x) = (1 + e^{-x})^{-1}$ is the logistic link and $f(x) = \Phi(x)$ is the probit link where $\Phi(\cdot)$ denotes the CDF of a standard normal distribution. Defining $
\delta_{p}(\bM, \bv) := \bx_i^T \bM\bx_i - \bx_j^T \bM\bx_j + \bv^T(\bx_i - \bx_j)
$ for $p = (i,j)$, we assume that $\pr(y_{p}^{(k)} = -1) = f\left(-\delta_{p}(\bM^\ast, \bv_k^\ast)\right)$ for some $\bM^* \succeq \0$ and $\bv_k^\ast \in \colsp(\bM^*)$. This naturally reflects the idea that some queries are easier to answer (and thus less noisy) than others. For instance, if $\delta_{ij}^{(k)} \ll 0$ such as may occur when $\bx_i$ very nearly equals user $k$'s ideal point, we may assume that user $k$ almost always prefers item $i$ to $j$ and so $f(-\delta_{ij}^{(k)}) \rightarrow 1$ (since $f$ is monotonic). Furthermore, we assume that \cref{eq:emprisk-full} is optimized with the \emph{negative log-likelihood} loss $\ell_f$ induced by $f$: $\ell_f(y_p, p ; \bM, \bv) \coloneqq -\log(f(y_p \delta_p(\bM, \bv_k)))$.
In Appendix~\ref{sec:proofs-recovery}, we show that we may lower bound the excess risk of $\widehat{\bM}$, $\{\widehat{\bv}_k\}_{k=1}^K$ by the squared error between the \emph{unquantized} measurements corresponding to $\widehat{\bM}$, $\{\widehat{\bv}_k\}_{k=1}^K$ and $\bM^*$, $\{\bv^*_k\}_{k=1}^K$. We then utilize tools from \Cref{sec:identifiability} combined with the results in \Cref{sec:predgen} to arrive at the following recovery guarantee.
%
\begin{theorem}\label{thm:recovery_full_rank}
Fix a strictly monotonic link function $f$ satisfying $f(x) = 1 - f(-x)$. Suppose for a given item set $\cX$ with $n \ge D + d + 1$ and $\norm{\bx_i} \le 1 \ \forall \, i \in [n]$ that the pairs and users in dataset $\dset$ are sampled independently and uniformly at random, and that user responses are sampled according to $\pr(y_{p}^{(k)} = -1) = f\left(-\delta_{p}(\bM^\ast, \bv_k^\ast)\right)$. Let $\widehat{\bM}$, $\{\widehat{\bv}_k\}_{k=1}^K$ be the solution to (\ref{eq:emprisk-full}) solved using loss $\ell_f$. Then with probability at least $1-\delta$,
\begin{equation*}
{\footnotesize
\begin{aligned}
    &\frac{1}{n}\sigma_{\min}\left(\bJ [\bX_{\otimes}^T, \bX^T] \right)^2 \left( \|\widehat{\bM} - \bM^\ast\|_F^2 + \frac{1}{K}\sum_{k=1}^K\left\| \widehat{\bv}_k - \bv_k^\ast\right\|^2\right) \le \\ &\frac{4}{C_f^2}\sqrt{\frac{L^2 (\lambda_F^2 + K \lambda_v^2)}{\abs{\dset}} \log(d^2 + d + 1)}
    + \frac{\sqrt{8L^2(\lambda_F^2 + K \lambda_v^2)}}{3C_f^2\abs{\dset}}\log(d^2 + d + 1) + \frac{1}{C_f^2} \sqrt{\frac{L^2\gamma^2\log(\frac{2}{\delta})}{2|\dset|}},
\end{aligned}
}
\end{equation*}
where $C_f = \min_{z: |z| \leq \gamma}f'(z)$ and $\bJ := \bI_n - \frac{1}{n}\1_n\1_n^T$ is the centering matrix. Furthermore, if $\cX$ is constructed by sampling each item i.i.d.\ from a distribution $p_X$ with support on the unit ball that is absolutely continuous with respect to the Lebesgue measure, then with probability 1, $\sigma_{\min}\left(\bJ [\bX_{\otimes}^T, \bX^T] \right) > 0$.
\end{theorem}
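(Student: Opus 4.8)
The plan is to chain three ingredients: (i) a lower bound on the excess risk in terms of the squared difference of the \emph{unquantized} measurements under $(\widehat{\bM},\{\widehat\bv_k\})$ versus $(\bM^\ast,\{\bv_k^\ast\})$; (ii) a linear-algebra step converting that measurement-space error into parameter-space error via the minimum singular value of the relevant design matrix; and (iii) the upper bound on excess risk already supplied by \Cref{thm:multi-risk-fro}.

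First I would establish the excess-risk lower bound. Since $\ell_f$ is the negative log-likelihood of the link $f$, the population risk $R(\bM,\{\bv_k\})$ is minimized at the ground truth $(\bM^\ast,\{\bv_k^\ast\})$, and the excess risk equals an average (over pairs $p$ and users $k$) of per-query Bregman-type divergences between $f(-\delta_p(\bM^\ast,\bv_k^\ast))$ and $f(-\delta_p(\widehat\bM,\widehat\bv_k))$. Using $f(x)=1-f(-x)$, strict monotonicity, and the constraint $|\delta_p|\le\gamma$, a second-order Taylor expansion of the KL divergence between Bernoulli parameters — combined with $C_f = \min_{|z|\le\gamma} f'(z) > 0$ — gives a quadratic lower bound: the excess risk is at least a constant multiple of $C_f^2$ times the average over $(p,k)$ of $\bigl(\delta_p(\widehat\bM,\widehat\bv_k) - \delta_p(\bM^\ast,\bv_k^\ast)\bigr)^2$. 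Because pairs and users are drawn uniformly, this average is, up to constants, $\frac{1}{K n^2}$ times $\sum_k \|\bGamma_k \Delta_k\|^2$ where $\Delta_k = [\bnu(\widehat\bM)-\bnu(\bM^\ast); \widehat\bv_k-\bv_k^\ast]$ and $\bGamma_k$ is user $k$'s block; one must be slightly careful that this is really the expectation over \emph{all} pairs, not just sampled ones, so the relevant matrix is the full $\binom n2\times(D+d)$ selection matrix, whose row space (after centering) is spanned by $\bJ[\bX_\otimes^T,\bX^T]$.

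Next is the singular-value step. The quantity $\sum_k \|\bGamma_k\Delta_k\|^2$ over the complete pair set is bounded below by $\sigma_{\min}(\bJ[\bX_\otimes^T,\bX^T])^2$ times $\sum_k \|\Delta_k\|^2$, after noting that differences of rows $\bx_i\otimes_S\bx_i - \bx_j\otimes_S\bx_j$ and $\bx_i-\bx_j$ span the same space as the centered data matrix $\bJ[\bX_\otimes^T,\bX^T]$, and that $\|\Delta_k\|^2 = \|\bnu(\widehat\bM)-\bnu(\bM^\ast)\|^2 + \|\widehat\bv_k - \bv_k^\ast\|^2$ controls $\|\widehat\bM-\bM^\ast\|_F^2 + \|\widehat\bv_k-\bv_k^\ast\|^2$ up to absolute constants (since the map $\bM\mapsto\bnu(\bM)$ is a fixed linear bijection with dimension-independent condition number). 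Averaging over $k$ produces exactly the left-hand side of the theorem. Combining (i)+(ii) with the \Cref{thm:multi-risk-fro} bound — and absorbing the constants into the factors shown — yields the displayed inequality, with $R(\bM^\ast,\{\bv_k^\ast\})$ dropping out as the true-risk minimizer so that $R(\widehat\bM,\{\widehat\bv_k\}) - R(\bM^\ast,\{\bv_k^\ast\})\ge 0$ is controlled from above. Finally, the claim $\sigma_{\min}(\bJ[\bX_\otimes^T,\bX^T]) > 0$ almost surely under an absolutely continuous $p_X$ is precisely the full-column-rank statement underlying \Cref{prop:mainNec}(c)/\Cref{cor:contFull}: the matrix $[\bX_\otimes^T,\bX^T]$ has full column rank $D+d$ w.p.\ $1$ when $n\ge D+d+1$, and centering by $\bJ$ only removes the all-ones direction, which is not in the column space generically — a short argument via analyticity of the determinant of a well-chosen $(D+d)\times(D+d)$ submatrix and the fact that it is not identically zero.

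The main obstacle I anticipate is step (i): making the passage from the averaged KL/Bregman divergence to a clean quadratic form in $\delta$-space fully rigorous, in particular justifying that the relevant second-order lower bound holds \emph{uniformly} over the constraint region $|\delta_p|\le\gamma$ (so that $C_f$ is the right constant) and correctly bookkeeping the $\frac1n$, $\frac1K$, and $\frac1{n^2}$ normalizations so that the complete-pair-set design matrix $\bJ[\bX_\otimes^T,\bX^T]$ — rather than the sampled one — appears. The singular-value and identifiability steps are then essentially a repackaging of \Cref{sec:identifiability}, and the final bound is immediate from \Cref{thm:multi-risk-fro}.
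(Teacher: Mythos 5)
Your proposal follows essentially the same route as the paper's proof: the paper's Lemma~\ref{lem:lower_bound_for_recovery} writes the excess risk as an average of KL divergences over \emph{all} $\binom{n}{2}$ pairs, lower-bounds each via $\KL{y}{z}\ge 2(y-z)^2$ together with $C_f$ (Proposition~\ref{prop:taylor_KL_lower_bound}), factors the resulting quadratic form through the complete selection matrix $\overline{\bS}$ with $\overline{\bS}^T\overline{\bS}=n\bJ$ and the bound $\norm{\bnu(\bA)}_2^2\ge\norm{\bA}_F^2$, and then invokes Theorem~\ref{thm:multi-risk-fro} and the construction of Section~\ref{sec:ident-append:constructions} for the almost-sure positivity of $\sigma_{\min}$. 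The only cosmetic difference is that the paper uses the global Pinsker-type inequality rather than a Taylor expansion for the KL lower bound, which sidesteps the uniformity concern you flag.
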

\begin{remark}
The key conclusion from this result is that since $\sigma_{\min}\left(\bJ [\bX_{\otimes}^T, \bX^T] \right) > 0$ almost surely, the recovery error of $\widehat{\bM}$, $\{\widehat{\bv}_k\}_{k=1}^K$ with respect to $\bM^*$, $\{\bv^*_k\}_{k=1}^K$ is upper bounded by a decreasing function of $\abs{\dset}$. In other words, the metric and ideal points are identifiable from one-bit paired comparisons under an assumed response distribution. We present an analogous result for the case of a low-rank metric in Appendix~\ref{sec:proofs-recovery}, and leave to future work a study of the scaling of $\sigma_{\min}\left(\bJ [\bX_{\otimes}^T, \bX^T] \right)$ with respect to $d$ and $n$.
\end{remark}
\section{Experimental results}
\label{sec:experiments}

We analyze the performance of the empirical risk minimizers given in \cref{eq:emprisk-full,eq:emprisk-low} on both simulated and real-world data.\footnote{Code available at \url{https://github.com/gregcanal/multiuser-metric-preference}} Below we outline the results, with further details deferred to Appendix~\ref{sec:append_experiment}. 

\textbf{Simulated experiments:} We first simulate data in a similar setting to Cor.~\ref{cor:low-rank-interp} where $\bx_i, \bu_k\sim \cN(\0, \frac{1}{d} \bI)$ and $\bM^* = \frac{d}{\sqrt{r}} \bL \bL^T$ where $\bL \in \R^{d \times r}$ is a random orthogonal matrix. To construct the training dataset, we query a fixed number of randomly selected pairs per user and evaluate prediction accuracy on a held-out test set, where all responses are generated according to a logistic link function. We evaluate the prediction accuracy of the Frobenius norm regularized optimization in \cref{eq:emprisk-full} (referred to as \textbf{Frobenius metric}), designed for full-rank matrix recovery, as well as the nuclear norm regularized optimization in \cref{eq:emprisk-low} (referred to as \textbf{Nuclear full}), designed for low-rank metrics. We also compare to several ablation methods: \textbf{Nuclear metric}, where $\norm{\bM}_*$ and $\norm{\bv_k}_2$ are constrained; \textbf{Nuclear split}, where $\norm{\bM}_*$ and $\norm{[\bv_1, \cdots, \bv_K]}_*$ are constrained; and \textbf{PSD only}, where only $\bM \succeq \0$ is enforced. We also compare against \textbf{Nuclear full, single}, which is equivalent to $\textbf{Nuclear full}$ when applied \emph{separately} to each user (learning a unique metric and ideal point), where test accuracy is averaged over all users. To compare performance under a best-case hyperparameter setting, we tune each method's respective constraints using oracle knowledge of $\bM^*$ and $\{\bu_k^*\}_{k=1}^K$. Finally, we also evaluate prediction accuracy when the ground-truth parameters are known exactly (i.e., $\bM=\bM^*, \bv_k=-2 \bM^* \bu_k$), which we call \textbf{Oracle}.

To test a low-rank setting, we set $d=10$, $r=1$, $n=100$, and $K=10$. We observe that \textbf{Nuclear full} outperforms the baseline methods in terms of test accuracy, and is closely followed by \textbf{Nuclear split} (\Cref{fig:results:normal-test}). Interestingly \textbf{Nuclear metric}, which also enforces a nuclear norm constraint on $\bM$, does not perform as well, possibly because it does not encourage the pseudo-ideal points to lie in the same low-rank subspace. While \textbf{Nuclear metric} does demonstrate slightly improved metric recovery (\Cref{fig:results:normal-Mrec}), \textbf{Nuclear full} and \textbf{Nuclear split} recover higher quality metrics for lower query counts (which is the typical operating regime for human-in-the-loop systems) and exhibit significantly better ideal point recovery (\Cref{fig:results:normal-Urec}), illustrating the importance of proper subspace alignment between the pseudo-ideal points. To this end, unlike \textbf{Nuclear split}, \textbf{Nuclear full} explicitly encourages the pseudo-ideal points to align with the column space of $\bM$, which may explain its slight advantage. 

\begin{figure}[t]
	\def\vh{5mm}
	\centering
	\begin{subfigure}[t]{0.32\linewidth}
	    \centering
	    \includegraphics[width=0.99\linewidth]{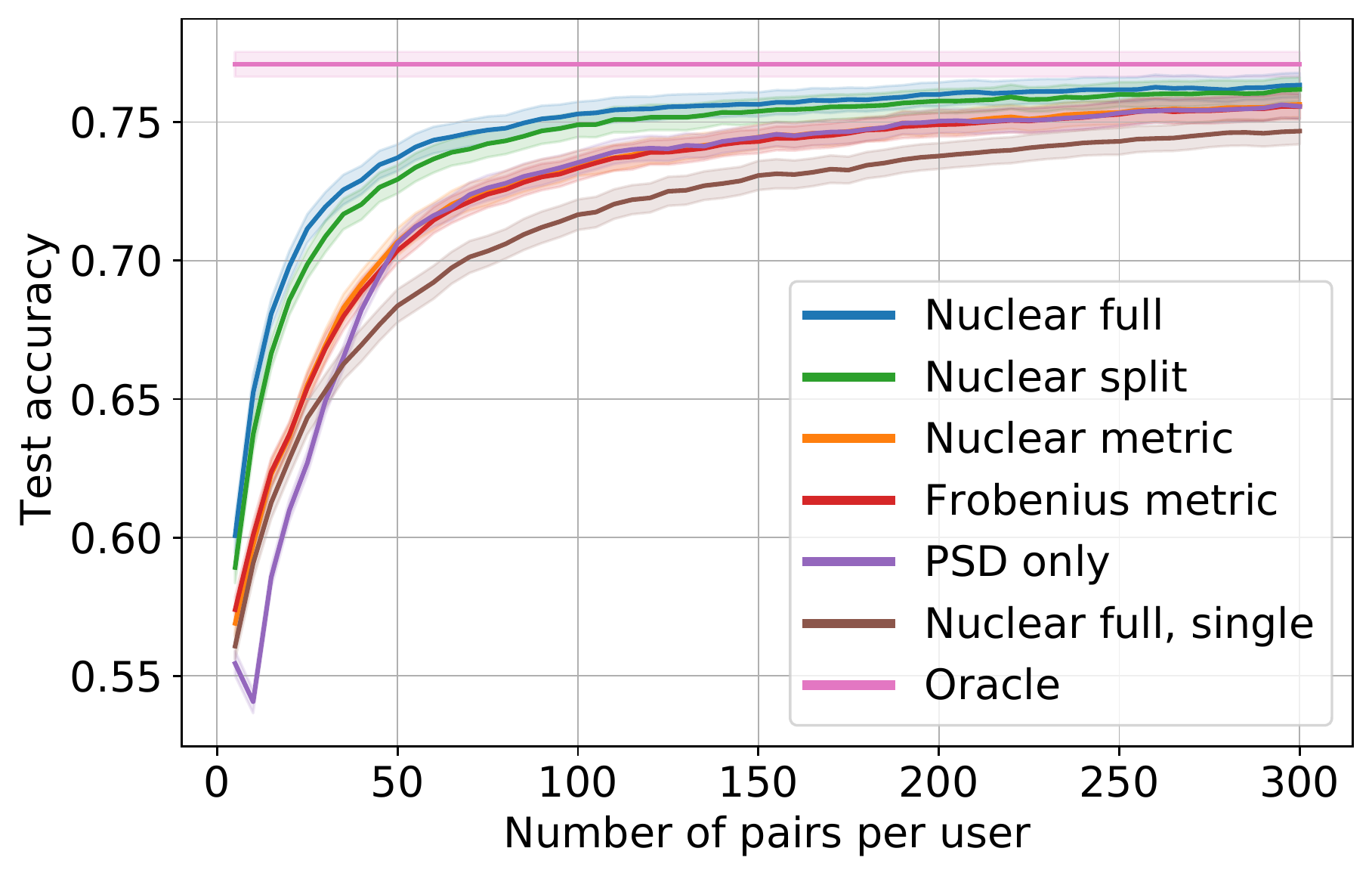}
	    \caption{Test accuracy}
	    \label{fig:results:normal-test}
	\end{subfigure}%
    \hfill
	\begin{subfigure}[t]{0.32\linewidth}
	    \centering
	    \includegraphics[width=0.99\linewidth]{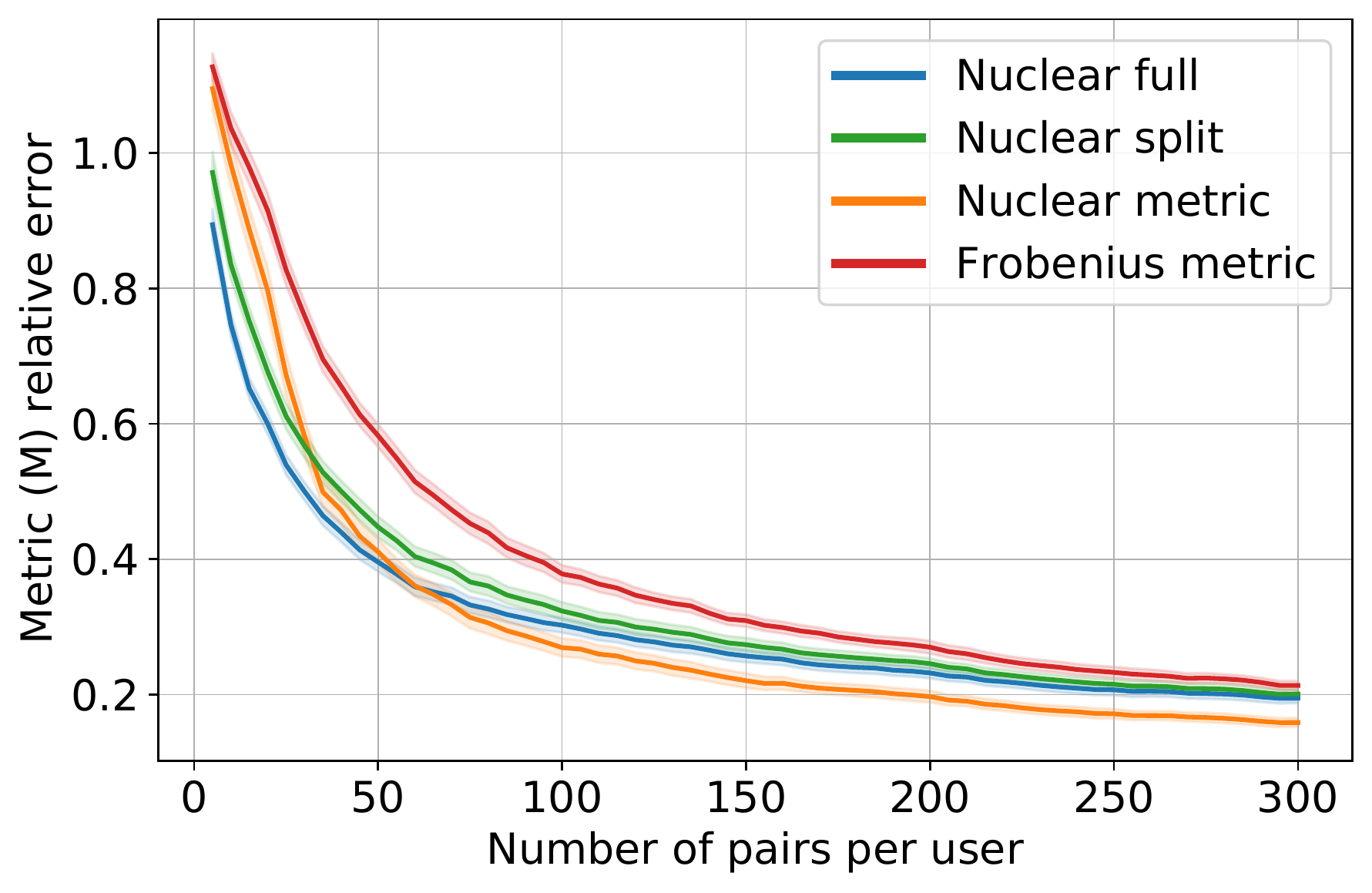}
	    \caption{Relative metric error}
	    \label{fig:results:normal-Mrec}
	\end{subfigure}%
	\hfill
	\begin{subfigure}[t]{0.32\linewidth}
	    \centering
	    \includegraphics[width=0.99\linewidth]{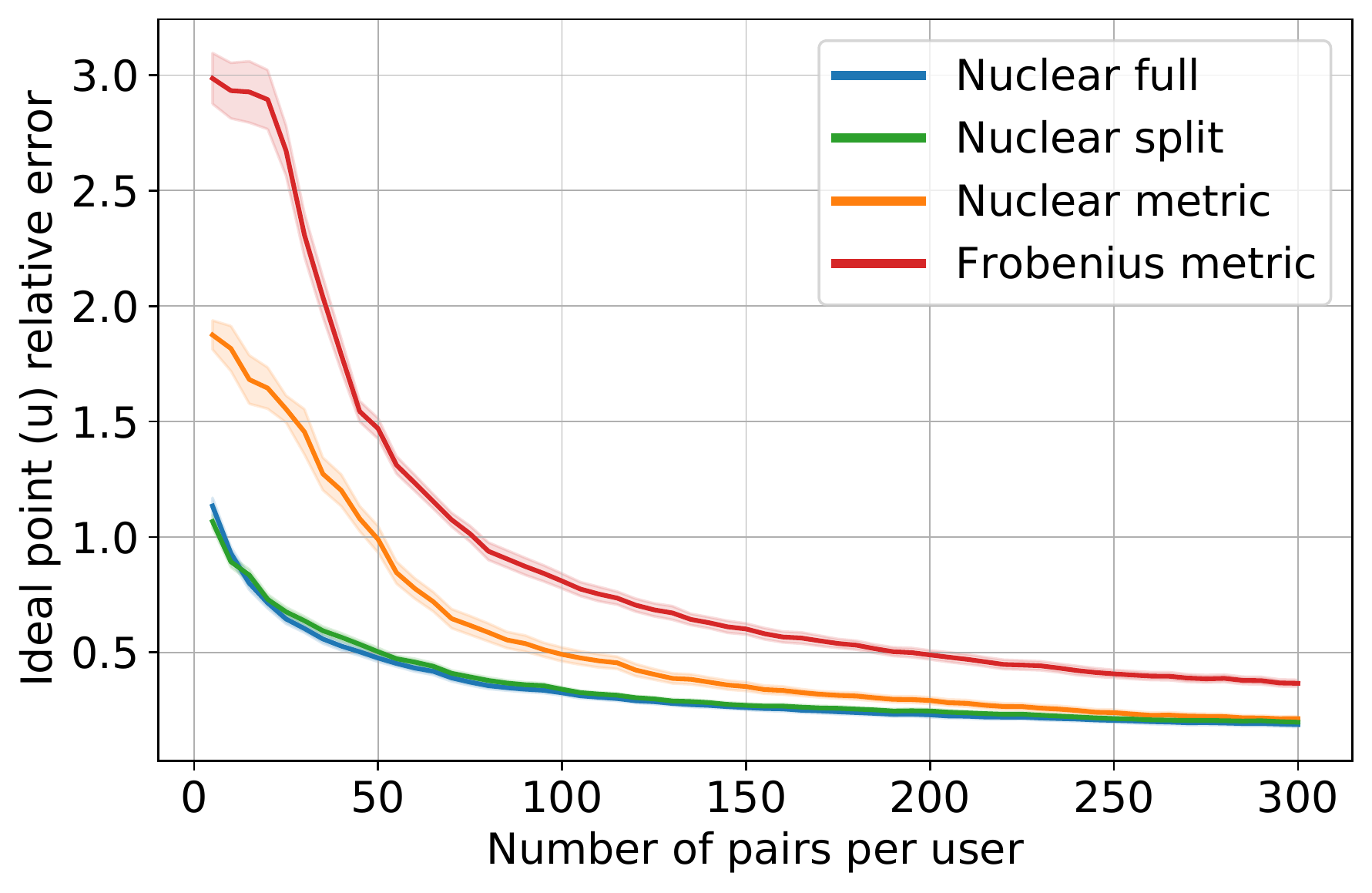}
	    \caption{Relative ideal point error}
	    \label{fig:results:normal-Urec}
	\end{subfigure}%
	\\
	\vspace{\vh}
	\begin{subfigure}[t]{0.5\linewidth}
	    \centering
	    \includegraphics[height=1.5in]{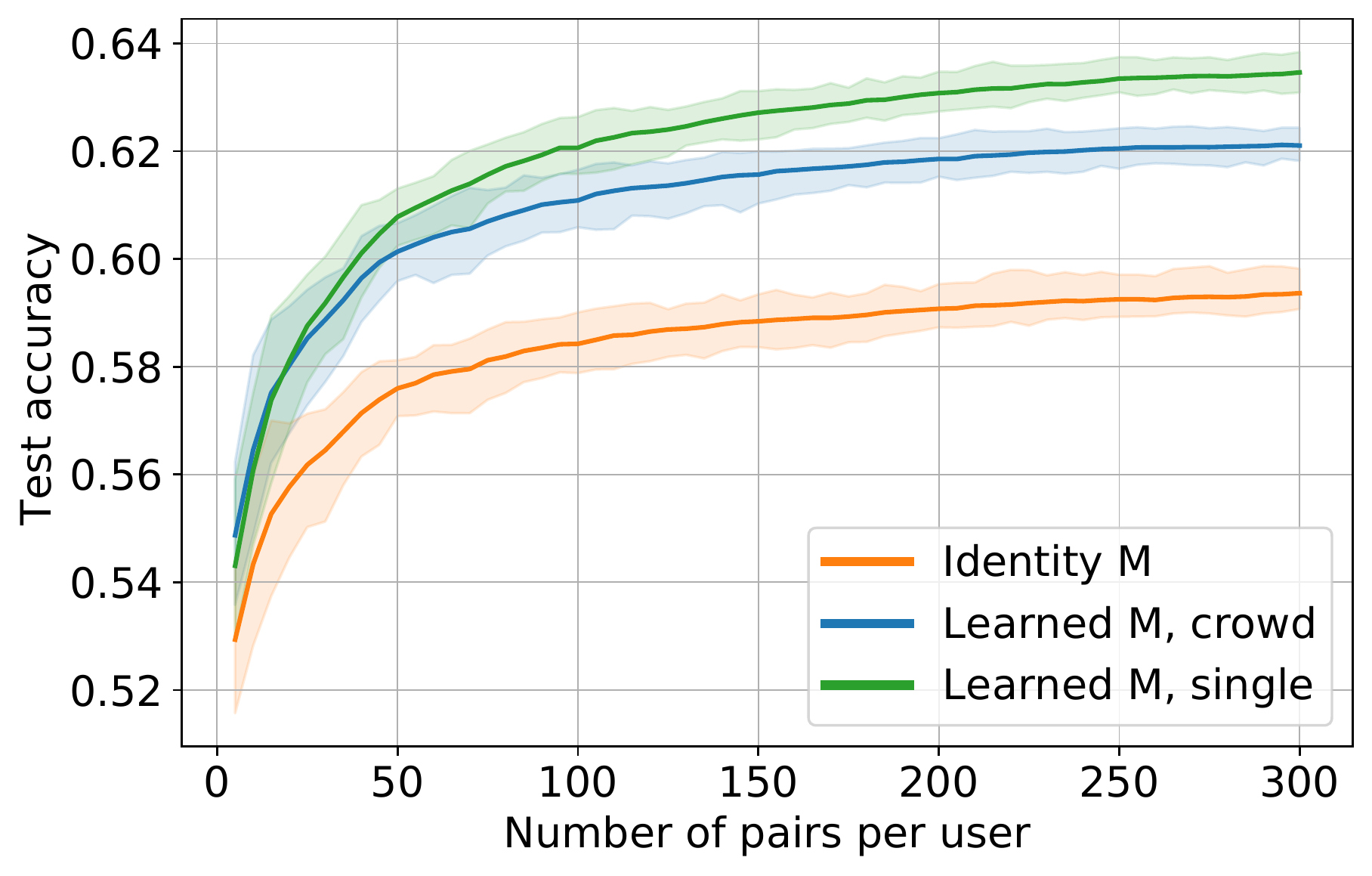}
	    \caption{Color preference prediction accuracy}
	    \label{fig:results:color-test}
	\end{subfigure}%
    \hfill
    \begin{subfigure}[t]{0.5\linewidth}
        \centering
	    \includegraphics[height=1.5in]{./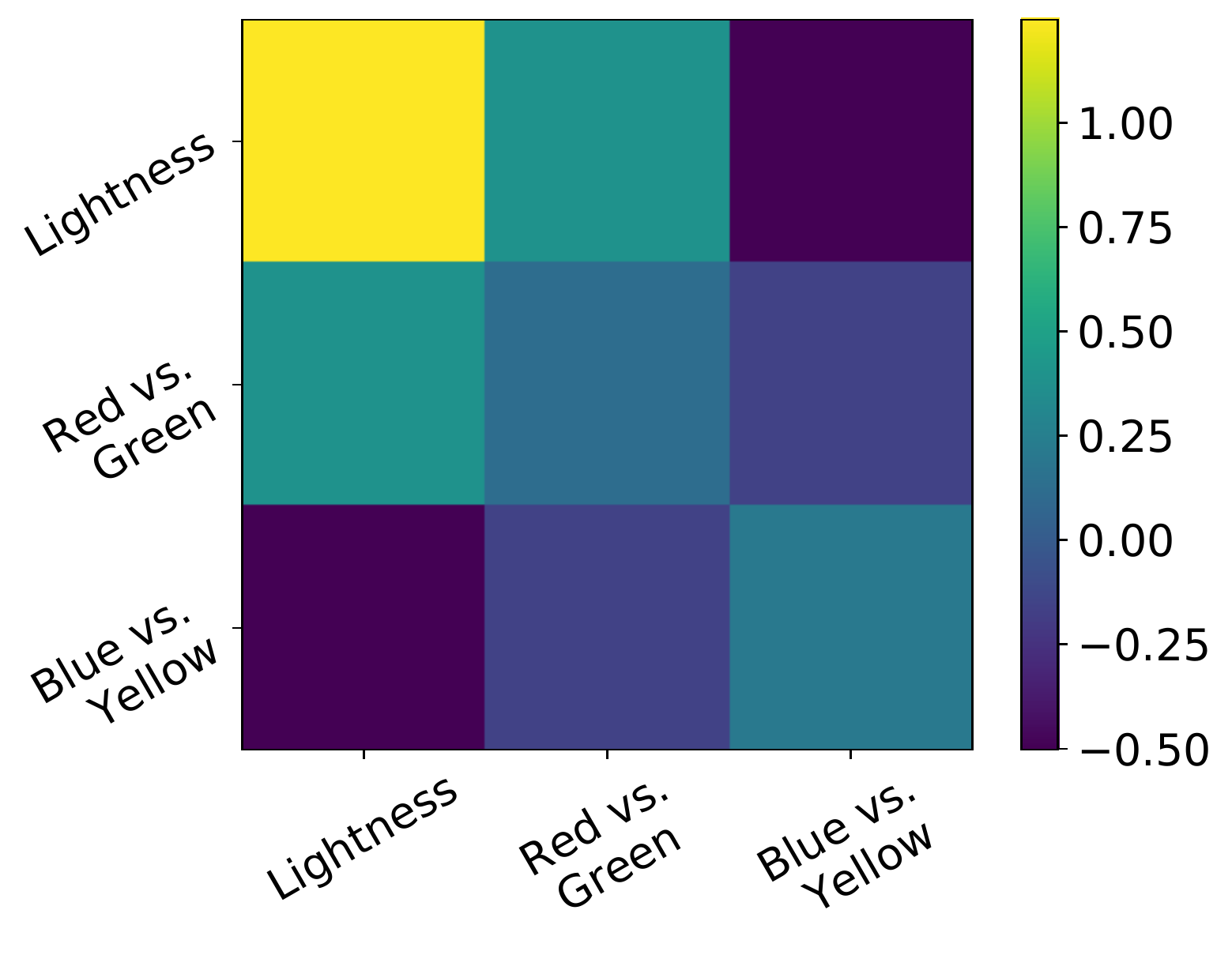}
	    \caption{Average $\widehat{\bM}$ over all color trials}
	    \label{fig:results:color-metric}
	\end{subfigure}
	\caption{(a-c) Normally distributed items with $d = 10$, $r=1$, $n=100$ and $K=10$. Error bars indicate $\pm 1$ standard error about the sample mean. For visual clarity, \textbf{PSD only} and \textbf{Nuclear full, single} baselines are omitted from (b-c) due to poor performance. (d) Average color preference prediction accuracy, where error bars indicate 2.5\% and 97.5\% percentiles. (e) Estimated color preference metric. For (a-e), random train/test splitting was repeated over 30 trials.}
	\label{fig:results}
\end{figure}


\textbf{Color dataset:}
We also study the performance of our model on a dataset of pairwise color preferences across multiple respondents ($K = 48$) \cite{palmer2010ecological}. In this setting, each color ($n = 37$) is represented as a $3$-dimensional vector in CIELAB color space (lightness, red vs.\ green, blue vs.\ yellow), which was designed as a uniform space for how humans perceive color \cite{schloss2018modeling}. Each respondent was asked to order pairs of color by preference, as described in \cite[Sec.\ 3.1]{palmer2013visual}. Since all $2{37 \choose 2}$ possible pairs (including each pair reversal) were queried for each respondent, we may simulate random pair sampling exactly.

As there are only $d=3$ features, we constrain the Frobenius norm of the metric and optimize \cref{eq:emprisk-full} using the hinge loss. Varying the number of pairs queried per user, we plot prediction accuracy on a held-out test set (\Cref{fig:results:color-test}). As CIELAB is designed to be perceptually uniform, we compare against a solution to \cref{eq:emprisk-full} that \emph{fixes} $\bM = \bI$ and only learns the points $\{\bv_k\}_{k=1}^{48}$. 
This method leads to markedly lower prediction accuracy than simultaneously learning the metric and ideal points; this result suggests that although people's \textit{perception} of color is uniform in this space, their \textit{preferences} are not. We also compare against a baseline that solves the same optimization as \cref{eq:emprisk-full} \emph{separately} for each individual respondent (learning a unique metric and ideal point per user), with prediction accuracy averaged over all respondents. Although learning individual metrics appears to result in better prediction after many queries, in the low-query regime ($<20$ pairs per user) learning a common metric across all users results in slightly improved performance (see Appendix~\ref{sec:append_experiment} for zoomed plot). As $d=3$ is small relative to the number of queries given to each user, the success of individual metric learning is not unexpected; however, collecting $O(d^2)$ samples per user is generally infeasible for larger $d$ unlike collective metric learning which benefits from crowd amortization. Finally, learning a single metric common to all users allows for insights into the crowd's general measure of color similarity. As can be seen in \Cref{fig:results:color-metric}, the learned metric is dominated by the ``lightness'' feature, indicating that people's preferences correspond most strongly to a color's lightness. As an external validation, this is consistent with the findings of Fig.\ 1 of \cite{palmer2010ecological}.

\bibliographystyle{IEEEtran}
\bibliography{refs}


\clearpage

\appendix


\section{Limitations and broader impacts}
\label{sec:limits_and_broad}

\subsection{Limitations}\label{subsec:limit}
As with any statistical model, the utility of our common metric ideal point model is limited by the accuracy to which it captures the patterns observed in the response data. The most immediate question about our model is the appropriateness of the assumption that a single metric is shared among all users. Such a model can prove useful in practice since it directly allows for shared structure between users and shared information between their measurements, and furthermore allows for a direct interpretation of preference at the crowd level, as we demonstrated with color preference data in \Cref{sec:experiments}. Yet, in reality there are almost certainly individual differences between each user's notion of item similarity and preference, and hence recovery of a single crowd metric should not necessarily be taken to mean that it describes each individual exactly. Our model could certainly be applied individually to each user, such that they each learn their own metric: however, as we demonstrate in our theoretical and empirical results, there is a fundamental sample complexity tradeoff in that to learn individual metrics, many more samples are needed per user, unlike the case of a common metric model where measurements are amortized.

As with any Mahalanobis metric model, it may not be the case that linear weightings of quadratic feature combinations provide enough flexibility to adequately model certain preference judgements. It may be possible to generalize the linear metric results studied here to a more general Hilbert space. Rather that switching to a nonlinear model, one avenue to address this issue (if present) is to acquire a richer set of features in the item set, which would typically involve increasing the ambient dimension. As stated in our results, such a dimensionality increase would necessitate a quadratic increase in the number of items and total measurements, which both scale on the order of $d^2$. As we demonstrated, the increase in measurements can be ameliorated by simply increasing the number of users (assuming such users are available) and amortizing the metric cost over the crowd. However, in general obtaining more items is challenging or impossible, since in many applications the item set is given as part of the problem rather than an element that can be designed, and is usually difficult to drastically increase in size.

In the setting where the metric $\bM$ is low-rank with rank $r < d$, we conjecture that the required item scaling grows as $O(rd)$ rather than $O(d^2)$, which is a much gentler increase especially when $r \ll d$. Our intuition for this conjecture is as follows: the requirement for $O(d^2)$ items in the full-rank metric case comes from part (c) of \Cref{prop:mainNec}, which says that if a hypothetical \emph{single} user existed that answered the queries assigned to \emph{all} users, then such a system would require $O(d^2)$ measurements due to the $O(d^2)$ degrees of freedom in the metric. Due to properties of selection matrices, we require the same or greater order of items as measurements since intuitively one new item is required per independent measurement (see \Cref{lemma:Srankupper}). If $\bM$ is rank $r < d$, there are only $dr$ degrees of freedom in $\bM$, and hence we believe that \Cref{prop:mainNec} part (c) would only require $O(dr)$ independent measurements for the hypothetical single user and therefore only $O(dr)$ items. Concretely, by rewriting the unquantized measurements in \cref{eq:delta-linear} as a matrix inner product between $[\begin{smallmatrix} \bM & \bv_1 & \dots & \bv_K\end{smallmatrix}]$ and a corresponding measurement matrix only depending on the items (see \cref{eq:matinner} in the proof of \Cref{thm:multi-risk-fro} for an example of this technique), and using low-rank matrix recovery techniques such as those described in \cite{davenport2016overview}, we believe that one can show only $O(dr)$ independent measurements are required for the hypothetical single user and therefore only $O(dr)$ items are required.

Finally, the required user scaling of $K = \Omega(d^2)$ in \Cref{cor:low-rank-interp} is a limitation of our theoretical analysis. While we believe this required scaling is only an artifact of our analysis and can be tightened, it does imply that our result only recovers an amortized scaling of $O(r)$ measurements per user if the user count is very large. Namely, except for the case where $d=1$ (where the learned metric is trivial) this corollary does not apply to the single user case. Nevertheless, the original statement of \Cref{thm:multi-risk-nuc} \emph{does} apply for any user count (including $K=1$). In this case, the only drawback to \Cref{thm:multi-risk-nuc} without invoking $K = \Omega(d^2)$ is that the implied amortized scaling is larger than $O(r + dr/K)$ measurements per user. We believe this scaling can in fact be tightened to $O(r + dr/K)$ measurements per user for \emph{all} user counts $K$ (not just $K = \Omega(d^2))$, which we leave to future work.

\vfill

\subsection{Broader impacts}\label{subsec:broad}

With the deployment of the ideal point model with a learned metric comes all of the challenges, impacts, and considerations associated with preference learning and recommender systems, such as if the deployed recommender system produces preference estimates and item recommendations that are aligned with the values and goals of the users and society as a whole, and if the item features are selected in a way that is diverse enough to adequately model all users and items. Therefore, we limit our broader impacts discussion to the challenges specific to our model. As discussed in \Cref{subsec:limit}, the most salient aspect of our model is the fact that a single metric is used to model the preferences of an entire population of users. With this model comes the implicit assumption that these users are homogeneous in their preference judgements. While it is possible for this assumption to be accurate in certain populations, in many recommender systems the assumption of a common preference metric will likely be violated, due to heterogeneous user bases and subpopulations of users. Although our model does provide a degree of individual flexibility through its use of ideal points (rather than treating the entire crowd's responses as coming from a single user), the result of a common metric violation may be that the learned population metric will fit to the behavior of the majority, or may fail to capture some aspect of each individual user's preference judgements.

In either scenario, the impacts of such a mismatch on an individual or subpopulation can range from inconvenient, such as in getting poor recommendations for online shopping or streaming services, to actively harmful, such as in receiving poor recommendations for a major decision (e.g., medical) that would otherwise suit the population majority. To prevent such cases, before deploying a common metric model it is important to not only average performance across the entire population (which will reflect the majority), but also evaluate worst-case performance on any given user or subpopulation. Such considerations are especially important if the common metric is not only used for predicting preferences between items, but also used to make inferences about a population by directly examining the metric entries (as we demonstrated for color preferences in \Cref{sec:experiments}). If the metric only applies to the majority or the population in the aggregate, then such inferences about feature preferences may not be accurate for individual users or subpopulations.

Beyond considering the effects of skewed modeling of individual users or subpopulations, it is important to consider potentially harmful effects of a common metric preference model when arriving at \emph{item rankings}. While the item set examples discussed here include non-human objects such as movies and products, more generally the term ``item'' may be used in an abstract sense to include people, such as when building recommender systems for an admissions or hiring committee to select between job or school applicants (see \cite{xu2020simultaneous} for a preference learning example on graduate school admissions data). In this context, the ``users'' may be a separate population (such as an admissions committee) that is making preference judgements about individual candidates (i.e., the ``items''). In such cases, it is critical that extra precautions be taken and considerations be made for any possible biases that may be present across the population of users and reflected in the common metric. For example, if a majority of an admissions committee shared certain implicit biases when making preference decisions between candidates, such biases may be \emph{learned} in a common metric. On the other hand, the existence of a common metric potentially allows for interpretation and insight into the features by which a committee is making its decisions, possibly allowing for intervention and bias mitigation if it is observed that the committee population is sharing a bias with regard to certain candidate features. As mentioned in \Cref{subsec:limit}, while our model could be applied to each individual to avoid the challenges described above, there is a fundamental tradeoff in the sample complexity cost per user required to obtain individual models, which our results elucidate.
\section{Related work}
\label{sec:related}


Metric learning has received considerable attention and classical techniques are nicely summarized in the monographs \cite{bellet2015metric, kulis2013metric}. Efficient algorithms exist for a variety of data sources such as class labels \cite{weinberger2006distance, davis2007information} and triplet comparisons \cite{mason2017learning}. Classical metric learning techniques focus on learning linear (Mahalanobis) metrics parametrized by a positive (semi-)definite matrix. 
In the case of learning linear metrics from triplet observations, \cite{mason2017learning, ye2019fast} establish tight generalization error guarantees. 
In practice, to handle increasingly complex learning problems, it is common to leverage more expressive, nonlinear metrics that are parametrized by kernels or deep neural networks and we refer the reader to \cite{chatpatanasiri2010new, kleindessner2016kernel, kaya2019deep} for a survey of nonlinear metric learning techniques. The core idea of many kernelized metric learning algorithms is that one can use Kernelized-PCA to reduce the nonlinear metric learning problem over $n$ items to learning a linear metric in $\R^n$ via a kernel trick on the empirical Gram matrix. The downside to this approach is that the learned metric need not apply to new items other than those contained in the original $n$.

To circumvent this issue, works such as \cite{hoffer2015deep} have proposed deep metric learning. Intuitively, in the linear case one may factor a metric $\bM$ as $\bM = \bL\bL^T$ and could instead learn a matrix $\bL\in \R^{d\times r}$. In the case of deep metric learning, the same principle applies except that $\bL$ is replaced with a map $\mathcal{L}: \R^d\rightarrow \R^r$ given by a deep neural network such that the final metric is $d(x, y) = \|\mathcal{L}(x) - \mathcal{L}(y)\|_2$. 
While the theory of nonlinear metric learning is less mature,  \cite{liu2021fast, huai2019deep} provide generalization guarantees for deep metric learning using neural tangent kernel and Rademacher analyses respectively. Finally, metric learning is a closely related to the problem of ordinal embedding: \cite{jamieson2011low, canal2020active, mason2019learning} propose active sampling techniques for ordinal embedding whereas \cite{jain2016finite} establishes learning guarantees for passive algorithms. 

Preference learning from paired comparisons is a well-studied problem spanning machine learning, psychology, and social sciences, and we refer the reader to \cite{furnkranz2010preference} for a comprehensive summary of approaches and problem statements. Researchers have proposed a multitude of models ranging from classical techniques such as the Bradley-Terry model~\cite{bradley1952rank, rao1967ties}, Plackett-Luce model~\cite{luce2012individual, plackett1975analysis}, and Thurstone model~\cite{thurstone1927law} to more modern approaches such as preference learning via Siamese networks~\cite{melekhov2016siamese} to fit the myriad of tailored applications of preference learning. In the linear setting, 
\cite{freund2003efficient, burges2005learning, zheng2007regression, xu2020simultaneous} among others propose passive learning algorithms whereas \cite{jamieson2011active, jamieson2011low, jun2021improved, canal2019active, canal2019joint, chumbalov2020scalable} propose adaptive sampling procedures. \cite{massimino2021you} perform localization from paired comparisons, and \cite{houlsby2012collaborative} employ a Gaussian process approach for learning pairwise preferences from multiple users. 

\section{Proofs and additional results for identifiability from unquantized measurements}
\label{sec:ident-append}

\subsection{Properties of selection matrices}
\label{sec:properties-select}

In this section, we present several theoretical properties of selection matrices (see \Cref{def:selectionmat}) that will be useful for proving the results that follow. We begin with a lemma upper bounding the rank of selection matrices:
\begin{lemma}
	Let $n \ge 2$. For any $m \times n$ selection matrix $\bS$, $\rank(\bS) \le \min(m,n-1)$.
	\label{lemma:Srankupper}
\end{lemma}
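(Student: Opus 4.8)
The claim is that any $m \times n$ selection matrix $\bS$ has $\rank(\bS) \le \min(m, n-1)$. The bound $\rank(\bS) \le m$ is immediate since $\bS$ has only $m$ rows, so the real content is showing $\rank(\bS) \le n-1$. The plan is to exhibit a nonzero vector in the left null space... no wait, more naturally in the \emph{right} null space: I will show that the all-ones vector $\1_n \in \R^n$ lies in the kernel of $\bS$. Indeed, by \Cref{def:selectionmat}, every row $i$ of $\bS$ has exactly one entry equal to $+1$ (in column $p_i$), exactly one entry equal to $-1$ (in column $q_i$), and all other entries zero. Hence for each row, the sum of the entries is $1 + (-1) + 0 + \dots + 0 = 0$, which is precisely the statement that $(\bS\1_n)_i = 0$ for all $i \in [m]$, i.e., $\bS \1_n = \0$.

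Since $n \ge 2$, the vector $\1_n$ is nonzero, so $\ker(\bS)$ contains a nonzero vector and therefore $\dim \ker(\bS) \ge 1$. By the rank-nullity theorem, $\rank(\bS) = n - \dim\ker(\bS) \le n - 1$. Combining with $\rank(\bS) \le m$ gives $\rank(\bS) \le \min(m, n-1)$, as claimed.

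There is essentially no obstacle here — the only thing to be careful about is the edge case handling and making sure the two bounds are stated cleanly. One should note that the hypothesis $n \ge 2$ is what guarantees $\1_n \neq \0$ (and also that distinct indices $p_i \neq q_i$ exist, so that \Cref{def:selectionmat} is non-vacuous). The argument does not need the rows to be distinct or the matrix to have any other structure; the row-sum-zero property alone suffices. So the proof is a two-line observation: each row sums to zero, hence $\1_n \in \ker(\bS)$, hence the rank drops by at least one from $n$.
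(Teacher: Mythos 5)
Your proof is correct and is essentially identical to the paper's: both observe that every row of a selection matrix sums to zero, so $\1_n \in \ker(\bS)$, giving $\rank(\bS) \le n-1$ via rank-nullity, combined with the trivial bound $\rank(\bS) \le m$. No issues.
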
 
\begin{proof}
	Since $\bS$ has $m$ rows, $\rank(\bS) \le m$. By construction, for any selection matrix $\bS$ note that $\bm{1}_n \in \ker(\bS)$, where $\bm{1}_n$ is the vector of all ones in $\R^n$. To see this, for any $\bz \in \R^n$ the $i$th element of $\bS \bz$ is given by $\bz[p_i] - \bz[q_i]$, and so the $i$th element of $\bS \bm{1}_n$ is $1 - 1 = 0$ and hence $\bS \bz = \0$. Therefore, $\dim(\ker(\bS)) \ge 1$ and so $\rank(\bS) \le n - 1$.
\end{proof}
We use this result to show a property of full row rank selection matrices that will be useful for their construction and analysis:
\begin{lemma}
	\label{lemma:subsetnew}
	Let $\bS$ be an $m \times n$ selection matrix with $n \ge m+1$, where for each $i \in [m]$ the nonzero indices of the $i$th row are given by distinct $p_i,q_i \in [n]$ such that $\bS[i,p_i] = 1$, $\bS[i,q_i] = -1$. If $\rank(\bS) = m$, then for every subset $I \subseteq [m]$ of row indices, there exists $i^* \in I$ such that $\bS[j,p_{i^*}] = 0$ for all $j \in I \setminus \{i^*\}$, or $\bS[j,q_{i^*}] = 0$ for all $j \in I \setminus \{i^*\}$.
\end{lemma}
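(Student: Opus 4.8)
The plan is to read $\bS$ as the oriented edge--vertex incidence matrix of a directed graph $G$ on the vertex set $[n]$: each row $i\in[m]$ is an edge $e_i$ joining $p_i$ and $q_i$, with $\bS[i,p_i]=1$ and $\bS[i,q_i]=-1$. Under this dictionary the hypothesis $\rank(\bS)=m$ (the $m$ rows are linearly independent) will be shown to force $G$ to be acyclic, hence a forest; and the conclusion for a given (nonempty) $I\subseteq[m]$ is precisely that the subgraph $G_I$ with edge set $\{e_i:i\in I\}$ has a vertex $w$ of degree $1$. Indeed, if $w$ has degree $1$ in $G_I$ there is a unique $i^\ast\in I$ incident to $w$, so $w\in\{p_{i^\ast},q_{i^\ast}\}$ while $\bS[j,w]=0$ for all $j\in I\setminus\{i^\ast\}$; taking $w=p_{i^\ast}$ gives alternative (a) and $w=q_{i^\ast}$ gives alternative (b). (The statement is vacuous for $I=\varnothing$, which is the only case needing the "nonempty" caveat.)

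First I would record the standard ``cycle implies row dependence'' fact, which supplies the acyclicity. Suppose $G$ contains a closed walk $w_0,w_1,\dots,w_\ell=w_0$ with $\ell\ge 2$ using distinct edges $e_{i_1},\dots,e_{i_\ell}$, where $e_{i_t}$ joins $w_{t-1}$ and $w_t$. Choosing $c_t\in\{+1,-1\}$ according to the orientation of $e_{i_t}$ along the walk, one checks $c_t\,\bS[i_t,:]=\be_{w_t}-\be_{w_{t-1}}$ in both cases, so $\sum_{t=1}^{\ell}c_t\,\bS[i_t,:]=\be_{w_\ell}-\be_{w_0}=\0$ telescopes to a nontrivial linear relation among distinct rows of $\bS$, contradicting $\rank(\bS)=m$. (Two edges with the same unordered endpoint pair are the degenerate $\ell=2$ case: two rows that are equal or negatives of each other, again forbidden when $\rank(\bS)=m$.) Thus $G$, and therefore every subgraph $G_I$, is a forest.

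Next I would conclude via the elementary fact that a nonempty forest has a leaf, run as a contradiction. If the lemma's conclusion failed for $I$, then every $i\in I$ has both endpoints $p_i,q_i$ shared with another edge of $I$, i.e.\ $\deg_{G_I}(p_i)\ge 2$ and $\deg_{G_I}(q_i)\ge 2$ for all $i\in I$; hence $G_I$ has minimum degree $\ge 2$ on the vertices it touches. Taking a maximal path in $G_I$, an endpoint of that path must (by maximality and degree $\ge 2$) have a second neighbour lying on the path, producing a cycle in $G_I$ --- contradicting the acyclicity established above. This yields a degree-$1$ vertex of $G_I$ and finishes the proof.

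The step I expect to need the most care is the first one: pinning down the correspondence between $\rank(\bS)=m$ and acyclicity of $G$, including the $\ell=2$ degeneracies (repeated or reversed rows arising from coincident endpoint pairs). The graph-theoretic core --- a forest with at least one edge has a leaf --- is standard, and \Cref{lemma:Srankupper} together with the telescoping identity above makes the translation routine.
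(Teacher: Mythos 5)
Your proof is correct, but it takes a genuinely different route from the paper's. You identify $\bS$ as the oriented edge--vertex incidence matrix of a graph on $[n]$, show that a cycle yields a telescoping linear dependence among distinct rows (so $\rank(\bS)=m$ forces acyclicity), and then extract the desired row $i^*$ as a leaf of the forest induced by $I$, via the minimum-degree-$\ge 2$-implies-cycle argument. The paper instead argues by counting: assuming no row of the submatrix $\bS^{(I)}$ introduces a new item, every supported column has at least two nonzero entries, so $\abs{I}\ge c$ where $c$ is the number of supported columns; but deleting the zero columns leaves an $\abs{I}\times c$ selection matrix whose rank is at most $c-1$ by \Cref{lemma:Srankupper}, contradicting $\rank(\bS^{(I)})=\abs{I}$. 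The two arguments are close cousins --- the paper's inequality $\abs{I}\ge c$ is exactly the edges-versus-vertices count that forces a cycle --- but yours makes the combinatorial mechanism explicit (incidence matrices of forests have independent rows; forests have leaves) and constructs the offending linear dependence directly, whereas the paper's is shorter and leans on its already-established rank bound for selection matrices. Your handling of the degenerate length-$2$ cycles (repeated or reversed rows) and the vacuous $I=\varnothing$ case is appropriate; note the paper's statement also silently ignores the empty subset.
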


\begin{proof}
Let $I \subseteq [m]$ be given, and suppose by contradiction that no such $i^*$ exists, i.e., no measurement in $I$ introduces a new item unseen by any other measurements in $I$. Let $\bS^{(I)}$ be the $\abs{I} \times n$ selection matrix consisting of the rows in $\bS$ listed in $I$. Since $\bS$ is full row rank, its rows are linearly independent, implying that the rows in $\bS^{(I)}$ are also linearly independent and therefore $\bS^{(I)}$ has rank $\abs{I}$.

Let $c \le n$ be the number of columns that $\bS^{(I)}$ is supported on (i.e., have at least one nonzero entry). By our contradictory assumption, every item measured in $\bS^{(I)}$ is measured in at least two rows in $I$, and therefore each of these $c$ columns must have at least 2 nonzero entries. This implies that $\bS^{(I)}$ has at least $2c$ nonzero entries in total. Since each measurement adds exactly 2 nonzero entries to $\bS^{(I)}$, this means that there are least $2c/2 = c$ measurements and so $\abs{I} \ge c$.

Now consider the $\abs{I} \times c$ matrix $\widetilde{\bS}^{(I)}$ consisting of $\bS^{(I)}$ with its zero columns removed. $\rank(\widetilde{\bS}^{(I)}) = \rank(\bS^{(I)})$ since $\widetilde{\bS}^{(I)}$ and $\bS^{(I)}$ have the same column space. Since $\widetilde{\bS}^{(I)}$ is itself a $\abs{I} \times c$ selection matrix, we know from \Cref{lemma:Srankupper} that $\rank(\widetilde{\bS}^{(I)}) \le \min(\abs{I},c-1)$. Since we know $\abs{I} \ge c$, $\min(\abs{I},c-1) = c-1$, implying $\rank(\widetilde{\bS}^{(I)}) \le c-1$. But this is a contradiction since we already know $\rank(\widetilde{\bS}^{(I)}) = \rank(\bS^{(I)}) = \abs{I} \ge c$.
\end{proof}

Intuitively, \Cref{lemma:subsetnew} says that if $\bS$ is full row rank, then every subset of rows contains a row that is supported on a column that is zero for all other rows in the subset, i.e., at least one row measures a new item unmeasured by any other row in the subset. This property is related to a selection matrix being incremental (see \Cref{def:incselmat}) as follows:



\begin{lemma}
\label{lemma:subsetinc}
Let $\bS$ be an $m \times n$ selection matrix with $n \ge m+1$, where for each $i \in [m]$ the nonzero indices of the $i$th row are given by distinct $p_i,q_i \in [n]$ such that $\bS[i,p_i] = 1$, $\bS[i,q_i] = -1$. Suppose for every subset $I \subseteq[m]$ of row indices, there exists $i^* \in I$ such that $\bS[j,p_{i^*}] = 0$ for all $j \in I \setminus \{i^*\}$, or $\bS[j,q_{i^*}] = 0$ for all $j \in I \setminus \{i^*\}$. Then there exists an $m \times m$ permutation matrix $\bP$ such that $\bP \bS$ is incremental.
\end{lemma}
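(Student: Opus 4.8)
The plan is to build the desired row ordering greedily, working \emph{backwards} from the last position, and then read off the permutation matrix. Set $I_m := [m]$. Applying the hypothesis to $I = I_m$ yields an index $\pi(m) \in I_m$ together with one of its two pivot columns, call it $c_m \in \{p_{\pi(m)}, q_{\pi(m)}\}$, such that $\bS[j, c_m] = 0$ for all $j \in I_m \setminus \{\pi(m)\}$. Now set $I_{m-1} := I_m \setminus \{\pi(m)\}$ and apply the hypothesis again to $I_{m-1}$, obtaining $\pi(m-1) \in I_{m-1}$ and a pivot column $c_{m-1} \in \{p_{\pi(m-1)}, q_{\pi(m-1)}\}$ with $\bS[j, c_{m-1}] = 0$ for all $j \in I_{m-1} \setminus \{\pi(m-1)\}$. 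Iterating this down to $t = 1$ produces indices $\pi(m), \pi(m-1), \dots, \pi(1)$ and columns $c_m, \dots, c_1$, where at each stage $I_t = [m] \setminus \{\pi(m), \dots, \pi(t+1)\}$ has exactly $t$ elements and $\pi(t) \in I_t$, with $c_t$ zero on every row of $I_t \setminus \{\pi(t)\}$.

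Next I would record the key bookkeeping identity. Since each step removes exactly one index from the current set, a short downward induction on $t$ shows $I_t = \{\pi(1), \pi(2), \dots, \pi(t)\}$ for every $t \in [m]$ (base case: $I_1$ is a singleton, hence equals $\{\pi(1)\}$; and $I_{t} = I_{t-1} \cup \{\pi(t)\}$ with $\pi(t) \notin I_{t-1}$). In particular $\pi$ is a bijection of $[m]$, and $I_t \setminus \{\pi(t)\} = \{\pi(1), \dots, \pi(t-1)\}$. Combining this with the defining property of $c_t$ gives, for every $t$, that $\bS[\pi(j), c_t] = 0$ for all $j < t$, where $c_t \in \{p_{\pi(t)}, q_{\pi(t)}\}$.

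Finally, let $\bP$ be the $m \times m$ permutation matrix with $\bP[t, \pi(t)] = 1$, so the $t$-th row of $\bP\bS$ equals the $\pi(t)$-th row of $\bS$; matching the notation of \Cref{def:selectionmat}, the pivots of row $t$ of $\bP\bS$ are $p_{\pi(t)}$ and $q_{\pi(t)}$. Since $c_t$ is one of these two and $(\bP\bS)[j, c_t] = \bS[\pi(j), c_t] = 0$ for all $j < t$, either condition (a) or condition (b) of \Cref{def:incselmat} holds at row $t$; as $t$ was arbitrary, $\bP\bS$ is incremental. I expect the only real subtlety — and hence the main thing to get right — is the indexing: the greedy selection must be run from position $m$ downward (not upward), since the hypothesis only guarantees a row that is ``fresh'' relative to the \emph{remaining} rows of a subset, and one must then verify carefully that ``remaining'' coincides with ``placed at earlier positions'' in the final order, which is exactly the content of the identity $I_t = \{\pi(1), \dots, \pi(t)\}$.
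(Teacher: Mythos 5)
Your proposal is correct and follows essentially the same route as the paper's proof: a greedy backward selection that repeatedly applies the subset hypothesis to peel off a "fresh" row, an induction establishing that the remaining set at stage $t$ is exactly $\{\pi(1),\dots,\pi(t)\}$, and then reading off the permutation so that each row's reserved pivot column is zero on all earlier rows. The bookkeeping identity you flag as the main subtlety is precisely the inductive claim the paper verifies.
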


\begin{proof}
We will construct a sequence of row indices such that permuting the rows of $\bS$ in the sequence order results in an incremental matrix. Let $I_m \coloneqq [m]$. By assumption, there exists an index $i_m \in I_m$ such that $\bS[j,p_{i_m}] = 0$ for all $j \in [m] \setminus \{i_m\}$ or $\bS[j,q_{i_m}] = 0$ for all $j \in [m] \setminus \{i_m\}$. Now let $1 < m' \le m$ be given, and suppose by induction that there exists a set of distinct indices $\{i_k\}_{k=m'}^m$ such that for all $m' \le k \le m$, $\bS[j,p_{i_k}] = 0$ for all $j \in [m] \setminus \{i_{\ell}\}_{\ell = k}^m$ or $\bS[j,q_{i_k}] = 0$ for all $j \in [m] \setminus \{i_{\ell}\}_{\ell = k}^m$ (we have shown the case of $m' = m$ above). Let $I_{m'-1} \coloneqq [m] \setminus \{i_k\}_{k=m'}^m$. Then by assumption, there exists an index $i_{m'-1} \in I_{m'-1}$ such that $\bS[j,p_{i_{m'-1}}] = 0$ for all $j \in [m] \setminus \{i_k\}_{k=m'-1}^m$ or $\bS[j,q_{i_{m'-1}}] = 0$ for all $j \in [m] \setminus \{i_k\}_{k=m'-1}^m$. Therefore, combined with the fact that $i_{m'-1} \in [m] \setminus \{i_k\}_{k=m'}^m$ along with the inductive assumption on $\{i_k\}_{k=m'}^m$, $\{i_k\}_{k=m'-1}^m$ constitutes an index set where for all $m'-1 \le k \le m$, $\bS[j,p_{i_k}] = 0$ for all $j \in [m] \setminus \{i_{\ell}\}_{\ell = k}^m$ or $\bS[j,q_{i_k}] = 0$ for all $j \in [m] \setminus \{i_{\ell}\}_{\ell = k}^m$.

Taking $m' = 2$, we have proved by induction the existence of an index set $\{i_1,\dots,i_m\}$ that is a permutation of $[m]$ such that for any $k \in [m]$, $\bS[j,p_{i_k}] = 0$ for all $j \in [m] \setminus \{i_{\ell}\}_{\ell = k}^m$ or $\bS[j,q_{i_k}] = 0$ for all $j \in [m] \setminus \{i_{\ell}\}_{\ell = k}^m$. By construction, $[m] \setminus \{i_{\ell}\}_{\ell = k}^m = \{i_{\ell}\}_{\ell=1}^{k-1}$, so equivalently for any $k \in [m]$, $\bS[i_j,p_{i_k}] = 0$ for all $j < k$ or $\bS[i_j,q_{i_k}] = 0$ for all $j < k$.

We can then explicitly construct the $m \times m$ permutation matrix $\bP$ as
\[\bP[k,\ell] = \begin{cases} 1 & \ell=i_k \\ 0 & \mathrm{otherwise.}\end{cases}\]
Let $\bS' = \bP \bS$, $p'_k = p_{i_k}$ and $q'_k = q_{i_k}$. $p'_k,q'_k$ are the nonzero column indices of the $k$th row in the permuted selection matrix $\bS'$, since for any $\ell \in [n]$, $\bS'[k,\ell] = \bS[i_k,\ell]$. We then have for any $k \in [m]$, $\bS'[j,p'_k] = \bS[i_j,p_{i_k}] = 0$ for all $j<k$ or $\bS'[j,q'_k] = \bS[i_j,q_{i_k}] = 0$ for all $j<k$, and hence $\bS'$ is incremental.
\end{proof}

Furthermore, if a selection matrix $\bS$ (more specifically, a permutation thereof) is incremental, then it is also full-rank:
\begin{lemma}
	Let $\bS$ be an $m \times n$ selection matrix with $n \ge m+1$, and suppose there exists an $m \times m$ permutation matrix $\bP$ such that $\bP \bS$ is incremental. Then $\bS$ is full-rank with $\rank(\bS) = m$.\label{lemma:rankincremental}
\end{lemma}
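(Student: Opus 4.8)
The plan is to reduce to the incremental case and then exhibit an explicit invertible $m \times m$ submatrix of $\bS$. Since left-multiplication by a permutation matrix only permutes rows, $\rank(\bS) = \rank(\bP\bS)$, so we may assume without loss of generality that $\bS$ itself is incremental. Because $\bS$ has only $m$ rows, $\rank(\bS) \le m$ automatically, so it suffices to show $\rank(\bS) \ge m$.

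For each $i \in [m]$, the definition of incrementality (\Cref{def:incselmat}) supplies an index $r_i \in \{p_i, q_i\}$ with $\bS[j, r_i] = 0$ for all $j < i$; fix one such $r_i$ for every row. First I would check that $r_1, \dots, r_m$ are pairwise distinct: if $r_i = r_{i'}$ for some $i < i'$, then $\bS[i, r_{i'}] = \bS[i, r_i] = \pm 1 \ne 0$, contradicting the defining property of $r_{i'}$ (which forces $\bS[j, r_{i'}] = 0$ for all $j < i'$, in particular for $j = i$). Hence the $m \times m$ matrix $\bS'$ defined by $\bS'[i,\ell] := \bS[i, r_\ell]$ is well-defined.

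The key step is to observe that $\bS'$ is lower triangular with nonzero diagonal entries: for $\ell > i$ we have $i < \ell$, so $\bS'[i,\ell] = \bS[i, r_\ell] = 0$ by the choice of $r_\ell$; and $\bS'[i,i] = \bS[i, r_i] \in \{+1, -1\}$ since $r_i \in \{p_i, q_i\}$. Therefore $\det(\bS') = \prod_{i=1}^m \bS[i, r_i] = \pm 1 \ne 0$, so $\bS'$ is invertible and $\rank(\bS) \ge \rank(\bS') = m$. Together with $\rank(\bS) \le m$ this gives $\rank(\bS) = m$. (I would remark in passing that the hypothesis $n \ge m+1$ is automatically satisfied whenever an incremental $m \times n$ selection matrix exists, since the first row uses two distinct columns and each subsequent row introduces at least one new one.)

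This argument is essentially bookkeeping: the only point that needs care is the distinctness of the indices $r_1, \dots, r_m$, which is exactly what makes $\bS'$ square and well-defined, and then correctly orienting the triangular structure. I do not anticipate any genuine obstacle beyond these routine verifications.
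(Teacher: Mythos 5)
Your proof is correct. It rests on exactly the same observation as the paper's — incrementality means each row $i$ has a support index $r_i$ at which all earlier rows vanish — but you finish differently: the paper argues inductively that row $i$ cannot lie in the span of rows $1,\dots,i-1$ (evaluating any purported linear combination at coordinate $r_i$ gives $0\ne\pm1$), whereas you package the same fact as an explicit $m\times m$ submatrix $\bS'[i,\ell]=\bS[i,r_\ell]$ that is lower triangular with $\pm1$ diagonal, hence has determinant $\pm1$. Your route requires the extra (correctly verified) step that $r_1,\dots,r_m$ are pairwise distinct so that $\bS'$ is a genuine square column-submatrix, and in exchange it gives a slightly stronger conclusion for free — a unimodular witness submatrix and, as you note in passing, the fact that $n\ge m+1$ is implied rather than assumed (which is essentially the paper's \Cref{cor:colsupp}). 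Both arguments are sound; the paper's is marginally shorter, yours is marginally more constructive.
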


\begin{proof}
Denoting the $i$th row of $\bP \bS$ by $\bs_i$, since $\bP \bS$ is incremental for all $i \in [m]$ there exists a $j$ such that $\bs_i[j] \neq 0$ and $\bs_{\ell}[j] = 0$ for all $\ell < i$. Hence, for all $i \in [m]$, $\bs_i$ does not lie in the span of $\{\bs_{\ell}\}_{\ell < i}$. Starting at $i = 2$, this implies that $\bs_1$ and $\bs_2$ are linearly independent. Let $m' < m$ be given, and assume by induction that $\{\bs_{\ell}\}_{\ell \le m'}$ are linearly independent. Since by assumption $\bs_{m'+1}$ does not lie in the span of $\{\bs_{\ell}\}_{\ell \le m'}$, the entire set $\{\bs_{\ell}\}_{\ell \le m'+1}$ is linearly independent. Taking $m' = m-1$, we have by induction that the rows of $\bP \bS$ (i.e., $\{\bs_{\ell}\}_{\ell \le m}$) are linearly independent, and since these rows are just a permutation of the rows in $\bS$, the $m$ rows in $\bS$ are also linearly independent and so $\rank(\bS) = m$.
\end{proof}

We summarize the above lemmas in the following corollary:
\begin{corollary}
	\label{cor:TFAEsel}
	Let $\bS$ be an $m \times n$ selection matrix with $n \ge m+1$, where for each $i \in [m]$ the nonzero indices of the $i$th row are given by distinct $p_i,q_i \in [n]$ such that $\bS[i,p_i] = 1$, $\bS[i,q_i] = -1$. Then the following are equivalent:
	\begin{enumerate}[label=(\alph*)]
		\item \label{cor:fullrank} $\rank(\bS) = m$.
		\item \label{cor:subset} For every subset $I \subseteq[m]$ of row indices, there exists $i^* \in I$ such that $\bS[j,p_{i^*}] = 0$ for all $j \in I \setminus \{i^*\}$, or $\bS[j,q_{i^*}] = 0$ for all $j \in I \setminus \{i^*\}$.
		\item \label{cor:incremental} There exists an $m \times m$ permutation matrix $\bP$ such that $\bP \bS$ is incremental.
	\end{enumerate}
\end{corollary}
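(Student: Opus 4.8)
The plan is simply to chain the three lemmas just established into a cyclic implication: I would prove \ref{cor:fullrank} $\Rightarrow$ \ref{cor:subset} $\Rightarrow$ \ref{cor:incremental} $\Rightarrow$ \ref{cor:fullrank}, which makes all three statements mutually equivalent.

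The first step is to observe that the standing hypotheses of the corollary — that $\bS$ is an $m \times n$ selection matrix with $n \ge m+1$ whose $i$th row has its nonzero entries $\bS[i,p_i]=1$ and $\bS[i,q_i]=-1$ at distinct indices $p_i,q_i$ — are verbatim the hypotheses shared by \Cref{lemma:subsetnew,lemma:subsetinc,lemma:rankincremental}. Given that, the three implications are immediate: assuming \ref{cor:fullrank}, i.e.\ $\rank(\bS)=m$, \Cref{lemma:subsetnew} delivers exactly statement \ref{cor:subset}; assuming \ref{cor:subset}, \Cref{lemma:subsetinc} produces an $m \times m$ permutation matrix $\bP$ with $\bP\bS$ incremental, which is statement \ref{cor:incremental}; and assuming \ref{cor:incremental}, \Cref{lemma:rankincremental} gives $\rank(\bS)=m$, which is \ref{cor:fullrank}. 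Since these close a cycle, the three conditions are equivalent.

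There is essentially no obstacle here — the only thing to check is that the labeling convention for $p_i,q_i$ and the dimension constraint $n\ge m+1$ match up exactly between the corollary and each invoked lemma, which they do by construction. All of the actual content already resides in the three lemmas (a counting argument on nonzero entries plus \Cref{lemma:Srankupper} for \Cref{lemma:subsetnew}, an inductive ``peeling'' of rows for \Cref{lemma:subsetinc}, and an inductive linear-independence argument for \Cref{lemma:rankincremental}), so the corollary is a short bookkeeping consequence that I would state in a couple of lines.
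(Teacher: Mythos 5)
Your proposal matches the paper's proof exactly: the paper also closes the cycle \ref{cor:fullrank} $\Rightarrow$ \ref{cor:subset} $\Rightarrow$ \ref{cor:incremental} $\Rightarrow$ \ref{cor:fullrank} by invoking \Cref{lemma:subsetnew}, \Cref{lemma:subsetinc}, and \Cref{lemma:rankincremental} in that order. Correct, and no further comment is needed.
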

\begin{proof}
By \Cref{lemma:subsetnew}, $\ref{cor:fullrank} \implies \ref{cor:subset}$. By \Cref{lemma:subsetinc}, $\ref{cor:subset} \implies \ref{cor:incremental}$. By \Cref{lemma:rankincremental}, $\ref{cor:incremental} \implies \ref{cor:fullrank}$. Combining these implications, $\ref{cor:fullrank} \iff \ref{cor:subset} \iff \ref{cor:incremental}$.
\end{proof}

Another useful corollary lower bounds the number of columns a selection matrix must be supported on, depending on its rank:
\begin{corollary}
	\label{cor:colsupp}
	Let $\bS$ be a rank $r$, $m \times n$ selection matrix with $m \ge r$ and $n \ge r+1$. Then at least $r+1$ columns of $\bS$ have at least one nonzero entry.
\end{corollary}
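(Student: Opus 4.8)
The plan is to isolate a full-rank set of rows, restrict attention to the columns they actually touch, and then invoke the rank upper bound of \Cref{lemma:Srankupper}. Concretely, since $\rank(\bS) = r$, pick $r$ linearly independent rows of $\bS$ and let $\bS'$ be the $r \times n$ submatrix they form; then $\rank(\bS') = r$, i.e.\ $\bS'$ has full row rank. Note $\bS'$ is itself a selection matrix in the sense of \Cref{def:selectionmat}, since it is just a subset of the rows of $\bS$. Let $c$ be the number of columns of $\bS'$ containing at least one nonzero entry. Every column of $\bS'$ that is nonzero is a fortiori a nonzero column of $\bS$, so it suffices to show $c \ge r+1$.

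To bound $c$ from below, delete the all-zero columns of $\bS'$ to obtain an $r \times c$ matrix $\widetilde{\bS}'$. Deleting zero columns does not change the column space, so $\rank(\widetilde{\bS}') = \rank(\bS') = r$, and since each row of $\bS'$ has exactly one $+1$ and one $-1$ entry (both of which survive the deletion, as they sit in nonzero columns), $\widetilde{\bS}'$ is again a selection matrix; in particular $c \ge 2$ whenever $r \ge 1$ (the case $r = 0$ forces $m = 0$ and is vacuous). Applying \Cref{lemma:Srankupper} to the $r \times c$ selection matrix $\widetilde{\bS}'$ gives $r = \rank(\widetilde{\bS}') \le \min(r, c-1)$, which forces $c - 1 \ge r$, i.e.\ $c \ge r+1$. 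Since these $c$ columns are among the nonzero columns of $\bS$, at least $r+1$ columns of $\bS$ have a nonzero entry.

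There is essentially no hard step here: this is a short corollary of \Cref{lemma:Srankupper}. The only points requiring a line of care are (i) checking that passing to a row subset and then to the support columns keeps the ``selection matrix'' structure intact (so that \Cref{lemma:Srankupper} is applicable), and (ii) noting $c \ge 2$ so the hypothesis $n \ge 2$ of \Cref{lemma:Srankupper} is met for $\widetilde{\bS}'$. Both are immediate from \Cref{def:selectionmat}. The assumption $n \ge r+1$ in the statement is not actually needed for the argument — it is stated for consistency with the surrounding results — and $m \ge r$ is automatic from $\rank(\bS) = r$.
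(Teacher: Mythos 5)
Your proof is correct, and it takes a slightly different route from the paper's. The paper also begins by extracting $r$ linearly independent rows to form a full-row-rank $r \times n$ selection matrix $\bS'$, but it then invokes \Cref{cor:TFAEsel} to permute $\bS'$ into incremental form and counts columns directly: the first row introduces two supported columns and each of the remaining $r-1$ rows introduces at least one new one, giving $2 + (r-1) = r+1$. You instead restrict $\bS'$ to its $c$ supported columns and apply \Cref{lemma:Srankupper} to the resulting $r \times c$ selection matrix, forcing $r \le c-1$. This is the contrapositive of the same counting argument and is in fact exactly the device the paper uses inside its proof of \Cref{lemma:subsetnew}; your version is marginally more economical since it needs only the elementary kernel observation of \Cref{lemma:Srankupper} rather than the full equivalence in \Cref{cor:TFAEsel}, at the cost of the one sanity check (which you perform) that deleting zero columns preserves both the rank and the selection-matrix structure. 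Your side remarks are also accurate: $m \ge r$ is automatic, and $n \ge r+1$ is itself implied by $\rank(\bS) \le n-1$, so the hypothesis is redundant except in the degenerate $m=0$ case you flag.
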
 
\begin{proof}
	Since $\rank(\bS) = r$, there exists an index set of $r$ linearly independent rows of $\bS$, which we denote by $I \subseteq [m]$. Let $\bS'$ be the $r \times n$ submatrix of $\bS$ consisting of the rows indexed by $I$: since its rows are linearly independent, $\rank(\bS') = r$. From \Cref{cor:TFAEsel}, there exists a permutation $\bP$ of the rows in $\bS'$ such that $\bP \bS'$ is incremental. Since the first row of $\bP \bS'$ introduces two items and the remaining $r-1$ rows each introduce at least one new item, $\bP \bS'$ must be supported on at least $2+(r-1) = r+1$ columns. Since the rows in $\bP \bS'$ are contained in $\bS$, $\bS$ must also be supported on at least $r+1$ columns.
\end{proof}

When studying random selection matrices in \Cref{sec:charselect}, it will be useful to understand a particular graph constructed from the rows of a selection matrix $\bS$. For $p,q \in [n]$ and $p\neq q$, let $\bs_{(p,q)}$ denote a vector in $\R^n$ given by
\begin{equation}
    \bs_{(p,q)}[j] = \begin{cases}1 & j = p \\ -1 & j = q \\ 0 & \mathrm{otherwise.}
    \end{cases}\label{eq:selection-row}
\end{equation}
Consider a set of $r$ vectors $S \coloneqq \{\bs_i\}_{i=1}^r \subset \R^n$ in the form given by \cref{eq:selection-row}. We can construct a graph $G_S = (V_S, E_S)$ from this set as follows: $V_S = [r]$ denotes the vertices of this graph (with vertex $i \in [r]$ corresponding to row $\bs_i$), and $E_S$ denotes the edge set. We define the connectivity of $G_S$ by an $r \times r$ adjacency matrix $\bA_S$, where
\[\bA_S[i,j] = \begin{cases}
	1 & \exists k \in [n] \; \mathrm{s.t.} \; \bs_i[k] \neq 0 \land \bs_j[k] \neq 0 \\
	0 & \mathrm{otherwise.}
\end{cases}\]
In other words, vectors $\bs_i$ and $\bs_j$ are adjacent on $G_S$ if they have overlapping support. We say that vertices $i$ and $j$ are \emph{linked} on $G$ if $A[i,j] = 1$, or if there exists a finite sequence of distinct indices $\{k_\ell\}_{\ell=1}^{T} \subseteq [r] \setminus \{i,j\}$ such that $A[i,k_1] = A[k_1,k_2]=\dots=A[k_{T-1},k_T]=A[k_T,j]=1$. Denote the set of linked vertex pairs by
\[C_S = \{(i,j) : i,j \in [r],\,i,j \ \text{linked on} \ G_S\}.\]
We start with a lemma concerning the span of $S$ in how it relates to connectivity on $G_S$:
\begin{lemma}
	\label{lemma:spanconnected}
	Let $S \coloneqq \{\bs_i\}_{i=1}^r$ denote a set of linearly independent vectors in $\R^n$ in the form \cref{eq:selection-row}, with $n \ge r + 1$. For given $p,q \in [n]$ with $p \neq q$, if $\bs_{(p,q)} \in \myspan(\{\bs_i\}_{i=1}^r)$ then there exists a linked vertex pair $(i_p,i_q) \in C_S$ such that $\bs_{i_p}[p] \neq 0$ and $\bs_{i_q}[q] \neq 0$.
\end{lemma}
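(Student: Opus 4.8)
The plan is to recast the statement in terms of an auxiliary graph on the \emph{item} indices $[n]$, rather than on the rows of $S$. Associate to each $\bs_i$ the unordered pair $\{p_i,q_i\}$ of coordinates on which it is nonzero, and let $H$ be the graph with vertex set $[n]$ and these $r$ edges. I will combine two facts. First, the hypothesis $\bs_{(p,q)}\in\myspan(S)$ forces $p$ and $q$ to lie in the same connected component of $H$. Second, a path in $H$ joining $p$ to $q$ induces, through the incidences of its consecutive edges, a walk in $G_S$ that starts at a row containing $p$ and ends at a row containing $q$, and whose endpoints are therefore linked on $G_S$.

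For the first fact I will use the standard cut-space argument: each $\bs_i=\be_{p_i}-\be_{q_i}$ satisfies $\sum_{v\in C}(\bs_i)_v=0$ for every connected component $C$ of $H$ (the two endpoints of an edge lie in a common component, so either both or neither is in $C$), hence every vector in $\myspan(S)$ shares this property. If $p$ lay in a component $C$ with $q\notin C$, then $\sum_{v\in C}(\bs_{(p,q)})_v=1\neq 0$, contradicting $\bs_{(p,q)}\in\myspan(S)$. So $p$ and $q$ lie in the same component, i.e.\ there is a path $p=a_0,a_1,\dots,a_t=q$ in $H$; choosing it to be a simple path, its edges are distinct, and since $S$ is linearly independent the map $\bs_i\mapsto\{p_i,q_i\}$ is injective (two rows with the same underlying pair would be equal or negatives of each other), so these edges correspond to distinct rows $\bs_{j_1},\dots,\bs_{j_t}$ of $S$.

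For the second fact, consecutive edges $\{a_{\ell-1},a_\ell\}$ and $\{a_\ell,a_{\ell+1}\}$ share the vertex $a_\ell$, so $\bA_S[j_\ell,j_{\ell+1}]=1$, making $j_1,\dots,j_t$ a walk of distinct vertices in $G_S$. If $t=1$, then $\bs_{j_1}$ is nonzero in coordinates $p$ and $q$, and $(j_1,j_1)\in C_S$ because $\bA_S$ is reflexive by its own definition; if $t\ge 2$, then $j_1\neq j_t$ while $j_2,\dots,j_{t-1}$ are distinct and avoid $\{j_1,j_t\}$, so $j_1$ and $j_t$ are linked and $(j_1,j_t)\in C_S$. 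Taking $i_p=j_1$ and $i_q=j_t$ then gives $\bs_{i_p}[p]=\bs_{j_1}[a_0]\neq 0$ and $\bs_{i_q}[q]=\bs_{j_t}[a_t]\neq 0$, as required. Nothing here is deep; the points to handle with care are ensuring the $H$-path is simple so that the indices $j_\ell$ are genuinely distinct (as the definition of ``linked'' demands) and covering the degenerate $t=1$ case, while the linear-algebra step is routine.
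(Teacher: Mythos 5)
Your proof is correct, and it takes a genuinely different route from the paper's. The paper argues directly from the expansion $\bs_{(p,q)} = \sum_{i=1}^r \beta_i \bs_i$: it locates a row with nonzero coefficient supported on $p$, then runs an induction that chases nonzero coefficients through the coordinate equations \cref{eq:beta0}, invoking \Cref{cor:colsupp} at each step to guarantee the chain keeps visiting new item indices until it reaches one supported on $q$; the chain of rows so produced is the linked pair's witness. You instead lift the problem to the auxiliary graph $H$ on the \emph{item} indices and use a cut-space invariant: every $\bs_i$ (hence everything in $\myspan(S)$) sums to zero over each connected component of $H$, so $\bs_{(p,q)}\in\myspan(S)$ forces $p$ and $q$ into the same component, and a simple $p$--$q$ path in $H$ transfers, edge by edge, to a path in $G_S$ whose endpoints are the required $i_p$ and $i_q$. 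Your argument is shorter, avoids both the coefficient-chasing induction and the appeal to \Cref{cor:colsupp}, and isolates precisely where linear independence enters (only to make the row-to-edge map injective so the path's row indices are distinct, as the definition of linked demands); you also correctly handle the degenerate single-edge case via $\bA_S[i,i]=1$, mirroring the paper's $i_p=i_q$ case. What the paper's version buys in exchange is that it stays entirely inside the selection-matrix machinery already developed in \Cref{sec:properties-select} and, as a byproduct of tracking the coefficients $\beta_i$, exhibits the explicit linear combination realizing $\bs_{(p,q)}$ along the chain; but as a proof of the stated lemma, yours is complete and arguably cleaner.
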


\begin{proof}
If $\bs_{(p,q)} \in \myspan(\{\bs_i\}_{i=1}^r)$, there exist scalars $\{\beta_i\}_{i=1}^r$ not all equal to zero such that $\bs_{(p,q)} = \sum_{i=1}^r \beta_i \bs_i$, i.e.,
\begin{align}
	1 &= \sum_{i=1}^r \beta_i \bs_i[p] \label{eq:beta1}\\
	-1 &= \sum_{i=1}^r \beta_i \bs_i[q] \label{eq:betaneg1}\\
	0 &= \sum_{i=1}^r \beta_i \bs_i[j]\quad j \neq p,q \label{eq:beta0}.
\end{align}
From \cref{eq:beta1}, we know there exists a $i_p \in [r]$ such that $\beta_{i_p} \neq 0$ and $\bs_{i_p}[p] \neq 0$: otherwise, $\beta_i \bs_i[p] = 0$ for all $i \in [r]$ which would result in the summation in \cref{eq:beta1} being 0. Let $j_1$ denote the other index supported by $\bs_{i_p}$, i.e., $j_1 \neq p$ and $\bs_{i_p}[j_1] \neq 0$. If $j_1=q$, then there trivially exists $i_q = i_p$ such that $\bs_{i_q}[q] = \bs_{i_p}[j_1] \neq 0$. Clearly this choice of $(i_p,i_q)$ is linked on $G_S$ since $i_p=i_q$, which would give us the desired result.

Now suppose $j_1 \neq q$. Recalling that $j_1 \neq p$ as well, from \cref{eq:beta0} we have
\begin{align}
	0 &= \sum_{i=1}^r \beta_i \bs_i[j_1]\notag
	\\ &= \underbrace{\beta_{i_p}}_{\neq 0} \underbrace{\bs_{i_p}[j_1]}_{\neq 0} + \sum_{\substack{i \in [r]\setminus \{i_p\}}} \beta_i \bs_i[j_1],\label{eq:beta0i1}
\end{align}
which implies that there exists $i_1 \in [r] \setminus \{i_p\}$ such that $\beta_{i_1} \neq 0$ and $\bs_{i_1}[j_1] \neq 0$; otherwise, $\beta_i \bs_i[j_1] = 0$ for all $i \in [r] \setminus \{i_p\}$ which would result in a contradiction in \cref{eq:beta0i1}. Note that $(i_1,i_p)$ are linked on $G_S$, since they are both supported on index $j_1$.

Now, suppose by induction that for a given $1 \le T \le r-1$ there exist distinct vertices $\{i_1\,\dots,i_T\} \in [r] \setminus \{i_p\}$ and distinct item indices $\{j_1,\dots,j_T\} \in [n] \setminus \{p,q\}$ such that $\bs_{i_p}[j_1] \neq 0$, $\bs_{i_k}[j_k] \neq 0$ and $\bs_{i_k}[j_{k+1}]\neq 0$ for $k < T-1$, $\bs_{i_T}[j_T] \neq 0$, $(i_p,i_T)$ are linked on $G_S$, and $\beta_{i_k} \neq 0$ for $k \in [T]$. Above we have shown the existence of such sets for the the base case of $T=1$.

Let $j_{T+1}$ be the other item index supported on $\bs_{i_T}$, i.e., $j_{T+1} \neq j_T$ and $\bs_{i_T}[j_{T+1}] \neq 0$. If $j_{T+1} = q$, then we can set $i_q=i_T$ and we have found an $i_q$ linked to $i_p$ on $G_S$ (since $i_T = i_q$ is linked to $i_p$ on $G_S$ by inductive assumption) and $\bs_{i_q}[q] = \bs_{i_T}[j_{T+1}] \neq 0$. Otherwise, since $\{\bs_{i_k}\}_{k=1}^T \cup \{\bs_{i_p}\}$ are $T+1$ linearly independent  vectors in the form \cref{eq:selection-row}, from \Cref{cor:colsupp} we have that $\{\bs_{i_k}\}_{k=1}^T \cup \{\bs_{i_p}\}$ are collectively supported on at least $T+2$ indices in $[n]$. Hence, if $j_{T+1} \neq q$, then we must have $j_{T+1} \in [n] \setminus (\{p,q\} \cup \{j_k\}_{k=1}^T)$. From \cref{eq:beta0}, we then have
\begin{align}
	0 &= \sum_{i=1}^r \beta_i \bs_i[j_{T+1}]\notag
	\\ &= \beta_{i_p} \cancelto{0}{\bs_{i_p}[j_{T+1}]} +  \sum_{k=1}^{T-1} \beta_{i_k} \cancelto{0}{\bs_{i_k}[j_{T+1}]} +\underbrace{\beta_{i_T}}_{\neq 0}\underbrace{\bs_{i_T}[j_{T+1}]}_{\neq 0} + \sum_{i \in [r] \setminus (i_p \cup \{i_k\}_{k=1}^T)} \beta_i \bs_i[j_{T+1}],\notag
	\\ &= \underbrace{\beta_{i_T}}_{\neq 0}\underbrace{\bs_{i_T}[j_{T+1}]}_{\neq 0} + \sum_{i \in [r] \setminus (i_p \cup \{i_k\}_{k=1}^T)} \beta_i \bs_i[j_{T+1}],\label{eq:beta0Tplus1}
\end{align}
which implies that there exists $i_{T+1} \in [r] \setminus (i_p \cup \{i_k\}_{k=1}^T)$ such that $\beta_{i_{T+1}} \neq 0$ and $\bs_{i_{T+1}}[j_{T+1}] \neq 0$; otherwise, $\beta_i \bs_{i_{T+1}}[j_{T+1}] =0$ for all $[r] \setminus (i_p \cup \{i_k\}_{k=1}^T)$, which would result in a contradiction in \cref{eq:beta0Tplus1}. Note that if $T = r-1$, the existence of such an $i_{T+1}$ is impossible since in that case $[r] \setminus (i_p \cup \{i_k\}_{k=1}^T) = \varnothing$; hence, if $T = r-1$, it must be the case that $j_{T+1} = q$ as described above.

If $T < r-1$ and $j_{T+1} \neq q$, then such an $i_{T+1} \in [r] \setminus (i_p \cup \{i_k\}_{k=1}^T)$ exists. Note that $i_{T+1}$ and $i_p$ are linked on $G_S$, since $i_{T+1}$ and $i_T$ share an item (i.e., $j_{T+1}$) and $i_T$ and $i_p$ are linked on $G_S$ by inductive assumption. Hence, we have constructed sets $\{i_k\}_{i=1}^{T+1}$ and $\{j_k\}_{i=1}^{T+1}$ that fulfill the inductive assumption for $T' = T+1$.

Therefore, there must exist a $1 \le T^* \le r-1$ such that $j_{T^*+1} = q$, in which case we can take $i_q = i_{T^*}$ and thus have identified an $i_q$ that is linked to $i_p$ on $G_S$ (since $i_{T*} = i_q$ is linked to $i_p$ on $G_S$) and satisfies $\bs_{i_q}[q] = \bs_{i_{T^*}}[j_{T^*+1}] \neq 0$.
\end{proof}

\subsection{Characterizing random selection matrices}
\label{sec:charselect}

In this section, we explore how many measurements and items are required for a randomly constructed selection matrix to have full-rank.

To answer this question, first we establish a fundamental result concerning how many item pairs sampled uniformly at random are required (on average or with high probability) in order for a selection matrix to be of a certain rank. We start by bounding the probability that, for an existing selection matrix $\bS$ with rank $r$, an additional row $\bs$ constructed by selecting two items uniformly at random lies within the row space of $\bS$. This is equivalent to the probability that the concatenation of $\bs$ with $\bS$ is a rank $r+1$ matrix; bounding this probability will then allow us to bound the number of such appended rows needed to increase the rank of $\bS$ to some desired value greater than $r$.

\begin{lemma}
	Suppose $\bS$ is an $m \times n$ selection matrix with rank $r \le \min(m,n-1)$. Let $\bs \in \R^n$ be constructed by sampling two integers, $p$ and $q$, uniformly and without replacement from $[n]$ (and statistically independent of $\bS$) and setting $\bs = \bs_{(p,q)}$. Then
	\begin{equation}
		\frac{2r}{n(n-1)} \le \pr(\bs \in \rowsp(\bS) \mid \bS) \le \frac{(r+1)r}{n(n-1)}.\label{eq:randIncBounds}
	\end{equation}
	\label{lemma:newspan}
\end{lemma}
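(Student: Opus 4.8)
The plan is to express the event $\{\bs \in \rowsp(\bS)\}$ purely in terms of the random pair $(p,q)$ and the combinatorial structure of $\bS$, and then count. First I would note that the number of ordered pairs $(p,q)$ drawn uniformly without replacement from $[n]$ is $n(n-1)$, so it suffices to bound the number of such pairs for which $\bs_{(p,q)} \in \rowsp(\bS)$. Since $\rowsp(\bS)$ equals the span of any maximal linearly independent subset of rows of $\bS$, fix such a subset $S = \{\bs_{i_1}, \dots, \bs_{i_r}\}$; these are $r$ linearly independent vectors of the form \cref{eq:selection-row}, and $n \ge r+1$. Then $\bs_{(p,q)} \in \rowsp(\bS)$ iff $\bs_{(p,q)} \in \myspan(S)$.

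For the \textbf{lower bound}: for each $\bs_{i_\ell} = \bs_{(p_\ell,q_\ell)}$ in the chosen independent set, the pair $(p,q) = (p_\ell, q_\ell)$ (and its reversal $(q_\ell,p_\ell)$, since $\bs_{(q,p)} = -\bs_{(p,q)}$ is also in the span) yields $\bs_{(p,q)} = \pm \bs_{i_\ell} \in \myspan(S)$. These $r$ rows are distinct (they correspond to distinct item pairs, as two rows with the same support pair would be linearly dependent), giving at least $2r$ ordered pairs $(p,q)$ with $\bs_{(p,q)} \in \rowsp(\bS)$; dividing by $n(n-1)$ gives the lower bound $\tfrac{2r}{n(n-1)}$.

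For the \textbf{upper bound}, which I expect to be the main obstacle: I want to show that $\bs_{(p,q)} \in \myspan(S)$ forces the pair $\{p,q\}$ to be "captured" by the connectivity graph $G_S$ of the chosen rows. By \Cref{lemma:spanconnected}, if $\bs_{(p,q)} \in \myspan(S)$ then there is a linked vertex pair $(i_p,i_q) \in C_S$ with $\bs_{i_p}[p] \neq 0$ and $\bs_{i_q}[q] \neq 0$; in particular $p$ lies in the support of some row in $S$ and $q$ lies in the support of some row in $S$, and moreover these two rows lie in the same connected component of $G_S$. The key combinatorial fact is that the union of the supports of all rows in a single connected component of $G_S$ containing $t$ rows has size at most $t+1$: each connected component with $t$ vertices corresponds (since the rows are of the form \cref{eq:selection-row}) to a connected subgraph on the "item graph" with $t$ edges, which spans at most $t+1$ item-vertices. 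Summing over components, if $G_S$ has components of sizes $t_1, \dots, t_c$ with $\sum_j t_j = r$, the number of ordered pairs $(p,q)$ with $p$ and $q$ both in the support of rows of a common component is at most $\sum_{j} (t_j+1) t_j$ — wait, more carefully, $p$ and $q$ must both lie among the $\le t_j + 1$ items touched by component $j$, and $p \ne q$, so at most $(t_j+1) t_j$ ordered pairs per component. Then $\sum_j (t_j+1)t_j \le \left(\sum_j t_j + 1\right)\left(\sum_j t_j\right) = (r+1)r$ by superadditivity of $t \mapsto (t+1)t$ (equivalently, convexity/the fact that cross terms are nonnegative). Dividing by $n(n-1)$ gives the upper bound.

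The one subtlety to nail down is the claim "$t$ edges spanning a connected subgraph touch at most $t+1$ vertices" and its interaction with \Cref{lemma:spanconnected}: I must be careful that \Cref{lemma:spanconnected} only guarantees $p$ and $q$ are endpoints of rows in a \emph{common} component, which is exactly what lets me localize the count component-by-component rather than paying $(r+1)r$ trivially from $p,q$ ranging over all $\le r+1$ supported items. I would also double-check the edge cases $r=0$ (both bounds give $0$, consistent) and the degenerate possibility that $p$ or $q$ is an item untouched by $S$ (then $\bs_{(p,q)} \notin \myspan(S)$, so such pairs are correctly excluded). With these pieces in place the two inequalities in \cref{eq:randIncBounds} follow by the counting argument above.
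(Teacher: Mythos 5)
Your proposal is correct and follows essentially the same route as the paper's proof: the same reduction to a maximal independent set of rows, the same use of \Cref{lemma:spanconnected} to localize $(p,q)$ to a single connected component of $G_S$, the same count of at most $t+1$ supported items per $t$-row component, and the same superadditivity step $\sum_j (t_j+1)t_j \le (r+1)r$ (your ordered-pair counting versus the paper's unordered-pair counting is only a cosmetic factor of $2$ that cancels). The one minor difference is that you justify the ``$t$ rows touch at most $t+1$ items'' fact via the standard bound on vertices of a connected graph with $t$ edges, whereas the paper derives it (as an exact equality) from the incremental-ordering machinery of \Cref{cor:TFAEsel}; both are valid.
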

We defer the proof of \Cref{lemma:newspan} to the end of the section.

With the above result, we can work towards characterizing the probability that a selection matrix with pairs sampled uniformly at random has a particular rank. To make our results as general as possible, assume that we have a known ``seed'' $m \times n$ selection matrix $\bS_0$ with rank $r_0 \le \min(m,n-2)$, and that we append $m$ randomly sampled rows to $\bS_0$ where each row is constructed by sampling two integers uniformly at random without replacement (and statistically independent from previous measurements and $\bS_0$) from $[n]$; denote these $m$ rows as $m \times n$ selection matrix $\bS$. We are interested in characterizing the probability that $\left[\begin{smallmatrix} \bS_0 \\ \bS \end{smallmatrix}\right]$ has rank $r > r_0$. We are only interested in $r_0 \le \min(m,n-2)$, since if $r_0 = n-1$ then from \Cref{lemma:Srankupper} $\bS_0$ already has the maximum rank possible for a selection matrix with $n$ columns, and so we cannot increase its rank with additional random measurements.

Let $\bs_i$ denote the $i$th row of $\bS$. We will take an approach similar in spirit to the coupon collector problem by first defining a notion of a ``failure'' and ``success'' with regards to measuring new rows. After having queried $i-1$ random paired comparisons given by rows $\{\bs_j\}_{j=1}^{i-1}$, we say that sampling a new selection row $\bs_i$ ``fails'' if it lies in the span of the selection matrix thus far, and ``succeeds'' if it lies outside this span. More precisely\footnote{In the following statements concerning probability events, $\bS_0$ is assumed to be fixed and known.}, define failure event $E_i = 0$ if $\bs_i \in \rowsp(\bS_0) \cup \myspan(\{\bs_j\}_{j=1}^{i-1})$ and success event $E_i = 1$ otherwise. Clearly, $\dim(\rowsp(\bS_0) \cup \myspan(\{\bs_j\}_{j=1}^i)) = \dim(\rowsp(\bS_0) \cup \myspan(\{\bs_j\}_{j=1}^{i-1})) + 1$ if and only if $E_i = 1$. For $i \ge 1$, let $M_i = \min(\{k : \dim(\rowsp(\bS_0) \cup \myspan(\{\bs_j\}_{j=1}^k)) = r_0 + i\}) = \min(\{k : \sum_{j=1}^k E_j = i\})$. Note that for any $i \ge 1$, $E_{M_i} = 1$; otherwise, $i = \sum_{j=1}^{M_i} E_j = \sum_{j=1}^{M_i - 1} E_j + 0 < i$ by definition of $M_i$, which would be a contradiction. $M_{r - r_0}$ for $r > r_0$ is exactly the quantity we are interested in, since it is the number of random measurements (beyond those already in $r_0$) needed for $r - r_0$ successes in total, i.e., for the cumulative selection matrix $\left[\begin{smallmatrix} \bS_0 \\ \bS \end{smallmatrix}\right]$ to be rank $r$.

To analyze $M_i$ for $1 \le i \le r-r_0$, let $C_i = M_{i}-M_{i-1}$ denote the number of measurements until the first success after already having had $i-1$ successes, where $C_1 = M_1$. Then \[M_i = (M_i - M_{i-1}) + (M_{i-1} - M_{i-2}) + (M_{i-2} + \dots + (M_2 - M_1) + M_1 = \sum_{j=1}^i C_j.\]
Given $C_1,\dots,C_{i-1}$ (and hence $M_{i-1}$), we note by definition that for any $c \ge 1$,
\begin{equation}
	C_i > c \iff E_j = 0 \text{ for all } M_{i-1} +1 \le j \le M_{i-1} + c.\label{eq:greatercEquiv}
\end{equation}
Now suppose we condition on the event $C_1=c_1,\dots,C_{i-1} = c_{i-1}$, which we denote for shorthand by $c_1,\dots,c_{i-1}$. We have
\begin{footnotesize}
	\begin{align}
		\pr(C_i > c \mid c_1,\dots,c_{i-1}) &= \pr\left(\bigcap_{k=M_{i-1}+1}^{M_{i-1}+c} E_k=0 \bigmid c_1,\dots,c_{i-1}\right) \label{eq:E0redundant}
		\\ &= \prod_{k=M_{i-1}+1}^{M_{i-1}+c} \pr\left(E_k=0 \bigmid \bigcap_{\ell=M_{i-1}+1}^{k-1} (E_\ell=0), c_1,\dots,c_{i-1}\right), \label{eq:prodEk}
	\end{align}
\end{footnotesize}
where \cref{eq:E0redundant} follows from \cref{eq:greatercEquiv}. For a fixed $k \in (M_{i-1}+1) \dots (M_{i-1}+c)$ let
\[S_k = \left\{\{\bs_\ell\}_{\ell=1}^{k-1} : \bigcap_{\ell=M_{i-1}+1}^{k-1} (E_\ell=0), C_1=c_1,\dots,C_{i-1} = c_{i-1}\right\},\]
i.e., $S_k$ is the set of all possible row sets $\{\bs_\ell\}_{\ell=1}^{k-1}$ that result in the events $\bigcap_{\ell=M_{i-1}+1}^{k-1} (E_\ell=0), C_1=c_1,\dots,C_{i-1} = c_{i-1}$ (recall that by definition these events are deterministic when conditioned on $\{\bs_\ell\}_{\ell=1}^{k-1}$). A natural result of this set definition is
\begin{equation}
	\{\bs_\ell\}_{\ell=1}^{k-1} \not\in S_k \implies \pr\left(\{\bs_\ell\}_{\ell=1}^{k-1} \bigmid \bigcap_{\ell=M_{i-1}+1}^{k-1} (E_\ell=0), c_1,\dots,c_{i-1}\right) = 0,
	\label{eq:Sknull}
\end{equation}
We then have
{
\scriptsize
	\begin{align}
		&\pr\left(E_k = 0 \bigmid \bigcap_{\ell=M_{i-1}+1}^{k-1} (E_\ell=0), c_1,\dots,c_{i-1}\right) \notag\\
		&= \smashoperator{\sum_{\{\bs_\ell\}_{\ell=1}^{k-1} \in S_k}} \pr(E_k = 0 \mid \{\bs_\ell\}_{\ell=1}^{k-1}, \smashoperator{\bigcap_{\ell=M_{i-1}+1}^{k-1}} (E_\ell=0), c_1,\dots,c_{i-1})\pr(\{\bs_\ell\}_{\ell=1}^{k-1} \mid \smashoperator{\bigcap_{\ell=M_{i-1}+1}^{k-1}} (E_\ell=0), c_1,\dots,c_{i-1}) \label{eq:reduceSk}\\
		&= \sum_{\{\bs_\ell\}_{\ell=1}^{k-1} \in S_k} \pr(E_k = 0 \mid \{\bs_\ell\}_{\ell=1}^{k-1})\pr(\{\bs_\ell\}_{\ell=1}^{k-1} \mid \bigcap_{\ell=M_{i-1}+1}^{k-1} (E_\ell=0), c_1,\dots,c_{i-1}) \label{eq:sImpliesE0}\\
		&= \sum_{\{\bs_\ell\}_{\ell=1}^{k-1} \in S_k} \pr(\bs_k \in \rowsp(\bS_0) \cup \myspan(\{\bs_\ell\}_{\ell=1}^{k-1}) \mid \{\bs_\ell\}_{\ell=1}^{k-1})\pr(\{\bs_\ell\}_{\ell=1}^{k-1} \mid \bigcap_{\ell=M_{i-1}+1}^{k-1} (E_\ell=0), c_1,\dots,c_{i-1}) \notag\\
		&\le \frac{(r_0+i)(r_0+i-1)}{n(n-1)} \sum_{\{\bs_\ell\}_{\ell=1}^{k-1} \in S_k} \pr(\{\bs_\ell\}_{\ell=1}^{k-1} \mid \bigcap_{\ell=M_{i-1}+1}^{k-1} (E_\ell=0), c_1,\dots,c_{i-1}) \label{eq:invokeIncBound}\\
		&= \frac{(r_0+i)(r_0+i-1)}{n(n-1)} (1) \label{eq:Sksum1}
	\end{align}
}
and so
\[\pr\left(E_k = 0 \bigmid \bigcap_{\ell=M_{i-1}+1}^{k-1} (E_\ell=0), C_1=c_1,\dots,C_{i-1}=c_{i-1}\right) \le \frac{(r_0+i)(r_0+i-1)}{n(n-1)}.\] In the above, \cref{eq:reduceSk} is a result of \cref{eq:Sknull}, \cref{eq:sImpliesE0} is since \[\{\bs_\ell\}_{\ell=1}^{k-1} \in S_k \implies \bigcap_{\ell=M_{i-1}+1}^{k-1} (E_\ell=0), c_1,\dots,c_{i-1},\] \cref{eq:invokeIncBound} is from \Cref{lemma:newspan} combined with the fact that since $\{\bs_\ell\}_{\ell=1}^{k-1} \in S_k$, $\dim(\rowsp(\bS_0) \cup \myspan(\{\bs_\ell\}_{\ell=1}^{k-1})) = r_0 + i - 1$, and \cref{eq:Sksum1} is from \cref{eq:Sknull}.

Continuing from \cref{eq:prodEk}, this implies
\[\pr(C_i > c \mid C_1=c_1,\dots,C_{i-1} = c_{i-1}) \le \Bigl(\frac{(r_0+i)(r_0+i-1)}{n(n-1)}\Bigr)^c,\]
and so $\pr(C_i \le c \mid C_1=c_1,\dots,C_{i-1} = c_{i-1}) \ge 1-\biggl(\frac{(r_0+i)(r_0+i-1)}{n(n-1)}\biggr)^c$.

Now, consider a set of $r-r_0$ \emph{independent} random variables, $B_1,\dots,B_{r-r_0}$, with each $B_i \in \{1,2,3,\dots\}$ distributed according to a geometric distribution with probability of success given by $p_i = 1 - \frac{(r_0+i)(r_0+i-1)}{n(n-1)}.$ We will relate the statistics of $B_i$ to $C_i$ in order to construct a tail bound on $M_{r - r_0}$, our quantity of interest. Recalling the c.d.f.\ of geometric distributions, we have for $1 \le i \le r-r_0$,
\[\pr(B_i \le c \mid B_1,\dots,B_{i-1}) = \pr(B_i \le c) = 1 - (1-p_i)^c = 1-\biggl(\frac{(r_0+i)(r_0+i-1)}{n(n-1)}\biggr)^c,\]
and so $\pr(C_i \le c \mid C_1,\dots,C_{i-1}) \ge \pr(B_i \le c)$ for all possible $C_1,\dots,C_{i-1}$.

Let $B \coloneqq \sum_{i=1}^{r-r_0} B_i$. \cite{janson2018tail} presents a tail bound for the sum of independent geometric random variables, which we can apply to $B$. Let $X = \sum_{i=1}^{j} X_i$ be the sum of $j$ independent geometric random variables, each with parameter $0 < p_i \le 1$. Define $\mu \coloneqq \E[X] = \sum_{i=1}^j \frac{1}{p_i}$. Then from \cite{janson2018tail}, for any $\lambda \ge 1$,
\begin{equation*}
	\pr(X \ge \lambda \mu) \le e^{1-\lambda}.
	\label{eq:janson}
\end{equation*}
In our case, $X_i = B_i$, $j=r-r_0$, and \[\mu = \E[B] = \sum_{i=1}^{r-r_0} \frac{1}{1 - \frac{(r_0+i)(r_0+i-1)}{n(n-1)}},\]
and so for any $\lambda \ge 1$,
\begin{equation}
	\pr(B \ge \lambda \mu) \le e^{1-\lambda}.
	\label{eq:jansonB}
\end{equation}

To translate \cref{eq:jansonB} into a more interpretable tail bound, Let $0 < \delta < 1$ be given. If we choose $\lambda = 1 + \ln \frac1\delta$ (noting that $\lambda > 1$), then
\begin{equation}
	\pr\Bigl(B \ge \Bigl(1 + \ln \frac1\delta\Bigr) \Bigl(\sum_{i=1}^{r-r_0} \frac{1}{1 - \frac{(r_0+i)(r_0+i-1)}{n(n-1)}}\Bigr)\Bigr) \le \delta.\label{eq:jansonBinterp}
\end{equation}
If we can relate the statistics of $B$ to those of $M_{r-r_0}$, then we can potentially apply \cref{eq:jansonBinterp} to construct a tail bound on $M_{r-r_0}$; the following lemma will provide the link we need. In the following, for a sequence $\{X_k\}_{k=i}^j$ let $X_{i:j} \coloneqq \{X_k\}_{k=i}^j$.
\begin{lemma}
	\label{lemma:sumbound}
	Let $\{X_i\}_{i=1}^r$ and $\{Y_i\}_{i=1}^r$ be two sets of random positive integers ($X_i,Y_i \in \mathbb{N}\;\forall i \in [r])$, where for $u_{1:i-1}\in \mathbb{N}^{i-1}$, $\{X_i\}_{i=1}^r$ is characterized by the distribution
	\[F_{X_i \mid X_{1:i-1}}(u \mid u_{1:i-1}) \coloneqq \pr(X_i \le u \mid X_{1:i-1} = u_{1:i-1})\]
	and the $\{Y_i\}_{i=1}^r$ are statistically independent, so that for any $u_{1:i-1}\in \mathbb{N}^{i-1}$
	\[\pr(Y_i \le u \mid Y_{1:i-1} = u_{1:i-1}) = \pr(Y_i \le u) \eqqcolon F_{Y_i}(u).\]
	Let $X \coloneqq \sum_{i=1}^r X_i$ and $Y \coloneqq \sum_{i=1}^r Y_i$, with $F_X(x) \coloneqq \pr(X \le x)$ and $F_Y(y) \coloneqq \pr(Y \le y)$. Suppose for all $i \in [r]$, $u \in \R$, and $u_{1:i-1}\in \mathbb{N}^{i-1}$, we have $F_{X_i \mid X_{1:i-1}}(u \mid u_{1:i-1}) \ge F_{Y_i}(u)$. Then $F_X(u) \ge F_Y(u)$ for all $u \in \R$ and $\E[X] \le \E[Y]$.
\end{lemma}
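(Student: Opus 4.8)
The plan is to reduce both conclusions to the single statement that $F_X(u) \ge F_Y(u)$ for all $u$, i.e.\ that $X$ is stochastically dominated by $Y$. Once this is established, the expectation inequality is immediate from the tail-sum formula for non-negative integer-valued variables: since $X,Y \in \mathbb{N}$, $\E[X] = \sum_{t=0}^{\infty}\pr(X>t) = \sum_{t=0}^{\infty}(1 - F_X(t)) \le \sum_{t=0}^{\infty}(1 - F_Y(t)) = \E[Y]$ (both sides possibly $+\infty$). So the entire task is to prove the stochastic dominance, which I would do by induction on the number $r$ of summands. A direct one-shot coupling also works, but the CDF manipulation below avoids measure-theoretic bookkeeping.

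The base case $r=1$ is exactly the hypothesis $F_{X_1}(u) \ge F_{Y_1}(u)$. For the inductive step, write $X = X_1 + A$ with $A \coloneqq \sum_{i=2}^r X_i$ and $Y = Y_1 + A'$ with $A' \coloneqq \sum_{i=2}^r Y_i$. Fix $t$ and condition on $X_1$:
\[
\pr(X \le t) = \E\!\left[\pr\!\left(A \le t - X_1 \,\middle|\, X_1\right)\right].
\]
The key observation is that, conditionally on $\{X_1 = u_1\}$, the reindexed process $X_2,\dots,X_r$ still satisfies the hypotheses of the lemma against the (still independent) $Y_2,\dots,Y_r$: for any $u_1$ and any $u_{2:i-1}\in\mathbb{N}^{i-2}$ we have $\pr(X_i \le u \mid X_1 = u_1, X_{2:i-1} = u_{2:i-1}) = F_{X_i \mid X_{1:i-1}}(u \mid u_1, u_{2:i-1}) \ge F_{Y_i}(u)$. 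Hence the inductive hypothesis at size $r-1$ applies and gives, for every fixed $u_1$, that the conditional law of $A$ given $X_1 = u_1$ is stochastically dominated by $A'$, i.e.\ $\pr(A \le s \mid X_1 = u_1) \ge F_{A'}(s)$ for all $s$. Taking $s = t - u_1$ and integrating over $X_1$,
\[
\pr(X \le t) \ge \E\!\left[F_{A'}(t - X_1)\right] = \pr\!\left(X_1 + \widetilde{A}' \le t\right),
\]
where $\widetilde{A}'$ is an independent copy of $A'$, drawn independently of $(X_1, Y_1)$; the equality holds because $F_{A'}$ is the deterministic CDF of $\widetilde A'$.

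Finally, condition the last expression on $\widetilde A'$: $\pr(X_1 + \widetilde A' \le t) = \E[F_{X_1}(t - \widetilde A')] \ge \E[F_{Y_1}(t - \widetilde A')] = \pr(Y_1 + \widetilde A' \le t) = \pr(Y \le t)$, where the inequality is the $i=1$ hypothesis applied pointwise at the argument $t - \widetilde A'$, and the last equality uses that the $Y_i$ are independent, so $(Y_1,\widetilde A')$ has the same joint law as $(Y_1, A')$. Chaining the two displays yields $F_X(t) = \pr(X \le t) \ge \pr(Y \le t) = F_Y(t)$, closing the induction, and then the expectation bound follows as above. The main obstacle is precisely that the $X_i$ are \emph{dependent}, so the familiar statement ``a sum of independent variables, each dominated by a corresponding independent variable, is dominated'' does not apply off the shelf. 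The resolution is the conditioning step: because the hypothesis controls $F_{X_i \mid X_{1:i-1}}$ \emph{uniformly over all conditioning values} $u_{1:i-1}$, the conditional tail of $\sum_{i \ge 2} X_i$ given $X_1$ is dominated by $A'$ for \emph{every} value of $X_1$, which is exactly what lets the induction peel off one (possibly dependent) coordinate at a time; the independence of the $Y_i$ is then used only on the ``upper'' side to recombine $Y_1$ with $A'$.
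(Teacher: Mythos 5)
Your proof is correct. The skeleton is the same as the paper's — an induction that peels off one coordinate at a time and exploits the fact that the hypothesis controls $F_{X_i \mid X_{1:i-1}}$ uniformly over all conditioning values — but the execution differs in two ways. First, the paper runs a backward induction on the tail sums $X^{(m)} = \sum_{j=m}^r X_j$, proving $F_{X^{(m)} \mid X_{1:m-1}}(u \mid u_{1:m-1}) \ge F_{Y^{(m)}}(u)$ for each $m$, whereas you induct forward on $r$ by splitting off $X_1$. Second, and more substantively, the paper's single-step recombination is an explicit computation: it expands the conditional CDF of the tail sum as a discrete convolution and uses an Abel summation (summation by parts) to show the difference of the two convolutions is nonnegative. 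You replace that computation with a purely probabilistic two-stage conditioning argument: introduce an independent copy $\widetilde{A}'$ of $\sum_{i\ge 2} Y_i$, first swap the conditional law of $\sum_{i \ge 2} X_i$ for $\widetilde{A}'$ (inductive hypothesis, integrated over $X_1$), then swap $X_1$ for $Y_1$ (the $i=1$ hypothesis, integrated over $\widetilde{A}'$), using independence of the $Y_i$ to reassemble $Y$. This is the standard ``stochastic dominance is preserved under independent convolution'' device applied twice, and it buys you a shorter, less error-prone argument that avoids the index bookkeeping of the summation-by-parts step; the paper's version buys nothing extra beyond being self-contained at the level of elementary sums. Your handling of the expectation bound via the tail-sum formula matches the paper's exactly. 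One small point worth making explicit in a final write-up: the conditioning on $\{X_1 = u_1\}$ is only over values $u_1$ with $\pr(X_1 = u_1) > 0$, which is all that the integration over $X_1$ requires since $X_1$ is integer-valued.
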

We defer the proof of \Cref{lemma:sumbound} to the end of the section.

\begin{corollary}
	\label{cor:CvsB}
	For all $c \in \R$, $\pr(M_{r - r_0} > c) \le \pr(B > c)$ and $\E[M_{r - r_0}] \le \E[B]$.
\end{corollary}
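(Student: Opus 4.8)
The plan is to obtain both inequalities as a direct application of \Cref{lemma:sumbound} to the two sequences $\{C_i\}_{i=1}^{r-r_0}$ and $\{B_i\}_{i=1}^{r-r_0}$. Since $M_{r-r_0} = \sum_{i=1}^{r-r_0} C_i$ and $B = \sum_{i=1}^{r-r_0} B_i$, identifying the lemma's $X_i$ with $C_i$, its $Y_i$ with $B_i$, and its index range with $r-r_0$, the conclusions $F_{M_{r-r_0}}(c) \ge F_B(c)$ and $\E[M_{r-r_0}] \le \E[B]$ are exactly the two claims of the corollary, the first rewritten via $\pr(M_{r-r_0} > c) = 1 - F_{M_{r-r_0}}(c) \le 1 - F_B(c) = \pr(B > c)$.

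First I would verify the structural hypotheses of \Cref{lemma:sumbound}: each $C_i = M_i - M_{i-1}$ is a positive integer by the definition of the stopping times $M_i$; each $B_i$ is a positive integer-valued geometric variable; and the $B_i$ were constructed to be mutually independent, so $\pr(B_i \le u \mid B_{1:i-1} = u_{1:i-1}) = \pr(B_i \le u) = F_{B_i}(u)$ as the lemma requires. The substantive hypothesis is the conditional dominance $F_{C_i \mid C_{1:i-1}}(u \mid u_{1:i-1}) \ge F_{B_i}(u)$ for all $i$, $u$, and $u_{1:i-1}$, and this is precisely the inequality $\pr(C_i \le c \mid C_1 = c_1,\dots,C_{i-1}=c_{i-1}) \ge \pr(B_i \le c)$ established in the display immediately preceding the corollary — derived by chaining the single-step bound $\pr(C_i > c \mid c_1,\dots,c_{i-1}) \le \bigl((r_0+i)(r_0+i-1)/(n(n-1))\bigr)^c$ (a consequence of \Cref{lemma:newspan}) against the geometric tail $\pr(B_i > c) = (1-p_i)^c$ with $p_i = 1 - (r_0+i)(r_0+i-1)/(n(n-1))$.

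Two technical points must be addressed when invoking the lemma. First, the single-step bound was derived only for conditioning histories $c_1,\dots,c_{i-1}$ actually realizable by the process, whereas \Cref{lemma:sumbound} quantifies over all $u_{1:i-1} \in \mathbb{N}^{i-1}$; this is harmless because unrealizable histories receive conditional probability zero and hence do not enter the marginal CDF computations underlying the lemma, so one may define the conditional laws arbitrarily there (or restrict attention to the realizable sub-support). Second, that bound is stated at integer $c$ only, but since $C_i$ and $B_i$ are $\mathbb{N}$-valued their CDFs are step functions with jumps confined to the integers, so the integer inequality extends to all real $u$ by monotonicity. With these points handled, \Cref{lemma:sumbound} yields both conclusions, which in turn feed the tail bound \cref{eq:jansonBinterp} on $B$ to produce a tail bound on $M_{r-r_0}$. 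I expect the reconciliation of the quantifier over conditioning histories to be the only genuine subtlety — it is a routine measure-zero argument, but it has to be spelled out for the application of \Cref{lemma:sumbound} to be rigorous.
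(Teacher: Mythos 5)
Your proposal is correct and is essentially identical to the paper's own proof, which likewise applies \Cref{lemma:sumbound} with $X_i = C_i$ and $Y_i = B_i$, citing the previously established bound $\pr(C_i \le c \mid C_1,\dots,C_{i-1}) \ge \pr(B_i \le c)$ and the independence of the $B_i$. The two technical points you flag (unrealizable conditioning histories and extending the integer-indexed bound to all real $u$) are handled implicitly in the paper via the phrase ``for all possible $C_1,\dots,C_{i-1}$,'' and your treatment of them is sound.
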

\begin{proof}
	$\{B_i\}_{i=1}^{r-r_0}$ are statistically independent, and we know
	$\pr(C_i \le c \mid C_1,\dots,C_{i-1}) \ge \pr(B_i \le c)$ for all $c$ and all possible $C_1,\dots,C_{i-1}$. Therefore by \Cref{lemma:sumbound}, $\pr(M_{r - r_0} \le c) \ge \pr(B \le c)$ and so $\pr(M_{r - r_0} > c) \le \pr(B > c)$ and $\E[M_{r - r_0}] \le \E[B]$.
\end{proof}

Combining \Cref{cor:CvsB} with \cref{eq:jansonBinterp}, for any $0 < \delta < 1$ we have
{\small
\begin{align*}
	\pr\Bigl(M_{r - r_0} > \Bigl(1 + \ln \frac1\delta\Bigr) \Bigl(\sum_{i=1}^{r-r_0} \frac{1}{1 - \frac{(r_0+i)(r_0+i-1)}{n(n-1)}}\Bigr)\Bigr) &\le \pr\Bigl(B > \Bigl(1 + \ln \frac1\delta\Bigr) \Bigl(\sum_{i=1}^{r-r_0} \frac{1}{1 - \frac{(r_0+i)(r_0+i-1)}{n(n-1)}}\Bigr)\Bigr) \\
	&\le \delta
\end{align*}
}
and
\[\E[M_{r - r_0}] \le \sum_{i=1}^{r-r_0} \frac{1}{1 - \frac{(r_0+i)(r_0+i-1)}{n(n-1)}}.\]

In other words, with probability at least $1-\delta$, $\Bigl(1 + \ln \frac1\delta\Bigr) \Bigl(\sum_{i=1}^{r-r_0} \frac{1}{1 - \frac{(r_0 + i)(r_0 + i-1)}{n(n-1)}}\Bigr)$ additional random measurements are sufficient to construct a rank $r$ selection matrix from $n$ items and a seed matrix $\bS_0$ of rank $r_0$. We formalize the above facts in the following theorem:
\begin{theorem}
	\label{thm:randrankseed}
	Let $\bS_0$ be a given $m \times n$ selection matrix with rank $1 \le r_0 \le \min(m,n-2)$. Let $r_0 < r \le n-1$ be given. Consider the following random sampling procedure: at sampling time $i \ge 1$, let $\bs_i = s_{(p,q)} \in \R^n$ where $p$ is sampled uniformly at random from $[n]$, $q$ is sampled uniformly at random from $[n] \setminus \{p\}$, and where each $\bs_i$ is sampled independently from $\bs_j$ for $j \neq i$ and from $\bS_0$. Let $\bS$ be the selection matrix constructed by concatenating the vectors $\bs_i$ into rows. Suppose rows are appended to $\bS$ until $\rank(\left[\begin{smallmatrix} \bS_0 \\ \bS \end{smallmatrix}\right]) = r$, at which point sampling halts. Let $M$ be the total number of rows in $\bS$ resulting from this process. Then for any $0 < \delta < 1$,
	\[\pr\Bigl(M > \Bigl(1 + \ln \frac1\delta\Bigr) \Bigl(\sum_{i=r_0+1}^{r} \frac{1}{1 - \frac{i(i-1)}{n(n-1)}}\Bigr)\Bigr) \le \delta\]
	and
	\[\E[M] \le \sum_{i=r_0+1}^{r} \frac{1}{1 - \frac{i(i-1)}{n(n-1)}}.\]
\end{theorem}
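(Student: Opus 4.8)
The plan is to observe that essentially all of the probabilistic machinery has already been assembled in the discussion preceding the theorem, so the proof is mostly a matter of identifying $M$ with the quantity $M_{r-r_0}$, invoking \Cref{cor:CvsB}, and re-indexing a sum. First I would pin down the identification: under the stated sampling procedure, $M$ is by definition the first $k$ for which appending the first $k$ sampled rows to $\bS_0$ yields a matrix of rank $r$, i.e.\ the first $k$ with $\dim(\rowsp(\bS_0)\cup\myspan(\{\bs_j\}_{j=1}^k)) = r$; since $r = r_0 + (r-r_0)$, this is exactly $M_{r-r_0}$ as defined before the theorem. I would also note in passing that $M < \infty$ almost surely, so that the statements are meaningful: each $C_i$ is stochastically dominated (conditionally on the history) by a geometric random variable $B_i$ with success probability $p_i = 1 - \frac{(r_0+i)(r_0+i-1)}{n(n-1)}$, and this is strictly positive because $r_0 + i \le r \le n-1$ forces $(r_0+i)(r_0+i-1) \le (n-1)(n-2) < n(n-1)$; hence $M_{r-r_0} = \sum_{i=1}^{r-r_0} C_i$ is a finite sum of a.s.-finite terms.

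Second, I would apply \Cref{cor:CvsB}, which gives $\pr(M_{r-r_0} > c) \le \pr(B > c)$ for all $c$ and $\E[M_{r-r_0}] \le \E[B]$ with $B = \sum_{i=1}^{r-r_0} B_i$, and then feed in \cref{eq:jansonBinterp} to obtain, for any $0 < \delta < 1$,
\[
\pr\!\Bigl(M_{r-r_0} > \Bigl(1 + \ln\tfrac1\delta\Bigr)\sum_{i=1}^{r-r_0}\frac{1}{1 - \frac{(r_0+i)(r_0+i-1)}{n(n-1)}}\Bigr) \le \delta,
\]
together with $\E[M_{r-r_0}] \le \E[B] = \sum_{i=1}^{r-r_0}\frac{1}{1 - \frac{(r_0+i)(r_0+i-1)}{n(n-1)}}$. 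The final step is a change of summation index: substituting $j = r_0 + i$ replaces $\sum_{i=1}^{r-r_0}$ by $\sum_{j=r_0+1}^{r}$ and $(r_0+i)(r_0+i-1)$ by $j(j-1)$, which turns both displays into exactly the bounds stated in the theorem once $M$ is identified with $M_{r-r_0}$.

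Since the substantive content — the per-step incremental-rank estimate of \Cref{lemma:newspan}, the conditioning-and-marginalization over row histories $S_k$, the stochastic domination of \Cref{lemma:sumbound}, and the Janson tail bound — has already been carried out, I do not expect a genuine obstacle here; the only points requiring care are (i) verifying that the sampling procedure terminates so that $M$ and $\E[M]$ are well defined, and (ii) applying the index shift consistently in both the high-probability and the expectation statements. For full self-containedness I would also record why $M = M_{r-r_0}$ exactly rather than $M_{r-r_0} \pm 1$: the process halts \emph{at} the step that first achieves rank $r$, and by the argument preceding the theorem $E_{M_i} = 1$ for every $i$, so the halting step is counted correctly.
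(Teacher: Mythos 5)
Your proposal is correct and follows essentially the same route as the paper, which establishes the tail bound and expectation bound for $M_{r-r_0}$ in the discussion preceding the theorem (via \Cref{lemma:newspan}, the conditioning over row histories, \Cref{lemma:sumbound}/\Cref{cor:CvsB}, and the Janson bound) and then states the theorem as a summary after the index shift $j = r_0 + i$. Your added remarks on termination and on why $M$ equals $M_{r-r_0}$ exactly are sound and only make explicit what the paper leaves implicit.
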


\paragraph{Proof of \Cref{lemma:newspan}:}

If $r = n-1$, then by \Cref{lemma:Srankupper} $\bS$ already has maximal rank and so its row space spans $\R^n$. In this case, $\pr(\bs \in \rowsp(\bS) \mid \bS) = 1$, and so the upper bound of the inequality is tight at 1. The lower bound is satisfied since, as $n \ge 2$ by definition of selection matrices, $2r / (n(n-1)) = 2/n \le 1 \le \pr(\bs \in \rowsp(\bS) \mid \bS)$ and so is true.

Otherwise, assume $r \le n - 2$. Since $\bS$ is rank $r$, there exists a set of $r$ linearly independent rows, which we denote by $\{\bs_i\}_{i=1}^r$, such that $\rowsp(\bS) = \myspan(\{\bs_i\}_{i=1}^r).$ Without loss of generality, for each $\bs_i$ assume that $p_i < q_i$, where $\bs_i[p_i] = 1$ and $\bs_i[q_i] = -1$: this assumption does not affect the span of $\{\bs_i\}_{i=1}^r$, since for $p, q \in [n]$ with $p \neq q$, $\bs_{(p,q)} = -\bs_{(q,p)}$. In a slight abuse of notation, let $\bS \setminus \{\bs_i\}_{i=1}^r$ denote the remaining rows in $\bS$. We therefore have
\begin{align*}\pr(\bs \in \rowsp(\bS) \mid \bS) = \pr(\bs \in \myspan(\{\bs_i\}_{i=1}^r) \mid \bS) &= \pr(\bs \in \myspan(\{\bs_i\}_{i=1}^r) \mid \{\bs_i\}_{i=1}^r,\bS \setminus \{\bs_i\}_{i=1}^r)
	\\ &= \pr(\bs \in \myspan(\{\bs_i\}_{i=1}^r) \mid \{\bs_i\}_{i=1}^r),
\end{align*}
where the last equality follows from the fact that $\bs$ is statistically independent of $\bS$.

Without loss of generality, suppose $p < q$. This does not affect our calculation of $\pr(\bs \in \myspan(\{\bs_i\}_{i=1}^r) \mid \{\bs_i\}_{i=1}^r)$, since $\bs_{(p,q)} \in \myspan(\{\bs_i\}_{i=1}^r) \iff \bs_{(q,p)} \in \myspan(\{\bs_i\}_{i=1}^r)$, as $\bs_{(p,q)} = -\bs_{(q,p)}$. Therefore, we can calculate $\bs_{(p,q)} \in \myspan(\{\bs_i\}_{i=1}^r)$ directly by counting which among the ${n \choose 2}$ equally likely pairs with $p < q$ lies in the span of $\{\bs_i\}_{i=1}^r$. Precisely, let $Q \coloneqq \{(p,q) : p,q \in [n], \: p < q, \: \bs_{(p,q)} \in \myspan(\{\bs_i\}_{i=1}^r\})\}$. Then
\[\pr(\bs \in \myspan(\{\bs_i\}_{i=1}^r) \mid \{\bs_i\}_{i=1}^r) = \frac{\abs{Q}}{{n \choose 2}}.\]

With these preliminaries established, we can easily lower bound $\pr(\bs \in \myspan(\{\bs_i\}_{i=1}^r) \mid \{\bs_i\}_{i=1}^r)$: since $\bs_i \in \myspan(\{\bs_j\}_{j=1}^r)$ for each $i \in [r]$, $Q$ contains the item pairs indexing the support of each $\{\bs_j\}_{j=1}^r$. Furthermore, since $\{\bs_i\}_{i=1}^r$ are linearly independent, they must be distinct (i.e., for every $i,j \in [r]$, $\bs_i \neq \bs_j$) and so $Q$ contains at least $r$ distinct item pairs, i.e., $\abs{Q} \ge r$. Hence, \[\pr(\bs \in \myspan(\{\bs_i\}_{i=1}^r) \mid \{\bs_i\}_{i=1}^r) = \frac{\abs{Q}}{{n \choose 2}} \ge \frac{r}{{n \choose 2}} = \frac{2 r}{n(n-1)},\]
proving the lower bound in the inequality.

Letting $S \coloneqq \{\bs_i\}_{i=1}^r$, we will upper bound $\abs{Q}$ by analyzing the graph $G_S$ (with the graph construction introduced in \Cref{sec:properties-select}) with linked vertex pairs $C_S$. Let $I_S$ denote the set of distinct \emph{item} pairs corresponding to linked vertices on $G_S$, i.e.,
\[I_S \coloneqq \{(p,q) : p,q \in [n],\: p < q, \: \exists (i_p,i_q) \in C_S,\: \bs_{i_p}[p] \neq 0, \: \bs_{i_q}[q] \neq 0\}.\]
From \Cref{lemma:spanconnected}, $(p,q) \in Q \implies (p,q) \in I_S$ and so $\abs{Q} \le \abs{I_S}$ and
\begin{equation}\pr(\bs \in \myspan(\{\bs_i\}_{i=1}^r) \mid \{\bs_i\}_{i=1}^r) = \frac{\abs{Q}}{{n \choose 2}} \le \frac{\abs{I_S}}{{n \choose 2}}.
	\label{eq:CQspanbound}
\end{equation}
We can therefore upper bound $\pr(\bs \in \myspan(\{\bs_i\}_{i=1}^r) \mid \{\bs_i\}_{i=1}^r)$ by upper bounding $\abs{I_S}$.

To proceed, without loss of generality that suppose $G_S$ has exactly $c$ distinct subgraphs ($c \in [r]$) $G_1=(V_1,E_1),\dots G_c=(V_c,E_c)$ where $V_1,\dots,V_c$ are a partition of $[r]$, such that for every $k$ and every $i,j \in V_k$, vertices $i$ and $j$ are linked on $G_S$ (and hence $G_k$), and for every $k,\ell \in [c]$ with $k \neq \ell$ and every $i \in V_k$, $j \in V_\ell$, vertices $i$ and $j$ are not linked on $G_S$. We next define item pairs according to which subgraph they pertain to: let
\begin{equation}I_k \coloneqq \{(p,q) : p,q \in [n],\: p < q, \: \exists (i_p,i_q) \in V_k \: \mathrm{s.t.}\: \bs_{i_p}[p] \neq 0, \: \bs_{i_q}[q] \neq 0\}.
	\label{eq:Ikdef}
\end{equation}
Note that for any given item pair $(p,q) \in I_S$ with corresponding row indices $(i_p,i_q) \in C_S$ such that $\bs_{i_p}[p] \neq 0$ and $\bs_{i_q}[q] \neq 0$, there must exist $k \in [c]$ such that $i_p,i_q \in V_k$. Hence, $I_S = \bigcup_{k=1}^c I_k$ and therefore \begin{equation}\abs{I_S} \le \sum_{k=1}^c \abs{I_k}.\label{eq:CQIQbound}
\end{equation}

To calculate $\abs{I_k}$, first define $N_k$ to be the number of items supported by subgraph $G_k$:
\[N_k \coloneqq \{i : i \in [n],\: \exists j \in V_k\:\mathrm{s.t.}\:\bs_{j}[i] \neq 0\}.\]
Since all vertices in $V_k$ are linked on $G_k$ (by construction), $\abs{I_k}$ is exactly equal to all possible pair permutations of items in $N_k$, i.e., $\abs{I_k} = {\abs{N_k} \choose 2}$. Let $r_k \coloneqq \abs{V_k}$ denote the number of vertices in subgraph $G_k$, noting that $\sum_{k=1}^c r_k = r$. For each $k \in [c]$ we then have $\abs{N_k} = r_k+1$: to see this, consider the selection matrix $\bS_{V_k}$ constructed from the rows $\{\bs_i\}_{i \in V_k}$, and note that the rows $\{\bs_i\}_{i \in V_k}$ are linearly independent by construction (since $\{\bs_i\}_{i \in V_k} \subseteq \{\bs_i\}_{i=1}^r$). Furthermore, suppose without loss of generality that $\bS_{V_k}$ is incremental: since $\{\bs_i\}_{i \in V_k}$ are linearly independent, by \Cref{cor:TFAEsel} we can always find a permutation of these rows such that the resulting matrix $\bS_{V_k}$ is incremental. We will show below that each row of $\bS_{V_k}$ introduces \emph{exactly} one new item.

Let $\bS_{V_k}^{(t)}$ denote the submatrix of $\bS_{V_k}$ consisting of the first $t$ rows: note that each $\bS_{V_k}^{(t)}$ is also incremental. Denote the $t$th row of $\bS_{V_k}$ by $\bs^{(t)}$. Suppose by contradiction that there exists $1 < i \le r_k$ such that $\bs^{(i)}$ introduces exactly two new items that are not supported in $\bS_{V_k}^{(i-1)}$, and consider any $j < i$. Since every row index in $V_k$ is linked on $G_k$, there must exist at least one finite sequence of distinct indices $\{k_\ell\}_{\ell=1}^{T} \subseteq [r_k] \setminus \{i,j\}$ such that $\bs^{(i)}$ and $\bs^{(k_1)}$ share an item, each $\bs^{(k_\ell)}$ shares an item with $\bs^{(k_{\ell-1})}$ for $1 < \ell \le T$, and $\bs^{(k_T)}$ shares an item with $\bs^{(j)}$. Note that $k_1 > i$, since $\bs^{(i)}$ cannot share an item directly with any row in $\bS_{V_k}^{(i-1)}$ (due to our contradictory assumption). Let $k^* = \max_{\ell \in [T]} k_\ell$; we know from the above argument that $k^* \ge k_1 > i > j$ and so $k^* > i,j, \{k_\ell\}_{\ell = 1}^T \setminus \{k^*\}$. By definition of $\{k_\ell\}_{\ell=1}^{T}$, $\bs^{(k^*)}$ shares an item with two distinct indices $k_a,k_b \in (\{k_\ell\}_{\ell=1}^{T} \cup \{i,j\}) \setminus {k^*}$: let $p_a$ be the item index shared with $k_a$ and $p_b$ denote the item index shared with $k_b$. Since $k^* > i,j, \{k_\ell\}_{\ell = 1}^T \setminus \{k^*\}$, $k_a < k^*$ and $k_b < k^*$, and therefore both $p_a$ and $p_b$ must appear in $\bS_{V_k}^{(k^*-1)}$. However, this is a contradiction since $\bS_{V_k}^{(k^*)}$ is incremental meaning that $\bs^{(k^*)}$ must introduce at least one new item. Therefore, there cannot exist index $1 < i \le r_k$ such that $\bs^{(i)}$ introduces exactly two new items that are not supported in $\bS_{V_k}^{(i-1)}$. Hence, hence the first row of $\bS_{V_k}$ introduces 2 new items and each subsequent row ($r_k-1$ additional rows in total) introduces exactly one new item, resulting in $2+r_k-1 = r_k+1$ supported columns in total, i.e., $\abs{N_k} = r_k+1$ and so $\abs{I_k} = {\abs{N_k} \choose 2} = {r_k + 1 \choose 2}$.

Therefore, by \cref{eq:CQIQbound},
\[\abs{I_S} \le \sum_{k=1}^c {r_k+1 \choose 2} = \frac{1}{2} \sum_{k=1}^c (r_k+1)r_k,\] where we recall that $\sum_{k=1}^c r_k = r$ and $c \in [r]$. To get an upper bound on $\abs{I_S}$ that only depends on $r$, we can maximize this bound over the choice of $\{r_k\}_{k=1}^c$ and $c$. We propose that $c=1$ and hence $r_1 = r$ maximizes this bound: consider any other $c \in [r]$ and $\{r_k\}_{k=1}^c$ such that $\sum_{k=1}^c r_k = r$. We have
\begin{align*}(r+1)r - \sum_{k=1}^c (r_k+1)r_k &= r^2 + r - \sum_{k=1}^c (r_k^2 + r_k)
	\\ &= \Bigl(r^2 - \sum_{k=1}^c r_k^2\Bigr) + r - \sum_{k=1}^c r_k
	\\ &= r^2 - \sum_{k=1}^c r_k^2
	\\ &\ge r^2 - \sum_{k=1}^c r_k^2 - \sum_{k \neq \ell} r_k r_\ell
	\\ &= r^2 - \left(\sum_{k=1}^c r_k\right)^2
	\\ &= 0.
\end{align*} 
Hence, for any $c \in [r]$ and $\{r_k\}_{k=1}^c$ such that $\sum_{k=1}^c r_k = r$,
\[\frac{1}{2} \sum_{k=1}^c (r_k+1)r_k \le \frac{1}{2}(r+1)r,\]
and so $\abs{I_S} \le \frac{1}{2}(r+1)r$. Recalling \cref{eq:CQspanbound}, we therefore have
\[\pr(\bs \in \myspan(\{\bs_i\}_{i=1}^r) \mid \{\bs_i\}_{i=1}^r) \le \frac{1}{2}\frac{(r+1)r}{{n \choose 2}} = \frac{(r+1)r}{n(n-1)}.\]


%

\paragraph{Proof of \Cref{lemma:sumbound}:}

Let $X^{(i)} \coloneqq \sum_{j=i}^r X_j$ and $Y^{(i)} \coloneqq \sum_{j=i}^r Y_j$: note that $X = X^{(1)}$ and $Y = Y^{(1)}$. Let
\[F_{X^{(i)} \mid X_{1:i-1}}(u \mid u_{1:i-1}) = \pr(X^{(i)} \le u \mid X_{1:i-1} = u_{1:i-1})\] and \[F_{Y^{(i)}}(u) \coloneqq \pr(Y^{(i)} \le u).\] We will prove by induction that $F_{X^{(1)}}(u) \ge F_{Y^{(1)}}(u)$ for all $u \in \R$, and therefore $F_X(u) \ge F_Y(u)$. Starting at $i=r$, we have by assumption that for all $u \in \R$, and $u_{1:r-1}\in \mathbb{N}^{r-1}$,
\[F_{X^{(r)} \mid X_{1:r-1}}(u \mid u_{1:r-1}) = F_{X_r \mid X_{1:r-1}}(u \mid u_{1:r-1}) \ge F_{Y_r}(u) = F_{Y^{(r)}}(u).\]

Now, let $1 \le m < r$ be given, and suppose by induction that for any $u_{1:m}\in \mathbb{N}^{m}$, we have \[F_{X^{(m+1)} \mid X_{1:m}}(u \mid u_{1:m}) \ge F_{Y^{(m+1)}}(u).\]
Expanding $F_{X^{(m)} \mid X_{1:m-1}}(u \mid u_{1:m-1})$,
{\footnotesize
\begin{align*}
	&F_{X^{(m)} \mid X_{1:m-1}}(u \mid u_{1:m-1}) = \\
	&= \pr(X^{(m)} \le u \mid X_{1:m-1} = u_{1:m-1})
	\\ &= \pr\Bigl(\sum_{j=m}^r X_j \le u \mid X_{1:m-1} = u_{1:m-1}\Bigr)
	\\ &= \pr\Bigl(\sum_{j={m+1}}^r X_j \le u - X_m \mid X_{1:m-1} = u_{1:m-1}\Bigr)
	\\ &= \sum_{v=1}^\infty \pr\Bigl(\sum_{j={m+1}}^r X_j \le u - X_m \mid X_{1:m-1} = u_{1:m-1}, X_m = v\Bigr) \pr(X_m = v \mid X_{1:m-1} = u_{1:m-1})
	\\ &= \sum_{v=1}^\infty
	F_{X^{(m+1)} \mid X_{1:m}}(u-v \mid u_{1:m-1},v) (F_{X_m \mid X_{1:m-1}}(v \mid u_{1:i-1}) - F_{X_m \mid X_{1:m-1}}(v-1 \mid u_{1:i-1})).
\end{align*}
}
Similarly we have 
\begin{align}
	F_{Y^{(m)}}(u) &= \pr(Y^{(m)} \le u)\notag
	\\ &= \pr\Bigl(\sum_{j=m}^r Y_j \le u \Bigr)\notag
	\\ &= \pr\Bigl(\sum_{j={m+1}}^r Y_j \le u - Y_m\Bigr)\notag
	\\ &= \sum_{v=1}^\infty \pr\Bigl(\sum_{j={m+1}}^r Y_j \le u - Y_m \mid Y_m = v\Bigr) \pr(Y_m = v)\notag
	\\ &= \sum_{v=1}^\infty \pr\Bigl(\sum_{j={m+1}}^r Y_j \le u - v\Bigr) \pr(Y_m = v)\label{eq:Ymind}
	\\ &= \sum_{v=1}^\infty
	F_{Y^{(m+1)}}(u-v) (F_{Y_m}(v) - F_{Y_m}(v-1))\notag,
\end{align}
where \cref{eq:Ymind} follows from the fact that $\{Y_i\}_{i=1}^r$ are statistically independent. Therefore, letting $u_{1:m-1} \in \mathbb{N}^{m-1}$ be given,
{\small
\begin{align}
	&F_{X^{(m)} \mid X_{1:m-1}}(u \mid u_{1:m-1}) - F_{Y^{(m)}}(u)\notag
	\\&= \sum_{v=1}^\infty F_{X^{(m+1)} \mid X_{1:m}}(u-v \mid u_{1:m-1},v) (F_{X_m \mid X_{1:m-1}}(v \mid u_{1:i-1})\notag
	\\&- F_{X_m \mid X_{1:m-1}}(v-1 \mid u_{1:i-1})) - F_{Y^{(m+1)}}(u-v) (F_{Y_m}(v) - F_{Y_m}(v-1)\notag
	\\&\ge \sum_{v=1}^\infty F_{Y^{(m+1)} }(u-v) (F_{X_m \mid X_{1:m-1}}(v \mid u_{1:i-1}) - F_{X_m \mid X_{1:m-1}}(v-1 \mid u_{1:i-1})) -\notag \\&F_{Y^{(m+1)}}(u-v) (F_{Y_m}(v) - F_{Y_m}(v-1))\label{eq:Xm1greater}
	\\&= \sum_{v=1}^\infty F_{Y^{(m+1)}}(u-v) (F_{X_m \mid X_{1:m-1}}(v \mid u_{1:i-1}) - F_{Y_m}(v)) \notag \\
	&-\sum_{v=1}^\infty F_{Y^{(m+1)}}(u-v)(F_{X_m \mid X_{1:m-1}}(v-1 \mid u_{1:i-1}) - F_{Y_m}(v-1))\notag
	\\&= \sum_{v=1}^\infty F_{Y^{(m+1)}}(u-v) (F_{X_m \mid X_{1:m-1}}(v \mid u_{1:i-1}) - F_{Y_m}(v)) \notag\\ &-\sum_{z=0}^\infty F_{Y^{(m+1)}}(u-z-1)(F_{X_m \mid X_{1:m-1}}(z \mid u_{1:i-1}) - F_{Y_m}(z))\quad\text{where }z \coloneqq v-1\notag
	\\&= \sum_{v=1}^\infty (F_{Y^{(m+1)}}(u-v)-F_{Y^{(m+1)}}(u-v-1)) (F_{X_m \mid X_{1:m-1}}(v \mid u_{1:i-1}) -\notag\\& F_{Y_m}(v))-F_{Y^{(m+1)}}(u-1)(F_{X_m \mid X_{1:m-1}}(0 \mid u_{1:i-1}) - F_{Y_m}(0))\notag
	\\&= \sum_{v=1}^\infty (F_{Y^{(m+1)}}(u-v)-F_{Y^{(m+1)}}(u-v-1)) (F_{X_m \mid X_{1:m-1}}(v \mid u_{1:i-1}) - F_{Y_m}(v))\label{eq:natzero}
	\\&\ge 0 \label{eq:XYgr0}
\end{align}
}
where \cref{eq:Xm1greater} is by inductive assumption, \cref{eq:natzero} is because $X_m,Y_m$ are non-negative and so \[F_{X_m \mid X_{1:m-1}}(0 \mid u_{1:i-1}) = F_{Y_m \mid Y_{1:m-1}}(0 \mid v_{1:i-1})) = 0,\]
and \cref{eq:XYgr0} is due to the fact that $F_{Y^{(m+1)}}(u)$ is non-decreasing and hence for all $v \ge 1$,
\[F_{Y^{(m+1)}}(u-v)-F_{Y^{(m+1)}}(u-v-1) \ge 0,\]
and by assumption we have
\[F_{X_m \mid X_{1:m-1}}(v \mid u_{1:i-1}) - F_{Y_m}(v) \ge 0.\]
Taking $m=1$, we have $F_{X^{(1)}}(u) - F_{Y^{(1)}}(u) \ge 0$ i.e., $F_{X}(u) \ge F_{Y}(u).$

Using the fact that $\E[X] = \sum_{u=0}^\infty \pr(X > u) = \sum_{u=0}^\infty (1-F_{X}(u))$ and similarly $\E[Y] = \sum_{u=0}^\infty (1-F_{Y}(u))$, we have
\begin{align*}
	\E[X] &= \sum_{u=0}^\infty (1-F_{X}(u))
	\\ &\le \sum_{u=0}^\infty (1-F_{Y}(u))
	\\ &= \E[Y].
\end{align*}


\subsection{Proof of \Cref{prop:mainNec}}
\label{sec:proofmainNec}

If $\bGamma$ has full column rank, then its $D + dK$ columns are linearly independent and so $\rank(\bGamma) = D + dK$. Since the rank of $\bGamma$ is upper bounded by its number of rows, we require $\sum_{k=1}^K m_k \ge D + dK$. Next we will show in turn that each condition in \Cref{prop:mainNec} is necessary for $\bGamma$ to have full column rank:

\paragraph{(a)}

In order for all $D + dK$ columns in $\bGamma$ to be linearly independent, it must be the case that for each $k \in [K]$, the columns corresponding to user $k$ are linearly independent, given by
\[\begin{bmatrix}
	\0_{m_1,d} \\ \vdots \\ \bS_k \bX^T \\ \vdots \\ \0_{m_K,d}
\end{bmatrix}.\]
Clearly this is only possible if the $d$ columns in $\bS_k \bX^T$ are linearly independent (since padding by zeros does not affect linear independence of columns), i.e., $\rank(\bS_k \bX^T) = d$. Since $\rank(\bS_k \bX^T) \le \rank(\bS_k)$, we require $\rank(\bS_k) \ge d$, which implies $m_k \ge d$ since $\bS_k$ has $m_k$ rows.

\paragraph{(b)}
Since $\rank(\bGamma) = D + dK$, $\bGamma$ must have $D + dK$ linearly independent rows. Observing \cref{eq:full-linear-system}, each user's block of $m_k$ rows is given by
\begin{equation}
	\label{eq:userkrows}
	\begin{bmatrix}\bS_k \bX_\otimes^T & \0_{m_k,d} & \cdots & \bS_k \bX^T& \cdots & \0_{m_k,d}\end{bmatrix},
\end{equation}
which has the same column space, and therefore the same rank, as $\bS_k \begin{bmatrix} \bX_\otimes^T & \bX^T\end{bmatrix}.$ Therefore, the number of linearly independent rows in \cref{eq:userkrows} is equal to the rank of $\bS_k \begin{bmatrix} \bX_\otimes^T & \bX^T\end{bmatrix}$, and so the number of linearly independent rows in $\bGamma$ is upper bounded by $\sum_{k=1}^K \rank(\bS_k \begin{bmatrix} \bX_\otimes^T & \bX^T\end{bmatrix})$, which for $\bGamma$ with full column rank must be at least $D + dK$. Since $\rank(\bS_k \begin{bmatrix} \bX_\otimes^T & \bX^T\end{bmatrix}) \le \rank(\bS_k)$, we have $\sum_{k=1}^K \rank(\bS_k \begin{bmatrix} \bX_\otimes^T & \bX^T\end{bmatrix}) \le \sum_{k=1}^K \rank(\bS_k)$ and therefore we also require $\sum_{k=1}^K \rank(\bS_k) \ge D + dK$. 

\paragraph{(c)}
Consider any $\boldeta \in \R^D$ and $\bv \in \R^d$. Recalling \cref{eq:full-linear-system}, multiplying $\bGamma$ by $[\boldeta^T \underbrace{\bv^T \cdots \bv^T}_{\text{$K$ times}}]^T$ is equivalent to
\begin{equation}
	\label{eq:allSharev}
	\bGamma \begin{bmatrix} \boldeta \\ \bv \\ \vdots \\ \bv \end{bmatrix} = \begin{bmatrix} \bS_1 \bX_\otimes^T \boldeta + \bS_1 \bX^T \bv \\ \vdots \\ \bS_K \bX_\otimes^T \boldeta + \bS_K \bX^T \bv \end{bmatrix} = \begin{bmatrix} \bS_1 \\ \vdots \\ \bS_K\end{bmatrix} \begin{bmatrix} \bX_\otimes^T & \bX^T \end{bmatrix} \begin{bmatrix} \boldeta \\ \bv \end{bmatrix} = \bS_T \begin{bmatrix} \bX_\otimes^T & \bX^T \end{bmatrix} \begin{bmatrix} \boldeta \\ \bv \end{bmatrix}.
\end{equation}
By the rank-nullity theorem, $\ker(\bS_T \begin{bmatrix} \bX_\otimes^T & \bX^T \end{bmatrix})$ is trivial if and only if $\rank(\bS_T \begin{bmatrix} \bX_\otimes^T & \bX^T \end{bmatrix}) = D + d$ (recall that $\begin{bmatrix} \bX_\otimes^T & \bX^T \end{bmatrix}$ has $D + d$ columns). Therefore if $\rank(\bS_T \begin{bmatrix} \bX_\otimes^T & \bX^T \end{bmatrix}) < D + d$, there exists a $[\begin{smallmatrix} \boldeta \\ \bv \end{smallmatrix}] \neq \0$ such that $\bS_T \begin{bmatrix} \bX_\otimes^T & \bX^T \end{bmatrix} [\begin{smallmatrix} \boldeta \\ \bv \end{smallmatrix}] = \0$ and therefore exists a nonzero vector in $\R^{D + dK}$ given by $[\boldeta^T \underbrace{\bv^T \cdots \bv^T}_{\text{$K$ times}}]^T$ such that
\[\bGamma \begin{bmatrix} \boldeta \\ \bv \\ \vdots \\ \bv \end{bmatrix} = \0,\]
which would imply that $\bGamma$ is rank deficient. Therefore, we require $\rank(\bS_T \begin{bmatrix} \bX_\otimes^T & \bX^T \end{bmatrix}) = D + d$, and since $\rank(\bS_T \begin{bmatrix} \bX_\otimes^T & \bX^T \end{bmatrix}) \le \min(\rank(\bS_T), \rank(\begin{bmatrix} \bX_\otimes^T & \bX^T \end{bmatrix}))$ this implies $\rank(\bS_T) \ge D + d$ and $\rank(\begin{bmatrix} \bX_\otimes^T & \bX^T \end{bmatrix}) \ge D + d$. Since $\bS_T$ is itself a selection matrix, by \Cref{lemma:Srankupper} we require $n \ge D + d + 1$ in order for $\rank(\bS_T) \ge D + d$.

\subsection{Proof of \Cref{prop:incSuf}}
\label{sec:proofpropNincsuf}

We first permute the rows of $\bGamma$ as follows (row permutations do not change matrix rank): first, define $\bGamma^{(1)}$ as
\begin{equation}
	\label{eq:mainmatsup1}
	\bGamma^{(1)} \coloneqq
\begin{bmatrix}
	\bS_1^{(1)} \bX_\otimes^T & \bS_1^{(1)} \bX^T & \0_{d,d} & \cdots & \0_{d,d}
	\\
	\bS_2^{(1)} \bX_\otimes^T & \0_{d,d} & \bS_2^{(1)} \bX^T& \cdots & \0_{d,d}
	\\
	\vdots & \vdots & \vdots & \vdots & \vdots
	\\
	\bS_K^{(1)} \bX_\otimes^T & \0_{d,d} & \0_{d,d} & \cdots & \bS_K^{(1)} \bX^T
\end{bmatrix}
\end{equation}
and $\bGamma^{(2)}$ as
\begin{equation}
	\label{eq:mainmatsup2}
	\bGamma^{(2)} \coloneqq
\bP \begin{bmatrix}
	\bS_1^{(2)} \bX_\otimes^T & \bS_1^{(2)} \bX^T & \0_{m_1-d,d} & \cdots & \0_{m_1-d,d}
	\\
	\bS_2^{(2)} \bX_\otimes^T & \0_{m_2-d,d} & \bS_2^{(2)} \bX^T& \cdots & \0_{m_2-d,d}
	\\
	\vdots & \vdots & \vdots & \vdots & \vdots
	\\
	\bS_K^{(2)} \bX_\otimes^T & \0_{m_K-d,d} & \0_{m_K-d,d} & \cdots & \bS_K^{(2)} \bX^T
\end{bmatrix}.
\end{equation}
Finally, define $\widehat{\bGamma} = \begin{bmatrix} \bGamma^{(1)} \\ \bGamma^{(2)} \end{bmatrix}.$ Since $\widehat{\bGamma}$ is simply a permutation of the rows in $\bGamma$, $\rank(\widehat{\bGamma}) = \rank(\bGamma)$. Therefore, if we show that the $D + dK$ rows in $\widehat{\bGamma}$ are linearly independent and hence $\rank(\widehat{\bGamma}) = D + dK$, we will have shown that $\rank(\bGamma) = D + dK$ and so $\bGamma$ is full column rank.

We will start by examining the rows in $\bGamma^{(1)}$. For $k \in [K]$, let $\bQ_k^{(1)} \coloneqq \bS_k^{(1)} \bX^T$; evaluating this matrix product, the $i$th row of $\bQ_k^{(1)}$ is given by $\bx_{p_{k,i}} - \bx_{q_{k,i}}$, where $p_{k,i}$ indexes the $+1$ entry in the $i$th row of $\bS_k^{(1)}$ and $q_{k,i}$ indexes the $-1$ entry in the $i$th row of $\bS_k^{(1)}$. Since each $\bx_i$ is i.i.d.\ distributed according to $p_X$, and $p_X$ is absolutely continuous with respect to the Lebesgue measure, for any $i \neq j$ we have that $\bz_{i,j} \coloneqq \bx_i - \bx_j$ is distributed according to some distribution $p_Z$ (which does not depend on $i$ or $j$ since $\bx_i$ is i.i.d.\ for all $i$) that is also absolutely continuous with respect to the Lebesgue measure.

Inspecting the first row of $\bQ_k^{(1)}$, we then have
\[\pr(\bx_{p_{k,1}} - \bx_{q_{k,1}} = \0) = \pr_{Z}(\bz = \0) = 0,\]
where the last equality follows since $\mu(\{\0\}) = 0$, where $\mu$ is the Lebesgue measure, and $p_Z$ is absolutely continuous. Hence, with probability 1, $\bx_{p_{k,1}} - \bx_{q_{k,1}}$ is nonzero and spans a 1-dimensional subspace of $\R^d$.

Let $\bw$ be a vector orthogonal to $\bx_{p_{k,1}} - \bx_{q_{k,1}}$, and consider $\bx_{p_{k,2}} - \bx_{q_{k,2}}$, which is the second row of $\bQ_k^{(1)}$. Since by assumption $\bS_k^{(1)}$ is incremental, at least one of $p_{k,2}$ or $q_{k,2}$ is not equal to $p_{k,1}$ or $q_{k,1}$. Suppose that both $p_{k,2},q_{k,2} \not\in \{p_{k,1},q_{k,1}\}.$ Then
\[\pr(\bw^T (\bx_{p_{k,2}} - \bx_{q_{k,2}}) = 0 \mid \bx_{p_{k,1}}, \bx_{q_{k,1}}) = \pr_{Z}(\bw^T \bz = 0 \mid \bx_{p_{k,1}}, \bx_{q_{k,1}}) = 0,\]
where the last equality follows from the fact that $\mu(\{\bw^T \bz = 0 : \bz \in \R^d\}) = 0$ and $p_Z$ is absolutely continuous.

Now suppose that exactly one of $p_{k,2}$ or $q_{k,2}$ is not equal to $p_{k,1}$ or $q_{k,1}$. Without loss of generality, suppose $q_{k,2}$ is equal to $p_{k,1}$ or $q_{k,1}$ (the same argument holds if this were true for $p_{k,2}$ instead). Then 
\begin{align*}
	&\pr(\bw^T (\bx_{p_{k,2}} - \bx_{q_{k,2}}) = 0 \mid \bx_{p_{k,1}}, \bx_{q_{k,1}}) = \pr(\bw^T \bx_{p_{k,2}} - \bw^T \bx_{q_{k,2}} = 0 \mid \bx_{p_{k,1}}, \bx_{q_{k,1}},\bx_{q_{k,2}})
	\\ &= \pr_{X}(\bw^T \bx - c = 0 \mid \bx_{p_{k,1}}, \bx_{q_{k,1}},\bx_{q_{k,2}}) \quad \text{where $c$ is a constant.}
	\\ &= 0,
\end{align*}
where the first equality follows from the fact that $q_{k,2} \in \{p_{k,1},q_{k,1}\}$, the second equality follows from the fact that when conditioned on $\bx_{q_{k,2}}$, $\bw^T \bx_{q_{k,2}}$ is a constant (which we denote by $c$) and that $\bx_{p_{k,2}}$ is distributed as $p_X$ and is independent of other $\bx_j$ for $j \neq p_{k,2}$, and the final equality follows from the fact that $\mu(\{\bw^T \bx - c = 0 : \bx \in \R^d\}) = 0$ and $p_X$ is absolutely continuous.

In either scenario, $\pr(\bw^T (\bx_{p_{k,2}} - \bx_{q_{k,2}}) = 0 \mid \bx_{p_{k,1}}, \bx_{q_{k,1}}) = 0$. Hence, when conditioned on $\bx_{p_{k,1}}$ and  $\bx_{q_{k,1}}$, with probability 1 $\bx_{p_{k,2}} - \bx_{q_{k,2}}$ includes a component orthogonal to $\bx_{p_{k,1}} - \bx_{q_{k,1}}$ and therefore does not lie in the span of $\bx_{p_{k,1}} - \bx_{q_{k,1}}$. Denote the first $j$ rows of $\bQ_k^{(1)}$ as
\[\bQ_k^{(1)}[1:j] = \begin{bmatrix} (\bx_{p_{k,1}} - \bx_{q_{k,1}})^T \\ \vdots \\ (\bx_{p_{k,j}} - \bx_{q_{k,j}})^T
\end{bmatrix}.\]
Then, from the above argument, $\pr(\rank(\bQ_k^{(1)}[1:2])=2 \mid \bx_{p_{k,1}}, \bx_{q_{k,1}}) = 1$. This is true for any $\bx_{p_{k,1}}, \bx_{q_{k,1}}$ satisfying $\bx_{p_{k,1}} - \bx_{q_{k,1}} \neq \0$, which we know occurs with probability 1, and so marginalizing over this event we have $\pr(\rank(\bQ_k^{(1)}[1:2])=2) = 1$.

If $d > 2$, let $2 \le m < d$ be given, and suppose by induction that $\pr(\rank(\bQ_k^{(1)}[1:m])=m) = 1$. The rows of $\bQ_k^{(1)}[1:m]$ are constructed from vectors $\bx_i$ where $i \in M$ and $M \coloneqq \{p_{k,j}\}_{j=1}^{m}\cup\{q_{k,j}\}_{j=1}^{m}$.

Let $\bw$ be a vector in the orthogonal subspace to $\rowsp(\bQ_k^{(1)}[1:m])$. Consider row $m+1$ of $\bQ_k^{(1)}$, given by $\bx_{p_{k,m+1}}-\bx_{q_{k,m+1}}$. Since $\bS_k^{(1)}$ is incremental, at least one of $p_{k,m+1}$ or $q_{k,m+1}$ is not in $M$. First suppose this is true for both $p_{k,m+1}$ and $q_{k,m+1}$. Then by similar arguments as above, 
\[\pr(\bw^T (\bx_{p_{k,m+1}} - \bx_{q_{k,m+1}}) = 0 \mid \{\bx_i\}_{i \in M}) = \pr_{Z}(\bw^T \bz = 0 \mid \{\bx_i\}_{i \in M} ) = 0.\]

Now, instead suppose without loss of generality that only $q_{k,m+1} \in M$ (an identical argument holds for $p_{k,m+1} \in M$). Then by similar arguments as above, 
\begin{align*}
	&\pr(\bw^T (\bx_{p_{k,m+1}} - \bx_{q_{k,m+1}}) = 0 \mid \{\bx_i\}_{i \in M}) = \pr(\bw^T \bx_{p_{k,m+1}} - \bw^T \bx_{q_{k,m+1}} = 0 \mid \bx_{q_{k,m+1}},\{\bx_i\}_{i \in M})
	\\ &= \pr_{X}(\bw^T \bx - c = 0 \mid \bx_{q_{k,m+1}},\{\bx_i\}_{i \in M}) \quad \text{where $c$ is a constant.}
	\\ &= 0.
\end{align*}

In either scenario, $\pr(\bw^T (\bx_{p_{k,m+1}} - \bx_{q_{k,m+1}}) = 0 \mid \{\bx_i\}_{i \in M})  = 0$. Hence, when conditioned on $\{\bx_i\}_{i \in M}$, $\bx_{p_{k,m+1}} - \bx_{q_{k,m+1}}$ includes a component orthogonal to the row space of $\bQ_k^{(1)}[1:m]$ and therefore does not lie in this row space. In other words,
\[\pr(\rank(\bQ_k^{(1)}[1:m+1])=m+1 \mid \{\bx_i\}_{i \in M}) = 1.\]
This is true for any $\{\bx_i\}_{i \in M}$ satisfying $\rank(\bQ_k^{(1)}[1:m])=m$, which by inductive assumption is true with probability 1 and so marginalizing over this event we have $\pr(\rank(\bQ_k^{(1)}[1:m+1])=m+1) = 1$. Taking $m = d - 1$, and noting that $\bQ_k^{(1)}[1:d] = \bQ_k^{(1)}$, $\pr(\rank(\bQ_k^{(1)})=d) = 1$. Since this is true for all $k \in [K]$, by the union bound we have
\[\pr\Bigl(\bigcup_{k \in [K]} (\rank(\bQ_k^{(1)})<d)\Bigr) \le \sum_{k \in [K]} \pr(\rank(\bQ_k^{(1)})<d) \le \sum_{k \in [K]} 0 = 0,\]
and therefore with probability 1, $\rank(\bQ_k^{(1)})=d$ simultaneously for all $k \in [K]$.

Consider the following matrix:
\[\bQ^{(1)} \coloneqq \begin{bmatrix} \bQ_1^{(1)} & \0_{d,d} & \cdots & \0_{d,d}
	\\
	\0_{d,d} & \bQ_2^{(1)} & \cdots & \0_{d,d}
	\\
	\vdots & \vdots & \vdots & \vdots
	\\
	\0_{d,d} & \0_{d,d} & \cdots & \bQ_K^{(1)}
\end{bmatrix}.\]
Each consecutive block of $d$ rows in $\bQ^{(1)}$ is clearly orthogonal, and since with probability 1 each $\bQ_k^{(1)}$ is simultaneously full row rank, with probability 1 we have that $\bQ^{(1)}$ is full row rank (and hence is invertible since it is square). Inspecting \cref{eq:mainmatsup1}, we can write
$\bGamma^{(1)} = \begin{bmatrix} \bR^{(1)} & \bQ^{(1)} \end{bmatrix}$ where $\bR^{(1)}$ is a $Kd \times D$ submatrix. Since $\bQ^{(1)}$ is rank $Kd$, the column space of $\bGamma^{(1)}$ is also of dimension at least $Kd$ and therefore $\bGamma^{(1)}$ is full row rank since it has $Kd$ rows. In other words, we have shown that with probability 1 the rows of $\bGamma^{(1)}$ are linearly independent. We will now show linear independence for the remaining rows in $\widehat{\bGamma}$ (i.e., $\bGamma^{(2)}$), which completes our proof. Specifically, we will proceed through the remaining $D$ measurements row by row, and inductively show how each cumulative set of rows is linearly independent.

First, we define some additional notation: for any vector $\bw \in \R^{D + Kd}$, let $\psi_k(\bw)$ be the subvector limited to the column indices of $\bGamma$ involving user $k$, i.e.,
\[\psi_k(\bw) \coloneqq \begin{bmatrix}\bw[1\! :\!D] \\ \bw[D\!+\!(k\!-\!1)d\!+\!1\!:\!D\!+\!kd]\end{bmatrix}.\]
Let $\br_i$ denote the $i$th row of $\bGamma^{(2)}$, let $k_i \in [K]$ denote the user that this row corresponds to (i.e., $\br_i$ is supported on columns $1:D$ and $D\!+\!(k_i\!-\!1)d\!+\!1\!:\!D\!+\!k_id$), and let $j_{i,1},j_{i,2}$ denote the first and second items selected at this measurement. We require this flexible definition of the user and items in row $\br_i$, since the permutation $\bP$ has arbitrarily scrambled the users that each row in $\bGamma^{(2)}$ corresponds to. Finally, let 
\[\phi(\bx) = \begin{bmatrix} \bx \otimes_S \bx \\ \bx \end{bmatrix},\]
which is a vector in $\R^{D + d}$. For any nonzero vector $\bmu \in \R^{D + d}$, $\bmu^T \phi(\bx)$ is a nontrivial polynomial in $\bx$. With this notation defined, for any vector $\bw \in \R^{D + Kd}$ we see from \cref{eq:mainmatsup2} that
\[\bw^T \br_{i} = \psi_{k_i}(\bw)^T(\phi(\bx_{j_{i,1}}) - \phi(\bx_{j_{i,2}})).\]

Next, we establish a fact about the orthogonal subspace to $\rowsp(\bGamma^{(1)})$. Let $E_0 \coloneqq \myspan(\{\be_i\}_{i=1}^D)$ where $\be_i$ is the $i$th standard basis vector in $\R^{D + dK}$. It is a fact that for every $\bw \in \rowsp(\bGamma^{(1)})^\perp$, $\proj_{E_0} \bw \ne \0$. In other words, $\bw$ has at least one nonzero element in its first $D$ entries. Suppose this were not true: then for some nonzero $\bw' \in \R^{Kd}$, we would have
\[\bw = \begin{bmatrix} \0 \\ \bw' \end{bmatrix}.\]
Since $\bw' \in \R^{Kd}$ and the rows of $\bQ^{(1)}$ are a basis for $\R^{Kd}$ (since $\bQ^{(1)}$ is invertible), we have $\bw' = (\bQ^{(1)})^T\bbeta$ for some $\bbeta \in \R^{Kd}$. Consider $\br = (\bGamma^{(1)})^T \bbeta$, which is clearly in $\rowsp(\bGamma^{(1)})$. Expanding $\bGamma^{(1)}$, we have
\[\br = \begin{bmatrix} (\bR^{(1)})^T \bbeta \\ (\bQ^{(1)})^T \bbeta \end{bmatrix} = \begin{bmatrix} (\bR^{(1)})^T \bbeta \\ \bw' \end{bmatrix},\]
and so
\[\bw^T \br = \0^T (\bR^{(1)})^T \bbeta + (\bw')^T \bw' = \norm{\bw'}_2^2 > 0,\]
where the last inequality follows since $\bw' \neq \0$. This is a contradiction since by definition, $\bw^T \br = 0$ for every $\br \in \rowsp(\bGamma^{(1)})$.

Let $\bw$ be a vector in $\rowsp(\bGamma^{(1)})^\perp$ not equal to the zero vector, and let $J$ denote the item indices on which each $\bS_k^{(1)}$ is supported across all $k \in [K]$, i.e., \[J = \{j \colon j \in [n], \exists k \in [K], i \in [d] \; \mathrm{s.t.} \; \bS_k^{(1)}[i,j] \neq 0\}.\]
Consider the first row of $\bGamma^{(2)}$. By the incremental assumption, at least one of $j_{1,1}$ or $j_{1,2}$ is not found in $J$. First suppose that both are not found in $J$. Then
\begin{equation}
	\pr(\bw^T \br_1 = 0 \mid \{\bx_i\}_{i \in J}) = \pr(\psi_{k_1}(\bw)^T (\phi(\bx_{j_{1,1}}) - \phi(\bx_{j_{1,2}})) = 0 \mid \{\bx_i\}_{i \in J})\label{eq:bothfloating}.
\end{equation}

As an aside, if $\bx_i,\bx_j$ are i.i.d.\ distributed according to absolutely continuous distribution $p_X$, then the joint distribution $p_{\bx_i,\bx_j}(\bx_i,\bx_j) = p_{X}(\bx_i) p_{X}(\bx_j)$ is also absolutely continuous. Also note that for any $\bmu \in \R^{D + d}$,
\begin{equation}\bmu^T (\phi(\bx_i) - \phi(\bx_j)) = \begin{bmatrix} \bmu^T & -\bmu^T \end{bmatrix}\begin{bmatrix} \phi(\bx_i) \\ \phi(\bx_j) \end{bmatrix}.\label{eq:jointpoly}\end{equation}
Since $\begin{bmatrix} \bx_i \\ \bx_j \end{bmatrix} \otimes_S \begin{bmatrix} \bx_i \\ \bx_j \end{bmatrix}$ contains all terms in $\bx_i \otimes_S \bx_i$ and $\bx_j \otimes_S \bx_j$, $\phi([\begin{smallmatrix} \bx_i \\ \bx_j \end{smallmatrix}])$ contains $\phi(\bx_i)$ and $\phi(\bx_j)$. Therefore, we can view \cref{eq:jointpoly} as being a polynomial in $[\begin{smallmatrix} \bx_i \\ \bx_j \end{smallmatrix}]$. If $\bmu \neq \0$, then $\begin{bmatrix} \bmu \\ -\bmu \end{bmatrix} \neq \0$ and so $\bmu^T (\phi(\bx_i) - \phi(\bx_j))$ can be viewed as a nontrivial polynomial in $[\begin{smallmatrix} \bx_i \\ \bx_j \end{smallmatrix}]$. Since $p_{\bx_i,\bx_j}$ is absolutely continuous, for any nonzero $\bmu$ we have \[\pr(\bmu^T (\phi(\bx_i) - \phi(\bx_j)) = 0) = 0,\]
since the set of roots for a nontrivial polynomial is a set of Lebesgue measure 0.

Returning to \cref{eq:bothfloating}, we then have
\[\pr(\bw^T \br_1 = 0 \mid \{\bx_i\}_{i \in J}) = \pr(\psi_{k_1}(\bw)^T (\phi(\bx_{j_{1,1}}) - \phi(\bx_{j_{1,2}})) = 0 \mid \{\bx_i\}_{i \in J}) = 0,\]
which follows from the fact that $\psi_{k_1}(\bw)$ is nonzero: recall from the fact presented above that $\proj_{E_0}(\bw) \ne \0$, so $\psi_{k}(\bw) \neq \0$ for any $k \in [K]$.

Now, instead suppose without loss of generality that $j_{1,1} \not\in J$ and $j_{1,2} \in J$ (an identical argument holds for $j_{1,1} \in J$ and $j_{1,2} \not\in J$). Then by similar arguments as above, 
\begin{align*}
	&\pr(\bw^T \br_1 = 0 \mid \{\bx_i\}_{i \in J}) = \pr(\psi_{k_1}(\bw)^T (\phi(\bx_{j_{1,1}}) - \phi(\bx_{j_{1,2}})) = 0 \mid \{\bx_i\}_{i \in J})
	\\ &= \pr(\psi_{k_1}(\bw)^T \phi(\bx_{j_{1,1}}) - \psi_{k_1}(\bw)^T \phi(\bx_{j_{1,2}}) = 0 \mid \bx_{j_{1,2}}\cup \{\bx_i\}_{i \in J})
	\\ &= \pr_{X}(\psi_{k_1}(\bw)^T \phi(\bx) - c = 0 \mid \bx_{j_{1,2}}\cup \{\bx_i\}_{i \in J}) \quad \text{where $c$ is a constant.}
	\\ &= 0.
\end{align*}
The last equality follows since $\psi_{k_1}(\bw) \ne \0$ and so $\psi_{k_1}(\bw)^T \phi(\bx) - c$ is a nontrivial polynomial in $\bx$, along with the fact that $p_X$ is absolutely continuous.

In either scenario, $\pr(\bw^T \br_1 = 0 \mid \{\bx_i\}_{i \in J})  = 0$. Hence, when conditioned on $\{\bx_i\}_{i \in J}$, $\br_1$ includes a component orthogonal to $\rowsp(\bGamma^{(1)})$ and therefore does not lie in this row space. In other words,
\[\pr\Bigl(\rank\Bigl(\begin{bmatrix}(\bGamma^{(1)})^T & \br_1 \end{bmatrix}\Bigr) = Kd+1 \mid \{\bx_i\}_{i \in J}\Bigr) = 1.\]
This is true for any $\{\bx_i\}_{i \in J}$ resulting in $\bGamma^{(1)}$ being full-rank, which we know occurs with probability 1 and so by marginalizing we have $\pr\Bigl(\rank\Bigl(\begin{bmatrix}(\bGamma^{(1)})^T & \br_1 \end{bmatrix}\Bigr) = Kd+1\Bigr) = 1$.

If $d > 1$, let $m < D$ be given and suppose by induction that 
\[\pr\Bigl(\rank\Bigl(\begin{bmatrix}(\bGamma^{(1)})^T & \br_1 & \cdots & \br_{m} \end{bmatrix}\Bigr) = Kd+m\Bigr) = 1.\]
Let $\bw$ be a vector in $(\rowsp(\bGamma^{(1)}) \cup \myspan(\{\br_i\}_{i=1}^m))^\perp$ not equal to the zero vector. Note that $\bw \in \rowsp(\bGamma^{(1)})^\perp$ as well, and so by the above, $\proj_{E_0} (\bw) \neq \0$ and so $\psi_k(\bw) \neq \0$ for any $k \in [K]$. Reusing notation, let
\[J = \{j \colon j \in [n], \exists k \in [K], i \in [d] \; \mathrm{s.t.} \; \bS_k^{(1)}[i,j] \neq 0\} \cup \{j \colon j \in [n], \exists i \in [m] \; j = j_{i,1} \lor j = j_{i,2}\}\]
denote the set of all item indices in $\widehat{\bGamma}$ measured up through and including the $m$th measurement of $\bGamma^{(2)}$. 
Consider row $m+1$ of $\bGamma^{(2)}$. By the incremental assumption, at least one of $j_{m+1,1}$ or $j_{m+1,2}$ is not found in $J$. First suppose that both are not found in $J$. Then
\begin{align*}
	\pr(\bw^T \br_{m+1} = 0 \mid \{\bx_i\}_{i \in J}) &= \pr(\psi_{k_{m+1}}(\bw)^T (\phi(\bx_{j_{{m+1},1}}) - \phi(\bx_{j_{{m+1},2}})) = 0 \mid \{\bx_i\}_{i \in J})
	\\ &= 0,
\end{align*}
due to a similar argument as above.

Now, instead suppose without loss of generality that $j_{{m+1},1} \not\in J$ and $j_{{m+1},2} \in J$ (an identical argument holds for $j_{{m+1},1} \in J$ and $j_{{m+1},2} \not\in J$). Then by similar arguments as above, 
\begin{align*}
	&\pr(\bw^T \br_{m+1} = 0 \mid \{\bx_i\}_{i \in J}) = \pr(\psi_{k_{m+1}}(\bw)^T (\phi(\bx_{j_{{m+1},1}}) - \phi(\bx_{j_{{m+1},2}})) = 0 \mid \{\bx_i\}_{i \in J})
	\\ &= \pr(\psi_{k_{m+1}}(\bw)^T \phi(\bx_{j_{{m+1},1}}) - \psi_{k_{m+1}}(\bw)^T \phi(\bx_{j_{{m+1},2}}) = 0 \mid \bx_{j_{{m+1},2}}\cup \{\bx_i\}_{i \in J})
	\\ &= \pr_{X}(\psi_{k_{m+1}}(\bw)^T \phi(\bx) - c = 0 \mid \bx_{j_{{m+1},2}}\cup \{\bx_i\}_{i \in J}) \quad \text{where $c$ is a constant.}
	\\ &= 0.
\end{align*}
The last equality follows since $\psi_{k_{m+1}}(\bw) \ne \0$ and so $\psi_{k_{m+1}}(\bw)^T \phi(\bx) - c$ is a nontrivial polynomial in $\bx$.

In either scenario, $\pr(\bw^T \br_{m+1} = 0 \mid \{\bx_i\}_{i \in J})  = 0$. Hence, when conditioned on $\{\bx_i\}_{i \in J}$, $\br_{m+1}$ includes a component orthogonal to $\rowsp(\bGamma^{(1)}) \cup \myspan(\{\br_i\}_{i=1}^m)$ and therefore does not lie in the span of the previous rows. In other words,
\[\pr\Bigl(\rank\Bigl(\begin{bmatrix}(\bGamma^{(1)})^T & \br_1 & \cdots & \br_{m+1}\end{bmatrix}\Bigr) = Kd+m+1 \mid \{\bx_i\}_{i \in J}\Bigr) = 1.\]
This is true for any $\{\bx_i\}_{i \in J}$ satisfying $\rank\Bigl(\begin{bmatrix}(\bGamma^{(1)})^T & \br_1 & \cdots & \br_{m} \end{bmatrix}\Bigr)=Kd+m$, which by inductive assumption occurs with probability 1 and so by marginalizing we have $\pr\Bigl(\rank\Bigl(\begin{bmatrix}(\bGamma^{(1)})^T & \br_1 & \cdots & \br_{m+1}\end{bmatrix} \Bigr)= Kd+m+1\Bigr) = 1$. Taking $m=D-1$, we have with probability 1 that
\[\begin{bmatrix}(\bGamma^{(1)})^T & \br_1 & \cdots & \br_D\end{bmatrix} = \begin{bmatrix}(\bGamma^{(1)})^T & (\bGamma^{(2)})^T\end{bmatrix} = \widehat{\bGamma}^T\]
is full-rank, and so $\bGamma$ has full column rank.


\subsection{Proof of results in the single user case}
\label{sec:single-user}

\paragraph{Necessary conditions:} When $K=1$, the three conditions in \Cref{prop:mainNec} are equivalent to $\rank(\bS) \ge D + d$: in the single user case, $\bS_T = \bS$, and so (c) directly states that $\rank(\bS) \ge D + d$. (b) also translates to $\rank(\bS) \ge D + d$ when $K=1$. The condition $\rank(\bS) \ge d$ in (a) is subsumed by $\rank(\bS) \ge D + d$.

\paragraph{Sufficient conditions:}
Suppose $\rank(\bS) \ge D + d$. By definition, there exists a set of $D + d$ linearly independent rows in $\bS$: denote the $D + d \times n$ submatrix of $\bS$ defined by these rows as $\bS'$. Since $\bS'$ is full row rank by construction, by \Cref{cor:TFAEsel} there exists a permutation $\bP$ such that $\bP \bS'$ is incremental. Define $\bS^{(1)}$ as the first $d$ rows of $\bP \bS'$, and $\bS^{(2)}$ as the remaining $D$ rows of $\bP \bS'$. Then $[\begin{smallmatrix} \bS^{(1)} \\ \bS^{(2)} \end{smallmatrix}]$ satisfies the conditions of \Cref{prop:incSuf} and therefore if each $\bx_i$ is sampled i.i.d.\ according to $p_X$ then $\bP \bS' [\begin{smallmatrix} \bX_\otimes^T & \bX^T \end{smallmatrix}]$ has full column rank with probability 1 --- and therefore full row rank since it is square. Since the rows of this matrix are simply a permuted subset of the rows in $\bS [\begin{smallmatrix} \bX_\otimes^T & \bX^T \end{smallmatrix}]$, we also have that $\bS [\begin{smallmatrix} \bX_\otimes^T & \bX^T \end{smallmatrix}]$ has rank $D + d$ and therefore full column rank with probability 1.

\paragraph{Random construction:} We will use the results of \Cref{sec:charselect} to choose a number of random measurements and items such that a single-user selection matrix $\bS$ has rank at least $D + d$ with high probability, to satisfy the conditions described above. Let failure probability $0 < \delta < 1$ be given, and suppose $\bS$ is constructed by drawing $m_T$ item index pairs uniformly and independent at random among $n$ items. After drawing a single measurement, $\bS$ will immediately have rank 1. According to \Cref{thm:randrankseed} with $r_0=1$ and $r = D + d$, with probability at least $1 - \delta$ the total number of additional required measurements $M$ is less than $\Bigl(1 + \ln \frac1\delta\Bigr) \Bigl(\sum_{i=2}^{D + d} \frac{1}{1 - \frac{i(i-1)}{n(n-1)}}\Bigr)$.

To make this quantity more manageable, note that $\frac{1}{1-\frac{i(i-1)}{n(n-1)}}$ is an increasing function of $i$. Hence, if we choose a constant $U$ such that $\frac{1}{1-\frac{(D+d)(D+d-1)}{n(n-1)}} \le U$, then for every $2 \le i \le D + d$ we also have $\frac{1}{1-\frac{i(i-1)}{n(n-1)}} \le U$. To arrive at such a $U$, suppose that $(n-1)^2 \ge (1 + \gamma) (D+d)^2$ for some $\gamma > 0$, i.e., $n \ge \sqrt{1+\gamma}(D+d) + 1$. Then \[\frac{(D+d)(D+d-1)}{n(n-1)} \le \frac{(D+d)^2}{(n-1)^2} \le \frac{1}{1 + \gamma},\] and so \[\frac{1}{1-\frac{(D+d)(D+d-1)}{n(n-1)}} \le \frac{1}{1-\frac{1}{1 + \gamma}} = \frac{1 + \gamma}{\gamma},\]
and we can set $U = \frac{1 + \gamma}{\gamma}$. Therefore,
\[\sum_{i=2}^{D + d} \frac{1}{1 - \frac{i(i-1)}{n(n-1)}} \le \frac{1 + \gamma}{\gamma}(D + d - 1),\]
and so with probability at least $1 - \delta$, $M < \Bigl(1 + \ln \frac1\delta\Bigr)\frac{1 + \gamma}{\gamma}(D + d - 1)$. To choose a convenient value for $\gamma$, we can let $\gamma = \frac{1}{2}(1+\sqrt{5})$, in which case $\frac{\gamma + 1}{\gamma} = \sqrt{1+\gamma}  = \frac{1}{2}(1+\sqrt{5}) \approx 1.62$. So, if $n \ge \frac{1}{2}(1 + \sqrt{5})(D+d) + 1$, and $m_T \ge \ceil*{\frac{1}{2}(1+\sqrt{5})\Bigl(1 + \ln \frac1\delta\Bigr)(D + d - 1)}+1$ random measurements are taken, then with probability at least $1 - \delta$, $\rank(\bS) \ge D + d$. Hence, with high probability, $n = \Omega(D + d)$ and $m_T = \Omega(D + d)$ random measurements result in a selection matrix $\bS$ with rank at least $D + d$. Once such a matrix with rank at least $D + d$ is fixed after sampling, if each $\bx_i$ is sampled i.i.d.\ according to $p_X$ then as described above $\bGamma$ will be full column rank with probability 1. Together, the process of independently sampling $\bS$ and $\{\bx_i\}_{i=1}^n$ results in $\bGamma$ having full column rank with high probability.


\subsection{Constructions and counterexamples}
\label{sec:ident-append:constructions}

\paragraph{Counterexample for necessary conditions being sufficient:}
Below we demonstrate a counterexample where the conditions in \Cref{prop:mainNec} are met, but the system results in a $\bGamma$ matrix that is not full column rank. In this example, $d=2$ (and so $D = 3$), $K=3$, $m_k = d + D/K = 3$, and $n = D + d + 1 = 6$. Consider the selection matrices below:
\begin{align*}
\bS_1 &=
\begin{bmatrix}
0 & 0 & 1 & 0 & -1 & 0 \\
1 & 0 & 0 & 0 & 0 & -1 \\
1 & 0 & 0 & -1 & 0 & 0
\end{bmatrix}\\
\bS_2 &=
\begin{bmatrix}
0 & 1 & 0 & 0 & -1 & 0 \\
1 & 0 & -1 & 0 & 0 & 0 \\
0 & 0 & 0 & 0 & 1 & -1
\end{bmatrix}\\
\bS_3 &=
\begin{bmatrix}
0 & 1 & 0 & 0 & -1 & 0 \\
1 & 0 & -1 & 0 & 0 & 0 \\
0 & 1 & 0 & 0 & 0 & -1
\end{bmatrix}.
\end{align*}
By inspection, $\rank(\bS_k) = 3 \ge d$ for each $k$, $\sum_k \rank(\bS_k) = 9 = D + dK$, and $\rank(\bS_T) = 5 = D + d$. Yet, when we numerically sample $\bx_i \sim \cN(\0, \bI)$ and verify that $\rank(\bS_k \bX^T) = 2 = d$, $\sum_k \rank(\bS_k \begin{bmatrix} \bX_\otimes^T & \bX^T \end{bmatrix}) = D + dK$, and $\rank(\bS_T \begin{bmatrix} \bX_\otimes^T & \bX^T \end{bmatrix}) = 5$, we still find that $\bGamma$ is rank deficient. This counterexample illustrates that the conditions of \Cref{prop:mainNec} are \emph{not sufficient} for identifiability.

\paragraph{Incremental condition construction:}
Here we construct a selection matrix scheme that satisfies the properties of \Cref{prop:incSuf} while only using the minimal number of measurements per user (i.e., $m_k = d + D/K$) and items (i.e., $n = D + d + 1$). For each $k \in [K]$, let
\[\bS_k^{(1)} = \text{$d$ rows}\left\{\begin{bmatrix}\begin{matrix}
	1 & -1 & 0 & \cdots & 0 & 0 & 0 \\
	0 & 1 & -1 & \cdots & 0 & 0 & 0 \\
	0 & 0 & 1  & \cdots & 0 & 0 & 0 \\
	\vdots & \vdots & \vdots & \vdots & \vdots & \vdots & \vdots \\
	0 & 0 & 0 & \cdots & -1 & 0 & 0 \\
	0 & 0 & 0 & \cdots & 1 & -1 & 0 \\
	0 & 0 & 0 & \cdots & 0 & 1 & -1 \\
\end{matrix} & \0_{d,D} \end{bmatrix}\right. ,\]
and define
\[\bS^{(2)} = \text{$D$ rows}\left\{\begin{bmatrix} \0_{D,d} & \begin{matrix}
		1 & -1 & 0 & \cdots & 0 & 0 & 0 \\
		0 & 1 & -1 & \cdots & 0 & 0 & 0 \\
		0 & 0 & 1 & \cdots & 0 & 0 & 0 \\
		\vdots & \vdots & \vdots & \vdots & \vdots & \vdots & \vdots \\
		0 & 0 & 0 & \cdots & -1 & 0 & 0 \\
		0 & 0 & 0 & \cdots & 1 & -1 & 0 \\
		0 & 0 & 0 & \cdots & 0 & 1 & -1
	\end{matrix}\end{bmatrix}\right. .\]
By observation, for each $k \in [K]$ we have
\[\begin{bmatrix}\bS_k^{(1)} \\ \bS^{(2)} \end{bmatrix} = \text{$D+d$ rows}\left\{\begin{bmatrix}
	1 & -1 & 0 & \cdots & 0 & 0 & 0 \\
	0 & 1 & -1 & \cdots & 0 & 0 & 0 \\
	0 & 0 & 1  & \cdots & 0 & 0 & 0 \\
	\vdots & \vdots & \vdots & \vdots & \vdots & \vdots & \vdots \\
	0 & 0 & 0 & \cdots & -1 & 0 & 0 \\
	0 & 0 & 0 & \cdots & 1 & -1 & 0 \\
	0 & 0 & 0 & \cdots & 0 & 1 & -1 \\
\end{bmatrix}\right. ,\]
which by observation is incremental. Assuming for simplicity that $\nicefrac{D}{K}$ is an integer, for each $k \in [K]$ let $\bS_k^{(2)}$ be the submatrix defined by rows $(k-1) (\nicefrac{D}{K}) + 1$ through $k (\nicefrac{D}{K})$ of $\bS^{(2)}$, i.e., each user is allotted $\nicefrac{D}{K}$ nonoverlapping rows of $\bS^{(2)}$. Finally, for each $k \in [K]$ let
\[\bS_k = \begin{bmatrix}\bS_k^{(1)} \\ \bS_k^{(2)}\end{bmatrix}.\]
By observation each $\bS_k^{(1)}$ has rank $d$, and by construction each $\left[\begin{smallmatrix}\bS_k^{(1)} \\ \bS^{(2)} \end{smallmatrix}\right]$ is incremental, and therefore the conditions of \Cref{prop:incSuf} are satisfied.

\paragraph{Counterexample for incremental sufficiency conditions being exhaustive:}
Below we demonstrate a counterexample where the matrix $\bGamma$ is full column rank, yet the conditions in \Cref{prop:incSuf} are not met, demonstrating that they are not an exhaustive set of sufficiency conditions.

In this example, $d = 2$, $K=2$, and $n=6$, with selection matrices given by
\begin{align*}
\bS_1 &=\begin{matrix}
\begin{matrix}
(1a) \\
(1b) \\
(1c) \\
(1d)
\end{matrix}
&
\begin{bmatrix}
1 & -1 & 0 & 0 & 0 & 0 \\
0 & 1 & -1 & 0 & 0 & 0 \\
0 & 0 & 1 & -1 & 0 & 0 \\
0 & 0 & 0 & 1 & -1 & 0
\end{bmatrix}
\end{matrix}\\
\bS_2 &=
\begin{matrix}
\begin{matrix}
(2a) \\
(2b) \\
(2c) \\
\end{matrix}
&
\begin{bmatrix}
1 & -1 & 0 & 0 & 0 & 0 \\
0 & 0 & 1 & -1 & 0 & 0 \\
0 & 0 & 0 & 0 & 1 & -1
\end{bmatrix}
\end{matrix}.
\end{align*}
By observation, one cannot partition these selection matrices according to the conditions in \Cref{prop:incSuf}. To see this, we can attempt to partition these selection matrices according to these conditions. First note that rows (1a) and (2a) are equal, as well as rows (1c) and (2b), which implies that (1a,c) and (2a,b) must belong to $\bS^{(1)}_1$ and $\bS^{(1)}_2$ respectively. Otherwise, there would exist at least one repeated pair in the matrix $\left[\begin{smallmatrix}\bS_k^{(1)} \\ \bS^{(2)} \end{smallmatrix}\right]$ for $k=1$ or $k=2$, which would violate condition (b) of \Cref{prop:incSuf}.

Therefore, $\bS^{(2)}$ must consist of rows (1b), (1d), and (2c). While (1d) is surely incremental with respect to $\bS^{(1)}_1$, and (2c) is surely incremental with respect to $\bS^{(1)}_2$, (1b) overlaps with both (1a) and (1c) and therefore cannot possibly be incremental with respect to $\bS^{(1)}_1$, and hence the conditions in \Cref{prop:incSuf} are not met.

Yet, in simulation we find with normally distributed items that the $\bGamma$ matrix resulting from the above selection scheme is in fact full column rank.

\subsection{Conjectured sufficiency conditions}
\label{sec:ident-append:conjectures}

We conjecture that a set of conditions similar to that of \Cref{prop:incSuf} are sufficient for identifiability under items sampled according to a distribution that is absolutely continuous with respect to the Lebesgue measure. We list these conditions below:

\paragraph{Conjectured sufficiency conditions:} Let $K \ge 1$, and suppose $m_k> d \ \forall \, k \in [K]$, $m_T= D + dK$, and $n \ge D + d + 1$. Suppose that for each $k \in [K]$, there exists a $d \times n$ selection matrix $\bS_k^{(1)}$ and $m_k - d \times n$ selection matrix $\bS_k^{(2)}$ such that $\bS_k = \left[\begin{smallmatrix} (\bS_k^{(1)})^T & (\bS_k^{(2)})^T \end{smallmatrix}\right]^T$, and that the following are true:
	\begin{enumerate}[label=(\alph*)]
    	\item For all $k \in [K]$, $\rank(\bS_k^{(1)}) = d$
    	\item Defining the $D \times n$ selection matrix $\bS^{(2)}$ as $\bS^{(2)} \coloneqq \left[\begin{smallmatrix} (\bS_1^{(2)})^T & \cdots & (\bS_K^{(2)})^T \end{smallmatrix}\right]^T$, for each $k \in [K]$, $\left[\begin{smallmatrix} \bS_k^{(1)} \\ \bS^{(2)} \end{smallmatrix}\right]$ is full row rank
	\end{enumerate}
Intuitively, these conditions replace the permutation condition in \Cref{prop:incSuf} with a more general condition concerning only the rank. In fact, due to \Cref{cor:TFAEsel}, condition (b) in \Cref{prop:incSuf} implies the second condition above. These conditions capture the intuition that each user is allocated $d$ independent measurements to identity their own pseudo-ideal point (condition (a) above), and then collectively the set of users answers an additional $D$ independent measurements to identify the metric (condition (b) above). As long as the individual measurements do not ``overlap'' with the collective measurements (captured by the rank condition in condition (b)), then the collective set of measurements should be rich enough to identify the metric and all pseudo-ideal points, even if each individual user has overlapping measurements in their $\bS_k^{(1)}$ selection matrices. Empirically, we find that the above conditions appear to be sufficient for identifiability, at least with normally distributed items.

Furthermore, the above conditions would provide a convenient avenue to study randomly selected unquantized measurements among multiple users. As we demonstrated in \Cref{sec:single-user}, we can use \Cref{thm:randrankseed} to bound the number of measurements needed for a selection matrix to be full-rank. We can apply these tools to the multiuser case as follows: first, sample on the order $\Omega(D)$ randomly selected pairs among $\Omega(D + d)$ items to construct a selection matrix $\bS^{(2)}$ that is rank $D$ with high probability. Then, using $\bS^{(2)}$ as a seed matrix in \Cref{thm:randrankseed}, sample on the order of $d$ additional measurements per user in selection matrix $\bS_k^{(1)}$, so that for each individual user conditions (a-b) above are satisfied with high probability. The key insight is that if the measurements in $\bS^{(2)}$ are evenly distributed between users, and the number of samples taken in $\bS^{(2)}$ and each $\bS_k^{(1)}$ is fixed ahead of time (and non-adaptive), then the above sampling process is simply equivalent to sampling pairs uniformly at random for each individual user. Yet, with high probability they should satisfy the above conditions, which if are sufficient for identifiability should result in a measurement matrix $\bGamma$ that is full column rank.

\vfill
\section{Proofs of prediction and generalization results}
\label{sec:proofs-pred}

\subsection{Proof of \Cref{thm:multi-risk-fro}}
\label{sec:proof-multi-risk-fro}

We start by expanding the excess risk between the empirical and true optimizers:
\begin{align}
    & {R}(\widehat{\bM}, \{\widehat{\bv}_k\}_{k=1}^K) - {R}(\bM^\ast, \{{\bv}_k^\ast\}_{k=1}^K) \notag \\
    & = {R}(\widehat{\bM}, \{\widehat{\bv}_k\}_{k=1}^K) - \widehat{R}(\widehat{\bM}, \{\widehat{\bv}_k\}_{k=1}^K) + \widehat{R}(\widehat{\bM}, \{\widehat{\bv}_k\}_{k=1}^K) \notag \\
    & \hspace{2cm}-\widehat{R}(\bM^\ast, \{{\bv}_k^\ast\}_{k=1}^K) + \widehat{R}(\bM^\ast, \{{\bv}_k^\ast\}_{k=1}^K) - {R}(\bM^\ast, \{{\bv}_k^\ast\}_{k=1}^K) \notag \\
    & \le  {R}(\widehat{\bM}, \{\widehat{\bv}_k\}_{k=1}^K) - \widehat{R}(\widehat{\bM}, \{\widehat{\bv}_k\}_{k=1}^K) + \widehat{R}(\bM^\ast, \{{\bv}_k^\ast\}_{k=1}^K) - {R}(\bM^\ast, \{{\bv}_k^\ast\}_{k=1}^K) \label{eq:ERMbound} \\
    & \leq 2\sup_{\bM, \{\bv_k\}_{k=1}^K} |\widehat{R}(\bM, \{\bv_k\}_{k=1}^K) - {R}(\bM, \{\bv_k\}_{k=1}^K) | \notag \\
    & \le 2\E\left[\sup_{\bM, \{\bv_k\}_{k=1}^K} \abs*{\widehat{R}(\bM, \{\bv_k\}_{k=1}^K) - {R}(\bM, \{\bv_k\}_{k=1}^K) } \right] + \sqrt{\frac{8L^2\gamma^2\log(2/\delta)}{|\dset|}} \label{eq:excessrisk}
\end{align}
where \eqref{eq:ERMbound} follows from the fact that $\widehat{\bM}, \{\bv_k\}$ are the empirical risk minimizers,
and \eqref{eq:excessrisk} follows from the Bounded differences inequality (also known as McDiarmid's Inequality, see \cite{shalev2014understanding}) since for two data points $ (p, k, y_{p})$ and $(p', k', y_{p'})$ we have that 
\begin{align*}
    &\ell\left(y_{p}(\bx_i^T\bM \bx_i - \bx_j^T\bM \bx_j + (\bx_i - \bx_j)^T{\bv_k})\right)\\
    & \hspace{1cm}- \ell\left(y_{p'}(\bx_{i'}^T\bM \bx_{i'} - \bx_{j'}^T\bM \bx_{j'} + (\bx_{i'} - \bx_{j'})^T{\bv_{k'}})\right) 
    \leq 2L\gamma
\end{align*}
by Lipschitz-ness of $\ell$ and the definition of $\gamma$ in \cref{eq:emprisk-full}. The expectation in \cref{eq:excessrisk} is with respect to the dataset $\dset$. Next, using symmetrization, contraction, and introducing Rademacher random variables $\eps_p$ with $\pr(\eps_p = 1) = \pr(\eps_p = -1) = \nicefrac{1}{2}$ for all data points in $\dset$, we have that (with expectations taken with respect to both $\{\eps_p\}$ and $\dset$)
{\small
\begin{align}
    \E&\left[\sup_{\bM, \{\bv_k\}_{k=1}^K} \abs{\widehat{R}(\bM, \{\bv_k\}_{k=1}^K) - {R}(\bM, \{\bv_k\}_{k=1}^K) } \right] \notag \\
    & \leq \frac{2L}{|\dset|}\E\left[\sup_{\bM, \{\bv_k\}_{k=1}^K} \left|\sum_{\dset}\eps_py_{p}(\bx_i^T\bM \bx_i - \bx_j^T\bM \bx_j + (\bx_i - \bx_j)^T{\bv_k})\right|\right] \notag \\
    & = \frac{2L}{|\dset|}\E\left[\sup_{\bM, \{\bv_k\}_{k=1}^K} \left|\sum_{\dset}\eps_p(\bx_i^T\bM \bx_i - \bx_j^T\bM \bx_j + (\bx_i - \bx_j)^T{\bv_k})\right|\right] \quad \text{since }\pr(\eps_p y_p = 1) = \nicefrac{1}{2}\notag \\
    & = \frac{2L}{|\dset|}\E\left[\sup_{\bM, \{\bv_k\}_{k=1}^K} \abs*{
    \ip*{\sum_{\dset} \eps_p \begin{bmatrix} \bx_i \bx_i^T - \bx_j \bx_j^T & \0 & \cdots & \underbrace{\bx_i - \bx_j}_{\text{column $d+k$}} & \cdots & \0 \end{bmatrix}}{
    \begin{bmatrix} \bM & \bv_1 & \cdots & \bv_K\end{bmatrix}}
    }\right] \label{eq:matinner} \\
    & \stackrel{\text{Cauchy-Schwarz}}{\leq} \frac{2L}{|\dset|}\E\left[\sup_{\bM, \{\bv_k\}_{k=1}^K}
    \norm*{\sum_{\dset} \eps_p \begin{bmatrix} \bx_i \bx_i^T - \bx_j \bx_j^T & \0 & \cdots & \underbrace{\bx_i - \bx_j}_{\text{column $d+k$}} & \cdots & \0\end{bmatrix}}_F\norm*{
    \begin{bmatrix} \bM & \bv_1 & \cdots & \bv_K\end{bmatrix}}_F\right] \notag \\
    & = \frac{2L}{|\dset|}\left(\sup_{\bM, \{\bv_k\}_{k=1}^K} \norm*{
    \begin{bmatrix} \bM & \bv_1 & \cdots & \bv_K\end{bmatrix}}_F\right)\E\left[
    \norm*{\sum_{\dset} \eps_p \begin{bmatrix} \myvec(\bx_i \bx_i^T - \bx_j \bx_j^T) \\ \bx_i - \bx_j\end{bmatrix}}_2\right] \notag \\
    & \le \frac{2L}{|\dset|}\sqrt{\lambda_F^2 + K \lambda_v^2}\E\left[
    \norm*{\sum_{\dset} \eps_p \begin{bmatrix} \myvec(\bx_i \bx_i^T - \bx_j \bx_j^T) \\ \bx_i - \bx_j\end{bmatrix}}_2\right] \label{eq:post-fro-ball}.
\end{align}
}%

Next we employ Matrix Bernstein to bound
\[\E\left[
    \norm*{\sum_{\dset} \eps_p \begin{bmatrix} \myvec(\bx_i \bx_i^T - \bx_j \bx_j^T) \\ \bx_i - \bx_j\end{bmatrix}}_2\right],\]
which is a sum of zero-mean random vectors in $\R^{d^2 + d}$ (recall that each $\eps_p \in \{-1,1\}$ with equal probability). First note that under the assumption $\|\bx_i\|\leq B \ \forall \, i$, we have
\begin{align*}\norm*{\begin{bmatrix} \myvec(\bx_i \bx_i^T - \bx_j \bx_j^T) \\ \bx_i - \bx_j\end{bmatrix}}_2^2 &= \norm{\bx_i \bx_i^T - \bx_j \bx_j^T}_F^2 + \norm{\bx_i - \bx_j}_2^2 \\
&\le (\norm{\bx_i \bx_i^T}_F + \norm{\bx_j \bx_j^T}_F)^2 + (\norm{\bx_i}_2 + \norm{\bx_j}_2)^2 \\
&\le (\norm{\bx_i}_2^2 + \norm{\bx_j}_2^2)^2 + (\norm{\bx_i}_2 + \norm{\bx_j}_2)^2 \\
&\le 4(B^4 + B^2),
\end{align*}
and therefore
\[\norm*{\begin{bmatrix} \myvec(\bx_i \bx_i^T - \bx_j \bx_j^T) \\ \bx_i - \bx_j\end{bmatrix}}_2 \le 2B\sqrt{B^2 + 1} \eqqcolon C_B.\]
We also have
\begin{align*}
    &\norm*{\E\left[\sum_{\dset} \left(\eps_p \begin{bmatrix} \myvec(\bx_i \bx_i^T - \bx_j \bx_j^T) \\ \bx_i - \bx_j\end{bmatrix}\right)^T \left(\eps_p \begin{bmatrix} \myvec(\bx_i \bx_i^T - \bx_j \bx_j^T) \\ \bx_i - \bx_j\end{bmatrix}\right)\right]} \\
    &=\E\left[\sum_{\dset} \begin{bmatrix} \myvec(\bx_i \bx_i^T - \bx_j \bx_j^T) \\ \bx_i - \bx_j\end{bmatrix}^T \begin{bmatrix} \myvec(\bx_i \bx_i^T - \bx_j \bx_j^T) \\ \bx_i - \bx_j\end{bmatrix}\right] \\
    &=\E\left[\sum_{\dset} (\norm{\bx_i \bx_i^T - \bx_j \bx_j^T}_F^2 + \norm{\bx_i - \bx_j}_2^2)\right] \\
    &\le 4(B^4 + B^2)\abs{\dset} \\
    &= C_B^2 \abs{\dset},
\end{align*}
and
\begin{align*}
    &\norm*{\E\left[\sum_{\dset} \left(\eps_p \begin{bmatrix} \myvec(\bx_i \bx_i^T - \bx_j \bx_j^T) \\ \bx_i - \bx_j\end{bmatrix}\right) \left(\eps_p \begin{bmatrix} \myvec(\bx_i \bx_i^T - \bx_j \bx_j^T) \\ \bx_i - \bx_j\end{bmatrix}\right)^T\right]} \\
    &=\norm*{\E\left[\sum_{\dset} \begin{bmatrix} \myvec(\bx_i \bx_i^T - \bx_j \bx_j^T) \\ \bx_i - \bx_j\end{bmatrix} \begin{bmatrix} \myvec(\bx_i \bx_i^T - \bx_j \bx_j^T) \\ \bx_i - \bx_j\end{bmatrix}^T\right]} \\
    &\le \E\left[\norm*{\sum_{\dset} \begin{bmatrix} \myvec(\bx_i \bx_i^T - \bx_j \bx_j^T) \\ \bx_i - \bx_j\end{bmatrix} \begin{bmatrix} \myvec(\bx_i \bx_i^T - \bx_j \bx_j^T) \\ \bx_i - \bx_j\end{bmatrix}^T}\right] \quad \text{by convexity of $\norm{\cdot}$ and Jensen's inequality} \\
    &\le \E\left[\sum_{\dset} \norm*{\begin{bmatrix} \myvec(\bx_i \bx_i^T - \bx_j \bx_j^T) \\ \bx_i - \bx_j\end{bmatrix} \begin{bmatrix} \myvec(\bx_i \bx_i^T - \bx_j \bx_j^T) \\ \bx_i - \bx_j\end{bmatrix}^T}\right] \quad \text{By the triangle inequality} \\
    &\le \E\left[\sum_{\dset} \norm*{\begin{bmatrix} \myvec(\bx_i \bx_i^T - \bx_j \bx_j^T) \\ \bx_i - \bx_j\end{bmatrix}}_2^2\right] \\
    &\le C_B^2 \abs{\dset},
\end{align*}
and so
\begin{align*}
    \max \Biggl\{&\norm*{\E\left[\sum_{\dset} \left(\eps_p \begin{bmatrix} \myvec(\bx_i \bx_i^T - \bx_j \bx_j^T) \\ \bx_i - \bx_j\end{bmatrix}\right)^T \left(\eps_p \begin{bmatrix} \myvec(\bx_i \bx_i^T - \bx_j \bx_j^T) \\ \bx_i - \bx_j\end{bmatrix}\right)\right]}, \\
    &\norm*{\E\left[\sum_{\dset} \left(\eps_p \begin{bmatrix} \myvec(\bx_i \bx_i^T - \bx_j \bx_j^T) \\ \bx_i - \bx_j\end{bmatrix}\right) \left(\eps_p \begin{bmatrix} \myvec(\bx_i \bx_i^T - \bx_j \bx_j^T) \\ \bx_i - \bx_j\end{bmatrix}\right)^T\right]} \Biggr\} \le C_B^2 \abs{\dset}.
\end{align*}
Therefore, by Theorem 6.1.1 of \cite{tropp2015introduction}
\begin{align*}
    \E\left[
    \norm*{\sum_{\dset} \eps_p \begin{bmatrix} \myvec(\bx_i \bx_i^T - \bx_j \bx_j^T) \\ \bx_i - \bx_j\end{bmatrix}}_2\right] \leq  \sqrt{2 C_B^2 \abs{\dset} \log(d^2 + d + 1)} + \frac{C_B}{3}\log(d^2 + d + 1).
\end{align*}

Plugging into \cref{eq:post-fro-ball} and then \cref{eq:excessrisk}, we have with probability greater than $1 - \delta$,
\begin{align*}
    & {R}(\widehat{\bM}, \{\widehat{\bv}_k\}_{k=1}^K) - {R}(\bM^\ast, \{{\bv}_k^\ast\}_{k=1}^K) \notag \\
    & \leq 4L\sqrt{\lambda_F^2 + K \lambda_v^2}\left(\sqrt{\frac{2 C_B^2}{\abs{\dset}} \log(d^2 + d + 1)} + \frac{C_B}{3\abs{\dset}}\log(d^2 + d + 1)\right) + \sqrt{\frac{8L^2\gamma^2\log(2/\delta)}{|\dset|}}.
\end{align*}
Taking $B=1$, we have the desired result.

\subsection{Proof of \Cref{thm:multi-risk-nuc}}
\label{sec:proof-multi-risk-nuc}

The proof is identical to that of \Cref{thm:multi-risk-fro} up until \cref{eq:matinner}, where instead of applying Cauchy-Schwarz we apply the matrix H{\"o}lder's inequality:
\begin{align*}
    \frac{2L}{|\dset|}&\E\left[\sup_{\bM, \{\bv_k\}_{k=1}^K} \abs*{
    \ip*{\sum_{\dset} \eps_p \begin{bmatrix} \bx_i \bx_i^T - \bx_j \bx_j^T & \0 & \cdots & \underbrace{\bx_i - \bx_j}_{\text{column $d+k$}} & \cdots & \0 \end{bmatrix}}{
    \begin{bmatrix} \bM & \bv_1 & \cdots & \bv_K\end{bmatrix}}
    }\right] \\
    & \le \frac{2L}{|\dset|}\E\left[\sup_{\bM, \{\bv_k\}_{k=1}^K}
    \norm*{\sum_{\dset} \eps_p \begin{bmatrix} \bx_i \bx_i^T - \bx_j \bx_j^T & \0 & \cdots & \underbrace{\bx_i - \bx_j}_{\text{column $d+k$}} & \cdots & \0\end{bmatrix}}\norm*{
    \begin{bmatrix} \bM & \bv_1 & \cdots & \bv_K\end{bmatrix}}_*\right] \notag \\
    & = \frac{2L}{|\dset|}\left(\sup_{\bM, \{\bv_k\}_{k=1}^K} \norm*{
    \begin{bmatrix} \bM & \bv_1 & \cdots & \bv_K\end{bmatrix}}_*\right)\E\left[
    \norm*{\sum_{\dset} \eps_p \begin{bmatrix} \bx_i \bx_i^T - \bx_j \bx_j^T & \0 & \cdots & \underbrace{\bx_i - \bx_j}_{\text{column $d+k$}} & \cdots & \0\end{bmatrix}}\right] \notag \\
    & \le \frac{2L \lambda_*}{|\dset|}\E\left[
    \norm*{\sum_{p\in \dset} \eps_p \begin{bmatrix} \bx_i \bx_i^T - \bx_j \bx_j^T & \0 & \cdots & \underbrace{\bx_i - \bx_j}_{\text{column $d+k$}} & \cdots & \0\end{bmatrix}}\right].
\end{align*}

In a similar manner to \Cref{sec:proof-multi-risk-fro}, we can apply Matrix Bernstein to bound $\E\left[
    \norm*{\sum_{\dset} \eps_p \bZ_{ij}^{(k)}}\right]$,
where for conciseness we have defined
\[
\bZ_{ij}^{(k)} \coloneqq \begin{bmatrix} \bx_i \bx_i^T - \bx_j \bx_j^T & \0 & \cdots & \underbrace{\bx_i - \bx_j}_{\text{column $d+k$}} & \cdots & \0\end{bmatrix}.
\]
First note that
\begin{align*}
    \norm{\eps_p\bZ_{ij}^{(k)}} &\le \norm{\bx_i \bx_i^T - \bx_j \bx_j^T} + \norm{\bx_i - \bx_j} \\
    &\le \norm{\bx_i \bx_i^T} + \norm{\bx_j \bx_j^T} + \norm{\bx_i} + \norm{\bx_j} \\
    &= \norm{\bx_i}_2^2 + \norm{\bx_j}_2^2 + \norm{\bx_i}_2 + \norm{\bx_j}_2 \\
    &\le 2(B^2 + B),
\end{align*}
where we have used the fact that the operator norm of a vector is simply the $\ell_2$ norm, along with the assumption that $\norm{\bx_i}_2\leq B \ \forall \, i$ for a constant $B > 0$ (taken to be 1 in the theorem statement). We next bound the matrix variance of the sum $\sum_{\dset} \eps_p \bZ_{ij}^{(k)}$, defined as $v \coloneqq \max\{\norm{\sum_{\dset} \E[\eps_p \bZ_{ij}^{(k)}( \eps_p \bZ_{ij}^{(k)})^T]}, \norm{\sum_{\dset} \E[(\eps_p \bZ_{ij}^{(k)})^T \eps_p \bZ_{ij}^{(k)}]}\}$, which is equal to $\abs{\dset} \max\{\norm{\E[\bZ_{ij}^{(k)}(\bZ_{ij}^{(k)})^T]}, \norm{\E[( \bZ_{ij}^{(k)})^T \bZ_{ij}^{(k)}]}\}$ since $\eps_p \in \{-1, 1\}$ and each data point in $\dset$ is i.i.d.

Towards bounding $v$, we have the following technical lemma, proved later in this section: 
\begin{lemma}
\label{lem:EZZT-EZTZ}
For $k \sim \operatorname{Unif}([K])$ and $1 \leq i < j \leq n$ such that $(i,j)$ is chosen uniformly at random from the set of ${n \choose 2}$ unique pairs, 
\[
\mathbb{E}\left[\bZ_{ij}^{(k)}(\bZ_{ij}^{(k)})^T \right] = \frac{2}{n(n-1)}\left[\bX (n\bD - \bG)\bX^T + n\bX\bX^T - n^2\overline{\bx}\overline{\bx}^T\right]
\]
where $\bG \coloneqq \bX^T \bX$, $\bD \coloneqq \diag([\|\bx_1\|^2, \ldots, \|\bx_n\|^2])$,
and $\overline{\bx} \coloneqq \frac{1}{n}\sum_{i=1}^n\bx_i$. Furthermore, 
\begin{align*}
    \mathbb{E}\left[(\bZ_{ij}^{(k)})^T\bZ_{ij}^{(k)} \right] = \frac{2}{n(n-1)}
    \begin{bmatrix}
    \bX (n\bD - \bG)\bX^T & \frac{n}{K}\cdot \sum_{\ell=1}^n(\bx_\ell - \overline{\bx})^T\bx_\ell \cdot \bx_\ell \1_K^T  \\
    \frac{n}{K}\cdot \1_K\sum_{\ell=1}^n(\bx_\ell - \overline{\bx})^T\bx_\ell \cdot \bx_\ell^T &  \frac{n}{K}\cdot\left(\|\bX\|_F^2 - n\|\overline{\bx}\|^2  \right)\bI_K
    \end{bmatrix},
\end{align*}
and
\[\max\{\norm{\E[\bZ_{ij}^{(k)}(\bZ_{ij}^{(k)})^T]}, \norm{\E[(\bZ_{ij}^{(k)})^T \bZ_{ij}^{(k)}]}\} \le \left(4(B^2+1)+ \frac{4\min(d, n)}{K}\right)\frac{\norm{\bX}^2}{n} + \frac{16 B^3}{\sqrt{K}}.\]
\end{lemma}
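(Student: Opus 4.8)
The plan is to treat this purely as a moment computation. First I would decompose each $\bZ_{ij}^{(k)}$ into a symmetric $d\times d$ block and a sparse $d\times K$ block, form the two Gram products block by block, then average out the random pair $(i,j)$ and the random user $k$, and finally read off the operator-norm bounds from the resulting closed forms. Concretely, write $\bZ_{ij}^{(k)} = \begin{bmatrix} \bA_{ij} & \bB_{ij}^{(k)} \end{bmatrix}$, where $\bA_{ij} \coloneqq \bx_i \bx_i^T - \bx_j \bx_j^T$ is symmetric and $\bB_{ij}^{(k)} \in \R^{d\times K}$ has $k$th column $\bb_{ij} \coloneqq \bx_i - \bx_j$ and all other columns zero. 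Then $\bZ_{ij}^{(k)}(\bZ_{ij}^{(k)})^T = \bA_{ij}^2 + \bb_{ij}\bb_{ij}^T$, which has no dependence on $k$, whereas $(\bZ_{ij}^{(k)})^T\bZ_{ij}^{(k)}$ is the $2\times 2$ block matrix with $(1,1)$ block $\bA_{ij}^2$, $(1,2)$ block $(\bA_{ij}\bb_{ij})\be_k^T$, $(2,1)$ block its transpose, and $(2,2)$ block $\norm{\bb_{ij}}^2\be_k\be_k^T$.

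The first step is to reduce to sums. Each of $\bA_{ij}^2$, $\bb_{ij}\bb_{ij}^T$, $\bA_{ij}\bb_{ij}$, and $\norm{\bb_{ij}}^2$ is invariant under swapping $i\leftrightarrow j$, so averaging over the $\binom n2$ unordered pairs equals $\tfrac1{n(n-1)}\sum_{i\ne j}(\cdot)$; and since $k\sim\operatorname{Unif}([K])$ is independent of $(i,j)$, I would use $\E_k[\be_k\be_k^T]=\tfrac1K\bI_K$ and $\E_k[\be_k]=\tfrac1K\1_K$. The second step is to evaluate the four sums: expanding $\bA_{ij}^2 = \norm{\bx_i}^2\bx_i\bx_i^T - (\bx_i^T\bx_j)(\bx_i\bx_j^T+\bx_j\bx_i^T) + \norm{\bx_j}^2\bx_j\bx_j^T$ and using $\sum_i\bx_i\bx_i^T=\bX\bX^T$, $\sum_i\norm{\bx_i}^2\bx_i\bx_i^T=\bX\bD\bX^T$, $\sum_{i,j}\bx_i\bx_i^T\bx_j\bx_j^T=(\bX\bX^T)^2=\bX\bG\bX^T$, and $\sum_{i\ne j}\bx_i\bx_j^T=n^2\overline{\bx}\,\overline{\bx}^T-\bX\bX^T$, one gets $\sum_{i\ne j}\bA_{ij}^2=2\bX(n\bD-\bG)\bX^T$, $\sum_{i\ne j}\bb_{ij}\bb_{ij}^T=2n\bX\bX^T-2n^2\overline{\bx}\,\overline{\bx}^T$, $\sum_{i\ne j}\norm{\bb_{ij}}^2=2n(\norm{\bX}_F^2-n\norm{\overline{\bx}}^2)$, and---after relabeling so the cross terms collapse---$\sum_{i\ne j}\bA_{ij}\bb_{ij}=2n\sum_{\ell=1}^n(\bx_\ell-\overline{\bx})^T\bx_\ell\cdot\bx_\ell$. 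Dividing by $n(n-1)$ and inserting the $\tfrac1K$ factors where the $k$-average contributes yields exactly the two displayed identities.

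For the operator norms I would use that both matrices are PSD (expectations of PSD matrices). For $\E[\bZ\bZ^T]$, since $\E[\bZ\bZ^T]\preceq\tfrac2{n(n-1)}\big(\bX(n\bD-\bG)\bX^T+n\bX\bX^T\big)$ (dropping the subtracted term $n^2\overline{\bx}\,\overline{\bx}^T\succeq\0$), it suffices to bound the right side; a Cauchy--Schwarz argument gives $n\bD\succeq\bG$ (namely $\norm{\bX\bw}^2=\norm{\sum_i w_i\bx_i}^2\le n\sum_i w_i^2\norm{\bx_i}^2$), hence $\bX(n\bD-\bG)\bX^T\preceq n\bX\bD\bX^T$ and $\norm{\bX(n\bD-\bG)\bX^T}\le nB^2\norm{\bX}^2$, and with $\norm{\bX\bX^T}=\norm{\bX}^2$ this gives $\norm{\E[\bZ\bZ^T]}\le\tfrac{2(B^2+1)}{n-1}\norm{\bX}^2\le\tfrac{4(B^2+1)}{n}\norm{\bX}^2$. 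For $\E[\bZ^T\bZ]$, write it as a block-diagonal matrix plus an anti-block-diagonal matrix, so its norm is at most $\max(\norm{\bP_{11}},\norm{\bP_{22}})+\norm{\bP_{12}}$ with $\bP_{11},\bP_{12},\bP_{22}$ its blocks; then $\norm{\bP_{11}}\le\tfrac{2B^2}{n-1}\norm{\bX}^2$ as above, $\norm{\bP_{22}}=\tfrac2{K(n-1)}\tr(\bX\bJ\bX^T)\le\tfrac{2\min(d,n)}{K(n-1)}\norm{\bX}^2$ (via $\tr(\bX\bJ\bX^T)\le\rank(\bX\bJ\bX^T)\norm{\bX\bJ\bX^T}$ and $\norm{\bJ}=1$), and $\norm{\bP_{12}}=\sqrt K\norm{\bu}$ where $\bP_{12}=\bu\1_K^T$ and $\norm{\bu}\le\tfrac2{K(n-1)}(nB^3+B\norm{\bX}^2)$ by the triangle inequality and $\norm{\overline{\bx}}\le B$; collecting terms and using $n-1\ge n/2$, $\sqrt K\ge1$ gives the stated bound up to the claimed constants.

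There is no conceptual obstacle here---the statement is a computation---but two bookkeeping points need care. The first is tracking which terms survive the $i\leftrightarrow j$ symmetrization of $\sum_{i\ne j}\bA_{ij}\bb_{ij}$, so that the off-diagonal block emerges in the clean form $\big(\sum_\ell(\bx_\ell-\overline{\bx})^T\bx_\ell\cdot\bx_\ell\big)\1_K^T$ rather than a messier expression. The second is the Loewner inequality $n\bD\succeq\bG$: without it one could only control $\norm{\bX(n\bD-\bG)\bX^T}$ by the far weaker $n\big(\norm{\bD}+\norm{\bX}^2/n\big)\norm{\bX}^2$, and the leading constant in the norm bound would degrade. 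Everything else is routine submultiplicativity and triangle-inequality estimates.
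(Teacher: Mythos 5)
Your computation of the two expectations follows the same route as the paper: the same split $\bZ_{ij}^{(k)} = [\bA_{ij}\;\; \bB_{ij}^{(k)}]$, the same block-by-block evaluation of the four sums over unordered pairs, and the same use of $\E_k[\be_k] = \tfrac1K\1_K$ and $\E_k[\be_k\be_k^T] = \tfrac1K\bI_K$ to produce the $\nicefrac{n}{K}$ factors; the closed forms you state for $\sum_{i\ne j}\bA_{ij}^2$, $\sum_{i\ne j}\bb_{ij}\bb_{ij}^T$, $\sum_{i\ne j}\bA_{ij}\bb_{ij}$, and $\sum_{i\ne j}\norm{\bb_{ij}}^2$ all agree with the paper's Steps 1--4. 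Where you genuinely diverge is in bounding the operator norms of the $d\times d$ blocks. The paper writes the relevant central matrices entrywise ($n\bD-\bG+n\bI-\1\1^T$ and $n\bD-\bG$) and invokes the Gershgorin circle theorem to get row-sum bounds of $2(n-1)(B^2+1)$ and $2(n-1)B^2$; you instead observe that the expectations are PSD, drop the subtracted term $n^2\overline{\bx}\,\overline{\bx}^T\succeq\0$ in Loewner order, and prove $n\bD\succeq\bG$ by Cauchy--Schwarz so that $\bX(n\bD-\bG)\bX^T\preceq n\bX\bD\bX^T$. Both routes land on the same leading constants ($4(B^2+1)\norm{\bX}^2/n$ and $4B^2\norm{\bX}^2/n$ after $n-1\ge n/2$), but your argument is cleaner in that it never touches individual matrix entries and makes the positivity of $n\bD-\bG$ explicit, which the Gershgorin route leaves implicit. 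The remaining pieces --- the block triangle inequality, the rank-one identity $\norm{\bz\1_K^T}=\sqrt{K}\norm{\bz}_2$ for the off-diagonal block, and the trace/rank bound $\tr(\bX\bJ\bX^T)\le\min(d,n)\norm{\bX}^2$ for the $\bI_K$ block --- match the paper's, and your slightly tighter "block-diagonal plus anti-block-diagonal" splitting only improves the constant on the $B^3/\sqrt{K}$ term, so the stated bound still follows.
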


Therefore, we have
\[v \le \abs{\dset}\left[\left(4(B^2+1)+ \frac{4\min(d, n)}{K}\right)\frac{\norm{\bX}^2}{n} + \frac{16 B^3}{\sqrt{K}}\right].\]
Noting that $\bZ_{ij}^{(k)}$ is $d \times d + K$, from Theorem 6.1.1 in \cite{tropp2015introduction},
\begin{align*}\E&\left[
    \norm*{\sum_{\dset} \eps_p \begin{bmatrix} \bx_i \bx_i^T - \bx_j \bx_j^T & \0 & \cdots & \underbrace{\bx_i - \bx_j}_{\text{column $d+k$}} & \cdots & \0\end{bmatrix}}\right] \\
    &= \E\left[
    \norm*{\sum_{\dset} \eps_p \bZ_{ij}^{(k)}}\right] \\
    &\le \sqrt{2 v \log(2d + K)} + \frac{2(B^2 + B)}{3} \log(2d + K)\\
    &\le \sqrt{2 \abs{\dset} \log(2d + K) \left[\left(4(B^2+1)+ \frac{4\min(d, n)}{K}\right)\frac{\norm{\bX}^2}{n} + \frac{16 B^3}{\sqrt{K}}\right]} + \frac{2(B^2 + B)}{3} \log(2d + K),
\end{align*}
and so, continuing where we left off at the proof of \Cref{thm:multi-risk-fro},
\begin{align*}
    \E&\left[\sup_{\bM, \{\bv_k\}_{k=1}^K} \abs{\widehat{R}(\bM, \{\bv_k\}_{k=1}^K) - {R}(\bM, \{\bv_k\}_{k=1}^K) } \right] \\
    &\le 2L\sqrt{\frac{2\lambda_*^2\log(2d + K)}{\abs{\dset}} \left[\left(4(B^2+1)+ \frac{4\min(d, n)}{K}\right)\frac{\norm{\bX}^2}{n} + \frac{16 B^3}{\sqrt{K}}\right]} + \frac{4L(B^2 + B)\lambda_*}{3|\dset|} \log(2d + K).
\end{align*}
Combining this with the first part of the proof of \Cref{thm:multi-risk-fro} (which, as we mentioned is identical here), with probability at least $1-\delta$
\begin{equation}
    \begin{aligned}
        & {R}(\widehat{\bM}, \{\widehat{\bv}_k\}_{k=1}^K) - {R}(\bM^\ast, \{{\bv}_k^\ast\}_{k=1}^K) \\
        &\le 2L\sqrt{\frac{2\lambda_*^2\log(2d + K)}{\abs{\dset}} \left[\left(4(B^2+1)+ \frac{4\min(d, n)}{K}\right)\frac{\norm{\bX}^2}{n} + \frac{16 B^3}{\sqrt{K}}\right]} + \\ &\frac{4L(B^2 + B)\lambda_*}{3|\dset|} \log(2d + K) + \sqrt{\frac{8L^2\gamma^2\log(2/\delta)}{|\dset|}}.
    \end{aligned}
    \label{eq:multi-risk-nuc-ext}
\end{equation}
Taking $B=1$, we have the desired result.

\paragraph{Proof of \Cref{lem:EZZT-EZTZ}:}%
Break $\bZ_{i,j}^{(k)}$ into submatrices
\[
\bA_{ij} \coloneqq \bx_i\bx_i^T - \bx_j\bx_j^T
\]
and 
\[
\bB_{ij}^{(k)} \coloneqq \begin{bmatrix}  \0 & \cdots & \underbrace{\bx_i - \bx_j}_{\text{column $k$}} & \cdots & \0\end{bmatrix}.
\]
We proceed by computing the expected products of all submatrix combinations. Throughout, we will be summing over all ${n \choose 2}$ item pairs. To reduce constant factors to track,
we will use the fact that $\sum_{i=1}^{n-1}\sum_{j=i+1}^n \bQ_{ij} = \frac{1}{2} \sum_{i=1}^{n}\sum_{j\neq i}  \bQ_{ij} = \frac{1}{2}\sum_{i \neq j} \bQ_{ij}$ for matrices $\bQ_{ij}$ satisfying $\bQ_{ij} = \bQ_{ji}$.
\newline

\noindent\textbf{Step 1:} Computing $\mathbb{E}\left[\bA_{ij}^T\bA_{ij} \right] = \mathbb{E}\left[\bA_{ij}\bA_{ij}^T \right]$
\newline

Note that the above equality holds by symmetry of $\bA_{ij}$. 
Define $\bE_{ij} \coloneqq \be_i\be_i^T - \be_j\be_j$, and note that $\bA_{ij} = \bX\bE_{ij}\bX^T$. Therefore
\begin{align*}
    \sum_{i=1}^{n-1}\sum_{j=i+1}^n \bA_{ij}^T\bA_{ij}
    &= \frac{1}{2}\sum_{i\neq j}\bA_{ij}^T\bA_{ij} \\
    &= \frac{1}{2} \sum_{i\neq j}\bX\bE_{ij}\bX^T\bX\bE_{ij}\bX^T \\
    & = \frac{1}{2} \bX\left(\sum_{i\neq j}\bE_{ij}\bX^T\bX\bE_{ij}\right)\bX^T \\
    & = \frac{1}{2} \bX\left(\sum_{i\neq j}\bE_{ij}\bG\bE_{ij}\right)\bX^T \\
    & = \frac{1}{2} \bX\left(\sum_{i\neq j}
    \|\bx_i\|^2\be_i\be_i^T + \|\bx_j\|^2\be_j\be_j^T - \bx_i^T\bx_j(\be_i\be_j^T+ \be_j\be_i^T) \right)\bX^T
    \\ & = \frac{1}{2} \bX\left(2 (n-1)\bD - \sum_{i\neq j}
     \bx_i^T\bx_j(\be_i\be_j^T+ \be_j\be_i^T) \right)\bX^T\\
     & = \bX\left(n\bD - \bG \right)\bX^T.
\end{align*}
As $\bA_{ij}$ does not depend on the random variable $k$, 
\[
\mathbb{E}\left[\bA_{ij}^T\bA_{ij} \right] = \frac{1}{{n \choose 2}} \sum_{i=1}^{n-1}\sum_{j=i+1}^n \bA_{ij}^T\bA_{ij} = \frac{2}{n(n-1)}\bX\left(n\bD - \bG \right)\bX^T.
\]

\noindent\textbf{Step 2:} Computing $\mathbb{E}[\bA_{ij}^T \bB_{ij}^{(k)}]$
\newline
\[
\bA_{ij}^T \bB_{ij}^{(k)} = \left(\bx_i\bx_i^T - \bx_j\bx_j^T\right) \begin{bmatrix}  \0 & \cdots & \underbrace{\bx_i - \bx_j}_{\text{column $k$}} & \cdots & \0\end{bmatrix}.
\]
Hence, the product is $\0$ for all columns except the $k^\text{th}$. As we sum over all $k \in [K]$ to compute the expectation, the resulting submatrix is rank $1$ with $K$ copies of this same column. We therefore, compute the expectation of this column first. 
\begin{align*}
    \sum_{i=1}^{n-1}\sum_{j=i+1}^n \left(\bx_i\bx_i^T - \bx_j\bx_j^T\right)(\bx_i - \bx_j) &=  \sum_{i=1}^{n-1}\sum_{j=i+1}^n \|\bx_i\|^2\bx_i + \|\bx_j\|^2\bx_j - \bx_i^T\bx_j \bx_i - \bx_i^T\bx_j \bx_j \\
    &= \frac12 \sum_{i}\sum_{j\neq i}\|\bx_i\|^2\bx_i + \|\bx_j\|^2\bx_j - \bx_i^T\bx_j \bx_i - \bx_i^T\bx_j \bx_j \\
   &= (n-1)\sum_{i}\|\bx_i\|^2\bx_i - \sum_{i}\sum_{j\neq i}\bx_i^T\bx_j \bx_i \\
   &= n\sum_{i}\|\bx_i\|^2\bx_i - \sum_{i}\sum_{j}\bx_i^T\bx_j \bx_i \\
   &= n\sum_{i}\|\bx_i\|^2\bx_i - \sum_{i}\sum_{j}\bx_i\bx_i^T\bx_j \\
   &= n\sum_{i}\|\bx_i\|^2\bx_i - \sum_{i}\bx_i\bx_i^T\sum_j\bx_j \\
   &= n\sum_{i}\|\bx_i\|^2\bx_i - n\sum_{i}\bx_i^T\overline{\bx}\bx_i \\
   &= n\sum_{i}\left(\|\bx_i\|^2 - \bx_i^T\overline{\bx}\right)\bx_i\\
   &= n\sum_{i}\left(\bx_i - \overline{\bx}\right)^T\bx_i \cdot \bx_i.
\end{align*}
We therefore have
\[
\sum_{i=1}^{n-1}\sum_{j=i+1}^n \bA_{ij}^T \bB_{ij}^{(k)} =  \begin{bmatrix}  \0 & \cdots & \underbrace{n\sum_{i}\left(\bx_i - \overline{\bx}\right)^T\bx_i \cdot \bx_i}_{\text{column $k$}} & \cdots & \0\end{bmatrix}.
\]
When we then sum over $k$ and normalize by $\nicefrac{1}{K}$, and divide by the ${n \choose 2}$ unique pairs to finish the expectation computation, we have 
\[
\mathbb{E}[\bA_{ij}^T \bB_{ij}^{(k)}] = \frac{n}{K {n \choose 2}}\sum_{i}\left(\bx_i - \overline{\bx}\right)^T\bx_i \cdot \bx_i \1_K^T = \frac{2n}{K\cdot n \cdot (n-1)}\cdot \sum_{\ell=1}^n(\bx_\ell - \overline{\bx})^T\bx_\ell \cdot \bx_\ell \1_K^T,
\]
where the factor of $\1_K$ simply generates a matrix with $K$ copies of this same column.
\newline 

\noindent\textbf{Step 3:} Computing 
$\mathbb{E}[(\bB_{ij}^{(k)})^T \bB_{ij}^{(k)}]$
\newline
\[
(\bB_{ij}^{(k)})^T \bB_{ij}^{(k)} = 
\begin{bmatrix}  \0^T \\ \vdots \\ (\bx_i - \bx_j)^T \\ \vdots \\ \0^T\end{bmatrix}\begin{bmatrix}  \0 & \cdots & {\bx_i - \bx_j} & \cdots & \0\end{bmatrix} 
\]
Hence, 
\[
[(\bB_{ij}^{(k)})^T \bB_{ij}^{(k)}]_{p,q} = 
\begin{cases}
    \|\bx_i - \bx_j\|^2 & \text{ if } p=q=k \\
    0 & \text{ otherwise }
\end{cases}.
\]
Since the non-zero entry in this matrix does not depend on $k$, $\mathbb{E}[(\bB_{ij}^{(k)})^T \bB_{ij}^{(k)}]$ is a equal to a constant times the $K$-dimensional identity. To compute this constant, first we evaluate
\begin{align*}
\sum_{i=1}^{n-1}\sum_{j=i+1}^n \|\bx_i - \bx_j\|^2 
    &= \sum_{i=1}^{n-1}\sum_{j=i+1}^n\bG_{ii} + \bG_{jj} - 2\bG_{ij}\\
    &= \frac12 \sum_i\sum_{j\neq i}\bG_{ii} + \bG_{jj} - 2\bG_{ij}\\
    &= n\cdot\Tr(\bG)-\sum_i\sum_{j} \bG_{ij}\\
    &= n\cdot\Tr(\bG)-\sum_i\sum_{j} \bx_i^T\bx_j\\
    &= n\cdot\Tr(\bG)-\left(\sum_i\bx_i\right)^T\sum_{j} \bx_j\\
    &= n\|\bX\|_F^2-n^2\|\overline{\bx}\|^2
\end{align*}
The first equality holds since for a Gram matrix $\bG = \bX^T\bX$, we have that $\|\bx_i - \bx_j\|^2 = \bG_{ii} - 2\bG_{ij} + \bG_{jj}$. In the final equality, we have used the fact that $\Tr(\bG) = \|\bX\|_F^2$.

By summing $(\bB_{ij}^{(k)})^T \bB_{ij}^{(k)}$ over all users and unique pairs and then dividing by $K$ and then ${n \choose 2}$ to compute the expectation, we have:
\begin{align*}
\mathbb{E}[(\bB_{ij}^{(k)})^T \bB_{ij}^{(k)}] &= \frac{1}{K {n \choose 2}} \sum_{k=1}^K \sum_{i \neq j} (\bB_{ij}^{(k)})^T \bB_{ij}^{(k)} \\
&= \frac{2}{K\cdot n\cdot (n-1)}\cdot \sum_{i=1}^{n-1}\sum_{j=i+1}^n \|\bx_i - \bx_j\|^2 \bI_K \\
&= \frac{2n}{K\cdot n\cdot (n-1)}\cdot\left(\|\bX\|_F^2 - n\|\overline{\bx}\|^2  \right)\bI_K.
\end{align*}

Steps 1-3 establish the second claim regarding $\mathbb{E}\left[(\bZ_{ij}^{(k)})^T\bZ_{ij}^{(k)} \right]$ by noting that
\begin{align*}
    \mathbb{E}\left[(\bZ_{ij}^{(k)})^T\bZ_{ij}^{(k)} \right]
    = 
    \begin{bmatrix}
    \mathbb{E}\left[\bA_{ij}^T\bA_{ij} \right] & \mathbb{E}[\bA_{ij}^T \bB_{ij}^{(k)}]  \\
    \mathbb{E}[ (\bB_{ij}^{(k)})^T\bA_{ij}] &  \mathbb{E}[(\bB_{ij}^{(k)})^T \bB_{ij}^{(k)}]
    \end{bmatrix},
\end{align*}
where we have used both the result of Step 2 and its transpose. 
\newline

\noindent\textbf{Step 4:} Computing $\mathbb{E}[\bB_{ij}^{(k)}(\bB_{ij}^{(k)})^T]$
\newline
\[
\bB_{ij}^{(k)}(\bB_{ij}^{(k)})^T = 
\begin{bmatrix}  \0 & \cdots & {\bx_i - \bx_j} & \cdots & \0\end{bmatrix} \begin{bmatrix}  \0^T \\ \vdots \\ (\bx_i - \bx_j)^T \\ \vdots \\ \0^T\end{bmatrix} = (\bx_i - \bx_j)(\bx_i - \bx_j)^T.
\]
We compute
\begin{align*}
    \sum_{i=1}^{n-1}\sum_{j=i+1}^n (\bx_i - \bx_j)(\bx_i - \bx_j)^T &=
    \frac12 \sum_{i} \sum_{j\neq i}(\bx_i - \bx_j)(\bx_i - \bx_j)^T \\
    & = (n-1)\sum_i \bx_i\bx_i^T - \sum_i \sum_{j\neq i}\bx_i\bx_j^T \\
    & = n\sum_i \bx_i\bx_i^T - \sum_i \sum_{j}\bx_i\bx_j^T \\
    & = n\bX\bX^T - \sum_i\bx_i \sum_{j}\bx_j^T\\
    & = n\bX\bX^T - n^2\overline{\bx}\overline{\bx}^T.
\end{align*}

Note that this expression does not depend on $K$, and so 
\[
\mathbb{E}[\bB_{ij}^{(k)}(\bB_{ij}^{(k)})^T] = \frac{1}{{n \choose 2}} \sum_{i=1}^{n-1}\sum_{j=i+1}^n (\bx_i - \bx_j)(\bx_i - \bx_j)^T = \frac{2}{n\cdot(n-1)}\left(n\bX\bX^T - n^2\overline{\bx}\overline{\bx}^T\right).
\]
Steps 1 and 4 establish the claim regarding $\mathbb{E}\left[\bZ_{ij}^{(k)}(\bZ_{ij}^{(k)})^T \right]$ by noting that \[
\mathbb{E}\left[\bZ_{ij}^{(k)}(\bZ_{ij}^{(k)})^T \right] = \mathbb{E}\left[\bA_{ij}\bA_{ij}^T \right] + \mathbb{E}[\bB_{ij}^{(k)}(\bB_{ij}^{(k)})^T].
\]

To bound $\norm{\E[\bZ_{ij}^{(k)}(\bZ_{ij}^{(k)})^T]}$, note that
\[\mathbb{E}\left[\bZ_{ij}^{(k)}(\bZ_{ij}^{(k)})^T \right] = \frac{2}{n(n-1)}\bX\left[n\bD - \bG + n \bI - \1 \1^T \right]\bX^T.\]
We can expand the center term as
\[n\bD - \bG  + n\bI - \1 \1^T= \begin{cases}(n-1)(\norm{\bx_i}^2 + 1) & i = j \\ -\bx_i^T \bx_j -1 & i \neq j\end{cases}.\]
From the Gershgorin Circle Theorem we then have
\begin{align*}\norm{n\bD - \bG  + n\bI - \1 \1^T} &\le \max_i [(n-1)(\norm{\bx_i}^2 + 1) + \sum_{j\neq i} \abs{\ip{\bx_i}{\bx_j} + 1}]
\\ &\le \max_i [(n-1)(\norm{\bx_i}^2 + 1) + n-1 + \norm{\bx_i}\sum_{j\neq i}\norm{\bx_j}]
\\ &\le (n-1)(B^2 + 1) + n-1 + (n-1)B^2
\\ &= 2(n-1)(B^2 + 1).
\end{align*}
We then have
\begin{align*}
    \norm*{\mathbb{E}\left[\bZ_{ij}^{(k)}(\bZ_{ij}^{(k)})^T \right]} &\le \frac{2}{n(n-1)}\norm{\bX}^2\norm{n\bD - \bG + n \bI - \1 \1^T} \\
    &\le \frac{4(B^2 + 1)}{n}\norm{\bX}^2.
\end{align*}

We take a slightly different approach in bounding $\norm*{\mathbb{E}\left[(\bZ_{ij}^{(k)})^T\bZ_{ij}^{(k)} \right]}$. First, we decompose $\mathbb{E}\left[(\bZ_{ij}^{(k)})^T\bZ_{ij}^{(k)} \right]$ as 
\[\mathbb{E}\left[(\bZ_{ij}^{(k)})^T\bZ_{ij}^{(k)} \right] = \frac{2}{n(n-1)}\begin{bmatrix} \bA & \bB \\ \bB^T & \bC\end{bmatrix},\]
where $\bA = \bX (n\bD - \bG)\bX^T$, $\bB = \frac{n}{K}\cdot \sum_{\ell=1}^n(\bx_\ell - \overline{\bx})^T\bx_\ell \cdot \bx_\ell \1_K^T$, and $\bC = \frac{n}{K}\cdot\left(\|\bX\|_F^2 - n\|\overline{\bx}\|^2  \right)\bI_K$. By repeated applications of the triangle inequality,
\begin{align*}
\norm*{\begin{bmatrix} \bA & \bB \\ \bB^T & \bC\end{bmatrix}} &\le \norm*{\begin{bmatrix} \bA & \0 \\ \0 & \0 \end{bmatrix}} + \norm*{\begin{bmatrix} \0 & \bB \\ \0 & \0\end{bmatrix}} + \norm*{\begin{bmatrix} \0 & \0 \\ \bB^T & \0\end{bmatrix}} + \norm*{\begin{bmatrix} \0 & \0 \\ \0 & \bC\end{bmatrix}} \\
&= \norm{\bA} + 2\norm{\bB} + \norm{\bC}.
\end{align*}
We bound each of these terms in turn. Noting that $\bA = \bX (n\bD - \bG) \bX^T$ and that
\[n\bD - \bG= \begin{cases}(n-1)\norm{\bx_i}^2 & i = j \\ -\bx_i^T \bx_j& i \neq j\end{cases},\]
and so by applying the Gershgorin Circle Theorem as above we have
\begin{align*}\norm{n\bD - \bG} &\le \max_i [(n-1)\norm{\bx_i}^2 + \sum_{j\neq i} \abs{\ip{\bx_i}{\bx_j}}]
\\ &\le \max_i [(n-1)\norm{\bx_i}^2 + \norm{\bx_i}\sum_{j\neq i}\norm{\bx_j}]
\\ &\le (n-1)B^2 + (n-1)B^2
\\ &= 2(n-1)B^2,
\end{align*}
and so
\[
    \|\bA\| = \|\bX (n\bD - \bG)\bX^T\| \leq \|\bX\|^2\|n\bD - \bG\| \leq 2(n-1)B^2 \|\bX\|^2.
\]

Next, we bound $\norm{\bB}$. First note that for any matrix of the form $\bz \1_K^T$ where $\bz \in \R^K$,
\[\bz \1_K^T = \frac{\bz}{\norm{\bz}_2} (\norm{\bz}_2 \sqrt{K}) \frac{\1_K^T}{\sqrt{K}}\]
and so $\norm{\bz \1_K^T} = \norm{\bz}_2 \sqrt{K}$. Applying this result to $\bB$,
\begin{align*}
    \|\bB\| &= \left\|\frac{n}{K}\cdot \sum_{\ell=1}^n(\bx_\ell - \overline{\bx})^T\bx_\ell \cdot \bx_\ell \1_K^T\right\| \\
    &= \frac{n}{\sqrt{K}} \left\| \sum_{\ell=1}^n(\bx_\ell - \overline{\bx})^T\bx_\ell \cdot \bx_\ell \right\|_2 \\
    &\leq \frac{n}{\sqrt{K}} \sum_{\ell=1}^n|(\bx_\ell - \overline{\bx})^T\bx_\ell|\left\| \bx_\ell \right\|_2 \\
    &\le \frac{2n^2 B^3}{\sqrt{K}}.
\end{align*}

Finally, 
\begin{align*}
    \|\bD\| 
    &= \left\|\frac{n}{K}\cdot\left(\|\bX\|_F^2 - n\|\overline{\bx}\|^2  \right)\bI_K\right\| \\
    & = \frac{n}{K} \left|\|\bX\|_F^2 - n\|\overline{\bx}\|^2  \right| \\
    & = \frac{n}{K} \left|\|\bX\|_F^2 - n\overline{\bx}^T\overline{\bx}  \right| \\
    & = \frac{n}{K} \left|\|\bX\|_F^2 - \frac{1}{n}\left(\sum_{i=1}^n\bx_i\right)^T\left(\sum_{i=1}^n\bx_i\right)  \right| \\
    & = \frac{n}{K} \left|\|\bX\|_F^2 - \frac{1}{n}\sum_{i=1}^n\sum_{j=1}^n\bx_i^T\bx_j\right| \\
    & = \frac{n}{K} \left|\|\bX\|_F^2 - \frac{1}{n}\1_n^T \bG\1_n \right| \\
    & = \frac{n}{K} \left|\Tr(\bG) - \frac{1}{n}\1_n^T \bG\1_n \right| \\
    & = \frac{n}{K} \abs*{\Tr(\bG) - \frac{\1_n}{\sqrt{n}}^T \bG\frac{\1_n}{\sqrt{n}}}
\end{align*}
Let $\lambda_i$, $i \in [n]$ denote the eigenvalues of $\bG$, sorted in decreasing order. Each $\lambda_i \ge 0$ since $\bG$ is positive semidefinite by construction. We can then rewrite $\tr(\bG) = \sum_i \lambda_i$, and $\frac{\1_n}{\sqrt{n}}^T \bG\frac{\1_n}{\sqrt{n}} \le \max_{\norm{\bx}_2 = 1} \bx^T \bG \bx = \lambda_1$, and so $\tr(\bG) - \frac{\1_n}{\sqrt{n}}^T \bG\frac{\1_n}{\sqrt{n}} \ge \tr(\bG) - \lambda_1 = \sum_{i > 1} \lambda_i \le 0$, and so
\[\abs*{\Tr(\bG) - \frac{\1_n}{\sqrt{n}}^T \bG\frac{\1_n}{\sqrt{n}}} = \Tr(\bG) - \frac{\1_n}{\sqrt{n}}^T \bG\frac{\1_n}{\sqrt{n}} \le \tr(\bG) = \norm{\bX}_F^2,\]
where the final inequality follows since $\bG$ is positive semidefinite. Therefore, $\norm{\bD} \le (\nicefrac{n}{K}) \norm{\bX}_F^2$.

Combining these bounds, we have
\begin{align*}
    \norm*{\mathbb{E}\left[(\bZ_{ij}^{(k)})^T\bZ_{ij}^{(k)} \right]} &\le \frac{2}{n(n-1)} (\norm{\bA} + 2\norm{\bB} + \norm{\bC}) \\
    & \le \frac{2}{n(n-1)} \left(2(n-1)B^2 \|\bX\|^2 + \frac{4n^2 B^3}{\sqrt{K}} + \frac{n}{K} \norm{\bX}_F^2\right) \\
    & = \frac{4B^2}{n} \|\bX\|^2 + \frac{8 B^3 n}{(n-1) \sqrt{K}} + \frac{2n}{K(n-1)}\frac{\norm{\bX}_F^2}{n} \\
    & \le \frac{4B^2}{n} \|\bX\|^2 + \frac{16 B^3}{\sqrt{K}} + \frac{4}{K}\frac{\norm{\bX}_F^2}{n} \quad \text{since $n \ge 2 \implies \nicefrac{n}{n-1} \le 2$} \\
    & \le \left(4B^2+ \frac{4\min(d, n)}{K}\right)\frac{\norm{\bX}^2}{n} + \frac{16 B^3}{\sqrt{K}} \quad \text{since $\norm{\bX}_F^2 \le \rank(\bX) \norm{\bX}^2 \le \min(d, n) \norm{\bX}^2$}
\end{align*}
Therefore,
\begin{align*}
    \max\{\norm{&\E[\bZ_{ij}^{(k)}(\bZ_{ij}^{(k)})^T]}, \norm{\E[(\bZ_{ij}^{(k)})^T \bZ_{ij}^{(k)}]}\} \le \\
    &\max\left\{\frac{4(B^2 + 1)}{n}\norm{\bX}^2,\left(4B^2+ \frac{4\min(d, n)}{K}\right)\frac{\norm{\bX}^2}{n} + \frac{16 B^3}{\sqrt{K}}\right\} \\
    &\le \left(4(B^2+1)+ \frac{4\min(d, n)}{K}\right)\frac{\norm{\bX}^2}{n} + \frac{16 B^3}{\sqrt{K}},
\end{align*}
completing the proof. As an aside, we believe that this analysis can be tightened. Specifically, we believe that the final $O(1/\sqrt{K})$ term can be sharpened and that the maximum above should scale as $\norm{\bX}^2 / n$, which would eliminate the requirement that $K = \Omega(d^2)$ in \Cref{cor:low-rank-interp}.

\subsection{Proof of \Cref{cor:low-rank-interp}}

To prove the corollary, we will select values for the constants in \Cref{thm:multi-risk-nuc} that hold with high probability, based on our assumed item, user, and metric distribution. We will derive our result from the extended version of \Cref{thm:multi-risk-nuc} in \cref{eq:multi-risk-nuc-ext}, which holds with probability at least $1 - \delta$ for a constant $B$ such that $\norm{\bx_i}_2 \le B$ for all $i \in [n]$ and a specification of $0 < \delta < 1$.

To arrive at a setting for $B$, we can rewrite our item vectors as $\bx_i = \frac{1}{\sqrt{d}} \boldeta_i$, where $\boldeta_i$ are i.i.d.\ $\cN(\0, \bI)$, and so $\norm{\bx_i}_2^2 = \frac{1}{d} \norm{\boldeta_i}_2^2$. Since $\norm{\boldeta_i}_2^2$ is a chi-squared random variable with $d$ degrees of freedom, from \cite{laurent2000adaptive} we have that for any $t > 0$, $\pr(\norm{\boldeta_i}_2^2 \ge d + 2\sqrt{dt} + 2t) \le e^{-t}$, and therefore by the union bound we have
\[\pr(\max_{i \in [n]} \norm{\boldeta_i}_2^2 \ge d + 2\sqrt{dt} + 2t) \le \sum_{i \in [n]} \pr(\norm{\boldeta_i}_2^2 \ge d + 2\sqrt{dt} + 2t) \le ne^{-t}.\] Setting $t = \log \frac{n}{\delta_1}$ for any given $0 < \delta_1 < 1$, we have with probability greater than $1-\delta_1$ that $\max_{i \in [n]} \norm{\boldeta_i}_2^2 < d + 2\sqrt{d\log \frac{n}{\delta_1}} + 2\log \frac{n}{\delta_1}$, which implies $\max_{i \in [n]} \norm{\bx_i}_2^2 < 1 + 2\sqrt{\frac{1}{d}\log \frac{n}{\delta_1}} + \frac{2}{d}\log \frac{n}{\delta_1}$. To get a more interpretable bound, we note for $n \ge 3$ that $\log \frac{n}{\delta} > 1$ and therefore with probability at least $1 - \delta_1$, $\max_{i \in [n]} \norm{\bx_i}_2^2 < 5 \log \frac{n}{\delta_1}$. We can therefore set $B = \sqrt{5 \log \frac{n}{\delta_1}}$.

Towards a setting for $\gamma$ as defined in \cref{eq:emprisk-low}, let $\bz_i \coloneqq \sqrt{\frac{d}{\sqrt{r}}} \bL^T \bx_i$, in which case $\bx_i^T \bM \bx_i = \norm{\bz_i}_2^2$. $\bz_i$ is normally distributed with $\E[\bz_i] = \0$ and $\Cov(\bz_i) = \E[\bz_i \bz_i^T] = \frac{d}{\sqrt{r}} \bL^T \E[\bx_i \bx_i^T] \bL = \frac{d}{\sqrt{r}} \bL^T \left(\frac{1}{d}\bI\right) \bL = \frac{1}{\sqrt{r}} \bI_r$ and therefore $\bz_i \sim \cN(\0, \frac{1}{\sqrt{r}} \bI_r)$ where we notated the identity as $\bI_r$ as a reminder that $\bz_i \in \R^r$. Reusing notation, if $\boldeta_i \sim \cN(\0, \bI_r)$, we can write $\bz_i = r^{-\frac14} \boldeta_i$ and so $\norm{\bz_i}_2^2 = \frac{1}{\sqrt{r}} \norm{\boldeta_i}_2^2$. By the same arguments as above, for a given $0 < \delta_2 < 1$ we have with probability at least $1-\delta_2$ that $\max_{i \in [n]} \norm{\boldeta_i}_2^2 < 5 r \log \frac{n}{\delta_2}$ and therefore $\max_{i \in [n]} \norm{\bz_i}_2^2 < 5 \sqrt{r} \log \frac{n}{\delta_2}$. Applying a similar argument to the user points, letting $\bw_k \coloneqq \sqrt{\frac{d}{\sqrt{r}}} \bL^T \bu_k$ we have for a given $0 < \delta_3 < 1$ that with probability at least $1 - \delta_3$, $\max_{k \in [K]} \norm{\bw_k}_2^2 < 5 \sqrt{r} \log \frac{K}{\delta_3}$. We then have:
\begin{align*}
    \max_{i,j \in [n], k \in [K]} \abs{\delta_{i,j}^{(k)}} &= \max_{i,j \in [n], k \in [K]} \abs{\bx_i^T \bM \bx_i - \bx_j^T \bM \bx_j + (\bx_i - \bx_j)^T \bv_k} \\
    &= \max_{i,j \in [n], k \in [K]} \abs{\norm{\bz_i}_2^2 - \norm{\bz_j}_2^2 -2 (\bx_i - \bx_j)^T \frac{d}{\sqrt{r}} \bL \bL^T \bu_k} \\
    &= \max_{i,j \in [n], k \in [K]} \abs{\norm{\bz_i}_2^2 - \norm{\bz_j}_2^2 -2 (\bz_i - \bz_j)^T \bw_k} \\
    &\le \max_{i,j \in [n], k \in [K]} \norm{\bz_i}_2^2 + \norm{\bz_j}_2^2 +2 \abs{(\bz_i - \bz_j)^T \bw_k} \\
    &\le 2 \max_{i \in [n]} \norm{\bz_i}_2^2 + 2 \max_{i,j \in [n], k \in [K]} \abs{(\bz_i - \bz_j)^T \bw_k} \\
    &\le 2 \max_{i \in [n]} \norm{\bz_i}_2^2 + 2\max_{i,j \in [n], k \in [K]} \norm{\bz_i - \bz_j}_2 \norm{\bw_k}_2 \\
    &\le 2 \max_{i \in [n]} \norm{\bz_i}_2^2 + 4\max_{i \in [K]} \norm{\bz_i}_2 \max_{k \in [K]}\norm{\bw_k}_2. \\
\end{align*}
Taking a union bound over the events $\max_{i \in [n]} \norm{\bz_i}_2^2 \ge 5 \sqrt{r} \log \frac{n}{\delta_2}$ and $\max_{k \in [K]} \norm{\bw_k}_2^2 \ge 5 \sqrt{r} \log \frac{K}{\delta_3}$ along with a failure of $\norm{\bx_i}_2 \le B$ for all $i \in [n]$, and setting $\delta_1=\delta_2=\delta_3\eqqcolon \delta$ for convenience, we have with probability at least $1 - 3\delta$ that these failure events do not occur and so
\begin{align*}
    \max_{i,j \in [n], k \in [K]} \abs{\delta_{i,j}^{(k)}} &\le 2 \max_{i \in [n]} \norm{\bz_i}_2^2 + 4\max_{i \in [K]} \norm{\bz_i}_2 \max_{k \in [K]}\norm{\bw_k}_2 \\
    & \le 10 \sqrt{r} \log \frac{n}{\delta} + 20 \sqrt{r \log \frac{n}{\delta} \log \frac{K}{\delta}}
    \\ &\le 30 \sqrt{r} \log \frac{\max\{n,K\}}{\delta},
\end{align*}
and so we can set $\gamma = 30 \sqrt{r} \log \frac{\max\{n,K\}}{\delta}$.

We next bound $\norm{\bX}$. For convenience, let $\widetilde{\bx}_i \coloneqq \sqrt{d}\bx_i$ such that $\widetilde{\bx}_i \sim \cN(\0, \bI_d)$, and $\widetilde{\bX} \coloneqq [\widetilde{\bx}_1, \dots, \widetilde{\bx}_n] = \sqrt{d} \bX$, so that we have $\norm{\widetilde{\bX}} = \sqrt{d} \norm{\bX}$. From \cite{davidson2001local}, we know that for any $t > 0$,
\[\pr(\norm{\widetilde{\bX}} \ge \sqrt{n} + \sqrt{d} + t) < e^{-\frac{t^2}{2}}.\]
For any given $0 < \delta_4 < 1$, let $t = \sqrt{2 \log \frac{1}{\delta_4}}$ in which case $\pr(\norm{\widetilde{\bX}} \ge \sqrt{n} + \sqrt{d} + \sqrt{2 \log \frac{1}{\delta_4}}) < \delta_4$. Therefore, with probability at least $1 - \delta_4$, $\norm{\bX} = \frac{1}{\sqrt{d}}\norm{\widetilde{\bX}} \le \sqrt{\frac{n}{d}} + 1 + \sqrt{\frac{2}{d} \log \frac{1}{\delta_4}} \le \sqrt{3(\frac{n}{d} + 1 + \frac{2}{d} \log \frac{1}{\delta_4})}$ since from Jensen's inequality, for any $m$ non-negative scalars $a_1, \dots, a_m$ we have $\sum_i \sqrt{a_i} \le \sqrt{m \sum_i a_i}$. Therefore, with probability at least $1 - \delta_4$, $\norm{\bX}^2 \le 3(\frac{n}{d} + 1 + \frac{2}{d} \log \frac{1}{\delta_4})$.

Finally, we select a setting for $\lambda^*$. Note that by definition each $\bv_k$ lies in the column space of $\bM$; hence, $[\bM, \bv_1, \dots \bv_K]$ is a rank $r$ matrix. By norm equivalence, we have that
\begin{align*}
    \norm{\begin{bmatrix} \bM & \bv_1 & \cdots & \bv_K \end{bmatrix}}_* &\le \sqrt{r} \norm{\begin{bmatrix} \bM & \bv_1 & \cdots & \bv_K \end{bmatrix}}_F
    \\ &= \sqrt{r (\norm{\bM}_F^2 + \sum_k \norm{\bv_k}_2^2)}
    \\ &= \sqrt{r (d^2 + \sum_k \norm{\bv_k}_2^2)}
    \\ &\le \sqrt{r (d^2 + K\max_{k \in [K]} \norm{\bv_k}_2^2)}.
\end{align*}
Note that $\norm{\bv_k}_2^2 = \norm{-2\frac{d}{\sqrt{r}} \bL \bL^T \bu_k}_2^2 = 4\frac{d^2}{r} \bu_k^T \bL \bL^T \bL \bL^T \bu_k = 4\frac{d^2}{r} \bu_k^T \bL \bL^T \bu_k = 4\frac{d}{\sqrt{r}} \norm{\bw_k}_2^2$. Recall that with probability at least $1 - \delta_3$, $\max_{k \in [K]} \norm{\bw_k}_2^2 < 5 \sqrt{r} \log \frac{K}{\delta_3}$. Therefore with probability at least $1 - \delta_3$, $\max_{k \in [K]} \norm{\bv_k}_2^2 < 20 d \log \frac{K}{\delta_3}$, i.e.,
\[\norm{\begin{bmatrix} \bM & \bv_1 & \cdots & \bv_K \end{bmatrix}}_* \le \sqrt{r \left(d^2 + 20d K \log \frac{K}{\delta_3}\right)},\]
and so we can set $\lambda_* = \sqrt{r (d^2 + 20 d K \log \frac{K}{\delta_3})}$.

Taking a union bound over all of the event failures described above and taking $\delta_1=\delta_2=\delta_3=\delta_4=\delta$ for simplicity, where $\delta$ is the same as as in \Cref{thm:multi-risk-nuc}, we have with probability greater than $1 - 5\delta$,
\begin{equation}
    \begin{aligned}
        & {R}(\widehat{\bM}, \{\widehat{\bv}_k\}_{k=1}^K) - {R}(\bM^\ast, \{{\bv}_k^\ast\}_{k=1}^K) \\
        &\le 2L\sqrt{\frac{2\lambda_*^2\log(2d + K)}{\abs{\dset}} \left[\left(4(B^2+1)+ \frac{4\min(d, n)}{K}\right)\frac{\norm{\bX}^2}{n} + \frac{16 B^3}{\sqrt{K}}\right]} + \\ &\frac{4L(B^2 + B)\lambda_*}{3|\dset|} \log(2d + K) + \sqrt{\frac{8L^2\gamma^2\log(2/\delta)}{|\dset|}},
    \end{aligned}\label{eq:excess-risk-cor-init}
\end{equation}
with the selection of $\lambda_*, \ B, \ \norm{\bX},$ and $\gamma$ as described above. To arrive at an order of magnitude statement for this expression, we begin with the term under the first square root:\
{\scriptsize
\begin{align}
&\frac{2\lambda_*^2\log(2d + K)}{\abs{\dset}} \left[\left(4(B^2+1)+ \frac{4\min(d, n)}{K}\right)\frac{\norm{\bX}^2}{n} + \frac{16 B^3}{\sqrt{K}}\right] \notag\\
&= \frac{2r (d^2 + 20 d K \log \frac{K}{\delta})\log(2d + K)}{\abs{\dset}} \left[3\left(20 \log \frac{n}{\delta}+4+ \frac{4\min(d, n)}{K}\right)\left(\frac{1}{d} + \frac{1}{n} + \frac{2}{dn} \log \frac{1}{\delta}\right) + \frac{16 \left(5 \log \frac{n}{\delta}\right)^{\frac{3}{2}}}{\sqrt{K}}\right].\notag
\end{align}
}
If $K = \Omega(d^2)$ and $n \ge d$ (which is required for identifiability, specifically $n \ge D + d + 1$), then 
\begin{align*}
    &3\left(20 \log \frac{n}{\delta}+4+ \frac{4\min(d, n)}{K}\right)\left(\frac{1}{d} + \frac{1}{n} + \frac{2}{dn} \log \frac{1}{\delta}\right) + \frac{16 \left(5 \log \frac{n}{\delta}\right)^{\frac{3}{2}}}{\sqrt{K}} \\
    & \le \frac{6}{d}\left(20 \log \frac{n}{\delta}+4+ \frac{4d}{K}\right)\left(1 + \log \frac{1}{\delta}\right) + \frac{16 \left(5 \log \frac{n}{\delta}\right)^{\frac{3}{2}}}{\sqrt{K}} \\
    & = O\left(\frac{1}{d}\right)O\left(\log n + \frac{d}{K} + 1\right) + O\left(\frac{(\log n)^{\frac{3}{2}}}{\sqrt{K}}\right) \\
    & = O\left(\frac{1}{d}\right)O\left(\log n + \frac{1}{d} + 1\right) + O\left(\frac{(\log n)^{\frac{3}{2}}}{d}\right) \\
    & = O\left(\frac{1}{d}\right)\left[O\left(\log n + (\log n)^{\frac{3}{2}} + 1\right)\right]
\end{align*}
where we have treated $\delta$ as a constant, and so in total the first square root term in \cref{eq:excess-risk-cor-init} scales as
\[O\left(\sqrt{\left[\frac{r (d + K \log \frac{K}{\delta})\log(2d + K)}{\abs{\dset}}\right]\left[\log n + (\log n)^{\frac{3}{2}} + 1\right]}\right).\]

We ignore the second term in \cref{eq:excess-risk-cor-init}, since it decays faster than $\sqrt{\frac{1}{\abs{\dset}}}$. Plugging in our selection for $\gamma$, the third term in \cref{eq:excess-risk-cor-init} scales as
\[
\sqrt{\frac{7200L^2r \left(\log \frac{\max\{n,K\}}{\delta}\right)^2\log(2/\delta)}{|\dset|}} = O\left(\sqrt{\frac{r \left(\log \max\{n,K\}\right)^2}{|\dset|}}\right),
\]
where we have treated $\delta$ as a constant. By slightly loosening each term's scaling and combining terms, we have
\begin{align*}
& {R}(\widehat{\bM}, \{\widehat{\bv}_k\}_{k=1}^K) - {R}(\bM^\ast, \{{\bv}_k^\ast\}_{k=1}^K) \\ &=O\left(\sqrt{\left[\frac{r (d + K \log \frac{K}{\delta})\log(2d + K)}{\abs{\dset}}\right]\left[(\log (\max\{n,K\}))^2 + (\log n)^{\frac{3}{2}} + 1\right]}\right).
\end{align*}
Suppressing log terms, this equals
\[\widetilde{O}\left(\sqrt{\frac{rd + rK}{\abs{\dset}}}\right).\]
\section{Proofs and additional results for recovery guarantees}
\label{sec:proofs-recovery}

We can also demonstrate a recovery result in the low-rank setting. 
\begin{theorem}\label{thm:recovery_low_rank}
Assume the data is gathered as in Theorem~\ref{thm:recovery_full_rank}. In the same setting as Theorem~\ref{thm:multi-risk-nuc} with loss $\ell_f$, with probability at least $1-\delta$
\begin{align}
    & \frac{1}{n}\sigma_{\min}\left(\bJ [\bX_{\otimes}^T, \bX^T] \right)^2 \left(\|\widehat{\bM} - \bM^\ast\|_F^2 + \frac{1}{K}\sum_{k=1}^K\left\| \hat{\bv}_k - \bv_k^\ast\right\|^2\right)\notag \\
    & \leq \frac{L}{C_f^2}\sqrt{\frac{\lambda_*^2\log(2d + K)}{2\abs{\dset}} \left[\left(8+ \frac{4\min(d, n)}{K}\right)\frac{\norm{\bX}^2}{n} + \frac{16}{\sqrt{K}}\right]} + \\ &\frac{2L\lambda_*}{3C_f^2|\dset|} \log(2d + K) + \frac{L}{C_f^2}\sqrt{\frac{\gamma^2\log(2/\delta)}{2|\dset|}}.
\end{align}
\end{theorem}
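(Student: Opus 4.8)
\textbf{Proof proposal for \Cref{thm:recovery_low_rank}.}
The plan is to mimic the proof of \Cref{thm:recovery_full_rank} almost verbatim, substituting the nuclear-norm generalization bound (\Cref{thm:multi-risk-nuc}) for the Frobenius one (\Cref{thm:multi-risk-fro}) only in the last step. Write $\bDelta := \widehat{\bM} - \bM^\ast$ and $\bw_k := \widehat{\bv}_k - \bv_k^\ast$, and recall that the unquantized gap $\delta_p(\bM,\bv)$ is linear in $(\bM,\bv)$, so $\delta_p(\widehat{\bM},\widehat{\bv}_k) - \delta_p(\bM^\ast,\bv_k^\ast) = \delta_p(\bDelta,\bw_k)$. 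Since $\ell_f$ is the negative log-likelihood of the assumed response model, for each fixed $(p,k)$ the conditional expected excess loss equals the Bernoulli divergence $\KL{\mathrm{Ber}(f(-\delta_p(\bM^\ast,\bv_k^\ast)))}{\mathrm{Ber}(f(-\delta_p(\widehat{\bM},\widehat{\bv}_k)))}$ (cross-entropy minus entropy). Applying Pinsker's inequality and then the mean value theorem — using that $|\delta_p^{(k)}|\le\gamma$ holds both at the empirical minimizer and, by feasibility, at the truth — bounds this below by $2C_f^2\,\delta_p(\bDelta,\bw_k)^2$; averaging over the random pair $p$ and user $k$ yields
\[
R(\widehat{\bM},\{\widehat{\bv}_k\}) - R(\bM^\ast,\{\bv_k^\ast\}) \ \ge\ 2C_f^2\,\E_{p,k}\!\left[\delta_p(\bDelta,\bw_k)^2\right].
\]
This reduction is exactly the one behind \Cref{thm:recovery_full_rank}; it uses only the response model and the $\gamma$-constraint, and is insensitive to whether the Frobenius or the nuclear constraint set is imposed.

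Next I would evaluate the right-hand side exactly. Using $\bx^T\bDelta\bx = \ip{\myvec^*(2\bDelta-\bI\odot\bDelta)}{\bx\otimes_S\bx}$ and writing $\be_p := \be_i-\be_j$ for $p=(i,j)$, one gets $\delta_p(\bDelta,\bw_k) = \be_p^T[\bX_\otimes^T,\bX^T]\,\xi_k$ with $\xi_k := [\bnu(\bDelta)^T,\bw_k^T]^T$. Because $(i,j)$ is uniform over the $\binom{n}{2}$ pairs, $\E_p[\be_p\be_p^T] = \tfrac{2}{n-1}\bJ$, hence
\[
\E_{p,k}\!\left[\delta_p(\bDelta,\bw_k)^2\right] = \frac{2}{n-1}\cdot\frac{1}{K}\sum_{k=1}^K\norm{\bJ[\bX_\otimes^T,\bX^T]\,\xi_k}^2 \ \ge\ \frac{2}{n-1}\,\sigma_{\min}\!\big(\bJ[\bX_\otimes^T,\bX^T]\big)^2\cdot\frac{1}{K}\sum_{k=1}^K\norm{\xi_k}^2 .
\]
Since the off-diagonal entries of $\bnu(\bDelta)$ are $2\bDelta_{ij}$, $\norm{\xi_k}^2 = \norm{\bnu(\bDelta)}^2+\norm{\bw_k}^2 \ge \norm{\bDelta}_F^2+\norm{\bw_k}^2$, and $\tfrac{2}{n-1}\ge\tfrac1n$, so combining the two displays gives
\[
\frac1n\,\sigma_{\min}\!\big(\bJ[\bX_\otimes^T,\bX^T]\big)^2\Big(\norm{\widehat{\bM}-\bM^\ast}_F^2 + \tfrac1K\sum_k\norm{\widehat{\bv}_k-\bv_k^\ast}^2\Big) \ \le\ \frac{1}{4C_f^2}\big(R(\widehat{\bM},\{\widehat{\bv}_k\}) - R(\bM^\ast,\{\bv_k^\ast\})\big).
\]
Plugging the excess-risk bound of \Cref{thm:multi-risk-nuc} (with $B=1$) into the right-hand side and simplifying the constants gives precisely the stated inequality; the almost-sure positivity of $\sigma_{\min}(\bJ[\bX_\otimes^T,\bX^T])$ under continuous item sampling with $n\ge D+d+1$ is the same fact established for \Cref{thm:recovery_full_rank}.

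I do not expect a genuinely hard step, since the statement is essentially \Cref{thm:recovery_full_rank} with a different plug-in generalization bound. The two points that need care are: (i) verifying that $\ell_f$ is convex and $L$-Lipschitz on the effective domain $[-\gamma,\gamma]$ so that \Cref{thm:multi-risk-nuc} is applicable, and that the true parameters $\bM^\ast,\{\bv_k^\ast\}$ (with $\bM^\ast\succeq\0$ and $\bv_k^\ast\in\colsp(\bM^\ast)$) are feasible for (\ref{eq:emprisk-low}), which is what makes them the constrained true-risk minimizers and guarantees $|\delta_p^{(k)}|\le\gamma$ at the truth; and (ii) tracking the numerical constants through the substitution — the factor of $16$ arising from \Cref{thm:multi-risk-fro} is replaced by the $2L\sqrt{2(\cdots)}$ factor of \Cref{thm:multi-risk-nuc}, which accounts for the different-looking coefficients in the final bound.
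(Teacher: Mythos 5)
Your proposal is correct and follows essentially the same route as the paper: both lower-bound the excess risk of the negative log-likelihood by the averaged squared gap of unquantized measurements via the Bernoulli KL $\geq 2(\cdot)^2$ inequality and the mean value theorem with $C_f$, then pass to $\sigma_{\min}(\bJ[\bX_\otimes^T,\bX^T])$ using $\|\bnu(\bDelta)\|^2\ge\|\bDelta\|_F^2$ and the identity $\overline{\bS}^T\overline{\bS}=n\bJ$ (your $\E_p[\be_p\be_p^T]=\tfrac{2}{n-1}\bJ$ is the same fact), and finally plug in \Cref{thm:multi-risk-nuc}. The constants you track also match the stated bound.
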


To prove both \Cref{thm:recovery_full_rank,thm:recovery_low_rank}, we begin with a helpful lemma that in conjunction with Theorems~\ref{thm:multi-risk-fro} and \ref{thm:multi-risk-nuc} establishes the recovery upper bounds in Theorems~\ref{thm:recovery_full_rank} and \ref{thm:recovery_low_rank}.

\begin{lemma}\label{lem:lower_bound_for_recovery}
In the same settings as Theorems~\ref{thm:recovery_full_rank} and \ref{thm:recovery_low_rank}, we have that
\begin{align*}
    \frac{1}{n}\sigma_{\min}\left(\bJ [\bX_{\otimes}^T, \bX^T] \right)^2 &\left( \|\widehat{\bM} - \bM^\ast\|_F^2 + \frac{1}{K}\sum_{k=1}^K\left\| \widehat{\bv}_k - \bv_k^\ast\right\|^2\right) \\
    &\le 
    \frac{1}{4C_f^2}  (R(\bM, \{\bv_k\}_{k=1}^K) - R(\bM^\ast, \{\bv_k^\ast\}_{k=1}^K)).
\end{align*}
\end{lemma}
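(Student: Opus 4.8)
The plan is a restricted–strong–convexity argument in the spirit of \cite{mason2017learning}: I would first lower bound the excess risk of $(\widehat{\bM},\{\widehat{\bv}_k\})$ by the mean squared gap between the \emph{unquantized} predictions of $(\widehat{\bM},\{\widehat{\bv}_k\})$ and $(\bM^\ast,\{\bv_k^\ast\})$, and then identify that squared gap with a quadratic form in $\widehat{\bM}-\bM^\ast$ and the $\widehat{\bv}_k-\bv_k^\ast$ whose smallest eigenvalue is governed by $\tfrac1n\sigma_{\min}(\bJ[\bX_\otimes^T\ \bX^T])^2$.

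First I would establish a pointwise local strong convexity bound for the negative log-likelihood loss $\ell_f$. Fix a pair $p=(i,j)$ and user $k$ and write $g\coloneqq\delta_p(\widehat{\bM},\widehat{\bv}_k)$, $g^\ast\coloneqq\delta_p(\bM^\ast,\bv_k^\ast)$. Since the response law is $\pr(y_p^{(k)}=-1)=f(-g^\ast)$, the conditional expected loss given $(p,k)$ is the cross-entropy between $\mathrm{Bern}(f(g^\ast))$ and $\mathrm{Bern}(f(g))$, so the conditional excess risk equals $\operatorname{KL}\!\big(\mathrm{Bern}(f(g^\ast))\,\Vert\,\mathrm{Bern}(f(g))\big)\ge 2\big(f(g)-f(g^\ast)\big)^2$ by Pinsker. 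The mean value theorem gives $f(g)-f(g^\ast)=f'(\xi)(g-g^\ast)$ with $\xi$ between $g$ and $g^\ast$; since both the estimate and the generating parameters are feasible, $|g|,|g^\ast|\le\gamma$, hence $|\xi|\le\gamma$ and $f'(\xi)\ge C_f$. Averaging over $(p,k)$ then gives $R(\widehat{\bM},\{\widehat{\bv}_k\}_{k=1}^K)-R(\bM^\ast,\{\bv_k^\ast\}_{k=1}^K)\ge 2C_f^2\,\E_{p,k}\big[(\delta_p(\widehat{\bM},\widehat{\bv}_k)-\delta_p(\bM^\ast,\bv_k^\ast))^2\big]$.

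Next I would convert that expectation into the target quadratic form. Linearizing through $\bx^T\bM\bx=\ip{\bnu(\bM)}{\bx\otimes_S\bx}$ and setting $\boldeta\coloneqq\bnu(\widehat{\bM}-\bM^\ast)$, $\bw_k\coloneqq\widehat{\bv}_k-\bv_k^\ast$, $\bY_k\coloneqq[\bX_\otimes^T\ \bX^T]\,[\boldeta^T\ \bw_k^T]^T\in\R^n$, the prediction gap on pair $(i,j)$ is exactly $(\be_i-\be_j)^T\bY_k=\bY_k[i]-\bY_k[j]$. Averaging $(\bY_k[i]-\bY_k[j])^2$ uniformly over the $\binom{n}{2}$ pairs and using $\sum_{i<j}(y_i-y_j)^2=n\big(\|\bY_k\|^2-\tfrac1n(\1^T\bY_k)^2\big)=n\|\bJ\bY_k\|^2$ yields $\E_{(i,j)}[(\bY_k[i]-\bY_k[j])^2]=\tfrac{2}{n-1}\|\bJ\bY_k\|^2\ge\tfrac2n\|\bJ\bY_k\|^2$. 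Then $\|\bJ\bY_k\|^2\ge\sigma_{\min}(\bJ[\bX_\otimes^T\ \bX^T])^2(\|\boldeta\|^2+\|\bw_k\|^2)$ since $\bJ[\bX_\otimes^T\ \bX^T]$ has $D+d$ columns, and the elementary identity $\|\bnu(\bA)\|^2=\|\bA\|_F^2+2\sum_{i<j}\bA_{ij}^2\ge\|\bA\|_F^2$ for symmetric $\bA$ gives $\|\boldeta\|^2\ge\|\widehat{\bM}-\bM^\ast\|_F^2$. Averaging over $k$ and chaining with the previous paragraph produces $R(\widehat{\bM},\{\widehat{\bv}_k\})-R(\bM^\ast,\{\bv_k^\ast\})\ge\tfrac{4C_f^2}{n}\sigma_{\min}(\bJ[\bX_\otimes^T\ \bX^T])^2\big(\|\widehat{\bM}-\bM^\ast\|_F^2+\tfrac1K\sum_k\|\widehat{\bv}_k-\bv_k^\ast\|^2\big)$, which rearranges to the claim; the low-rank case (Theorem~\ref{thm:recovery_low_rank}) is identical since only the $|\delta^{(k)}_{i,j}|\le\gamma$ constraint and the generating assumption are used.

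The main obstacle is not any single deep step but getting the constant $\tfrac14$ exactly: it forces me to be careful that the KL lower bound genuinely carries the factor $2C_f^2$ — which hinges on invoking the feasibility constraint $|\delta^{(k)}_{i,j}|\le\gamma$ for \emph{both} the estimate and the truth, and on $f(x)=1-f(-x)$ so that $f'$ is even and $g^\ast$ is indeed the conditional risk minimizer — combined with the factor $2$ from the pair-averaging identity $\sum_{i<j}(y_i-y_j)^2=n\|\bJ\bY_k\|^2$. That centering-matrix step is also the conceptual crux, as it is precisely what makes $\bJ$, and hence a quantity that is strictly positive almost surely over generic item sets, appear in the bound.
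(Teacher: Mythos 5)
Your proposal is correct and follows essentially the same route as the paper's proof: a Pinsker-type KL lower bound combined with the mean value theorem to extract the factor $2C_f^2$ (the paper's \Cref{prop:taylor_KL_lower_bound}), followed by the observation that averaging squared prediction gaps over all $\binom{n}{2}$ pairs produces the centering matrix $\bJ$ (which the paper phrases as $\overline{\bS}^T\overline{\bS}=n\bJ$ for the complete selection matrix in \Cref{prop:centering_mat}), the smallest-singular-value bound, and the inequality $\norm{\bnu(\bA)}_2^2\ge\norm{\bA}_F^2$. The only differences are notational (you average over pairs directly rather than stacking into the matrix $\bDelta$ and factoring through $\overline{\bS}$), and you arrive at the identical constant via $\tfrac{2}{n-1}\ge\tfrac{2}{n}$.
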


\begin{proof}[Proof of Lemma~\ref{lem:lower_bound_for_recovery}]
Recall that $\ell_f(y_p, p ; \bM, \bv) = -\log(f(y_p \delta_p(\bM, \bv_k)))$, and we have that $\pr(y_{p}^{(k)} = -1) = f\left(-\delta_{p}(\bM^\ast, \bv_k^\ast)\right)$. Furthermore, recall that we have taken a uniform distribution over pairs $p$ and users $k$. Hence, it is straightforward to show that we may write the excess risk of any metric $\bM$ and points $\{\bv_k\}_{k=1}^K$ as
 \[
 R(\bM, \{\bv_k\}_{k=1}^K) - R(\bM^\ast, \{\bv_k^\ast\}_{k=1}^K)
 = \frac{1}{K {n \choose 2}}\sum_{i < j}\sum_{k=1}^K \KL{f\left(-\delta_{ij}(\bM^\ast, \bv_k^\ast)\right)}{f\left(-\delta_{ij}(\bM, \bv_k)\right)},
 \]
 where $\KL{p}{q} = p\log(p/q) + (1-p)\log((1-p)/(1-q))$. Define $\bDelta(\bM, \{\bv_k\}_{k=1}^K) \in \R^{{n \choose 2}\times K}$ such that $[\bDelta(\bM, \bv_k)]_{p,k} = \delta_{p}(\bM, \bv_k)$, where we slightly abuse notation to let $p$ denote the row of $\bDelta$ corresponding to pair $p$. We have the following result (proved at the end of the section):
 \begin{prop}\label{prop:taylor_KL_lower_bound}
Let $C_f := \min_{x: |x| \leq \gamma} f'(x)$. Then, 
\[\frac{2C_f^2}{K {n \choose 2}}\left\|\bDelta\left(\bM, \{\bv_k\}_{k=1}^K\right) - \bDelta\left(\bM^\ast, \{\bv_k^\ast\}_{k=1}^K\right)\right\|_F^2\leq R(\bM, \{\bv_k\}_{k=1}^K) - R(\bM^\ast, \{\bv_k^\ast\}_{k=1}^K).\]
\end{prop}

Next, define $\overline{\bS}\in \{0,1\}^{{n \choose 2}\times n}$ to be the \emph{complete} selection matrix of all ${n \choose 2}$ unique pairs of items such that the $i,j^\text{th}$ row is $1$ in the $i^\text{th}$ column, $-1$ in the $j^\text{th}$ column, and $0$ otherwise. Note that $\Delta(\cdot, \cdot)$ is linear in both terms. Therefore, we may factor 
\begin{align*}
    \left\|\Delta\left(\bM, \{\bv_k\}_{k=1}^K\right) - \Delta\left(\bM^\ast, \{\bv_k^\ast\}_{k=1}^K\right)\right\|_F^2
	 = \sum_{k=1}^K \left\|\overline{\bS} [\bX_{\otimes}^T, \bX^T]\begin{bmatrix}\bnu\left(\bM - \bM^\ast\right) \\ \bv_k - \bv_k^\ast\end{bmatrix}\right\|^2.
\end{align*}
Hence, we may lower bound the above as 
\begin{align*}
    \sum_{k=1}^K &\left\|\overline{\bS} [\bX_{\otimes}^T, \bX^T]\begin{bmatrix}\bnu\left(\bM - \bM^\ast\right) \\ \bv_k - \bv_k\ast\end{bmatrix}\right\|^2 \\
    & \geq \sigma_{\min}\left(\overline{\bS} [\bX_{\otimes}^T, \bX^T] \right)^2\sum_{k=1}^K\left\|\begin{bmatrix}\bnu\left(\bM - \bM^\ast\right) \\ \bv_k - \bv_k^\ast\end{bmatrix}\right\|^2 \\
    & = \sigma_{\min}\left(\overline{\bS} [\bX_{\otimes}^T, \bX^T] \right)^2\left(K \|\bnu(\bM - \bM^\ast)\|^2 + \sum_{k=1}^K\left\| \bv_k - \bv_k^\ast\right\|^2\right)
    \\ & \ge \sigma_{\min}\left(\overline{\bS} [\bX_{\otimes}^T, \bX^T] \right)^2\left(K \|\bM - \bM^\ast\|_F^2 + \sum_{k=1}^K\left\| \bv_k - \bv_k^\ast\right\|^2\right).
\end{align*}
where $\sigma_{\min}(\bA)$ denotes the smallest singular value of a matrix $\bA$ and the final inequality follows from the fact that for symmetric $d \times d$ matrix $\bA$,
\begin{align*}
    \norm{\bnu(\bA)}_2^2 = \norm{\myvec^*(2\bA - \bI\odot\bA)}_2^2 &= \sum_{j \ge i} ((2 - \ind_{i = j}) \bA_{i,j})^2
    \\ &= 4 \sum_{j > i} \bA_{i,j}^2 + \sum_{i} \bA_{i,i}^2
    \\ &\ge 2 \sum_{j > i} \bA_{i,j}^2 + \sum_{i} \bA_{i,i}^2
    \\ &= \sum_{i > j} \bA_{i,j}^2 + \sum_{i < j} \bA_{i,j}^2 + \sum_{i} \bA_{i,i}^2
    \\ &= \norm{\bA}_F^2.
\end{align*}

Since $n \geq D + d + 1$, surely the selection matrix of all possible paired comparisons $\overline{\bS}$ contains the construction presented in \Cref{sec:ident-append:constructions} which satisfies the conditions of Proposition~\ref{prop:incSuf}, and so $\overline{\bS} [\bX_{\otimes}^T, \bX^T]$ is a tall matrix that is full column rank if the items $\bx_i$ are drawn i.i.d.\ according to a distribution that is absolutely continuous with respect to the Lebesgue measure, in which case $\sigma_{\min}\left(\overline{\bS} [\bX_{\otimes}^T, \bX^T] \right)^2 > 0$. To simply this expression even further, we have the following result (proved at the end of the section):

\begin{prop}\label{prop:centering_mat}
For $\overline{\bS}\in \R^{{n \choose 2}\times n}$, $\overline{\bS}^T \overline{\bS} = n \bJ$ for $\bJ := \bI_n - \frac{1}{n}\1_n\1_n^T$.
\end{prop}
Therefore,
\[
\sigma_{\min}\left(\overline{\bS} [\bX_{\otimes}^T, \bX^T] \right)^2 
= n\lambda_{\min}\left(\begin{bmatrix}\bX_{\otimes} \\ \bX\end{bmatrix}\bJ^2 [\bX_{\otimes}^T, \bX^T] \right) = n\sigma_{\min}\left({\bJ} [\bX_{\otimes}^T, \bX^T] \right)^2.
\]
Finally, note that, 
\[
\frac{2nC_f^2}{K {n \choose 2}}\geq \frac{4C_f^2}{Kn}.
\]
The proof follows by rearranging terms.
\end{proof}

\begin{proof}[Proof of Proposition~\ref{prop:taylor_KL_lower_bound}]
By Lemma 5.2 of \cite{mason2017learning}, for $(y,z) \in (0, 1)$, $KL(y||z) \geq 2(y - z)^2$. Now, let $y = f(x)$ and $z = f(x')$, for a continuously differentiable function $f$. Then $2(y - z)^2 \geq 2(\min_a f'(a))^2(x-x')^2$ since $f$ is monotonic.
Applying this to the decomposition of the excess risk alongside the definition of $\Delta(\bM, \bv_k)$ establishes the result.
\end{proof}

\begin{proof}[Proof of Proposition~\ref{prop:centering_mat}]
Note that for $\overline{\bS}\in \R^{{n \choose 2}\times n}$, by construction the $i^\text{th}$ column corresponds to item $i$ and each row maps to a pair (e.g., $(i, j)$) of the possible ${n \choose 2}$ unique pairs such that exactly $2$ elements are non-zero in each row, with a $1$ in the column of one item in the pair and a $-1$ in the other. Since $\overline{\bS}^T \overline{\bS}$ is a Gram matrix, it is sufficient to characterize the inner products between any two columns of $\overline{\bS}$. First, for the $i^\text{th}$, note that $i$ can be paired with $n-1$ other items uniquely. Hence, there are exactly $n-1$ non-zero entries in each column all of which are $1$ or $-1$. Hence, every diagonal entry of $\overline{\bS}^T \overline{\bS}$ is $n-1$. For the off diagonal entries, consider a pair $i\neq j$. As each row corresponds to a unique pair, the supports of $i$ and $j$ overlap in a single entry corresponding to the $(i,j)$ pair. By construction one column has a $1$ at this entry and the other has a $-1$. Hence the inner product between these two columns is $-1$ and all off diagonal entries of $\overline{\bS}^T \overline{\bS}$ are $-1$. Hence, $\overline{\bS}^T \overline{\bS} = n\bI - \1_n\1_n^T = n\bJ$.
\end{proof}
\section{Additional experimental details}\label{sec:append_experiment}

In this section we provide additional experimental details and results.

\subsection{Datasets}

The color preference data was originally collected by \cite{palmer2010ecological} and we include a .mat file of the dataset in the paper supplement along with a full description in the code README document. Before running our learning algorithms, we centered the $3 \times 37$ item matrix $\bX$ of CIELAB coordinates, and normalized the centered coordinates by the magnitude of the largest norm color in CIELAB space such that $\max_{i \in [n]} \norm{\bx_i}_2 = 1$ after centering and normalization.

For both the normally distributed items and color experiments, we performed train and test dataset splits over multiple simulation runs, and averaged results across each run. For each simulation run, we \emph{blocked} the train/test splitting by user, in that all users were queried equally in both training and test data. Specifically, during each run the dataset was randomly shuffled \emph{within} each user's responses, and then a train/test split created. For the normally distributed data, this consisted of 300 training comparisons per user, and 300 test comparisons. For the color dataset, each user provided $37 \times 36 = 1332$ responses, which was partitioned into a training set of 300 pairs and a test set of 1032 pairs. To vary the number of training pairs per user, for both datasets we trained incrementally on the 300 pairs per user in a randomly permuted order (while evaluating on the full test set).

For both normally distributed items and color preference data, we repeat 30 independent trials. In the color preference data, since the dataset is fixed ahead of time the only difference between each trial is the train/test splitting as detailed above. In the normally distributed experiments, we generate responses as $\pr(y_{i,j}^{(k)} = -1) = (1 + e^{\beta (\bx_i^T \bM^*\bx_i - \bx_j^T \bM^*\bx_j + (\bv_k^*)^T(\bx_i - \bx_j))})^{-1}$ for a noise scaling parameter $\beta > 0$.

\subsection{Implementation and computation}

In out experiments, we did not enforce the $\gamma$ constraints required for our theoretical results (i.e., the $\gamma$ constraints in \cref{eq:emprisk-full,eq:emprisk-low}). This constraint is added to the theory to guard against highly coherent $\bx_i$ vectors. In the simulated instances, this quantity appears to be controlled by the isotropic nature of both the normally distributed and color datasets, along with the fact that we are constraining the norms of the latent parameters to be learned.

For all simulated experiments, we leveraged ground-truth knowledge of $\bM^*$ and $\bv_k^*$ to set the hyperparameter constraints. This was done to compare each method under its best possible hyperparameter tuning --- namely, the smallest norm balls that still contained the true solution. Specifically, we set hyperparameters for the normally distributed items experiment as follows:
\begin{itemize}
    \item \textbf{Frobenius metric}: $\norm{\bM}_F \le \norm{\bM^*}_F$, for all $k \in [K]$, $\norm{\bv_k}_2 \le 2 \max_{k \in [K]} \norm{\bM^* \bu_k^*}$
    \item \textbf{Nuclear full}: $\norm{[\bM, \bv_1, \cdots, \bv_K]}_* \le \norm{[\bM^*, -2 \bM^* \bu^*_1, \cdots, -2 \bM^* \bu^*_K]}_*$
    \item \textbf{Nuclear metric}: $\norm{\bM}_* \le \norm{\bM^*}_*$, for all $k \in [K]$, $\norm{\bv_k}_2 \le 2 \max_{k \in [K]} \norm{\bM^* \bu_k^*}$
    \item \textbf{Nuclear split}: $\norm{\bM}_* \le \norm{\bM^*}_*$, $\norm{[\bv_1, \cdots, \bv_K]}_* \le 2\norm{\bM^*[\bu^*_1, \cdots, \bu^*_K]}_*$
    \item \textbf{Nuclear full, single}: for each $k \in [K]$, $\norm{[\bM, \bv_k]}_* \le \norm{[\bM^*, -2 \bM^* \bu^*_k]}_*$.
\end{itemize}

For the color preferences experiment, we set all hyperparameters under an a priori estimate of $\bM^* = \bI$ (due to the assumed perceptual uniformity of CIELAB space). Specifically, we constrained $\norm{\bM}_F \le \norm{\bI}_F = \sqrt{3}$ (since $d = 3$), and constrained $\norm{\bv_k}_2 \le 2$, since under the heuristic assumption that $\bM^* = \bI$ we have $\norm{\bv_k}_2 = \norm{-2 \bM \bu_k}_2 = 2\norm{\bu_k}\lessapprox 2 \max_{i \in [n]} \norm{\bx_i}_2 = 2$, where in the last inequality we have approximated the distribution of ideal points $\bu_k$ with the empirical item distribution over centered, scaled CIELAB colors (which have maximal norm of 1 as described above).

To solve these optimizations, we leveraged CVXPY\footnote{\url{https://www.cvxpy.org/}} with different solvers. When learning on normally distributed items, we set $\ell(x) = \log(1+\exp(-\beta x))$ to be the logistic loss, where $\beta > 0$ is the same parameter used to generate the response noise. We used the Splitting Conic Solver with a convergence tolerance set to $1e-6$ to balance between accuracy and computation time. For the color preference data, we used the hinge loss $\ell(x) = \max\{0, 1-x\}$, solved using the CVXOPT solver with default parameters, which we found performed more stably than SCS. As an additional safeguard for numerical stability due to the presence of negative eigenvalues near machine precision, we project all learned metrics back onto the positive semidefinite cone after solving with CVXPY. All additional code was written in Python, and all experiments were computed on three Dell 740 servers with 36, 3.1 GHz Xeon Gold 6254 CPUs.

When estimating ideal points $\widehat{\bu}_k$ from $\widehat{\bM}$ and $\widehat{\bv}_k$, rather than using the exact pseudo-inverse we perform a regularized recovery as in \cite{xu2020simultaneous}, since $\widehat{\bM}$ may have recovery errors. Specifically, for regularization parameter $\alpha > 0$ we estimate $\widehat{\bu}_k$ as
\begin{equation}
\widehat{\bu}_k = -2(4 \widehat{\bM}^2 + \alpha \bI)^{-1} \widehat{\bM}^T \widehat{\bv}_k\label{eq:u-recovery}.
\end{equation}
We only perform ideal point recovery in the normally distributed item experiments, since no ground-truth ideal points are available in the real-world color preference data. Since we know a priori that $\bu_k \sim \cN(\0, \frac{1}{d} \bI)$, we can leverage the interpretation of the recovery estimate in \cref{eq:u-recovery} as the maximum a posteriori estimator under a Gaussian prior over $\bu_k$ with Gaussian observations in order to set $\alpha = d$.

\subsection{Additional experiments and details}

Below we present experimental results in additional simulation settings, as well as supplementary figures for the data presented in the main paper body. We detail specific performance metrics below: in the following, let $\widehat{\bV} \coloneqq [\widehat{\bv}_1, \cdots, \widehat{\bv}_K]$, $\bU^* \coloneqq [\bu^*_1, \cdots, \bu^*_K]$ $\bV^* \coloneqq -2\bM^*\bU^*$, and $(\bM^*)^\dagger$ denote the pseudoinverse of the ground-truth metric.

\begin{itemize}
    \item \emph{Test accuracy}: fraction of test data responses predicted correctly from $\sign(\widehat{\delta}_{i,j}^{(k)})$, where $\widehat{\delta}_{i,j}^{(k)}$ is computed as in \cref{eq:delta-linear} using $\widehat{\bM}$ and $\widehat{\bv}_k$ as parameter estimates.
    \item \emph{Relative metric error}: $\frac{\norm{\widehat{\bM} - \bM^*}_F}{\norm{\bM^*}_F}$
    \item \emph{Relative ideal point error}: $\frac{\norm{\widehat{\bU} - (\bM^*)^\dagger\bM^*\bU^*}_F}{\norm{(\bM^*)^\dagger\bM^*\bU^*}_F}$. We compare recovery error against $(\bM^*)^\dagger\bM^*\bU^*$ rather than $\bU^*$ since if $\bM^*$ is low-rank, then the components of $\bU^*$ in the kernel of $\bM^*$ are not recoverable.
    \item \emph{Relative pseudo-ideal point error}: $\frac{\norm{\widehat{\bV} - \bV^*}_F}{\norm{\bV^*}_F}$
\end{itemize}

In Figure~\ref{fig:results:color-metric}, we compute the heatmap of the crowd's metric as the empirical average of the learned metrics over all independent trials.

We present additional simulation results with normally distributed items in two noise regimes --- ``high'' noise with $\beta = 1$ in the logistic model, and ``medium'' noise with $\beta=4$ in the logistic model. We generate the dataset in the same manner as in \Cref{sec:experiments}. We present results for the low-rank case as in the main paper body ($d=10$, $r=1$) as well as the full-rank case ($d = r = 10$). In the full-rank case, to generate a ground-truth metric $\bM^*$ we generate a $d \times r$ matrix $\bL$ whose entries are sampled independently according to the standard normal distribution, compute $\bM^* = \bL \bL^T$, and normalize $\bM^*$ such that it has a Frobenius norm of $d$. Otherwise, if we generated $\bL$ as in the low-rank experiments, $\bM^*$ would simply become a scaled identity matrix.

\Cref{fig:results-highLow} repeats the main results in the paper body, with an additional subfigure depicting recovery error for pseudo-ideal points. This is a ``high'' noise, low-rank setting. The remaining figures are: ``high'' noise full-rank metric (\Cref{fig:results-highFull}); ``medium'' noise low-rank metric (\Cref{fig:results-medLow}); and ``medium'' noise full-rank metric (\Cref{fig:results-medFull}). In \Cref{fig:color-zoomed} we analyze the color prediction results from \Cref{fig:results:color-test} in the low query count regime.

\FloatBarrier

\def\vh{5mm}
\begin{figure}[t]
	\centering
	\begin{subfigure}[t]{0.49\linewidth}
	    \centering
	    \includegraphics[width=0.99\linewidth]{images/normal_noisy1bit_highNoise_165262847637_metrics_vs_npairs_d10_r1_test_accuracy.pdf}
	    \caption{Test accuracy}
	    \label{fig:results:normal-highLow-test}
	\end{subfigure}%
    \hfill
	\begin{subfigure}[t]{0.49\linewidth}
	    \centering
	    \includegraphics[width=0.99\linewidth]{images/normal_noisy1bit_highNoise_165262847637_metrics_vs_npairs_d10_r1_relative_M_error.pdf}
	    \caption{Relative metric error}
	    \label{fig:results:normal-highLow-Mrec}
	\end{subfigure}%
	\\
	\vspace{\vh}
	\begin{subfigure}[t]{0.49\linewidth}
	    \centering
	    \includegraphics[width=0.99\linewidth]{images/normal_noisy1bit_highNoise_165262847637_metrics_vs_npairs_d10_r1_relative_U_error.pdf}
	    \caption{Relative ideal point error}
	    \label{fig:results:normal-highLow-Urec}
	\end{subfigure}%
	\hfill
	\begin{subfigure}[t]{0.49\linewidth}
	    \centering
	    \includegraphics[width=0.99\linewidth]{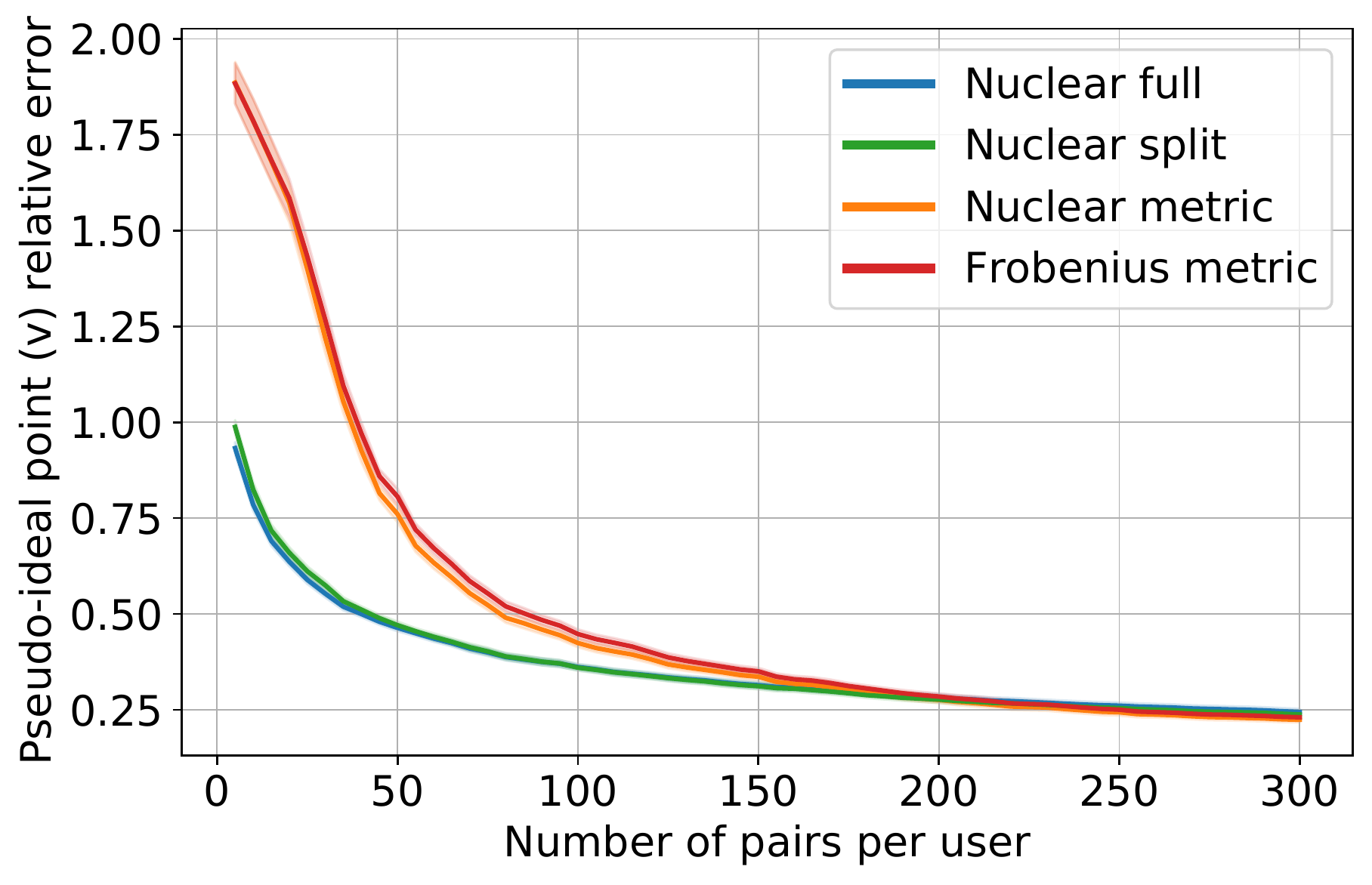}
	    \caption{Relative pseudo-ideal point error}
	    \label{fig:results:normal-highLow-Vrec}
	\end{subfigure}%
	\caption{Full prediction and recovery results for a high noise setting ($\beta=1$) with a low-rank metric ($r = 1$). Error bars indicate $\pm 1$ standard error about the sample mean. (a-c) appear in the main paper body, with (d) added here for completeness. \textbf{Nuclear full} gives the best performance on prediction and \textbf{Nuclear split} is a close second (subfigure a), reflecting the necessity of modeling the low-rank nature of the $\bM^\ast$ and $\bV^\ast$. While the \textbf{Nuclear metric} method that only places a nuclear norm constraint on $\bM$ performs well in terms of relative error for recovering $\bM^\ast$ (subfigure b), it achieves far worse performance for estimating $\bU^\ast$ and $\bV^\ast$ as shown in subfigures (c) and (d). This reflects the importance of enforcing that $\widehat{\bM}$ and $\widehat{\bV}$ share a column space.}
	\label{fig:results-highLow}
\end{figure}
	
\begin{figure}[t]
	\centering
	\begin{subfigure}[t]{0.49\linewidth}
	    \centering
	    \includegraphics[width=0.99\linewidth]{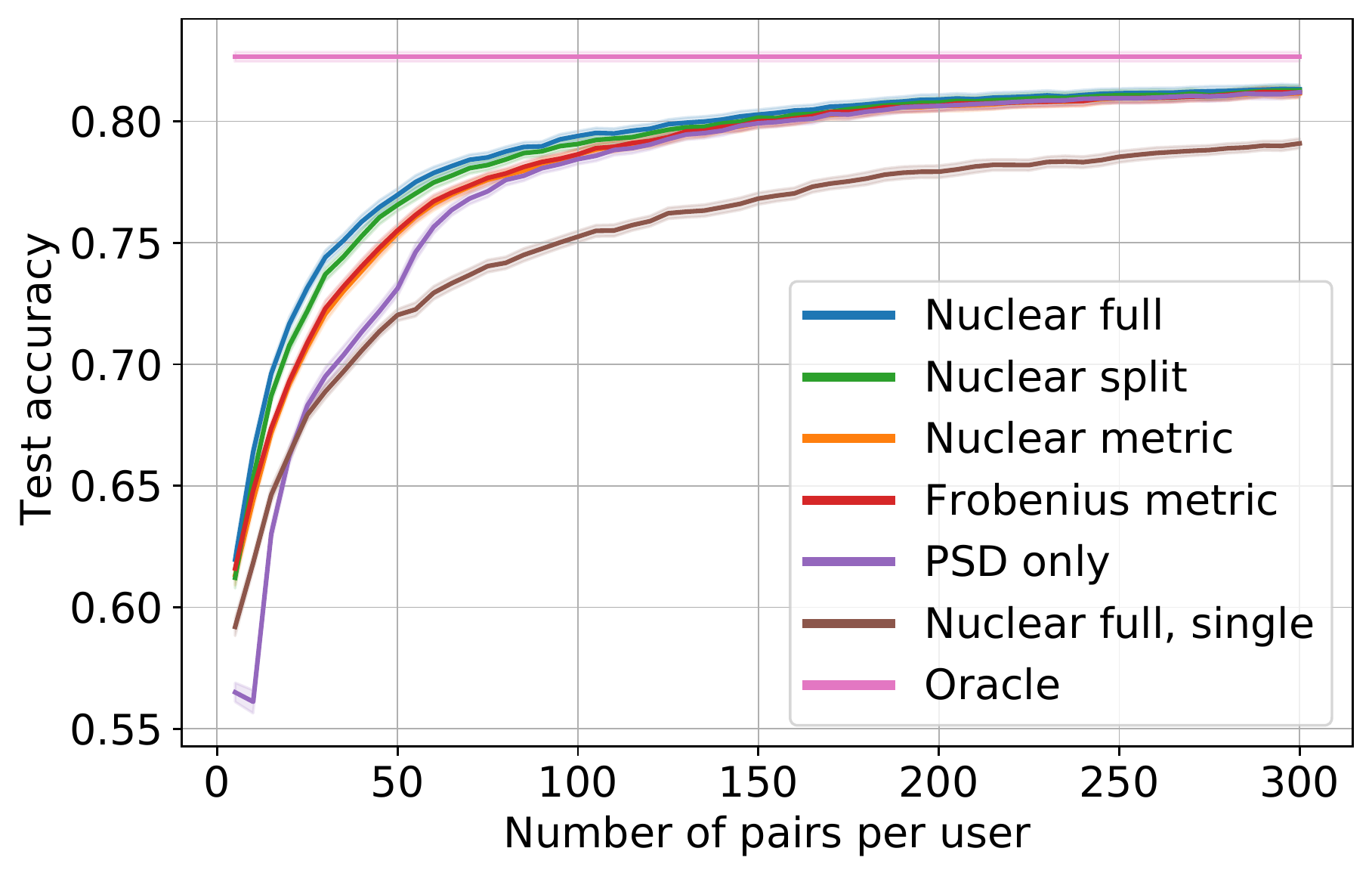}
	    \caption{Test accuracy}
	    \label{fig:results:normal-highFull-test}
	\end{subfigure}%
    \hfill
	\begin{subfigure}[t]{0.49\linewidth}
	    \centering
	    \includegraphics[width=0.99\linewidth]{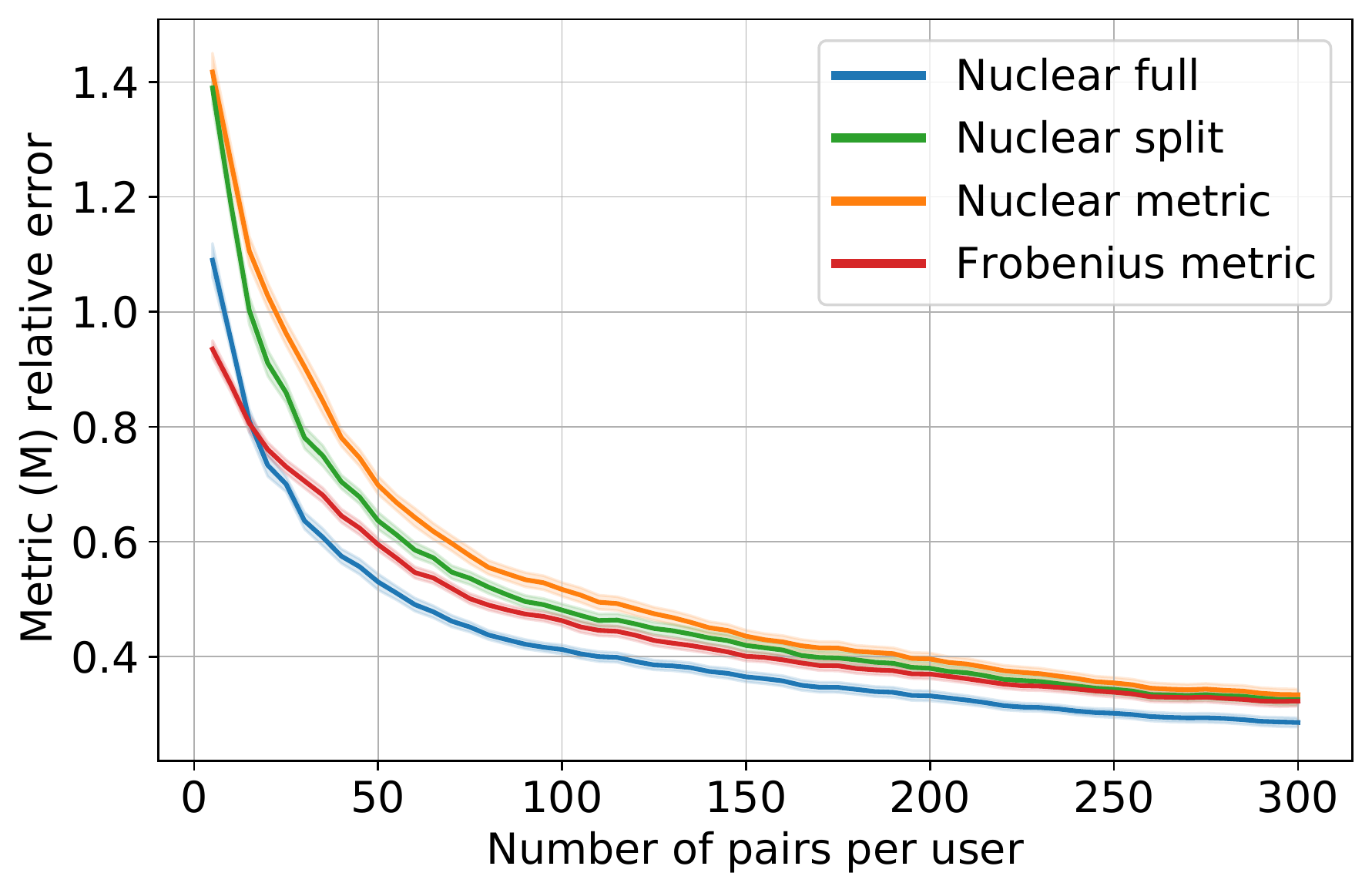}
	    \caption{Relative metric error}
	    \label{fig:results:normal-highFull-Mrec}
	\end{subfigure}%
	\\
	\vspace{\vh}
	\begin{subfigure}[t]{0.49\linewidth}
	    \centering
	    \includegraphics[width=0.99\linewidth]{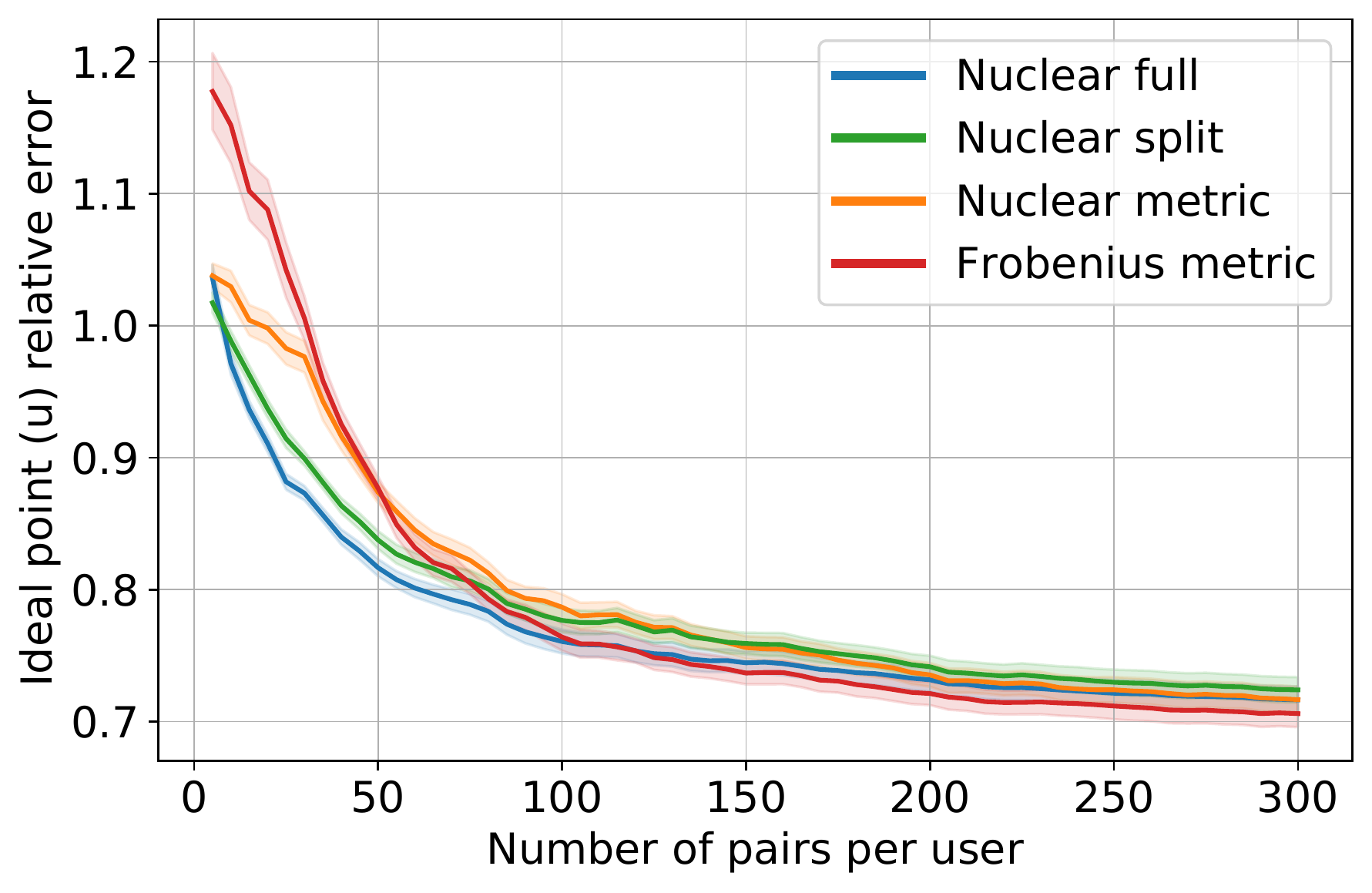}
	    \caption{Relative ideal point error}
	    \label{fig:results:normal-highFull-Urec}
	\end{subfigure}%
	\hfill
	\begin{subfigure}[t]{0.49\linewidth}
	    \centering
	    \includegraphics[width=0.99\linewidth]{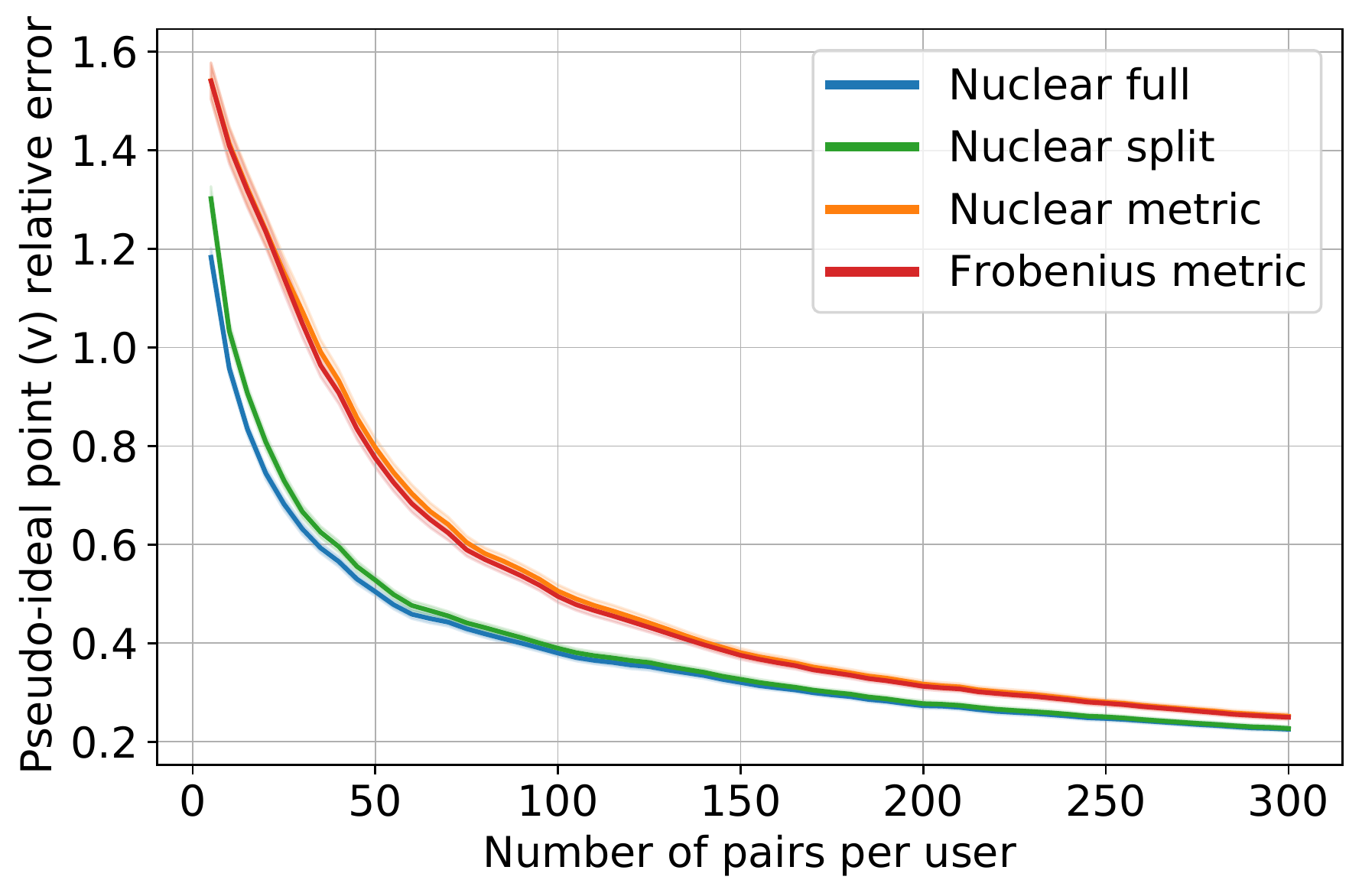}
	    \caption{Relative pseudo-ideal point error}
	    \label{fig:results:normal-highFull-Vrec}
	\end{subfigure}%
	\caption{Prediction and recovery results for a high noise setting ($\beta=1$) with a full-rank metric ($d=r=10$). Error bars indicate $\pm 1$ standard error about the sample mean. Surprisingly, even in the full-rank scenario, \textbf{Nuclear full} demonstrates the highest prediction performance, even in comparison to \textbf{Frobenius metric} which is designed for full-rank metrics. That said, the difference is less stark for estimating both $\bM^\ast$ and $\bU^\ast$.}
	\label{fig:results-highFull}
\end{figure}

\begin{figure}[t]
	\centering
	\begin{subfigure}[t]{0.49\linewidth}
	    \centering
	    \includegraphics[width=0.99\linewidth]{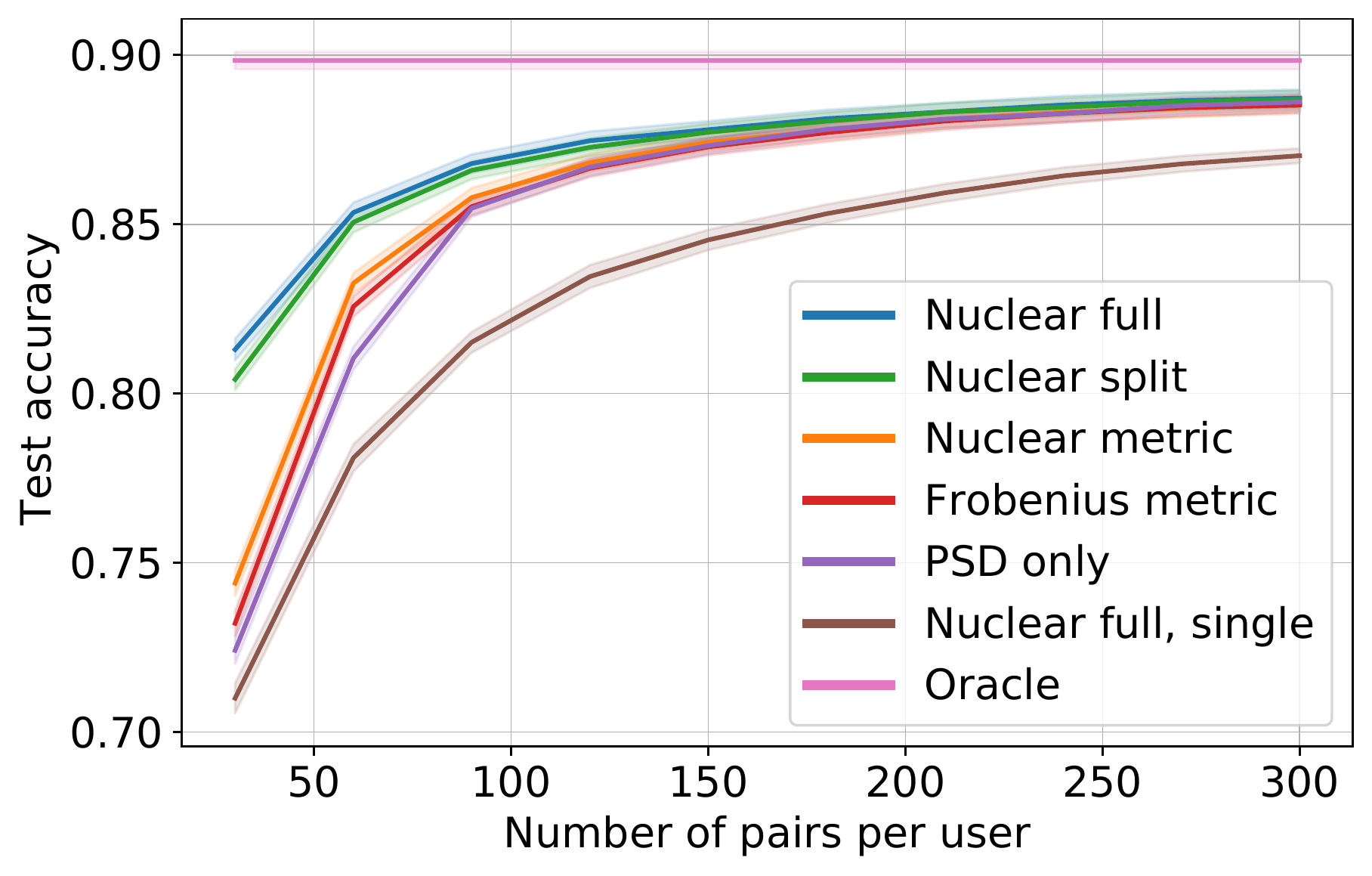}
	    \caption{Test accuracy}
	    \label{fig:results:normal-medLow-test}
	\end{subfigure}%
    \hfill
	\begin{subfigure}[t]{0.49\linewidth}
	    \centering
	    \includegraphics[width=0.99\linewidth]{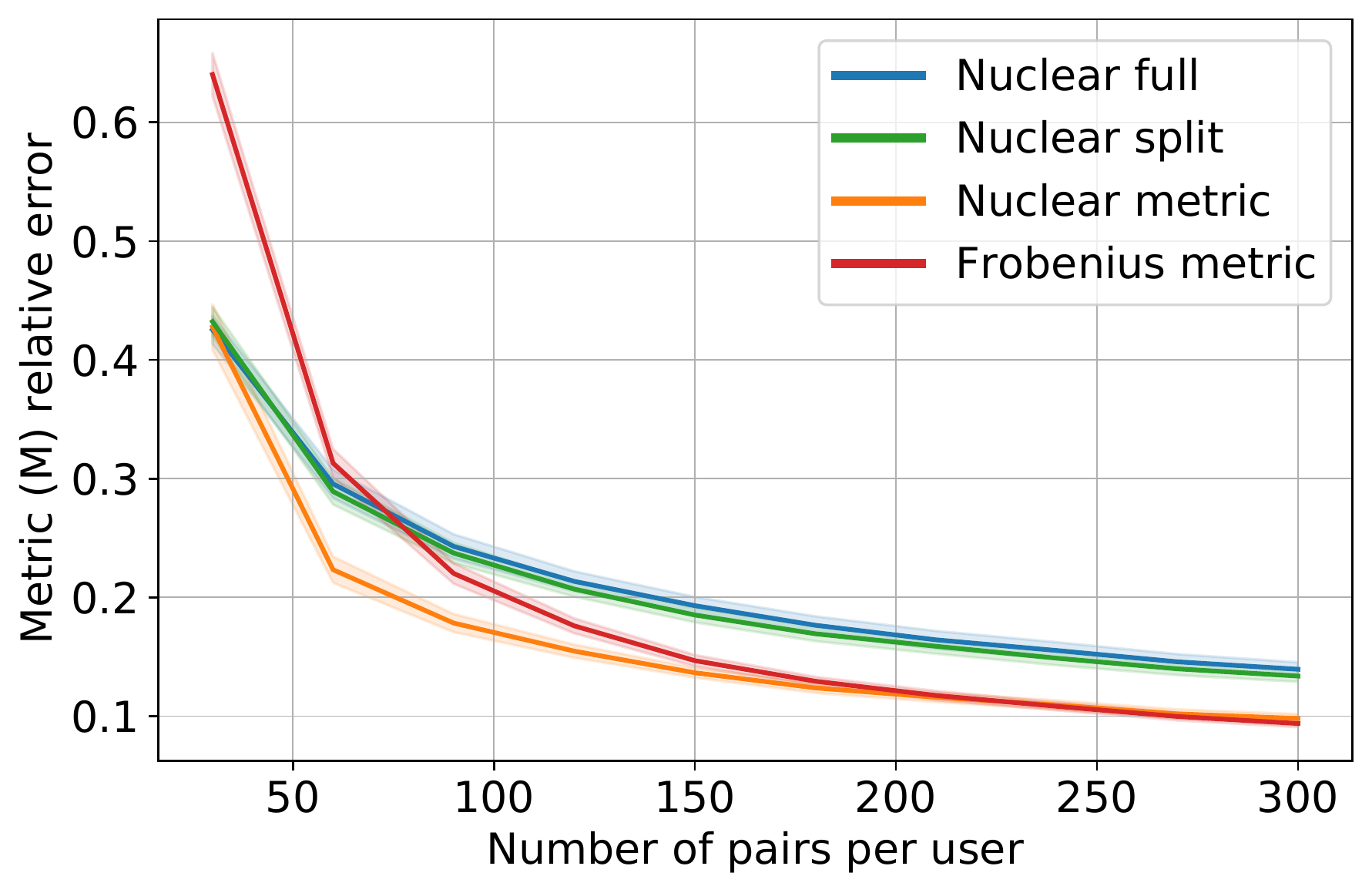}
	    \caption{Relative metric error}
	    \label{fig:results:normal-medLow-Mrec}
	\end{subfigure}%
	\\
	\vspace{\vh}
	\begin{subfigure}[t]{0.49\linewidth}
	    \centering
	    \includegraphics[width=0.99\linewidth]{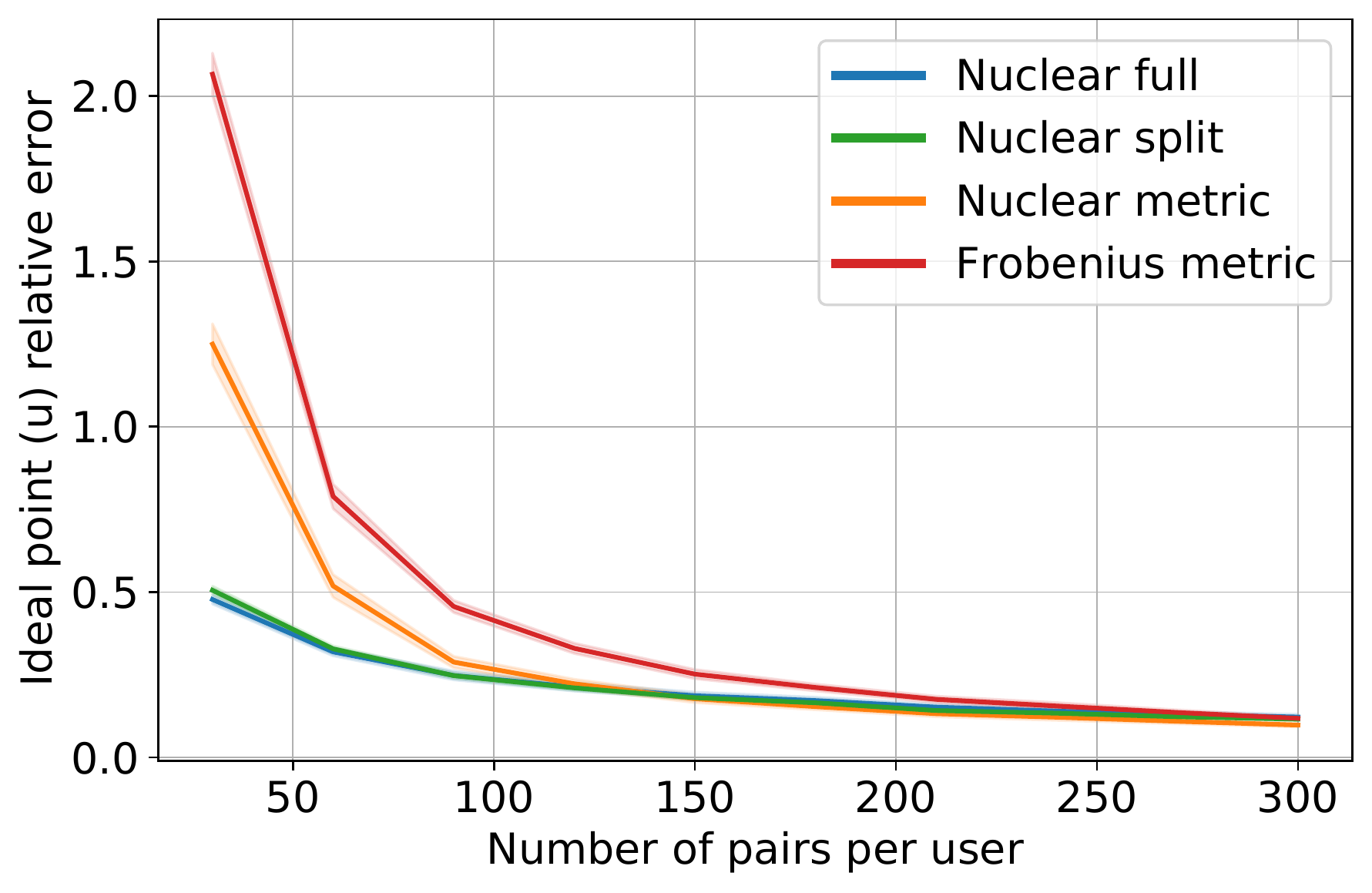}
	    \caption{Relative ideal point error}
	    \label{fig:results:normal-medLow-Urec}
	\end{subfigure}%
	\hfill
	\begin{subfigure}[t]{0.49\linewidth}
	    \centering
	    \includegraphics[width=0.99\linewidth]{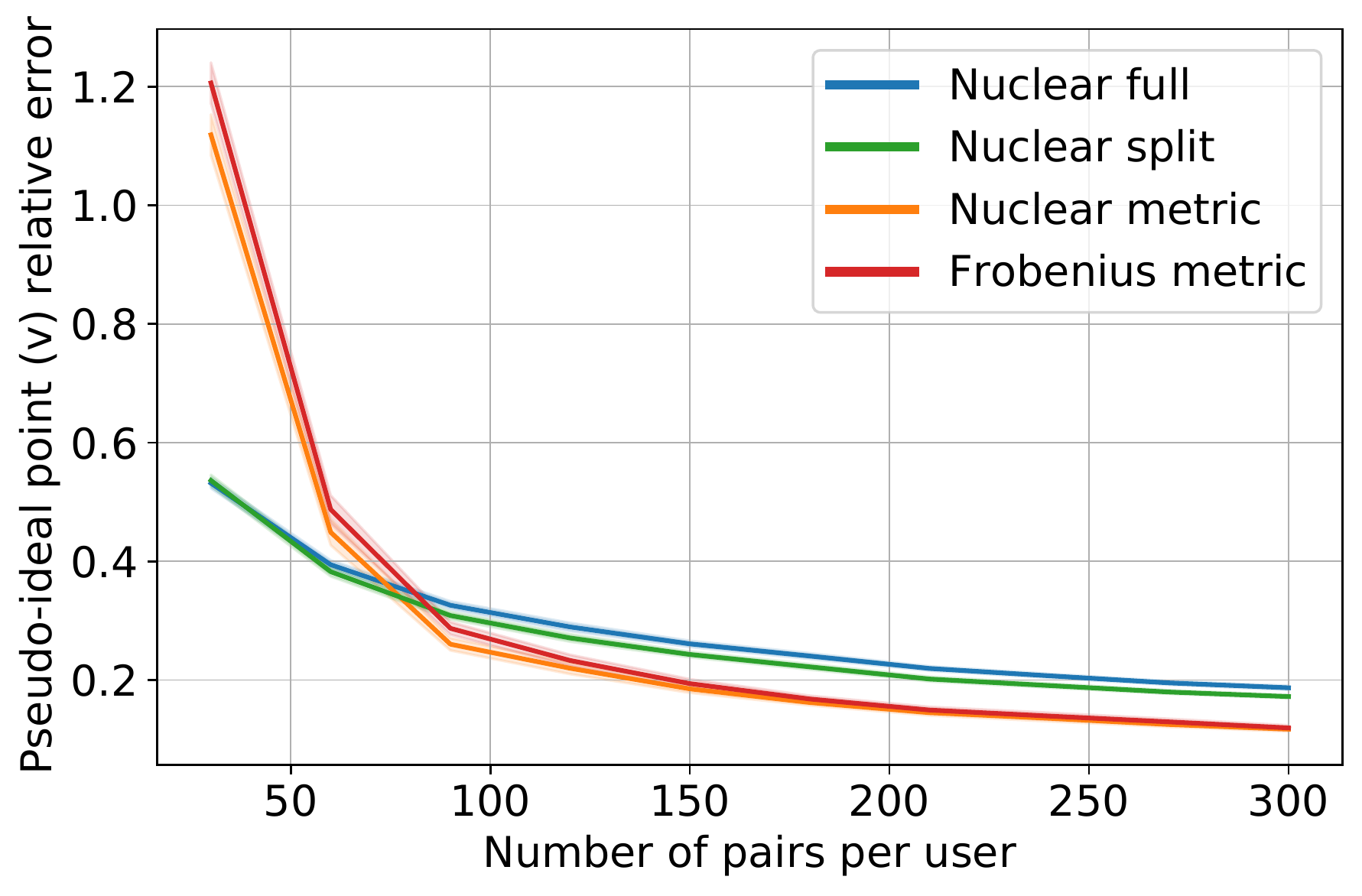}
	    \caption{Relative pseudo-ideal point error}
	    \label{fig:results:normal-medLow-Vrec}
	\end{subfigure}%
	\caption{Prediction and recovery results for a medium noise setting ($\beta=4$) with a low-rank metric ($r = 1$). Error bars indicate $\pm 1$ standard error about the sample mean. Similar trends as Figure~\ref{fig:results-highLow} hold except that the differences are less pronounced. Interestingly, for large numbers of samples per user, \textbf{Nuclear metric} and \textbf{Frobenius metric} appear to achieve better performance on estimating $\bM^\ast$ and therefore achieve better performance for estimating $\bV^\ast$, though they do not achieve as strong of performance for estimating $\bU^\ast$, perhaps because these methods do not enforce that $\widehat{\bM}$ and $\widehat{\bV}$ share a column space.}
	\label{fig:results-medLow}
\end{figure}

\begin{figure}[t]
	\centering
	\begin{subfigure}[t]{0.49\linewidth}
	    \centering
	    \includegraphics[width=0.99\linewidth]{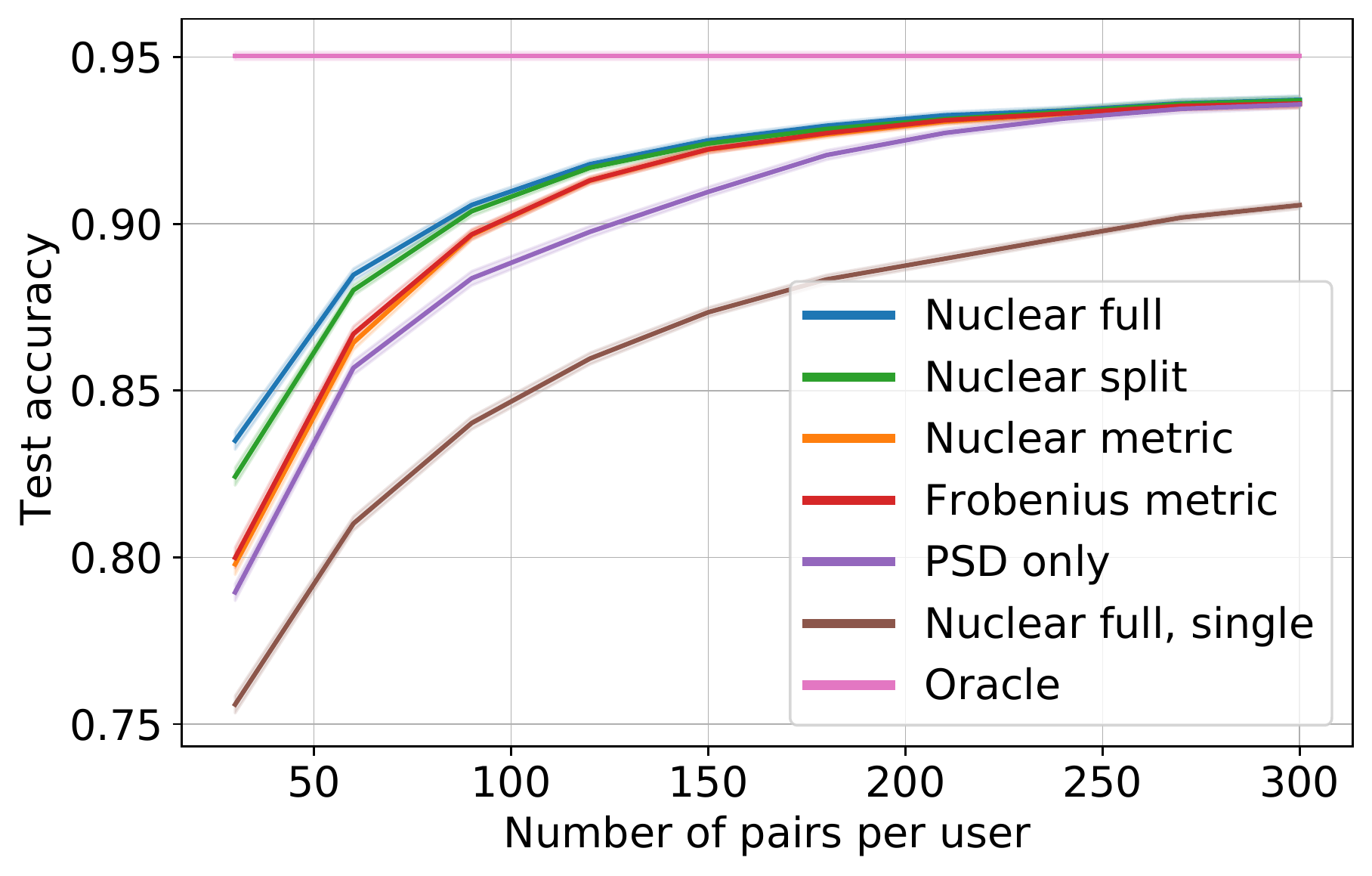}
	    \caption{Test accuracy}
	    \label{fig:results:normal-medFull-test}
	\end{subfigure}%
    \hfill
	\begin{subfigure}[t]{0.49\linewidth}
	    \centering
	    \includegraphics[width=0.99\linewidth]{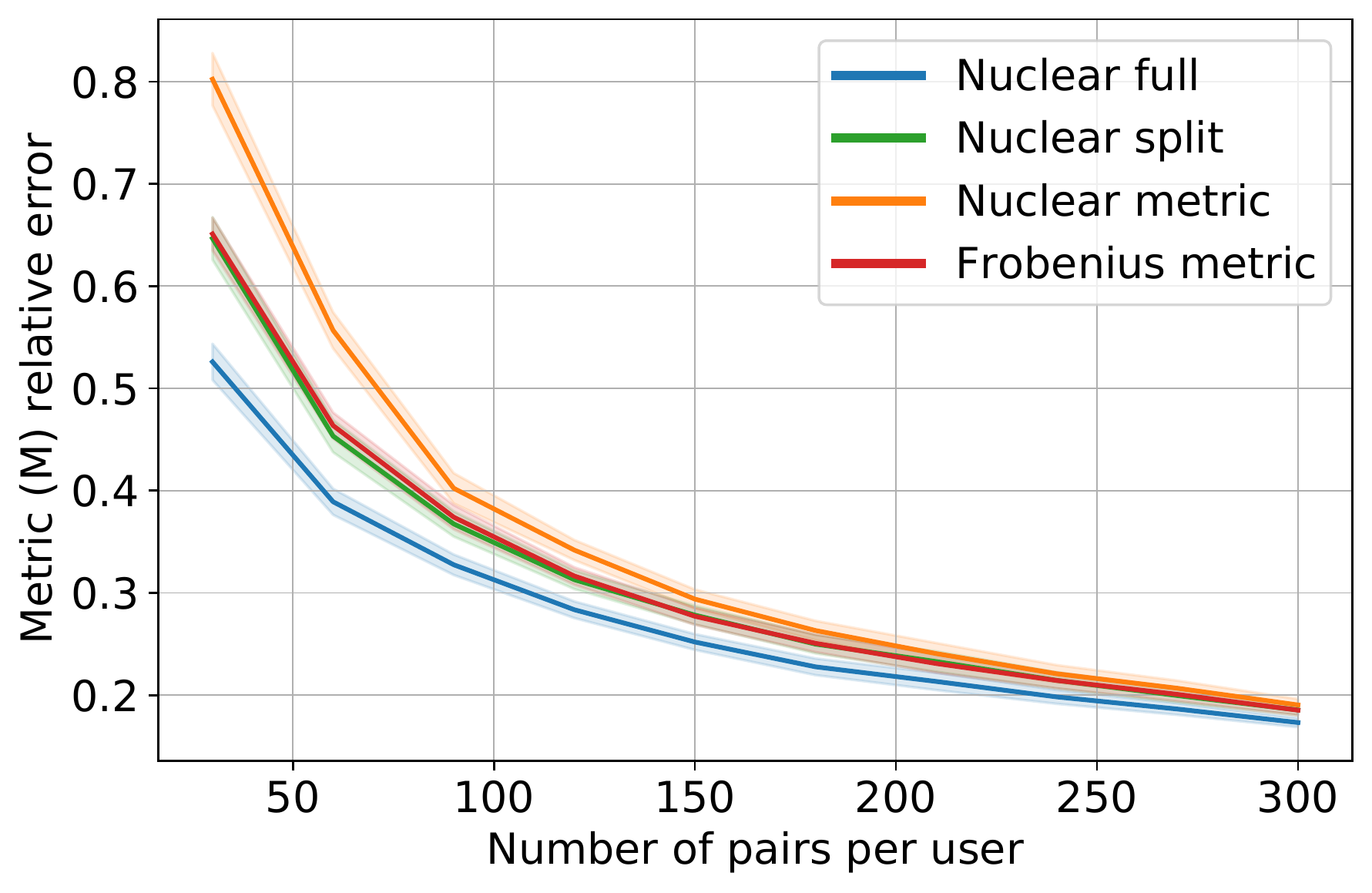}
	    \caption{Relative metric error}
	    \label{fig:results:normal-medFull-Mrec}
	\end{subfigure}%
	\\
	\vspace{\vh}
	\begin{subfigure}[t]{0.49\linewidth}
	    \centering
	    \includegraphics[width=0.99\linewidth]{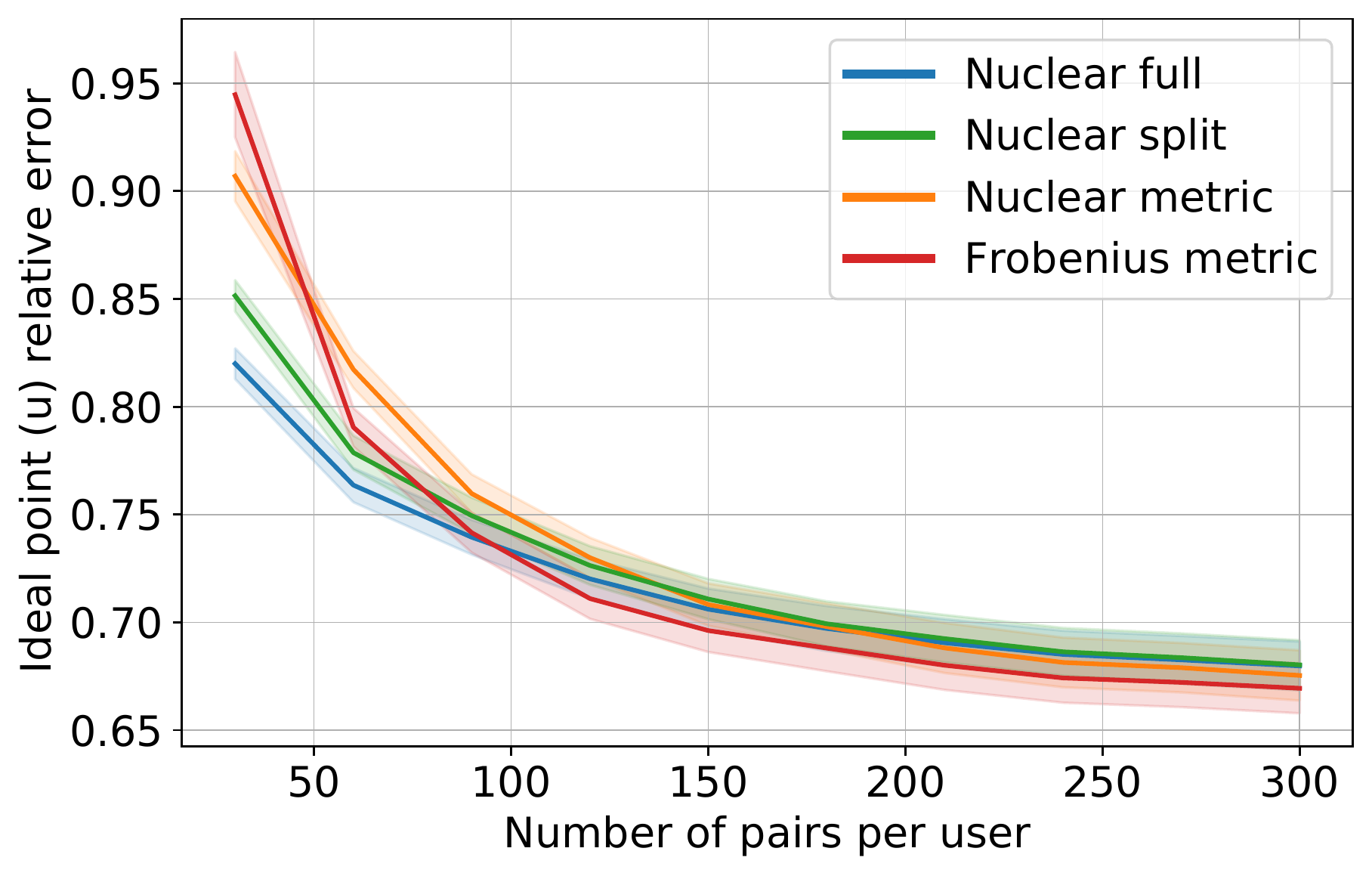}
	    \caption{Relative ideal point error}
	    \label{fig:results:normal-medFull-Urec}
	\end{subfigure}%
	\hfill
	\begin{subfigure}[t]{0.49\linewidth}
	    \centering
	    \includegraphics[width=0.99\linewidth]{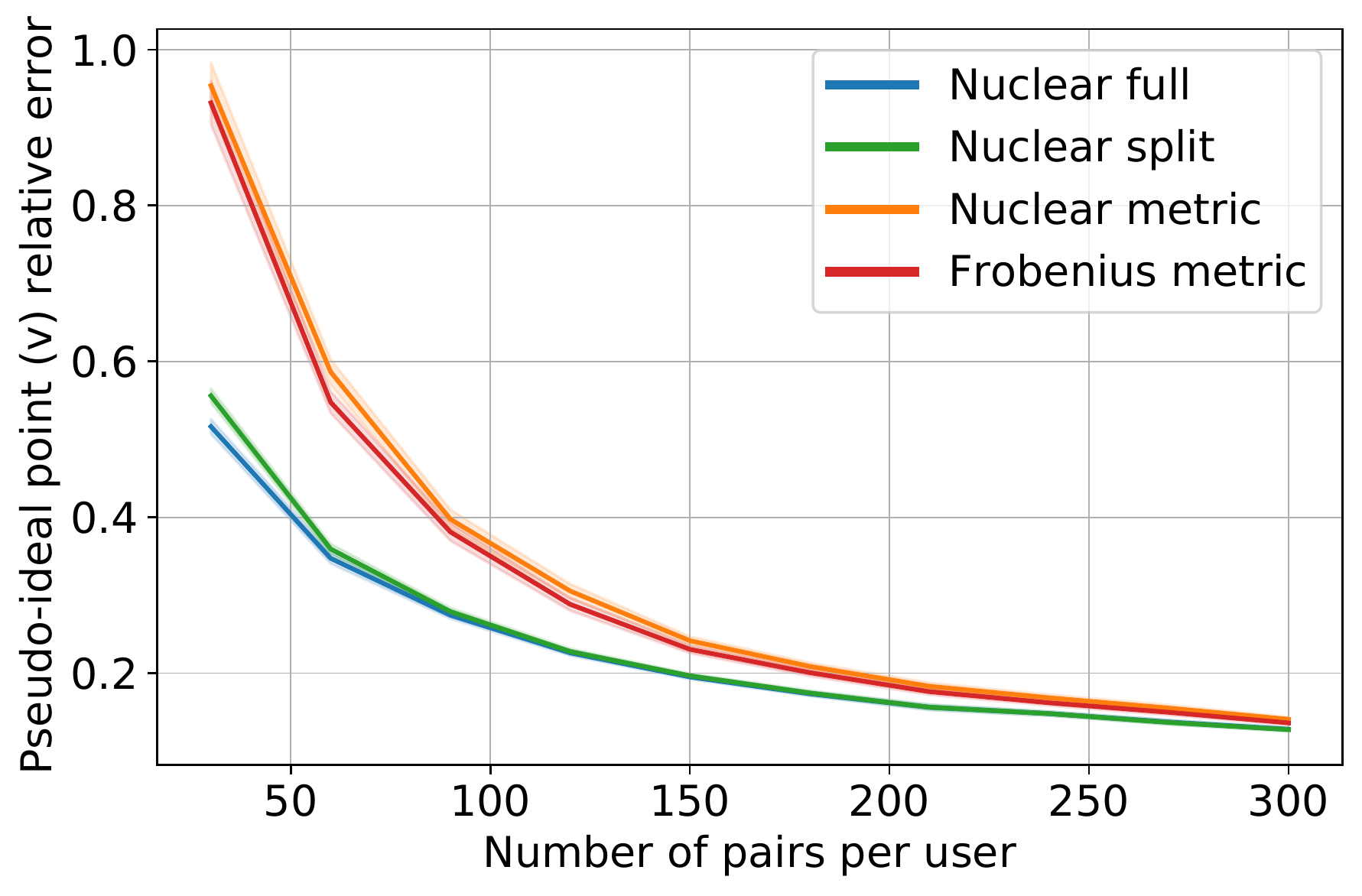}
	    \caption{Relative pseudo-ideal point error}
	    \label{fig:results:normal-medFull-Vrec}
	\end{subfigure}%
	\caption{Prediction and recovery results for a medium noise setting ($\beta=4$) with a full-rank metric ($d = r = 10$). Error bars indicate $\pm 1$ standard error about the sample mean. Similar trends as Figure~\ref{fig:results-highFull} hold in this case.}
	\label{fig:results-medFull}
\end{figure}

\begin{figure}[t]
    \centering
    \includegraphics[width=0.75\linewidth]{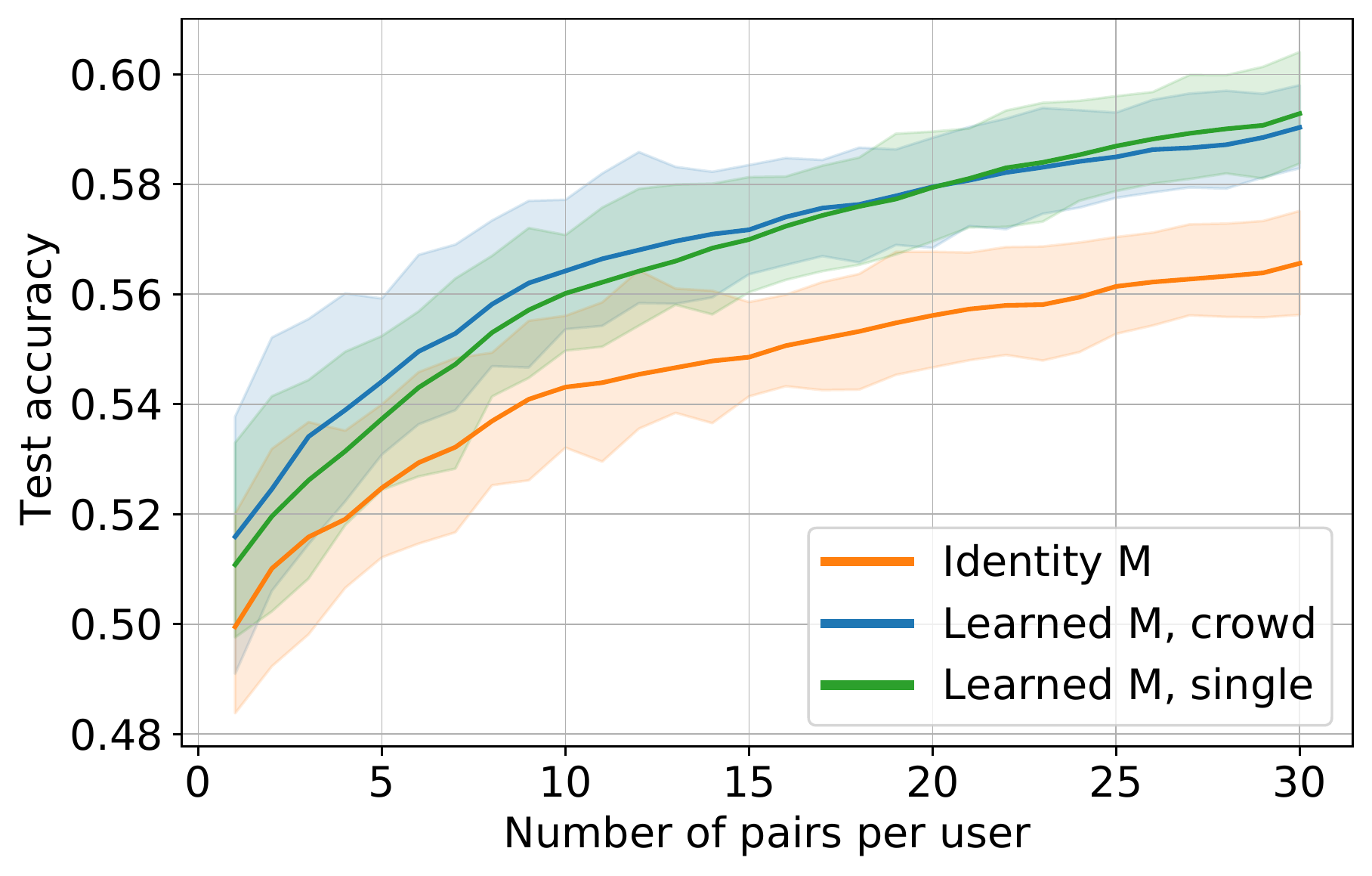}
	\caption{Comparison of methods on color preference data in the low query regime, with error bars representing 2.5\% and 97.5\% percentiles. The identity metric performs more poorly than the methods that learn a metric tuned to user judgements. The method that learns a single $\bM$ for the crowd and method that learns a $\bM$ for each individual perform similarly in most of the range of number of pairs. There is a slight advantage to the method that learns a metric for the crowd when very few pairs have been given to each user. This stems from the fact that the crowd metric can amortize the cost of learning the metric over the responses given by all users. }
	\label{fig:color-zoomed}
\end{figure}

\end{document}